%% file: main.tex
\documentclass{article}
\usepackage{graphicx}
\usepackage[top=20truemm,bottom=20truemm,left=20truemm,right=20truemm]{geometry}
\usepackage{authblk}
\usepackage{natbib}
\usepackage{hyperref}

\usepackage{amsmath}
\usepackage{amsfonts}
\usepackage{amsthm}
\usepackage{mathtools}
\usepackage{thm-restate}
\usepackage{algorithm}
\usepackage{algorithmic}
\usepackage{bm}
\usepackage{bbm}
\usepackage{url}
\usepackage{here}

\theoremstyle{plain}

\newtheorem{lemma}{Lemma}[section]

\theoremstyle{definition}
\newtheorem{definition}{Definition}[section]
\newtheorem{assumption}{Assumption}[section]
\theoremstyle{remark}

\title{Randomized Kriging Believer for Parallel Bayesian Optimization\\ with Regret Bounds}
\author[1]{Shuhei Sugiura}
\author[1,2]{Ichiro Takeuchi}
\author[1]{Shion Takeno}
\affil[1]{Department of Engineering, Nagoya University, Aichi, Japan}
\affil[2]{RIKEN AIP, Tokyo, Japan}
\begin{document}
\maketitle
\begin{abstract}
We consider the optimization problem of an expensive-to-evaluate black-box function, in which we can obtain noisy function values in parallel.
For this problem, parallel Bayesian optimization (PBO) is a promising approach, which aims to optimize with fewer function evaluations by selecting a diverse input set for parallel evaluation.
However, existing PBO methods suffer from poor practical performance or lack theoretical guarantees.
In this study, we propose a PBO method, called randomized kriging believer (KB), based on a well-known KB heuristic and inheriting the advantages of the original KB: low computational complexity, a simple implementation, versatility across various BO methods, and applicability to asynchronous parallelization.
Furthermore, we show that our randomized KB achieves Bayesian expected regret guarantees.
We demonstrate the effectiveness of the proposed method through experiments, including those on real-data emulators.
\end{abstract}

\input{manuscripts/1_intro}
\input{manuscripts/2_preliminary}
\input{manuscripts/3_method}
\input{manuscripts/4_analysis}
\input{manuscripts/5_experiment}
\input{manuscripts/6_conclusion}
\section*{Acknowkedgements}
This work was supported by JSPS KAKENHI Grant Number JP24K20847, JST PRESTO Grant Number JPMJPR24J6, JST CREST Grant Numbers JPMJCR21D3, JPMJCR22N2, JST Moonshot R\&D Grant Number JPMJMS2033-05, and RIKEN Center for Advanced Intelligence Project.

\bibliography{reference}
\bibliographystyle{plainnat}

\clearpage
\appendix
\input{manuscripts/9_appendix}
\input{manuscripts/9_appendix_fig}
\end{document}

%% file: manuscripts/1_intro.tex
\section{Introduction}
\label{sec:intro}

Bayesian optimization (BO) \citep{Kushner1964-new,Mockus1978-Application} is a promising approach for optimization of expensive-to-evaluate black-box functions with a smaller number of function evaluations.
For this purpose, BO sequentially queries the input that maximizes the acquisition function (AF) based on a Bayesian model.
BO has been leveraged to a wide range of applications, such as AutoML \citep{Snoek2012-Practical}, robotics \citep{berkenkamp2023-bayesian}, and materials informatics \citep{ueno2016combo}.
However, in many real-world applications, observations can be obtained in parallel.
For example, if the objective function involves computer simulation and multiple computational resources are available, parallel execution of simulations is important to minimize wall-clock time for optimization.
Naively applying vanilla BO methods to such problems can waste query budget, since the input points often cluster and provide redundant information.
%

Parallel BO (PBO) \citep{Shahriari2016-Taking} aims to improve optimization efficiency by selecting diverse input points that effectively leverage parallel evaluations.
Simple heuristics to extend BO methods to PBO by promoting diversity \citep{Ginsbourger2010-kriging,Azimi2010-batch,Gonzalez2016-batch} are widely used for their advantages, such as simple implementation, low computational complexity, and applicability to asynchronous parallelization.
A representative heuristic is kriging believer (KB) \citep{Ginsbourger2010-kriging}, which promotes diversity by sequentially imputing predictive values as fictitious observations at input points currently under evaluation.
%
%
While the effectiveness of these heuristics has been shown empirically, they generally lack theoretical guarantees.

In contrast to heuristic approaches, several PBO methods with theoretical guarantees have been proposed, including parallel Thompson sampling (PTS) \citep{Kandasamy2018-Parallelised,nava2022diversified} and batched upper confidence bound (BUCB) \citep{Desautels2014-Parallelizing}.
Under standard regularity assumptions, these methods have guarantees regarding regret \citep{Srinivas2010-Gaussian}.
However, such methods typically suffer from poor practical performance, lack theoretical guarantees for practical tuning of the confidence width parameter, or involve complex implementations.
This gap between strong theoretical guarantees and practical effectiveness motivates the development of PBO methods that are both theoretically principled and empirically competitive.

\begin{figure}[t]
\centering
\includegraphics[width=\linewidth]{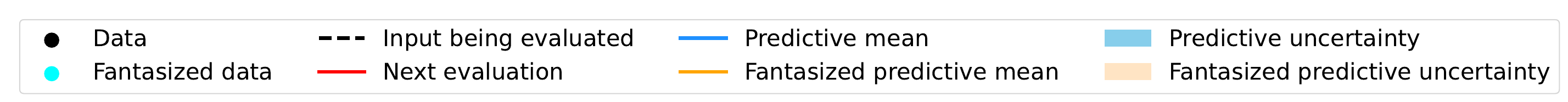}
\includegraphics[height=.2\linewidth]{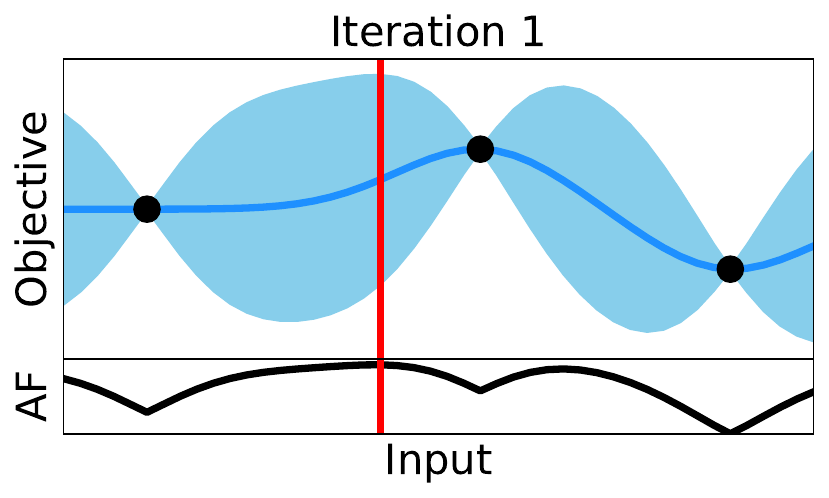}
\includegraphics[height=.2\linewidth]{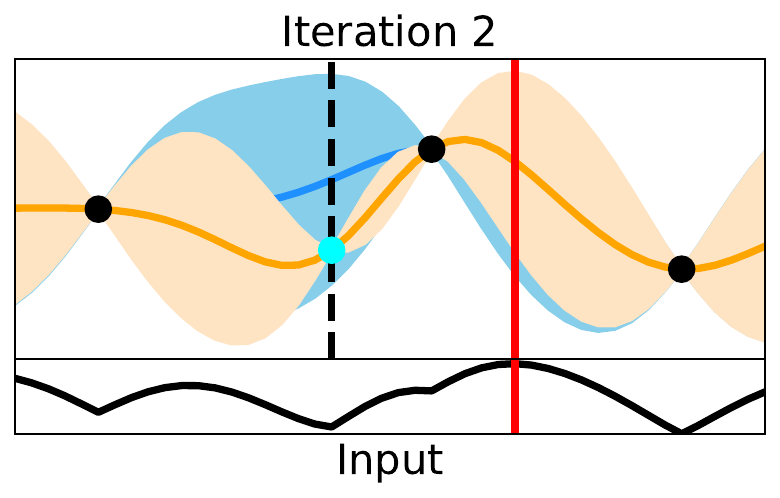}
\includegraphics[height=.2\linewidth]{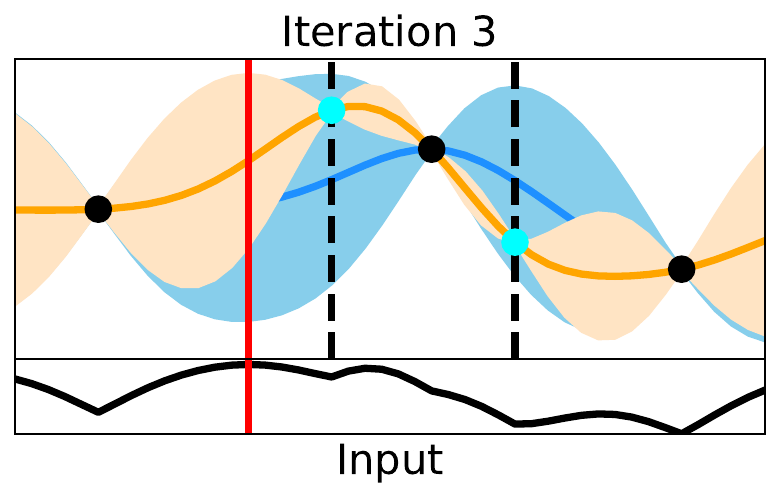}
\caption{
Schematic illustration of the proposed method for three consecutive iterations.
%
%
For efficient parallel optimization, it is necessary to evaluate diverse input points while avoiding redundant evaluation.
%
%
The left figure shows iteration 1, where no inputs are under evaluation, and thus, the next input is chosen as in standard BO.
%
%
The middle and right figures show iterations 2 and 3, respectively.
There are inputs currently being evaluated, indicated by dashed lines. 
In these cases, RKB generates fantasized data from the predictive distribution at the inputs under evaluation.
Then, the next input is selected by maximizing an AF based on a model trained on both observed (real) and fantasized data.
As a result, BO avoids redundant evaluation and ensures diversity among the evaluated inputs.
}
\label{fig:proposed}
\end{figure}

This paper proposes and analyzes a randomized variant of the KB heuristic, referred to as randomized KB (RKB), shown in Fig.~\ref{fig:proposed}.
Our contributions are summarized as follows:
\begin{enumerate}
    \item We propose a PBO method, RKB, that selects a diverse input set by conditioning on one random posterior realization for the ongoing evaluation points, inheriting the practical advantages of the original KB: ease of implementation, low computational complexity, applicability to asynchronous parallelization, and versatility across a wide range of BO methods.
    \item We establish upper bounds on both Bayesian cumulative regret (BCR) and Bayesian simple regret (BSR) for RKB when combined with several BO methods that admit regret guarantees \citep{Srinivas2010-Gaussian,Takeno2023-randomized,takeno2024-posterior,takeno2025-regretEI,takeno2025-regret} in Theorems \ref{thm1}, \ref{thm2}, and \ref{thm3}, which especially provide the BSR upper bound independent of the number of parallel workers.
    \item We demonstrate the effectiveness of the proposed method via a wide range of experiments, including emulators of real-world data.
\end{enumerate}

\subsection{Related work}
\label{sec:related}



\paragraph{Joint selection methods.}
Several studies have proposed PBO methods that select a set of inputs either by jointly optimizing a utility-based AF that explicitly quantifies a batch's utility, or by sampling from a joint distribution designed to encourage diversity.
Representative examples include PBO methods based on expected improvement (EI) \citep{Ginsbourger2010-kriging,Chevalier2013-fast,Marmin2015-differentiating,wang2020-parallel}, predictive entropy search (PES) \citep{Amar2015-Parallel}, and knowledge gradient (KG) \citep{wu2016-parallel}, all of which jointly optimize AFs over a batch of $O(Q)$ inputs, where $Q$ is the number of parallel workers.
%
%
Alternatively, determinantal point process (DPP)-based approaches, such as DPP sampling \citep{kathuria2016-batched} and DPP-TS \citep{nava2022diversified}, generate input sets by sampling from a DPP distribution that explicitly incorporates diversity.
These DPP-based methods admit regret guarantees \citep{kathuria2016-batched,nava2022diversified}.
However, both approaches incur substantial computational costs that grow rapidly with $Q$, due to high-dimensional AF optimization or sampling from high-dimensional distributions using Markov chain Monte Carlo (MCMC) methods.

\paragraph{Greedy selection methods.}
Greedy PBO methods, including our RKB, have also been actively studied.
Importantly, these greedy approaches have low computational complexity in $Q$, comparable to sequential BO, and naturally support asynchronous parallelization.
BUCB \citep{Desautels2014-Parallelizing}, UCB with pure exploration (UCB-PE) \citep{Contal2013-Parallel}, and DPP-based greedy approximations \citep{kathuria2016-batched} have regret guarantees but often suffer from over-exploration.
In contrast, KB \citep{Ginsbourger2010-kriging}, constant liar \citep{Ginsbourger2010-kriging}, simulation matching \citep{Azimi2010-batch}, and local penalization (LP) \citep{Gonzalez2016-batch} often show superior performance. 
Furthermore, Monte Carlo (MC)-estimation-based PBO for utility-based AFs, such as EI \citep{Snoek2012-Practical,Wilson2018-Maximizing,balandat2020-botorch,wang2020-parallel} and max-value entropy search (MES) \citep{Takeno2020-Multifidelity,takeno2022-sequential,Takeno2022-generalized} has also been widely used.
The MC estimation-based methods greedily select inputs by maximizing the AF conditioned on ongoing evaluations and averaging over multiple posterior samples.
Thus, our RKB can be interpreted as a special case that uses a single MC sample, although our regret analysis applies only to this single-sample setting.
The empirical effectiveness of these heuristics has been demonstrated in multiple studies above, but lacks theoretical guarantees.

\paragraph{Distributed selection methods.}
A third line of work is fully distributed PBO, especially for the case that even linear computational dependence on $Q$ is prohibitive.
Distributed PBO methods, including PTS \citep{Kandasamy2018-Parallelised,hernandez-lobato2017-parallel,vakili2021-scalable,nava2022diversified} and Boltzmann policies derived from vanilla BOs  \citep{Garcia-Barcos2019-fully}, select input sets in a fully distributed manner, such as independent posterior sampling, by which they avoid computational dependence on $Q$ in terms of wall-clock time.
However, by design, these methods do not incorporate other selected inputs, which can result in batches with insufficient diversity.
%

%

\paragraph{Regret analysis for PBO.}
Regarding PBO, several works have established regret guarantees.
High-probability regret bounds have been derived for parallel UCB variants \citep{Desautels2014-Parallelizing,Contal2013-Parallel,kathuria2016-batched}.
\citet{Desautels2014-Parallelizing} showed that the deterioration of regret with respect to the batch size $Q$ can be avoided by performing uncertainty sampling $O(Q^c)$ times at an initial phase, where $c \geq 1$ is a constant.
%
%
Regret bounds for PTS and DPP-TS have been established \citep{Kandasamy2018-Parallelised,chowdhury2019-on,nava2022diversified}.
Notably, \citet{nava2022diversified} derived a BSR bound that is independent of $Q$, for which we obtain the comparable upper bound for RKB.
However, the above algorithms with regret guarantees suffer from practical disadvantages discussed already.
Finally, the analysis of regret lower bounds in our problem setup is limited to vanilla BO for one-dimensional objectives \citep{scarlett2018-tight}.



%% file: manuscripts/2_preliminary.tex
\section{Preliminaries}
\label{sec:preliminary}

This section provides background knowledge.

\subsection{Problem statement}
We consider the optimization of a black-box and expensive-to-evaluate function $f: \mathcal{X} \rightarrow \mathbb{R}$:
\begin{equation}
    \bm x^* = \mathop{\rm argmax}\limits_{\bm x \in \mathcal X} f(\bm x).
\end{equation}
For this problem, we sequentially query $\bm x_t$ for all iterations $t \in \mathbb N$ and obtain an observation $y_t$, which can be contaminated, aiming for sample-efficient optimization.
We consider the general case that up to $Q-1$ queries can be unobserved.
This setting recovers vanilla BO with $Q = 1$ and includes synchronous and asynchronous parallelization with $Q$ workers.

%

\subsection{Gaussian process regression}
We assume the following regularity assumption for the GP regression model \citep{Rasmussen2005-Gaussian}:
\begin{assumption}\label{asm_basic}
    \it For $d\in\mathbb N$ and $r>0$, let $k$ be a kernel on $\mathcal X\subset\left[0,r\right]^d$ that satisfies $\max_{\bm x\in\mathcal X}k{\left(\bm x,\bm x\right)}\le1$.
    Then, function $f$ follows $\mathcal{GP}{\left(0,k\right)}$, and its input-output data is defined as
    \begin{equation}
    \begin{split}
    \left(\bm x_t,y_t\right)\in\mathcal X\times\mathbb R,\quad y_t=f{\left(\bm x_t\right)}+\varepsilon_t\qquad\left(t\in\mathbb N\right),
    \end{split}
    \end{equation}
    where $\varepsilon_t\sim\mathcal N{\left(0,\sigma_{\rm noise}^2\right)}$ with $\sigma_{\rm noise}>0$.
\end{assumption}
For any index set $\mathcal S\subset\mathbb N$, we define data set $\mathcal D_{\mathcal S}$ as $\left\{\left(\bm x_i, y_i\right)\right\}_{i\in\mathcal S}$.
For any $t\in\mathbb N$, we denote the set $\left\{1,\ldots,t\right\}$ as $\left[t\right]$ and the data set $\mathcal D_{\left[t-1\right]}$ as $\mathcal D_{t-1}$. 
Let $\mathcal N_{t-1}\subset\left[t-1\right]$ be the index set of data available at the beginning of the $t$-th iteration. 
Then, the set $\mathcal N_{t-1}$ satisfies $\left|\mathcal N_{t-1}\right|\ge t-Q$ and $\mathcal N_t\supset \mathcal N_{t-1}$.
Under Assumption~\ref{asm_basic}, the posterior distribution conditioned on $\mathcal D_{\mathcal N_t}$ is also a GP.
The posterior covariance between $f{\left(\bm x\right)}$ and $f{\left(\bm x'\right)}$ is written as
\begin{equation}
k{\left(\bm x,\bm x';\mathcal D_{\mathcal N_t}\right)}=k{\left(\bm x,\bm x'\right)}-\bm k_{\mathcal N_t}{\left(\bm x\right)}^\top\!\!\left(\bm K_{\mathcal N_t}+\sigma_{\rm noise}^2\bm I_{n_t}\right)^{-1}\!\!\bm k_{\mathcal N_t}{\left(\bm x'\right)}\qquad\left(\bm x,\bm x'\in\mathcal X\right),
\end{equation}
where $I_{n_t}\in\mathbb R^{n_t \times n_t}$ with $n_t=\left|\mathcal N_t\right|$ denotes the identity matrix, $\bm K_{\mathcal N_t}\in\mathbb R^{n_t \times n_t}$ is a kernel matrix whose $\left(i,j\right)$-th element is $k{\left(\bm x_{[\mathcal N_t]_i},\bm x_{[\mathcal N_t]_j}\right)}$, where $[\mathcal N_t]_i$ is the $i$-th smallest element of $\mathcal N_t$.
Let $k{\left(\bm x,\bm x';\emptyset\right)}=k{\left(\bm x,\bm x'\right)}$.
Then, the posterior mean and variance of $f{\left(\bm x\right)}$ are given by
\begin{equation}
\mu{\left(\bm x;{\mathcal D}_{\mathcal N_t}\right)}=\bm k_{\mathcal N_t}{\left(\bm x\right)}^\top\!\!\left(\bm K_{\mathcal N_t}+\sigma_{\rm noise}^2\bm I_{n_t}\right)^{-1}\!\!\bm y_{\mathcal N_t},\quad\sigma^2{\left(\bm x;{\mathcal D}_{\mathcal N_t}\right)}=k{\left(\bm x,\bm x;\mathcal D_{\mathcal N_t}\right)}\quad\left(\bm x\in\mathcal X\right),
\end{equation}
where $\bm y_{\mathcal N_t}=\left[y_i\right]_{i \in \mathcal N_t}^\top\in\mathbb R^{n_t}$, and $\bm k_{\mathcal N_t}{\left(\bm x\right)}=\left[k{\left(\bm x,\bm x_i\right)}\right]_{i \in \mathcal N_t}^\top\in\mathbb R^{n_t}$.

For continuous input domains, we assume the following regularity condition as in \citep{Srinivas2010-Gaussian,Kandasamy2018-Parallelised,nava2022diversified}, which is satisfied by Gaussian kernels and Mat\'ern-$\nu$ kernels with $\nu > 2$ \citep{Srinivas2010-Gaussian}:
\begin{assumption}\label{asm_derivative}
\it Assume that $\mathcal X$ is compact and convex, and $f\sim\mathcal{GP}{\left(0,k\right)}$ satisfies
\begin{equation}
\exists a,b>0,\forall L\ge0,\forall j\in\left[d\right],{\rm Pr}\left(\sup_{\bm x\in\mathcal X}\left|\frac{\partial f{\left(\bm x\right)}}{\partial x_j}\right|>L\right)\le a\exp{\left(-\frac{L^2}{b^2}\right)},
\end{equation}
where $\left[x_1\;\cdots\;x_d\right]^\top=\bm x$.
\end{assumption}

\subsection{Acquisition functions for BO}
BO algorithm $\mathcal A$ selects an input to evaluate by maximizing the AF $\alpha:\mathcal X\to\mathbb R$.
That is, $\mathcal A{\left(\mathcal D_{t-1}\right)}={\rm argmax}_{\bm x\in\mathcal X}\alpha{\left(\bm x;\mathcal D_{t-1}\right)}$.
Here, we describe three AFs used in our numerical experiment.

The first is UCB \cite{Srinivas2010-Gaussian} defined as
\begin{equation}
\alpha_{\rm UCB}{\left(\bm x;\mathcal D_{t-1}\right)}=\mu{\left(\bm x;\mathcal D_{t-1}\right)}+\beta_t^{\frac{1}{2}}\sigma{\left(\bm x;\mathcal D_{t-1}\right)}\quad\left(\bm x\in\mathcal X\right),
\end{equation}
where $\beta_t > 0$ is the confidence width parameter.
The second is EI \cite{Mockus1978-Application} defined as
\begin{equation}\label{EI}
\alpha_{\rm EI}{\left(\bm x;\mathcal D_{t-1}\right)}
= \sigma{\left(\bm x;{\mathcal D}_{t-1}\right)}
\left(s \left(\bm x;{\mathcal D}_{t-1}\right)\Phi{\left(s \left(\bm x;{\mathcal D}_{t-1}\right)\right)}+\phi{\left(s \left(\bm x;{\mathcal D}_{t-1}\right)\right)}\right)\quad\left(\bm x\in\mathcal X\right),
\end{equation}
where $\Phi$ and $\phi$ denote the cumulative distribution function and probability density function of the standard normal distribution, respectively, and $s \left(\bm x;{\mathcal D}_{t-1}\right)$ is defined as
\begin{equation}
s\left(\bm x;{\mathcal D}_{t-1}\right)
=\frac{\mu{\left(\bm x;{\mathcal D}_{t-1}\right)}-\max_{\bm x'\in\mathcal X}\mu{\left(\bm x';{\mathcal D}_{t-1}\right)}}{\sigma{\left(\bm x;{\mathcal D}_{t-1}\right)}}.
\end{equation}
The third is PI from the maximum of a sample path (PIMS) \cite{takeno2024-posterior} defined as
\begin{equation}
\alpha_{\rm PIMS}{\left(\bm x;\mathcal D_{t-1}\right)}=1-\Phi{\left(\frac{g_t^*-\mu{\left(\bm x;{\mathcal D}_{t-1}\right)}}{\sigma{\left(\bm x;{\mathcal D}_{t-1}\right)}}\right)}\quad\left(\bm x\in\mathcal X\right),
\end{equation}
where $g_t^* = \max_{\bm x \in \mathcal{X}} g_t(\bm x)$ and $g_t\sim p{{\left(f\mid\mathcal D_{t-1}\right)}}$. Hence, PIMS is random even if the data is fixed.
\subsection{Kriging believer for PBO}

Here, we describe KB \cite{Ginsbourger2010-kriging}, which serves as the basis of our approach.
KB selects $\bm x_t$, incorporating the diversity of selected inputs, as
\begin{equation}\label{eq:KB}
\bm x_t=\mathcal A{\left(\mathcal D^{\rm KB}_{t-1}\right)},\quad\mathcal D^{\rm KB}_{t-1}=\left\{\left(\bm x_i,y^{\left(t\right)}_i\right)\right\}_{i=1}^{t-1},\quad y^{\left(t\right)}_i=
\begin{cases}
y_i&\left(i\in \mathcal N_{t-1}\right)\\
\mu{\left(\bm x_i;{\mathcal D}_{\mathcal N_{t-1}}\right)}&\left(i\in \left[t-1\right]\backslash \mathcal N_{t-1}\right)
\end{cases},
\end{equation}
where $\mathcal A$ is an arbitrary vanilla BO algorithm. 
As shown in Eq. \eqref{eq:KB}, KB uses the posterior mean of $f$ in place of real data that have not yet been obtained. 
Such conditioning on fantasized data prevents redundant evaluation.
\subsection{Bayesian regret and maximum information gain}
As a criterion of the performance of BO methods, we employ Bayesian regret \cite{Russo2014-learning,Kandasamy2018-Parallelised}.
The BSR and BCR are defined as
\begin{equation}\label{bayesian_regret}
{\rm BCR}_T=\mathbb E{\left[\sum_{t=1}^Tf^*-f{\left(\bm x_t\right)}\right]},\quad{\rm BSR}_T=\mathbb E{\left[f^*-f{\left(\hat{\bm x}_T\right)}\right]}\qquad\left(T\in\mathbb N\right),
\end{equation}
where $f^* = f(\bm x^*)$ and $\hat{\bm x}_T={\rm argmax}_{\bm x\in\mathcal X}\mu{\left(\bm x;\mathcal D_T\right)}$.
In Eq. \eqref{bayesian_regret}, the expectation is taken with respect to all randomness, including $f$, $\varepsilon_t$, and the algorithm. The bounds of BSR and BCR are represented using maximum information gain (MIG) \cite{Srinivas2010-Gaussian}, defined as follows:
\begin{definition}\label{MIG}
\it Let Assumption \ref{asm_basic} hold. Then, for any $T\in\mathbb N$, MIG $\gamma_T$ is defined as
\begin{equation}
\gamma_T=\max_{\bm x_1,\ldots,\bm x_T\in\mathcal X} I{\left(\bm y_T;\bm f_T\right)},
\end{equation}
where $I$ denotes Shannon mutual information, $\bm y_T=\left[y_1\;\cdots\;y_T\right]^\top$, and $\bm f_T=\left[f{\left(\bm x_1\right)}\;\cdots\;f{\left(\bm x_T\right)}\right]^\top$.
\end{definition}
MIG $\gamma_T$ has a sublinear and concave upper bound $\bar\gamma_T$ for commonly used kernels \cite{Srinivas2010-Gaussian,vakili2021-information,iwazaki2025improved}: 
$\bar\gamma_T=O{\left(d\log T\right)}$ for linear kernels $k_{\rm Lin}{\left(\bm x,\bm x'\right)}=\bm x^\top\bm x'$; 
$\bar\gamma_T=O{\left({\left(\log T\right)}^{d+1}\right)}$ for Gaussian kernels $k_{\rm Gauss}{\left(\bm x,\bm x'\right)}=\exp{\left(-\frac{1}{2l^2}\left\|\bm x-\bm x'\right\|^2\right)}$; 
and $\bar\gamma_T=O{\left(T^{\frac{d}{2\nu+d}}{\left(\log T\right)}^{\frac{4\nu + d}{2\nu+d}}\right)}$ for Mat\'ern-$\nu$ kernels $k{\left(\bm x,\bm x'\right)}=\frac{2^{1-\nu}}{\Gamma{\left(\nu\right)}}\left(\frac{\sqrt{2\nu}}{l}\left\|\bm x-\bm x'\right\|\right)^\nu J_\nu{\left(\frac{\sqrt{2\nu}}{l}\left\|\bm x-\bm x'\right\|\right)}$,
where $l>0$ and $\nu>0$ are lengthscale and smoothness parameters, respectively, and $\Gamma$ and $J_\nu$ are Gamma and modified Bessel functions of the second kind, respectively.

%% file: manuscripts/3_method.tex
\section{Randomized kriging believer}
\label{sec:proposed}


Algorithm~\ref{alg:RKB} presents the pseudocode of RKB. 
The key difference from the original KB is that the fantasized data $\mathcal D^{\rm RKB}_{t-1}$ is defined as
\begin{equation}\label{data_RKB}
\mathcal D^{\rm RKB}_{t-1}=\left\{\left(\bm x_i,y^{\left(t\right)}_i\right)\right\}_{i=1}^{t-1},\quad y^{\left(t\right)}_i=
\begin{cases}
y_i&\left(i\in \mathcal N_{t-1}\right)\\
g_t{\left(\bm x_i\right)}+\varepsilon^{\left(t\right)}_i&\left(i\in\bar{\mathcal N}_{t-1}\right)
\end{cases},
\end{equation}
where $\bar{\mathcal N}_{t-1}=\left[t-1\right]\backslash \mathcal N_{t-1}$, $g_t\sim p{\left(f\mid{\mathcal D}_{\mathcal N_{t-1}}\right)}$ and $\varepsilon^{\left(t\right)}_i\sim\mathcal N{\left(0,\sigma_{\rm noise}^2\right)}$.
Thus, data set $\mathcal D^{\rm RKB}_{t-1}$ contains values of a posterior sample path $g_t$ of $f$ in place of real data that have not yet been obtained. 
Note that we do not need to generate the whole sample path $g_t$ to generate $g_t{\left(\bm x_i\right)}$ because $g_t{\left(\bm x_i\right)}$ follows the normal distribution as
\begin{equation}\label{sampling_g}
\left[g_t{\left(\bm x_i\right)}\right]_{i\in\bar{\mathcal N}_{t-1}}\sim\mathcal N{\left(\bm\mu_g,\bm K_g\right)},
\end{equation}
where $\bm\mu_g=\left[\mu{\left(\bm x_i;\mathcal D_{\mathcal N_{t-1}}\right)}\right]_{i\in\bar{\mathcal N}_{t-1}}$ and $\bm K_g\in\mathbb R^{\left|\bar{\mathcal N}_{t-1}\right|\times\left|\bar{\mathcal N}_{t-1}\right|}$ is a covariance matrix whose $\left(i,j\right)$-th element is $k{\left(\bm x_{[\bar{\mathcal N}_{t-1}]_i},\bm x_{[\bar{\mathcal N}_{t-1}]_j};\mathcal D_{\mathcal N_{t-1}}\right)}$, where $[\bar{\mathcal N}_{t-1}]_i$ is the $i$-th smallest element of $\bar{\mathcal N}_{t-1}$.

As a result of using a posterior realization as fantasized data, we obtain the following theoretical and practical benefits.
First, from the theoretical perspective, $\mathcal D^{\rm RKB}_{t-1}$ has the same distribution as $\mathcal D_{t-1}$ conditioned on $\mathcal D_{\mathcal N_{t-1}}$ by construction. 
This property is essential for the regret analysis in the next section.
Second, KB can be overconfident in the sense that the algorithm believes the point estimate, the posterior mean.
In contrast, RKB incorporates posterior uncertainty in a randomized manner, as in TS, thereby controlling the exploration-exploitation trade-off based on both posterior mean and uncertainty in practice.

Our RKB algorithm is similar to the hallucination believer (HB) \citep{takeno2023-towards} for preferential BO \citep{brochu2010-tutorial}.
HB leverages a posterior sample defined over the training inputs, which remains random because only preference data are observed, by which HB avoids a time-consuming MCMC method for the GP preference model.
The algorithms of HB and RKB are closely related, as both rely on posterior sampling and conditioning on hallucinated observations.
However, the motivation for using posterior samples differs fundamentally between the two approaches.

\begin{algorithm}[t]
\caption{Randomized Kriging Believer}\label{alg:RKB}
\begin{algorithmic}[1]
\STATE {\bfseries Input:} input space $\mathcal X$, objective function $f:\mathcal X\to\mathbb R$, kernel $k:\mathcal X\times\mathcal X\to\mathbb R$, noise variance $\sigma_{\rm noise}^2>0$, sequential BO algorithm $\mathcal A$
\STATE ${\mathcal D}_{\mathcal N_0}\gets\emptyset$
\FOR{$t = 1, \dots$}
\STATE Set $\mathcal D^{\rm RKB}_{t-1}$ by Eqs. \eqref{data_RKB} and \eqref{sampling_g}
\STATE $\bm x_t\gets\mathcal A{\left(\mathcal D^{\rm RKB}_{t-1}\right)}$
\STATE Assign a worker to evaluate $\bm x_t$
\STATE Wait for a free worker and set $\mathcal N_t$ and ${\mathcal D}_{\mathcal N_t}$ as the obtained indices and the dataset
\ENDFOR
\end{algorithmic}
\end{algorithm}

%% file: manuscripts/4_analysis.tex
\section{Regret analysis}
\label{sec:analysis}

This section provides the regret analysis of RKB and the theoretical conditions for base BO algorithms.
We study regret in the Bayesian setting \citep{Srinivas2010-Gaussian,Russo2014-learning,Kandasamy2018-Parallelised,Desautels2014-Parallelizing,iwazaki2025improved} defined as in Section~\ref{sec:preliminary}, 
although the frequentist setting \citep{Srinivas2010-Gaussian,Chowdhury2017-on,iwazaki2025-improvedGPbandit,iwazaki2025gaussian} has also been extensively investigated.

\subsection{Theoretical conditions for base BO algorithms}

Our RKB can parallelize any BO algorithm $\mathcal A$. 
To obtain regret bounds, we need the following condition of $\mathcal A$:
\begin{restatable}{condition}{cndalg}\label{cnd_alg}
Let $t\in\mathbb N$ be an arbitrary number. 
If either (i) Assumption~\ref{asm_basic} holds and $|\mathcal{X}| < \infty$, or (ii) Assumptions~\ref{asm_basic} and \ref{asm_derivative} hold, then the following propositions hold.
There exist (possibly random) variables $u_t\left(\mathcal D_{t-1}\right)$ and $v_t\left(\mathcal D_{t-1}\right)\in\mathbb R$ that satisfy the following for all $\mathcal D_{t-1}\in2^{\left(\mathcal X\times\mathbb R\right)}$:
\begin{equation}\label{eq_cnd_alg1}
\mathbb E{\left[f^*-f{\left(\bm x_t\right)}\mid\mathcal D_{t-1}\right]}\le\mathbb E{\left[u_t{\left(\mathcal D_{t-1}\right)}\sigma{\left(\bm x_t;\mathcal D_{t-1}\right)}+v_t{\left(\mathcal D_{t-1}\right)}\mid\mathcal D_{t-1}\right]},
\end{equation}
where $\bm x_t=\mathcal A{\left(\mathcal D_{t-1}\right)}$.
Moreover, there exist $\zeta_t$ and $\xi_t\in\mathbb R$ that satisfy the following for all $\mathcal D_{t-1}\in2^{\left(\mathcal X\times\mathbb R\right)}$:
\begin{equation}\label{eq_cnd_alg2}
\mathbb E{\left[u_t^2{\left(\mathcal D_{t-1}\right)}\right]}\le\zeta_t,\quad\mathbb E{\left[v_t{\left(\mathcal D_{t-1}\right)}\right]}\le\xi_t,\quad \sum_{t=1}^T\zeta_t = \tilde{O}(T),\quad\ \sum_{t=1}^T\xi_t = O(1),
\end{equation}
where $\tilde{O}$ hides polylogarithmic factors.
\end{restatable}
Condition~\ref{cnd_alg} is satisfied at least by UCB \citep{Srinivas2010-Gaussian}, randomized UCB \citep{Takeno2023-randomized,takeno2025-regret}, PIMS \citep{takeno2024-posterior}, and EI from the maximum of a sample path (EIMS) \citep{takeno2025-regretEI} for both finite and continuous input domains.
Note that, although Condition~\ref{cnd_alg} holds for TS \citep{Russo2014-learning,takeno2024-posterior}, RKB combined with TS results in PTS.
For more details, see Appendix \ref{apn:cnd_alg}.
\subsection{Regret bounds}

First, we present the BCR bounds for finite and continuous input domains:
\begin{restatable}[BCR bound for finite input domains]{theorem}{thmOne}\label{thm1}
\it Suppose that Assumption~\ref{asm_basic} and Condition~\ref{cnd_alg} hold and that $\gamma_T$ has an upper bound $\bar\gamma_T$ concave for $T$.
Let $\left|\mathcal X\right|<\infty$ and $\bm x_t=\mathcal A{\left(\mathcal D^{\rm RKB}_{t-1}\right)}$. Then, the following holds:
\begin{equation}
{\rm BCR}_T\le B_T+\sqrt{C_1C_2QT\bar\gamma_{T/Q}}\qquad\left(T\in\mathbb N\right),
\end{equation}
where $C_1=2/\log{\left(1+\sigma_{\rm noise}^{-2}\right)}$, $C_2=2+2\log{\left(\left|\mathcal X\right|/2\right)}$, $Q=\max_{t\in\mathbb N}t-\left|\mathcal N_{t-1}\right|$, and $B_T$ is defined as
\begin{equation}\label{B_T}
B_T=\sqrt{C_1\gamma_T{\sum}_{t=1}^T\zeta_t}+\sum_{t=1}^T\xi_t = \tilde{O}(\sqrt{T \gamma_T})\quad\left(T\in\mathbb N\right).
\end{equation}
\end{restatable}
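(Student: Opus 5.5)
The key idea is to exploit the distributional identity noted in Section~\ref{sec:proposed}: conditionally on $\mathcal D_{\mathcal N_{t-1}}$, the fantasized dataset $\mathcal D^{\rm RKB}_{t-1}$ has the same law as the true dataset $\mathcal D_{t-1}$. Moreover, $\bm x_t$ is a function of $\mathcal D^{\rm RKB}_{t-1}$ and internal randomness only. Hence the joint law of $(f,\bm x_t)$ given $\mathcal D_{\mathcal N_{t-1}}$ coincides with that of $(g_t,\bm x_t)$. Replacing $f$ by the sample path $g_t$ (together with the fantasized noise) gives a ``virtual'' sequential BO problem in which $\mathcal D^{\rm RKB}_{t-1}$ plays the role of the full history $\mathcal D_{t-1}$.

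\textbf{Step 1: transfer Condition~\ref{cnd_alg} to the fantasized data.} Applying Eq.~\eqref{eq_cnd_alg1} to the prior model $g_t$ with data $\mathcal D^{\rm RKB}_{t-1}$, and taking expectations, I would get
\begin{equation*}
\mathbb E[f^*-f(\bm x_t)]=\mathbb E[g_t^*-g_t(\bm x_t)]\le\mathbb E\left[u_t\,\sigma(\bm x_t;\mathcal D^{\rm RKB}_{t-1})\right]+\xi_t .
\end{equation*}
The posterior variance depends only on the inputs, so $\sigma(\bm x_t;\mathcal D^{\rm RKB}_{t-1})=\sigma(\bm x_t;\mathcal D_{t-1})$.

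\textbf{Step 2: split the standard deviation.} I would decompose
\begin{equation*}
u_t\sigma(\bm x_t;\mathcal D_{t-1})=u_t\sigma(\bm x_t;\mathcal D_{t-1})+(f^*-f(\bm x_t))-(f^*-f(\bm x_t)).
\end{equation*}
A cleaner route is to write the regret as $u_t\sigma_{t-1}(\bm x_t)+v_t$ plus a correction $\mathbb E[\mathbb E[f^*-f(\bm x_t)\mid\mathcal D_{\mathcal N_{t-1}}]-(\text{same conditioned on }\mathcal D_{t-1}\text{-type data})]$. I expect this correction to be bounded by $\mathbb E[\sqrt{C_2}\,\max_{\bm x}\sigma(\bm x;\mathcal D_{\mathcal N_{t-1}})]$-type terms. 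The reasoning is that, under the conditional law given $\mathcal D_{\mathcal N_{t-1}}$, the quantity $\mathbb E[\max_{\bm x} f]-\mathbb E[f(\bm x_t)]$ for an independent-like selection is controlled by a Gaussian maximal inequality. This inequality gives a factor $\sqrt{2\log(|\mathcal X|/2)+2}$ times the conditional standard deviation, which is where $C_2$ originates (the TS/PTS-style argument of \citet{Kandasamy2018-Parallelised}).

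Concretely, I would bound the extra term by $\mathbb E[\sqrt{C_2}\,\sigma(\bm x_t;\mathcal D_{\mathcal N_{t-1}})]$ minus the analogous term with $\mathcal D_{t-1}$. I would then sum the first part via Cauchy--Schwarz,
\begin{equation*}
\sum_t\mathbb E[u_t\sigma(\bm x_t;\mathcal D_{t-1})]\le\sqrt{\textstyle\sum_t\zeta_t\cdot\mathbb E\sum_t\sigma^2(\bm x_t;\mathcal D_{t-1})}\le\sqrt{C_1\gamma_T\sum_t\zeta_t},
\end{equation*}
which yields $B_T$ together with $\sum_t\xi_t$.

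\textbf{Step 3: the delayed-variance term (main obstacle).} I must show
\begin{equation*}
\sum_t\mathbb E\left[\sigma^2(\bm x_t;\mathcal D_{\mathcal N_{t-1}})\right]\le C_1Q\bar\gamma_{T/Q},
\end{equation*}
from which Cauchy--Schwarz gives $\sqrt{C_1C_2QT\bar\gamma_{T/Q}}$. My plan is to partition $[T]$ into $Q$ residue classes $t\equiv r \pmod Q$. Since $|\mathcal N_{t-1}|\ge t-Q$ and the sets $\mathcal N$ are nested, every earlier index in the same class $r$ lies in $\mathcal N_{t-1}$. By monotonicity of the posterior variance under additional data, $\sigma^2(\bm x_t;\mathcal D_{\mathcal N_{t-1}})\le\sigma^2(\bm x_t;\mathcal D_{\text{class }r,<t})$. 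Each class contains about $T/Q$ points, so the standard information-gain lemma bounds its sum by $C_1\gamma_{T/Q}$. Summing over the $Q$ classes, and using concavity of $\bar\gamma$ to handle classes with unequal sizes $\lceil T/Q\rceil$ and $\lfloor T/Q\rfloor$ via Jensen, gives $C_1Q\bar\gamma_{T/Q}$.

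The delicate points are in Step 2: keeping the cross term rigorously bounded by a $\sqrt{C_2}\,\sigma$ term conditioned on the \emph{real} data $\mathcal D_{\mathcal N_{t-1}}$, and checking that $\bm x_t$ is conditionally independent of $f$ given $\mathcal D_{\mathcal N_{t-1}}$ and $g_t$ in the way required by the maximal-inequality step.
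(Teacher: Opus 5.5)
Your Step 1 identity $\mathbb E[f^*-f(\bm x_t)]=\mathbb E[g_t^*-g_t(\bm x_t)]$ is false, and this is exactly where the $Q$-dependent term comes from. Given $\mathcal D_{\mathcal N_{t-1}}$, the input $\bm x_t=\mathcal A(\mathcal D^{\rm RKB}_{t-1})$ is built from $g_t$ at the pending inputs, the fantasized noise, and the algorithm's own randomness. So $\bm x_t$ depends on $g_t$ but is conditionally independent of $f$, and $(f,\bm x_t)$ and $(g_t,\bm x_t)$ do not share a law. What does hold is that $(g_t,\mathcal D^{\rm RKB}_{t-1})$ and $(f,\mathcal D_{t-1})$ are equal in law, so $(g_t,\bm x_t)$ mimics the hypothetical pair $(f,\mathcal A(\mathcal D_{t-1}))$. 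If your identity held, ${\rm BCR}_T\le B_T$ would follow with no dependence on $Q$. That fails: if no evaluation returns during the first $T$ iterations, $\bm x_t$ is independent of $f$ and ${\rm BCR}_T=T\,\mathbb E[f^*]$ grows linearly.

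The correct step uses only $\mathbb E[f^*\mid\mathcal D_{\mathcal N_{t-1}}]=\mathbb E[g_t^*\mid\mathcal D_{\mathcal N_{t-1}}]$ to write
${\rm BCR}_T=\mathbb E[\sum_t g_t^*-g_t(\bm x_t)]+\mathbb E[\sum_t g_t(\bm x_t)-f(\bm x_t)]$.
Your Step 1 reasoning then correctly bounds the first sum by $B_T$. For the second sum, the conditional independence you list as a ``delicate point'' gives $\mathbb E[f(\bm x_t)\mid\mathcal D_{\mathcal N_{t-1}},\bm x_t]=\mu(\bm x_t;\mathcal D_{\mathcal N_{t-1}})$. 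Hence that sum equals $\sum_t\mathbb E[g_t(\bm x_t)-\mu(\bm x_t;\mathcal D_{\mathcal N_{t-1}})]\le\sum_t\mathbb E[\eta_t^+\sigma(\bm x_t;\mathcal D_{\mathcal N_{t-1}})]$, where $\eta_t=\max_{\bm x\in\mathcal X}(g_t(\bm x)-\mu(\bm x;\mathcal D_{\mathcal N_{t-1}}))/\sigma(\bm x;\mathcal D_{\mathcal N_{t-1}})$ and $\eta_t^+=\max\{\eta_t,0\}$. Because $\eta_t$ is correlated with $\bm x_t$, this is not bounded by $\sqrt{C_2}\,\mathbb E[\sigma(\bm x_t;\mathcal D_{\mathcal N_{t-1}})]$ as your Step 2 suggests. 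You need Cauchy--Schwarz together with $\mathbb E[(\eta_t^+)^2\mid\mathcal D_{\mathcal N_{t-1}}]\le C_2$ (Lemma~\ref{takeno24_3_2}). Your Step 2 ``correction'', with a subtracted $\mathcal D_{t-1}$ term and possibly a $\max_{\bm x}\sigma$ factor, is not well defined and should be replaced by this argument.

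Your Step 3 also breaks in the asynchronous setting the theorem covers. The constraints $|\mathcal N_{t-1}|\ge t-Q$ and $\mathcal N_{t-1}\subset\mathcal N_t$ limit only how many indices are pending, not which ones. Take $Q=2$ and $\mathcal N_{t-1}=[t-1]\setminus\{1\}$ for all $t\ge2$, i.e.\ one worker stalls on the first query. Then index $1$ is never observed, so for $t=3,5,\dots$ the earlier members of the class $\{1,3,5,\dots\}$ are not all in $\mathcal N_{t-1}$, and your variance-monotonicity step is unjustified. The partition must follow the completion pattern: assign each $t$ to a worker whose previous evaluation has finished (one exists because at most $Q-1$ indices are pending). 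Then, within each worker's set $\mathcal T_q$, all earlier indices lie in $\mathcal N_{t-1}$. These sets can have very unequal sizes, and concavity of $\bar\gamma$ with Jensen yields $\sum_q C_1\gamma_{|\mathcal T_q|}\le C_1Q\bar\gamma_{T/Q}$, as in Lemma~\ref{variance_bound2}.
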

\begin{restatable}[BCR bound for continuous input domains]{theorem}{thmTwo}\label{thm2}
\it Suppose that Assumptions~\ref{asm_basic} and \ref{asm_derivative} and Condition~\ref{cnd_alg} hold.
Assume that the kernel is a linear kernel, a Gaussian kernel, or a Mat\'ern-$\nu$ kernel with $\nu>1$.
Let $\bar\gamma_T$ be a concave upper bound of $\gamma_T$.
Define $L_\sigma = O(1)$ as in Lemma~\ref{kusakawa22_e_4}, 
$L=\max{\left\{L_\sigma,b\left(\sqrt{\log{\left(ad\right)}}+\sqrt{\pi}/2\right)\right\}}$,
and $s_t=2+2d\log{\lceil drLt^2\rceil}-2\log2$.
Let $\bm x_t=\mathcal A{\left(\mathcal D^{\rm RKB}_{t-1}\right)}$. Then, the following holds:
\begin{equation}
{\rm BCR}_T\le B_T+\frac{\pi^2}{3}+\frac{\pi^2}{6}\sqrt{s_T}+\sqrt{C_1Qs_TT\bar\gamma_{T/Q}}\quad\left(T\in\mathbb N\right),
\end{equation}
where $C_1$ and $Q$ are the same as in Theorem \ref{thm1} and $B_T$ is the same as that of Theorem~\ref{thm1}.
\end{restatable}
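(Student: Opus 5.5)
The plan is to reduce the continuous case to the finite case through a discretization argument, and then reuse the decomposition behind Theorem~\ref{thm1}.

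\textbf{Discretization.} For each $t$, let $\mathcal X_t\subset\mathcal X$ be a uniform grid with $\lceil drLt^2\rceil^d$ points. Then every $\bm x\in\mathcal X$ has a nearest grid point $[\bm x]_t$ satisfying $\|\bm x-[\bm x]_t\|_1\le 1/(Lt^2)$.

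\textbf{Decomposition of the instantaneous regret.} Let $\tilde{\mathcal D}_{t-1}:=\mathcal D^{\rm RKB}_{t-1}$ denote the fantasized data. I write
\begin{equation*}
f^*-f(\bm x_t)=\bigl(f^*-\mu(\bm x_t;\tilde{\mathcal D}_{t-1})\bigr)+\bigl(\mu(\bm x_t;\tilde{\mathcal D}_{t-1})-f(\bm x_t)\bigr).
\end{equation*}
The key structural fact is the one noted in Section~\ref{sec:proposed}. Conditioned on $\mathcal D_{\mathcal N_{t-1}}$, the fantasized data $\tilde{\mathcal D}_{t-1}$ has the same law as the true data $\mathcal D_{t-1}$, and $\bm x_t$ is independent of $f$ given $\mathcal D_{\mathcal N_{t-1}}$ and the algorithm's randomness. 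I will handle the two pieces as follows.

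\textbf{First piece.} I take expectations conditional on $\tilde{\mathcal D}_{t-1}$. Because $(f,\tilde{\mathcal D}_{t-1})$ has the same joint distribution as $(f,\mathcal D_{t-1})$, Condition~\ref{cnd_alg}(ii) applies with $\tilde{\mathcal D}_{t-1}$ in place of $\mathcal D_{t-1}$. Summing over $t$ and applying Cauchy--Schwarz gives $\sum_t\mathbb E[u_t\sigma(\bm x_t;\tilde{\mathcal D}_{t-1})]\le\sqrt{\sum_t\zeta_t\,\mathbb E\sum_t\sigma^2(\bm x_t;\tilde{\mathcal D}_{t-1})}$. The variance depends only on the inputs, and $\tilde{\mathcal D}_{t-1}$ contains all of $\bm x_1,\dots,\bm x_{t-1}$. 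The standard bound $\sum_t\sigma^2(\bm x_t;\mathcal D_{t-1})\le C_1\gamma_T$ therefore yields the $B_T$ term. The first piece should itself be taken in the form $f^*-f(\bm x_t)$ under the fantasy world, which means I will actually compare the true $f$ with the sample $g_t$. More precisely, the regret $f^*-f(\bm x_t)$ equals $[f^*-\tilde f(\bm x_t)]+[\tilde f(\bm x_t)-f(\bm x_t)]$, where $\tilde f$ is a sample from $p(f\mid\tilde{\mathcal D}_{t-1})$ coupled so that $(\tilde f,\tilde{\mathcal D}_{t-1})\overset{d}{=}(f,\mathcal D_{t-1})$. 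Handling the cross term is then the real task.

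\textbf{Cross term (main obstacle).} This is where the parallel penalty arises, and it is the same as in the finite proof. I bound $\mathbb E[\tilde f(\bm x_t)-f(\bm x_t)]$ through the posterior given $\mathcal D_{\mathcal N_{t-1}}$. Both $f$ and $\tilde f$ are distributed as $p(\cdot\mid\mathcal D_{\mathcal N_{t-1}})$, but now the input is continuous, so a union bound over $\mathcal X$ is unavailable. I therefore pass to $[\bm x_t]_t$:
\begin{equation*}
|f(\bm x_t)-f([\bm x_t]_t)|\le \sup_{\bm x}\|\nabla f\|_\infty/(Lt^2).
\end{equation*}
By Assumption~\ref{asm_derivative} and the definition of $L$, the expectation of this quantity is $O(1/t^2)$. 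The same Lipschitz control for posterior means and standard deviations follows from Lemma~\ref{kusakawa22_e_4} with constant $L_\sigma$. Summing these over $t$ produces the $\pi^2/3$ term. On the finite set $\mathcal X_t$, the Gaussian maximal inequality gives
\begin{equation*}
\mathbb E\Bigl[\max_{\bm x\in\mathcal X_t}\bigl(f(\bm x)-\mu(\bm x;\mathcal D_{\mathcal N_{t-1}})\bigr)/\sigma(\bm x;\mathcal D_{\mathcal N_{t-1}})\Bigr]\le\sqrt{s_t},
\end{equation*}
where $s_t=2\log(|\mathcal X_t|/2)+2$ matches the $C_2$ of Theorem~\ref{thm1} with $|\mathcal X|$ replaced by $|\mathcal X_t|$. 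The discretization error in $\sigma$ (order $L_\sigma/(Lt^2)$) multiplied by $\sqrt{s_t}\le\sqrt{s_T}$ gives the $\frac{\pi^2}{6}\sqrt{s_T}$ term. The remaining term is $\sqrt{s_T}\,\mathbb E\sum_t\sigma(\bm x_t;\mathcal D_{\mathcal N_{t-1}})$, after taking $s_t\le s_T$.

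\textbf{Batch information bound.} It remains to show $\sum_t\sigma(\bm x_t;\mathcal D_{\mathcal N_{t-1}})\le\sqrt{C_1QT\bar\gamma_{T/Q}}$. This is done exactly as in Theorem~\ref{thm1}. Since $t-|\mathcal N_{t-1}|\le Q$, I split the indices by $t \bmod Q$ into $Q$ subsequences of length about $T/Q$. Within each subsequence $j$, the earlier elements are already in $\mathcal N_{t-1}$. Monotonicity of the posterior variance then gives $\sum_{t\in j}\sigma^2\le C_1\gamma_{T/Q}$. Applying Cauchy--Schwarz and concavity of $\bar\gamma$ across the $Q$ subsequences gives the bound. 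Combining all the pieces yields the theorem.
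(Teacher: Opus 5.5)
Your overall architecture matches the paper's: the fantasized regret $\mathbb E[\sum_t g_t^*-g_t(\bm x_t)]\le B_T$, a cross term $\mathbb E[\sum_t g_t(\bm x_t)-f(\bm x_t)]$, discretization at $[\bm x_t]_t$, and a $Q$-dependent variance bound. However, the step that produces the main term does not go through.

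You pass from $\mathbb E[\sum_t\eta_t\,\sigma(\bm x_t;\mathcal D_{\mathcal N_{t-1}})]$ to $\sqrt{s_T}\,\mathbb E[\sum_t\sigma(\bm x_t;\mathcal D_{\mathcal N_{t-1}})]$ using only the first-moment bound $\mathbb E[\eta_t]\le\sqrt{s_t}$. Here $\eta_t$ must be the maximal standardized deviation over $\mathcal X_t$ of $g_t$, not of $f$. The $f$ part averages to $\mu(\bm x_t;\mathcal D_{\mathcal N_{t-1}})$ because $f$ is conditionally independent of $\bm x_t$ given $\mathcal D_{\mathcal N_{t-1}}$, so the whole difficulty sits in $g_t$.

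Pulling $\eta_t$ out of the expectation would require either $\eta_t\le\sqrt{s_t}$ almost surely or independence of $\eta_t$ and $\bm x_t$. Neither holds. $\eta_t$ is an unbounded Gaussian maximum, and $\bm x_t=\mathcal A(\mathcal D^{\rm RKB}_{t-1})$ depends on $g_t$ through the imputed values $g_t(\bm x_i)+\varepsilon^{(t)}_i$. So nothing stops $\mathcal A$ from choosing high-variance points exactly when $\eta_t$ is large. Your $\sigma$-discretization term is fine as written, because there the factor multiplying $\eta_t$ is deterministically at most $1/t^2$.

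The missing ingredient is the second-moment bound $\mathbb E[\eta_t^2\mathbbm 1\{\eta_t\ge0\}\mid\mathcal D_{\mathcal N_{t-1}}]\le s_t$ (Lemma~\ref{takeno24_3_2} with $\mathcal X'=\mathcal X_t$), combined with Cauchy--Schwarz over $t$ inside the expectation. Since $\sum_t\sigma^2(\bm x_t;\mathcal D_{\mathcal N_{t-1}})\le C_1Q\bar\gamma_{T/Q}$ holds for every realization, that factor can be pulled out. Jensen applied to $\sqrt{\sum_t\eta_t^2\mathbbm 1\{\eta_t\ge0\}}$ then yields $\sqrt{C_1Qs_TT\bar\gamma_{T/Q}}$ without any independence.

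Your proof of that variance bound also rests on a claim that is false outside synchronous (in-order) completion, which the theorem is meant to cover. The constraint $t-|\mathcal N_{t-1}|\le Q$ limits how many evaluations are pending, not which ones. With $Q=2$, job $1$ may stay pending while jobs $2,3,\dots$ finish. Then at $t=3$ the earlier element $1$ of the odd residue class is not in $\mathcal N_2=\{2\}$. Lemma~\ref{variance_bound2} instead partitions by worker: a worker starts a new job only after finishing its previous one, so its earlier jobs always lie in $\mathcal N_{t-1}$. These classes can have very unequal sizes, which is exactly where concavity of $\bar\gamma$ and Jensen are needed.

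Smaller corrections:
\begin{itemize}
\item $(f,\mathcal D^{\rm RKB}_{t-1})$ does not have the law of $(f,\mathcal D_{t-1})$; it is $(g_t,\mathcal D^{\rm RKB}_{t-1})$ that does, given $\mathcal D_{\mathcal N_{t-1}}$.
\item Before invoking Condition~\ref{cnd_alg} you must replace $f^*$ by $g_t^*$, using $\mathbb E[f^*\mid\mathcal D_{\mathcal N_{t-1}}]=\mathbb E[g_t^*\mid\mathcal D_{\mathcal N_{t-1}}]$.
\item Lemma~\ref{kusakawa22_e_4} controls $\sigma$ but not $\mu$. The second $\pi^2/6$ should therefore come from Lemma~\ref{takeno23_h_2} applied to $g_t$, which has the prior law marginally.
\item The grid points must lie in $\mathcal X$, which the paper's covering construction guarantees.
\end{itemize}
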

See Appendices~\ref{apn:thm1} and \ref{apn:thm2} for the proof of Theorems~\ref{thm1} and \ref{thm2}, respectively.

The upper bounds in Theorems~\ref{thm1} and \ref{thm2} $\tilde{O}(\sqrt{QT \bar{\gamma}_{T/Q}})$ are tighter with respect to the dependence on $Q$ than the known results on BCR $\tilde{O}(\sqrt{QT \bar{\gamma}_{T}})$ \citep{Desautels2014-Parallelizing,Kandasamy2018-Parallelised,nava2022diversified} by leveraging the proof technique modified from \citep{vakili2021-scalable}, shown in Lemma~\ref{variance_bound2}.\footnote{We conjecture that the similar upper bound immediately follows from Lemma~\ref{variance_bound2} for TS-based methods \citep{Kandasamy2018-Parallelised,nava2022diversified}}
Let us consider the batch setting in which $T = Q B$ for the number of batches $B \in \mathbb{N}$ to interpret the upper bound.
Then, $\sqrt{QT \bar{\gamma}_{T/Q}} = Q\sqrt{ B \bar{\gamma}_{B}}$.
Hence, our BCR bounds are $o(B)$ and $O(Q)$ if $\bar{\gamma}_T = o(T / \log T)$ in contrast to that the existing bound $\tilde{O}(\sqrt{QT \bar{\gamma}_{T}})$ implies $\omega(Q)$.
%
%
%

The key step in the proof is decomposing the regret into the regret incurred in the fantasized sample path $g_t$ and the error between the fantasized sample path $g_t$ and the objective function $f$ as follows:
\begin{equation}
    \begin{split}
        {\rm BCR}_T
        =&\underbrace{\mathbb E{\left[\sum_{t=1}^Tg_t^*-g_t{\left(\bm x_t\right)}\right]}}_{A_1}+\underbrace{\mathbb E{\left[\sum_{t=1}^Tg_t{\left(\bm x_t\right)}-f{\left(\bm x_t\right)}\right]}}_{A_2},
    \end{split}
\end{equation}
where we leverage the fact that $\mathcal D_{t-1}$ and $\mathcal D^{\rm RKB}_{t-1}$ are identically distributed.
By this decomposition, we can obtain the upper bound $A_1 \leq B_T$, which can be seen as regret incurred by sequential optimization.
Actually, vanilla BO methods satisfying Condition~\ref{cnd_alg} achieve the regret bound $B_T$ as shown in Lemma~\ref{general_RA}.
Furthermore, $A_2$ can be bounded from above by $\tilde{O}\bigl(\sqrt{QT\gamma_{T/Q}}\bigr)$, which contains the penalized term $Q$ incurred by imputing the posterior samples.
For the proof of Theorem~\ref{thm2}, we further consider the discretization error that is bounded above by $\frac{\pi^2}{3}+\frac{\pi^2}{6}s_T$.

Next, we show the BSR bound in a consistent way for finite and continuous input domains:
\begin{restatable}[BSR bound]{theorem}{thmThree} \label{thm3}
\it 
Suppose that Condition~\ref{cnd_alg} holds and that either (i) Assumption~\ref{asm_basic} holds and $|\mathcal{X}| < \infty$, or (ii) Assumptions~\ref{asm_basic} and \ref{asm_derivative} hold.
Let $\bm x_t=\mathcal A{\left(\mathcal D^{\rm RKB}_{t-1}\right)}$.
Then, the following holds:
\begin{equation}
{\rm BSR}_T\le \frac{B_T}{T}\qquad\left(T\in\mathbb N\right),
\end{equation}
where $B_T$ is the same as in Theorem~\ref{thm1}.
\end{restatable}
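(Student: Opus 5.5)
The plan is to avoid the BCR bound altogether, since it carries the $Q$-dependent penalty $A_2$. Instead I will compare the simple regret at time $T$ with the \emph{fantasized} per-step regret $\mathbb E[g_t^*-g_t(\bm x_t)]$ at every $t\le T$, and then average over $t$. The averaged fantasized regret is exactly $A_1/T$, and $A_1\le B_T$ is the sequential-type bound obtained from Condition~\ref{cnd_alg} via Lemma~\ref{general_RA}. This already explains why the bound does not depend on $Q$.

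\textbf{Step 1: rewrite the simple regret.} Since $\hat{\bm x}_T$ is $\mathcal D_T$-measurable and $\mathbb E[f(\bm x)\mid\mathcal D_T]=\mu(\bm x;\mathcal D_T)$, the tower property gives
\begin{equation*}
{\rm BSR}_T=\mathbb E[f^*]-\mathbb E\bigl[\max_{\bm x\in\mathcal X}\mu(\bm x;\mathcal D_T)\bigr].
\end{equation*}
This uses that the algorithm's internal randomness (the $g_t$ and the $\varepsilon^{(t)}_i$) is conditionally independent of $f$ given the data. Next, $M_t:=\max_{\bm x}\mu(\bm x;\mathcal D_t)$ is a submartingale. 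Indeed, $\mu(\cdot;\mathcal D_t)$ is a martingale in $t$, and a maximum of conditional expectations is dominated by the conditional expectation of the maximum (Jensen's inequality). Hence $\mathbb E[M_T]\ge\mathbb E[M_{t-1}]$ for every $t\le T$.

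\textbf{Step 2: pass to the fantasized data.} By construction, $\mathcal D^{\rm RKB}_{t-1}$ has the same law as $\mathcal D_{t-1}$. Therefore $\mathbb E[M_{t-1}]=\mathbb E[\max_{\bm x}\mu(\bm x;\mathcal D^{\rm RKB}_{t-1})]$, and this is at least $\mathbb E[\mu(\bm x_t;\mathcal D^{\rm RKB}_{t-1})]$. Now $g_t$ is a posterior sample given $\mathcal D_{\mathcal N_{t-1}}$, and the fantasized observations are generated from $g_t$ exactly like real noisy observations. So the conditional law of $g_t$ given $\mathcal D^{\rm RKB}_{t-1}$ is the GP posterior given $\mathcal D^{\rm RKB}_{t-1}$. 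Since $\bm x_t=\mathcal A(\mathcal D^{\rm RKB}_{t-1})$ uses only this data plus independent randomness, we get $\mathbb E[g_t(\bm x_t)]=\mathbb E[\mu(\bm x_t;\mathcal D^{\rm RKB}_{t-1})]$. In addition, $g_t$ has the same marginal law as $f$, so $\mathbb E[g_t^*]=\mathbb E[f^*]$. Combining these facts,
\begin{equation*}
{\rm BSR}_T\le\mathbb E[g_t^*-g_t(\bm x_t)]\quad\text{for all } t\in[T],
\qquad\text{hence}\qquad
{\rm BSR}_T\le\frac1T\sum_{t=1}^T\mathbb E[g_t^*-g_t(\bm x_t)]=\frac{A_1}{T}.
\end{equation*}

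\textbf{Step 3: bound $A_1$.} Conditioning on $\mathcal D^{\rm RKB}_{t-1}$, which plays the role of $\mathcal D_{t-1}$ in Condition~\ref{cnd_alg} with $g_t$ in the role of $f$, we apply \eqref{eq_cnd_alg1} and \eqref{eq_cnd_alg2}. Cauchy--Schwarz then gives
\[
A_1\le\sqrt{\textstyle\sum_t\zeta_t\,\mathbb E\sum_t\sigma^2(\bm x_t;\mathcal D^{\rm RKB}_{t-1})}+\textstyle\sum_t\xi_t .
\]
To bound the variance sum by $C_1\gamma_T$, note that the posterior variance depends only on the inputs, not the outputs. So $\sigma(\bm x_t;\mathcal D^{\rm RKB}_{t-1})=\sigma(\bm x_t;\mathcal D_{t-1})$, which is the standard sequential variance along the full input sequence. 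Hence $\sum_t\sigma^2\le C_1\gamma_T$, and $A_1\le B_T$. This is what Lemma~\ref{general_RA} provides.

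\textbf{Main obstacle.} I expect the main difficulty to be Step 2's claim that the conditional law of $g_t$ given $\mathcal D^{\rm RKB}_{t-1}$ is the GP posterior given that data. This requires checking carefully that the index sets $\mathcal N_{t-1}$ and the selection of inputs are independent of the unobserved outputs, so that the joint law of $(g_t,\mathcal D^{\rm RKB}_{t-1})$ matches that of $(f,\mathcal D_{t-1})$. In the continuous case one also has to verify measurability and integrability of the suprema, which follows from Assumption~\ref{asm_derivative}.
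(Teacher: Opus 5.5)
Your proposal is correct and follows essentially the same route as the paper: monotonicity of ${\rm BSR}_t$ in $t$, a per-step comparison ${\rm BSR}_T\le\mathbb E[g_t^*-g_t(\bm x_t)]$ that uses the fact that $(g_t,\mathcal D^{\rm RKB}_{t-1})$ has the same joint law as $(f,\mathcal D_{t-1})$, averaging over $t\le T$, and the $Q$-free bound $A_1\le B_T$, which is Lemma~\ref{virtual_regret} (what your Step 3 re-derives) rather than Lemma~\ref{general_RA} itself. Your submartingale phrasing of monotonicity, and your comparison at $\mathcal D_{t-1}$ (posterior-mean maximizer versus the $\mathcal D^{\rm RKB}_{t-1}$-measurable choice $\bm x_t$) in place of the paper's detour through $\mu(\mathcal A(\mathcal D_{t-1});\mathcal D_t)\le\mu(\hat{\bm x}_t;\mathcal D_t)$, are only cosmetic variations.
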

See Appendix~\ref{apn:thm3} for the proof.

Importantly, Theorem~\ref{thm3} shows the vanishing BSR upper bound independent of $Q$.
Although \citet{iwazaki2025improved,takeno2026regret} have shown a tighter high-probability cumulative regret bound for the sequential BO methods, BSR bounds for sequential BO methods \citep{Russo2014-learning,Takeno2023-randomized,takeno2024-posterior,takeno2025-regret}, PTS \citep{nava2022diversified}, DPP-TS \citep{nava2022diversified}, and UCB-PE \citep{Contal2013-Parallel} have the same rate as ours.
Thus, although we conjecture that our BSR bound remains loose when $Q$ is small, this looseness is a common limitation for the prior studies.
In addition, the existing PBO methods listed above are fully distributed or joint selection methods, as discussed in Section \ref{sec:related}.
Thus, RKB is the first greedy PBO method to explicitly encourage diversity and to achieve the BSR guarantee without dependence on $Q$.

The key observation for the proof is the monotonically decreasing property of the BSR, that is,
\begin{equation}
    {\rm BSR}_t \geq {\rm BSR}_T,
\end{equation}
for all $t \leq T$.
Thus, we have the BSR upper bound by the average of fantasized regret as follows:
\begin{equation}
    {\rm BSR}_T\le\frac{1}{T} \mathbb E{\left[\sum_{t=1}^Tg_t^*-g_t{\left(\bm x_t\right)}\right]}.
\end{equation}
Then, this term can be bounded from above by $B_T$ without a penalty term with respect to $Q$.

%% file: manuscripts/5_experiment.tex
\begin{figure}[t]
\centering
\includegraphics[width=\linewidth]{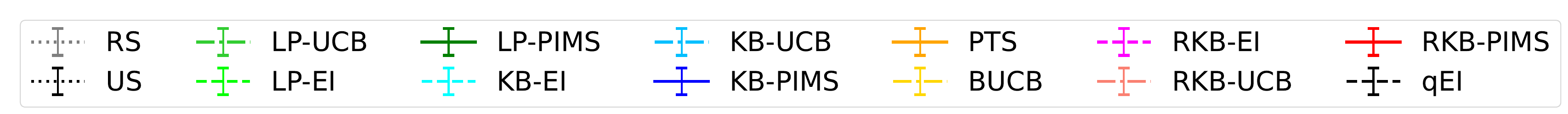}\\
\includegraphics[height=.27\linewidth]{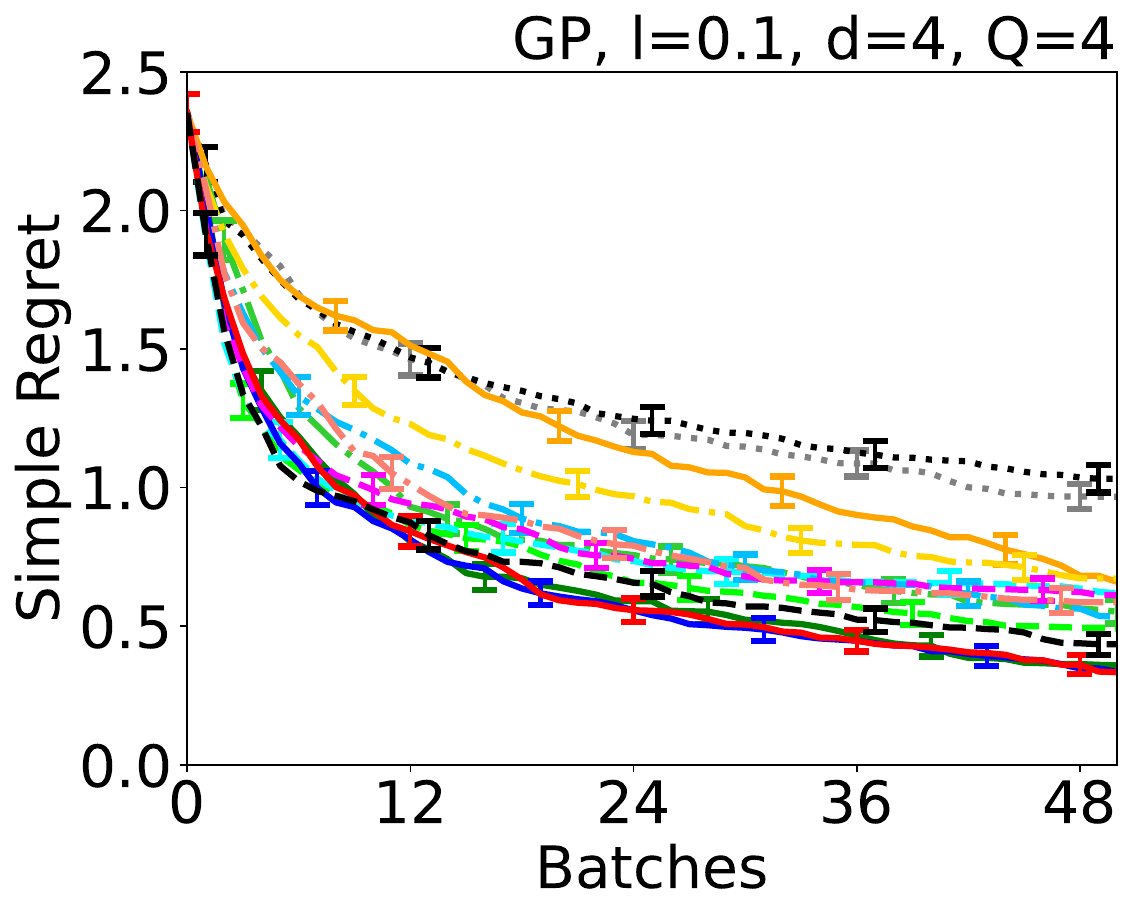}
\includegraphics[height=.27\linewidth]{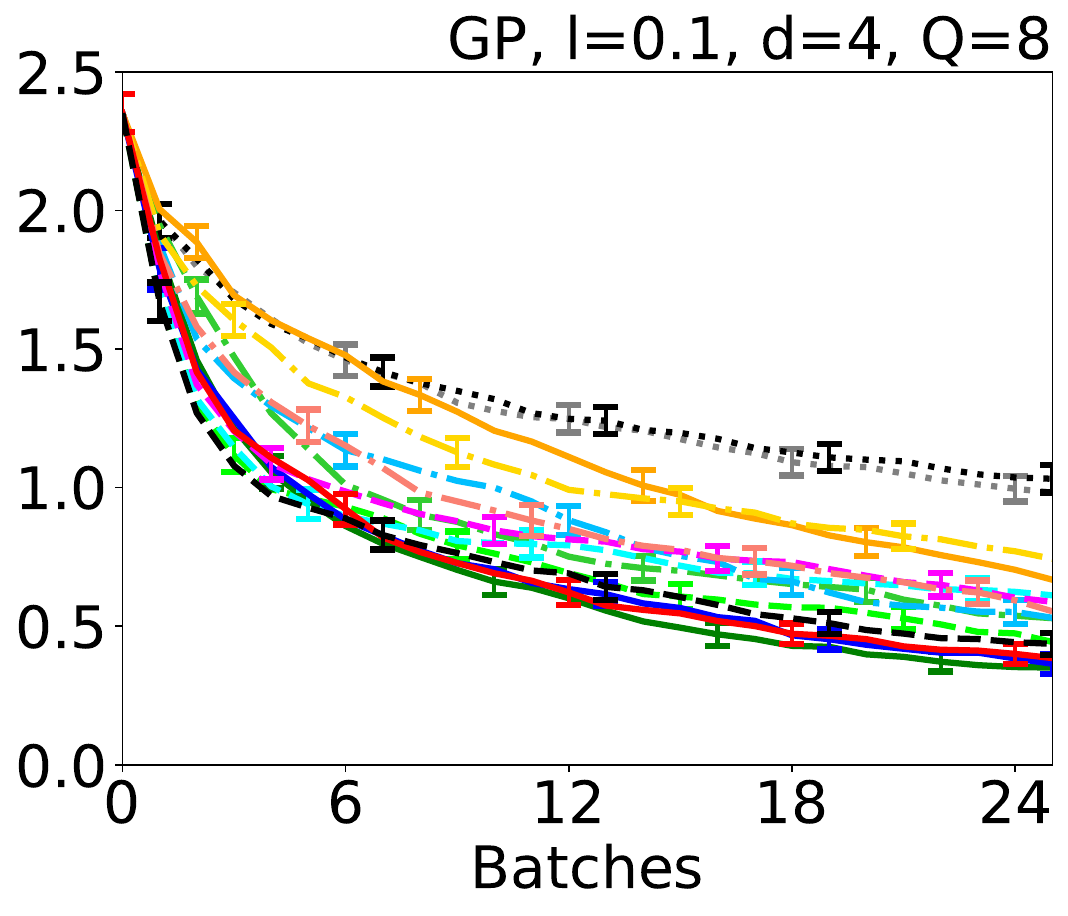}
\includegraphics[height=.27\linewidth]{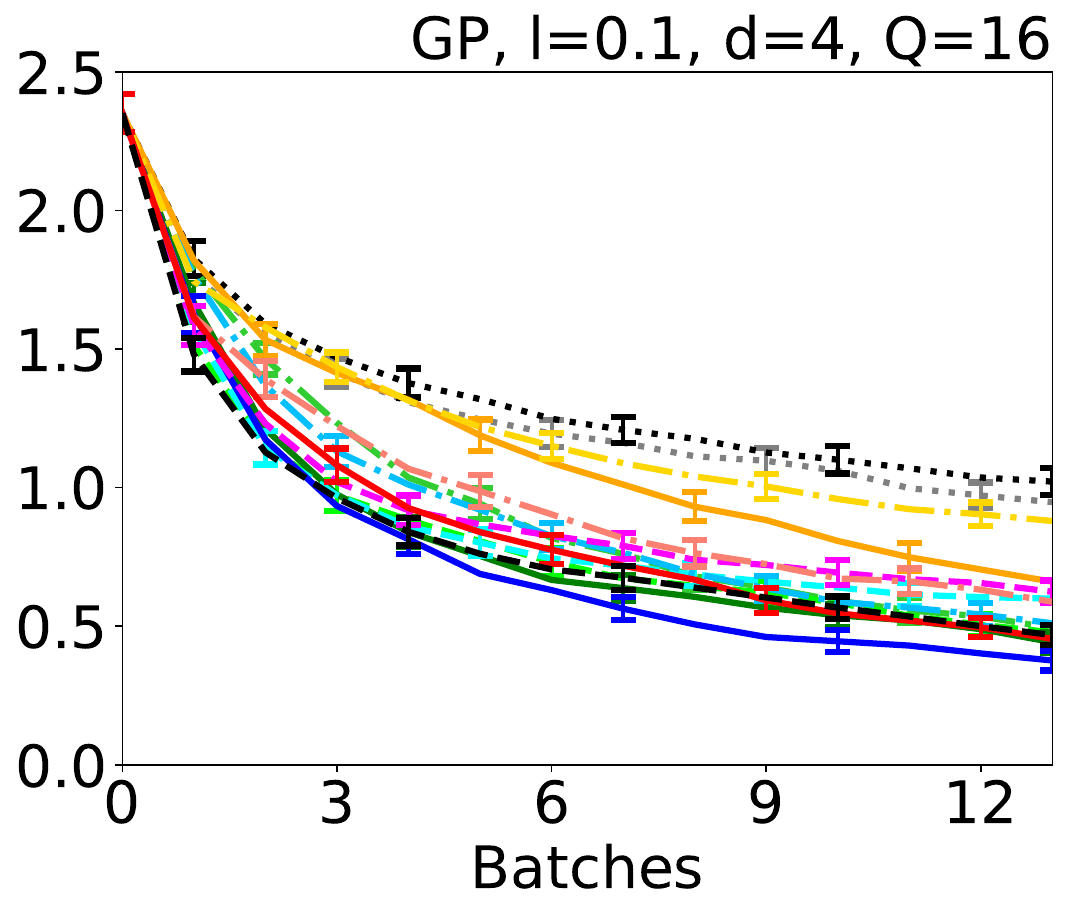}

\includegraphics[height=.26\linewidth]{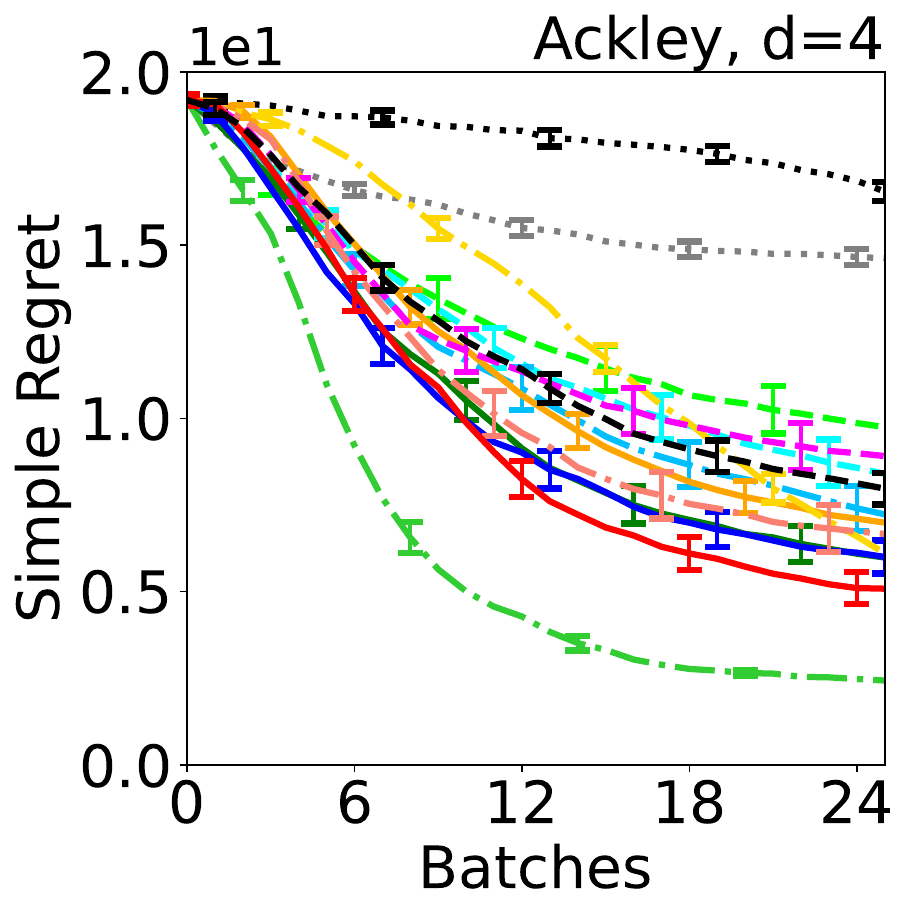}
\includegraphics[height=.26\linewidth]{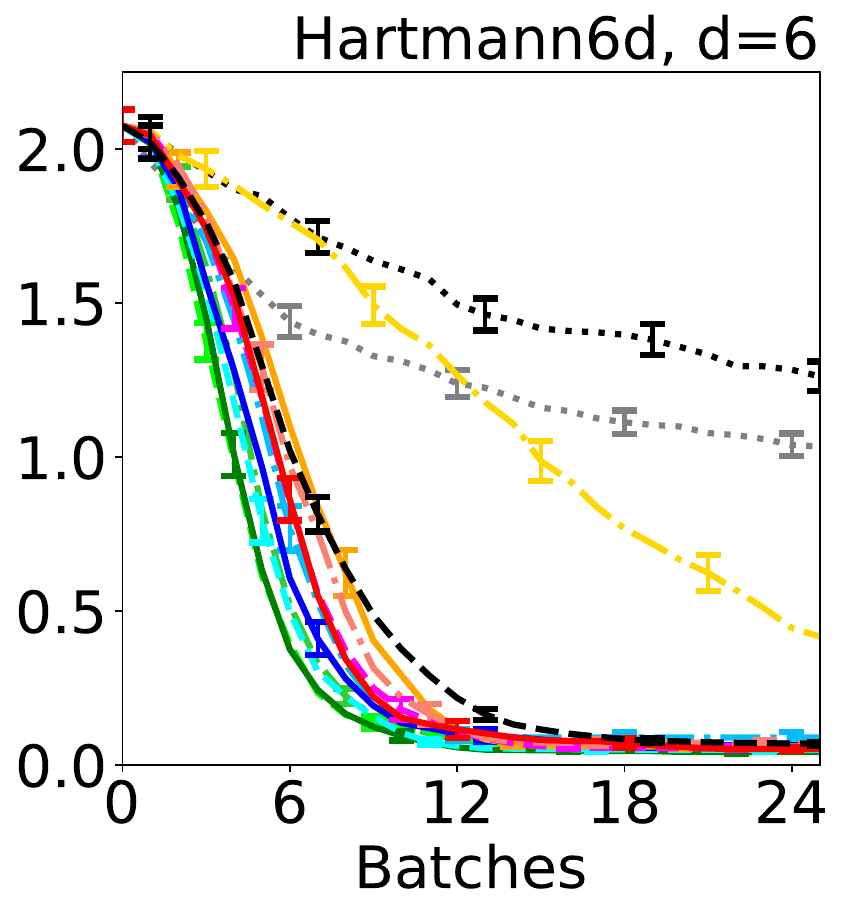}
\includegraphics[height=.26\linewidth]{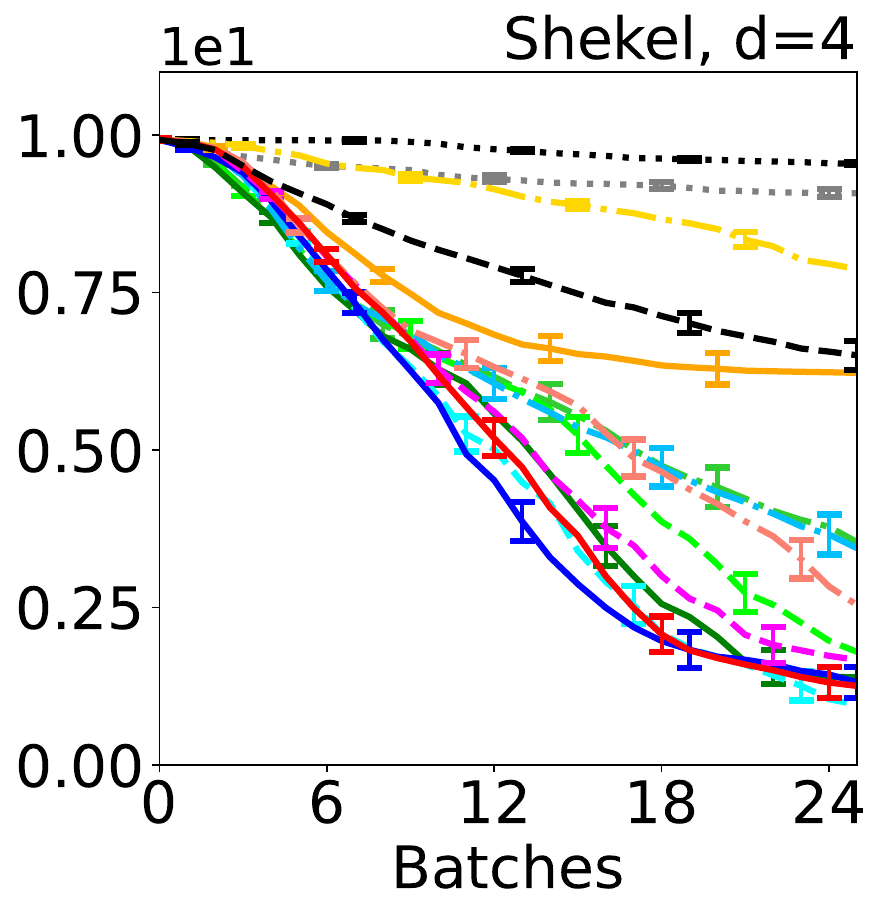}
\includegraphics[height=.26\linewidth]{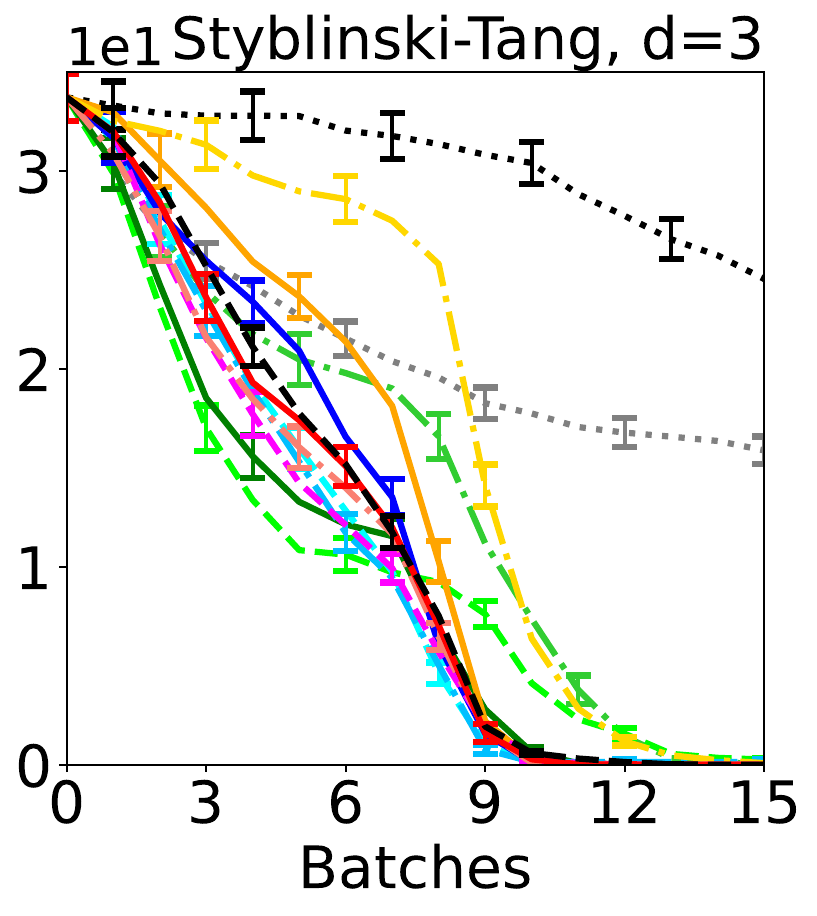}

\includegraphics[height=.24\linewidth]{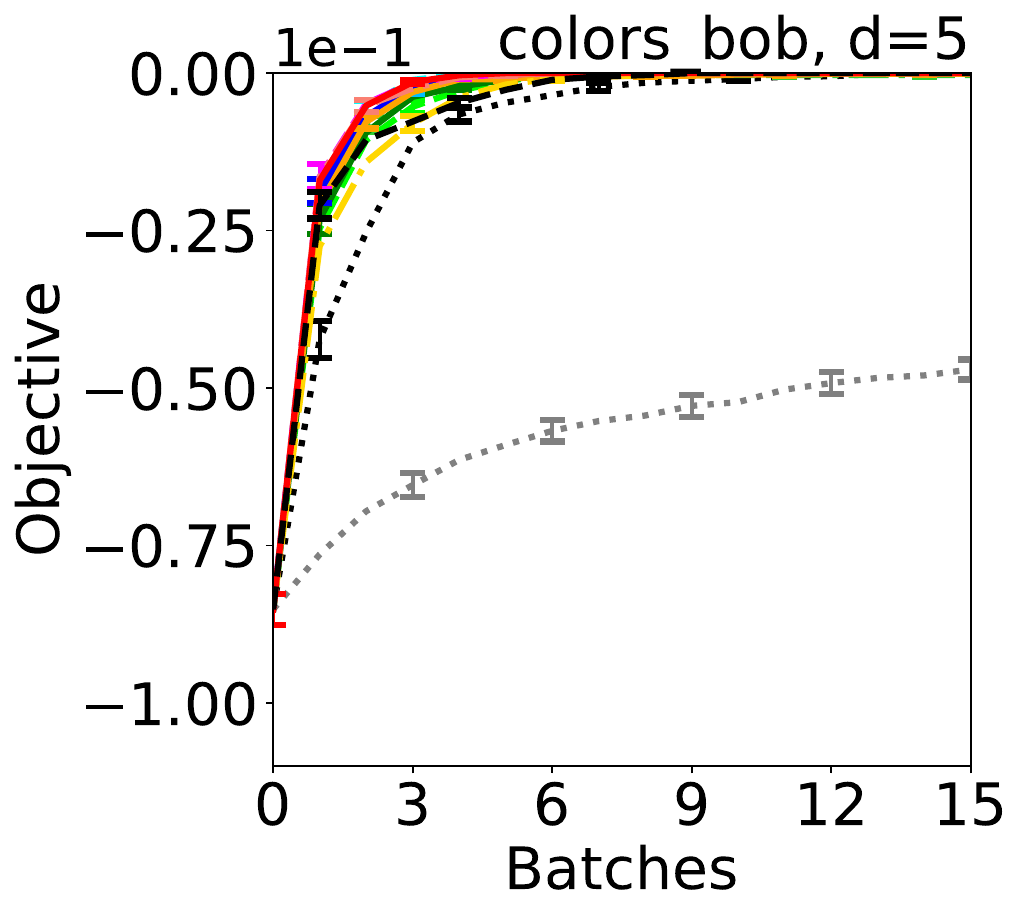}
\includegraphics[height=.24\linewidth]{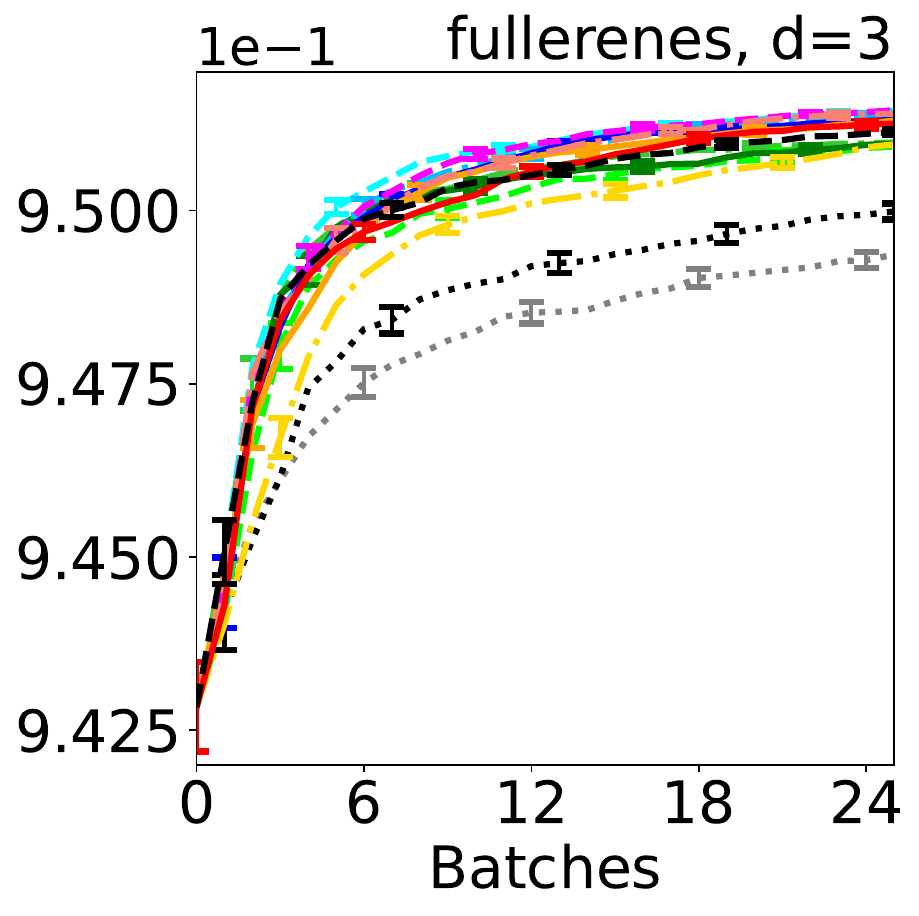}
\includegraphics[height=.24\linewidth]{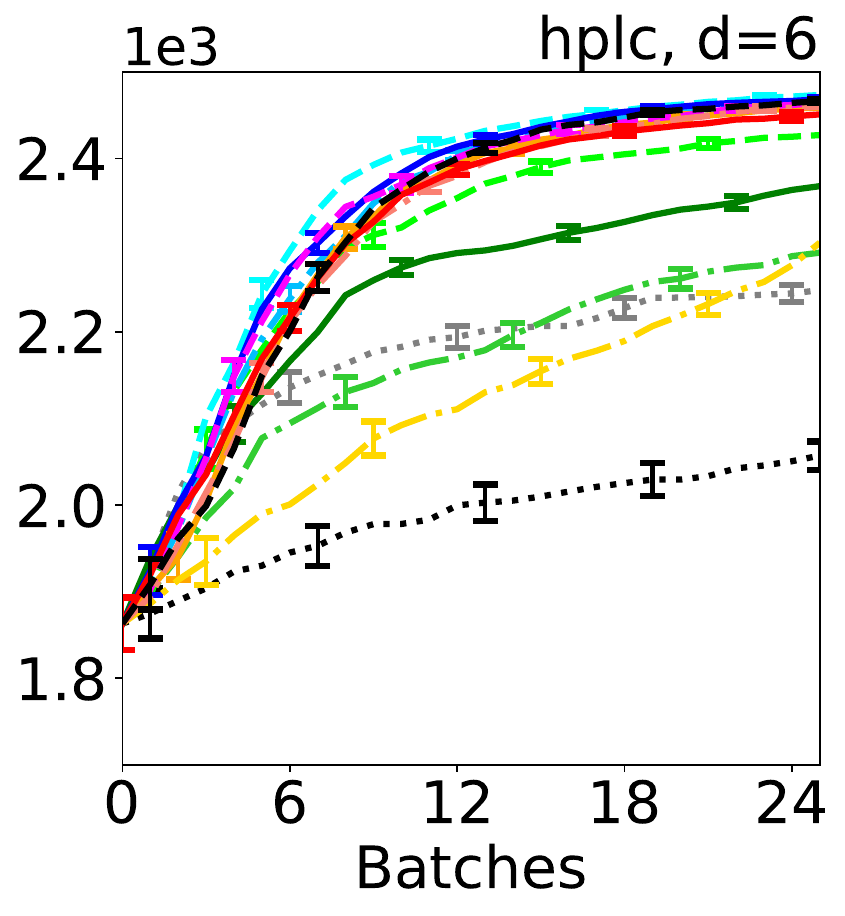}
\includegraphics[height=.24\linewidth]{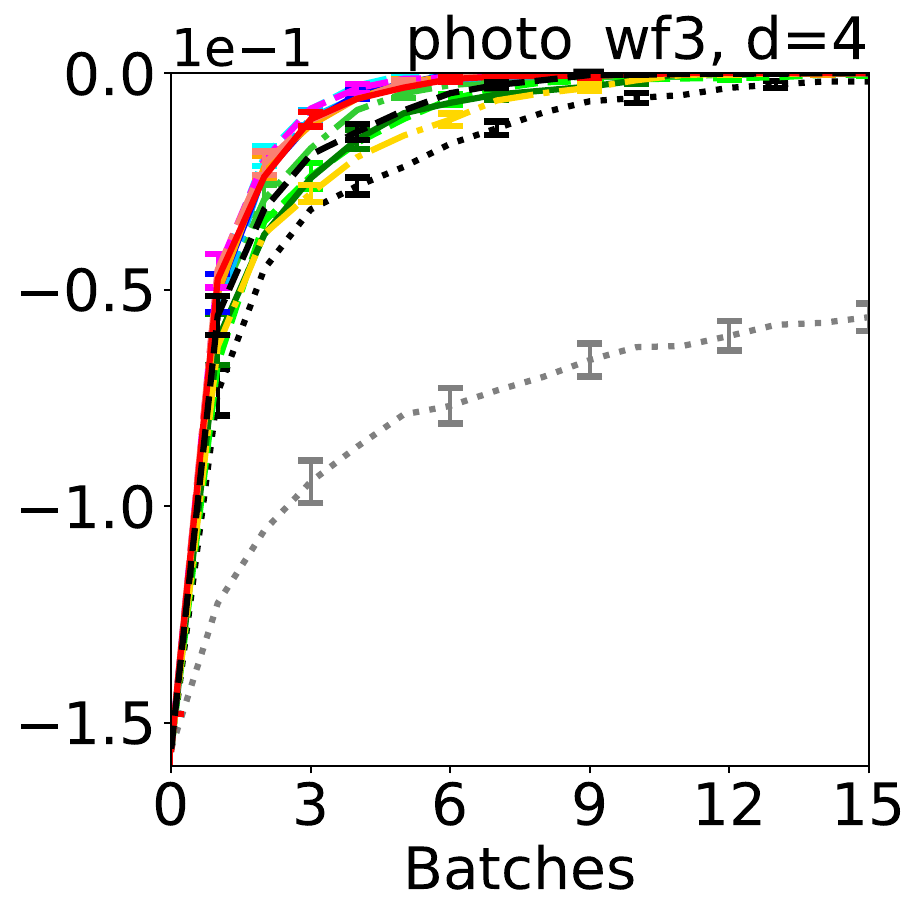}
\caption{
Mean and standard error of simple regret or best objective value across the 100 experiments on each condition. Rows correspond to the following objective functions: first, synthetic; second, benchmarks; third, emulators. One batch corresponds to $Q$ iterations, and $Q=8$ for benchmarks and emulators. Methods qLEI and qLNEI are denoted as qEI in the legend.
}
\label{fig:result}
\end{figure}

\begin{figure}[t]
\centering
\begin{minipage}[t]{.86\linewidth}
\vspace{0pt}
\includegraphics[width=\linewidth]{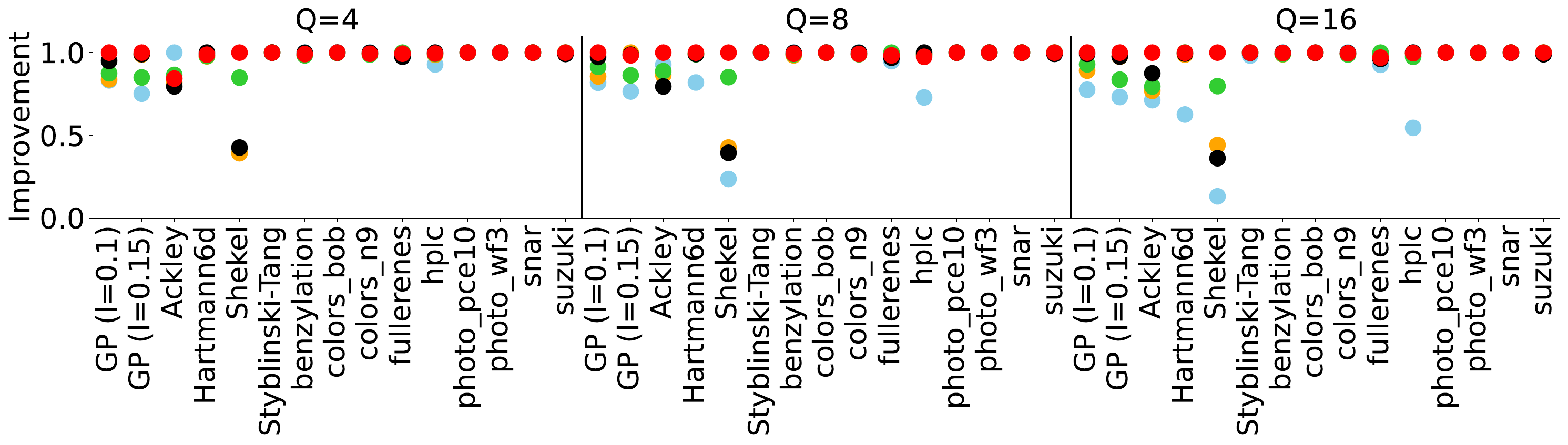}
\end{minipage}
\begin{minipage}[t]{.13\linewidth}
\vspace{0pt}
\includegraphics[width=\linewidth]{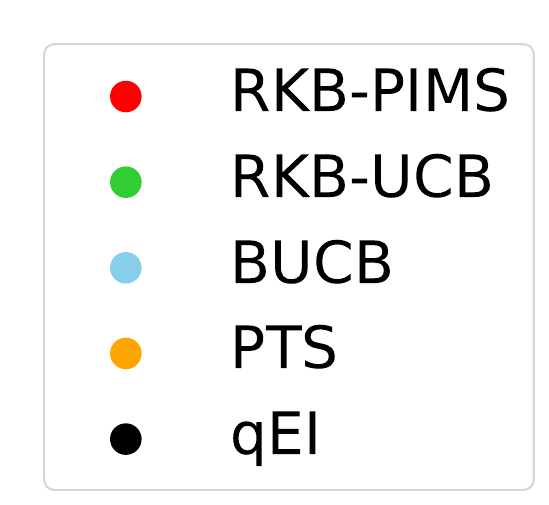}
\end{minipage}
\caption{
Mean improvement of the best objective value $\max_{i\in\left[t\right]}f{\left(\bm x_i\right)}$ across the 100 experiments on each condition.
The improvement amount has been normalized so that its maximum value among the five compared methods is one.
Methods qLEI and qLNEI are denoted as qEI in the legend.
}
\label{fig:result_comparison}
\end{figure}

\clearpage
\section{Experiments}
\label{sec:experiment}

We conducted experiments to demonstrate the efficiency of our RKB.
As objective functions, we used synthetic and benchmark functions, as well as emulators trained on real-world data.
As a performance measure, we report the simple regret $f(\bm x^*)-\max_{i\in\left[t\right]}f{\left(\bm x_i\right)}$ for synthetic and benchmark functions and the best objective value $\max_{i\in\left[t\right]}f{\left(\bm x_i\right)}$ for the emulators of real-world data since the optimum $f(\bm x^*)$ is unknown.

We compared RKB, KB, and LP in combination with UCB, EI, and PIMS.
These combinations are denoted by connecting the names of the parallelization method and the AF with a hyphen, for example, KB-EI.
Other methods included for comparison are BUCB, PTS, uncertainty sampling (US), and random search (RS).
Variants of qEI implemented in BoTorch \citep{balandat2020-botorch} are also included for comparison as follows: synthetic, q log noisy expected improvement (qLNEI); benchmark, q log expected improvement (qLEI); emulator, qEI.
We chose the qEI variants based on the observation noise level (synthetic function experiments are noisy) and the package's compatibility with the emulator, which uses TensorFlow 1 \citep{tensorflow2015-whitepaper}.
Among the compared methods, RKB-PIMS, RKB-UCB, BUCB, PTS, and US have theoretical backing. 
For PIMS and PTS, which involve the posterior sampling, we consistently employ random Fourier features \citep{Rahimi2008-Random} based approximation \citep{Wilson2020-efficiently,wilson2021-pathwise}.

\subsection{Setting}
\paragraph{Common setting.}
We consider simple synchronous parallelization with $Q\in\left\{4,8,16\right\}$ workers to facilitate the interpretation of the methods' effectiveness.
In this setting, all ongoing evaluations are completed at the end of each iteration whose index is a multiple of $Q$.
We also conducted experiments in the setting of asynchronous parallel optimization with $Q\in\left\{4,8,16\right\}$, and the results are presented in Appendix \ref{apn:result}.
We report the mean and standard error of the performance measure across 100 random trials for initial data generation, PBO algorithm's randomness, and synthetic objective function generation (in synthetic experiments only).

\paragraph{For synthetic function experiment.}
We generated the objective function by sampling from $\mathcal{GP}{\left(0,k_{\rm Gauss}\right)}$, where lengthscale parameter $l \in \left\{0.1,0.15\right\}$ and input dimension $d=4$.
We show the results for $l=0.1$ in this section, and the results for $l=0.15$ are given in Appendix \ref{apn:result}.
The search space is defined as $\mathcal X=\left\{0.1,0.2,\ldots,1.0\right\}^4$, which consists of $10^4$ grid points.
The variance of the Gaussian noise added to the observations is $\sigma_{\rm noise}^2=10^{-3}$.
As the prediction model, all algorithms used a GP with the true noise variance and the same kernel as that used to generate the objective function.
Each experiment was initialized with $8$ data points in $\mathcal X$ that were closest to $8$ points in $\left[0,1\right]^4$ chosen using Latin hypercube sampling (LHS) \cite{Loh1996-Latin}.
We set the confidence width parameter to $\beta_t=2\log{\left(\left|\mathcal X\right|t^2/\sqrt{2\pi}\right)}$ for KB-UCB and RKB-UCB, following the theoretical value in \cite{Takeno2023-randomized}.
For BUCB, we multiply $\beta_t$ by $1+\left(\left(t-1\right)\bmod Q\right)/\sigma_{\rm noise}^2$, where $\bmod$ denotes the remainder operator, following the theoretical value in \cite{Desautels2014-Parallelizing}.

\paragraph{For benchmark and emulator experiment.}
We employed four benchmark objective functions, Ackley, Hartmann6d, Shekel, and Styblinski--Tang on 4, 6, 4, and 3-dimensional search spaces, respectively, defined in \url{https://www.sfu.ca/~ssurjano/optimization.html}.
As the objective function, we also used emulators provided by Olympus \cite{hase2021olympus}, a benchmarking framework for optimization.
The experiments were conducted on all emulators, except the alkox emulator, which returns only 0 and seems to have bugs.
%
%
The prediction model is a GP using a Gaussian kernel with automatic relevance determination \cite{Rasmussen2005-Gaussian}.
We selected the lengthscales and prior variance $\sigma_{\rm pri}^2$ of the kernel by marginal likelihood maximization \cite{Rasmussen2005-Gaussian} every $Q$ iterations.
For computational stability, the noise variance in the model was set to $10^{-8}$.
Each experiment was initialized with $16$ data points chosen by LHS.
We set the confidence width parameter to $\beta_t=0.2d\log{\left(2t\right)}$, following the heuristics used in \cite{kandasamy2015-high,Kandasamy2017-Multi}.
%
%
As with the synthetic function experiments, we multiplied $\beta_t$ by $1+\left(\left(t-1\right)\bmod Q\right)\sigma_{\rm pri}^2/\sigma_{\rm noise}^2$ for BUCB.
We show the results for $Q = 8$, and other results with $Q\in\left\{4,16\right\}$ are presented in Appendix \ref{apn:result}.

\subsection{Result}
Figure \ref{fig:result} shows the average and standard error of the performance measure across the 100 experiments on each condition.
%
%
As shown in Fig. \ref{fig:result}, RKB is comparable to KB and LP in each AF combination, except for LP-UCB in the Ackley function experiment.
In all cases, RKB-PIMS and RKB-UCB perform at least as well as other theoretically guaranteed methods, PTS, BUCB, and US.
Moreover, PTS, BUCB, and US often underperform the proposed methods, likely due to over-exploration.
Furthermore, in all cases, RKB-PIMS performs comparably to or better than qEI, a state-of-the-art method.
While qEI underperformed RKB-PIMS significantly in the Shekel function experiment, we found no contrary example in our experiments.
Figure \ref{fig:result_comparison} summarizes the results across all conditions, focusing on the improvement in the value of the objective function throughout the experiments.
As shown in Fig. \ref{fig:result_comparison}, RKB exhibits consistently high performance under various conditions.
%

%% file: manuscripts/6_conclusion.tex
\section{Conclusion}

We proposed the general-purpose PBO method, called randomized kriging believer (RKB), based on the widely used KB heuristic, inheriting the practical advantages of KB.
Furthermore, we showed the BCR and BSR upper bounds for RKB comparable to TS-based PBO methods \citep{nava2022diversified}.
%
%
Finally, we demonstrated the effectiveness of the proposed method via extensive experiments.

There are several directions for future work.
First, since our RKB is a general-purpose algorithm for parallelization, extensions to various problem settings, such as multi-fidelity \citep{Takeno2020-Multifidelity,Takeno2022-generalized}, multi-objective \citep{paria2020-flexible,inatsu2024-bounding}, and constrained BO \citep{takeno2022-sequential}, are promising.
Second, extending our regret analysis is of interest; for example, this includes analyses in the frequentist setting and its application to recent tighter regret bounds in \citep{iwazaki2025improved,takeno2026regret}.
Finally, as in prior work, our BCR analysis requires uncertainty sampling in the initial phase to avoid an additional $O(\sqrt{Q})$ term.
Addressing this common limitation in PBO analysis remains an important open problem.

%% file: manuscripts/9_appendix.tex
\section{Regret bound derived from Condition~\ref{cnd_alg}}
\label{apn:general_RA}
In this section, we explain the meaning of the term $B_T$ in Theorems~\ref{thm1}, \ref{thm2}, and \ref{thm3}.
For a sequencial BO method satisfying Condition~\ref{cnd_alg}, we have the following lemma:
\begin{lemma}[BCR bound for sequential optimization]\label{general_RA}
\it 
Suppose that Condition~\ref{cnd_alg} holds and that either (i) Assumption~\ref{asm_basic} holds and $|\mathcal{X}| < \infty$, or (ii) Assumptions~\ref{asm_basic} and \ref{asm_derivative} hold.
Let $\bm x_t=\mathcal A{\left(\mathcal D_{t-1}\right)}$. Then, the following holds:
\begin{equation}
{\rm BCR}_T\le B_T\qquad\left(T\in\mathbb N\right),
\end{equation}
where $B_T$ is defined using $C_1=2/\log{\left(1+\sigma_{\rm noise}^{-2}\right)}$ as
\begin{equation}
B_T=\sqrt{C_1\gamma_T{\sum}_{t=1}^T\zeta_t}+\sum_{t=1}^T\xi_t = \tilde{O}(\sqrt{T \gamma_T})\quad\left(T\in\mathbb N\right).
\end{equation}
\end{lemma}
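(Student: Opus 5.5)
The plan is to combine Condition~\ref{cnd_alg} instance-wise, then use the tower property and Cauchy--Schwarz, and finally the standard variance--information-gain relation. The argument works with the sequential data $\mathcal D_{t-1}$ throughout, since here $\bm x_t=\mathcal A(\mathcal D_{t-1})$.

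\emph{Step 1 (per-step bound).} For each $t$, take expectations of \eqref{eq_cnd_alg1} over $\mathcal D_{t-1}$ and apply the tower property. This gives
\[
\mathbb E\left[f^*-f(\bm x_t)\right]\le\mathbb E\left[u_t(\mathcal D_{t-1})\sigma(\bm x_t;\mathcal D_{t-1})\right]+\mathbb E\left[v_t(\mathcal D_{t-1})\right].
\]
The second term is at most $\xi_t$ by \eqref{eq_cnd_alg2}. Summing over $t\in[T]$ then yields
\[
{\rm BCR}_T\le\mathbb E\left[\sum_{t=1}^T u_t\sigma(\bm x_t;\mathcal D_{t-1})\right]+\sum_{t=1}^T\xi_t.
\]

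\emph{Step 2 (Cauchy--Schwarz).} Apply Cauchy--Schwarz twice: first to the sum over $t$, then to the expectation. This gives
\[
\mathbb E\left[\sum_t u_t\sigma_t\right]\le\mathbb E\left[\left(\sum_t u_t^2\right)^{1/2}\left(\sum_t\sigma_t^2\right)^{1/2}\right]\le\left(\sum_t\mathbb E[u_t^2]\right)^{1/2}\left(\mathbb E\left[\sum_t\sigma_t^2\right]\right)^{1/2},
\]
where $\sigma_t=\sigma(\bm x_t;\mathcal D_{t-1})$. The first factor is at most $\left(\sum_t\zeta_t\right)^{1/2}$ by \eqref{eq_cnd_alg2}.

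\emph{Step 3 (variance sum vs.\ MIG).} This is the standard argument of \citet{Srinivas2010-Gaussian}. Because $k(\bm x,\bm x)\le1$, we have $\sigma_t^2/\sigma_{\rm noise}^2\le\sigma_{\rm noise}^{-2}$. The map $s\mapsto\log(1+s)$ is concave, so on $[0,\sigma_{\rm noise}^{-2}]$ it lies above its chord:
\[
\sigma_t^2\le\frac{\log\left(1+\sigma_{\rm noise}^{-2}\sigma_t^2\right)}{\log\left(1+\sigma_{\rm noise}^{-2}\right)}.
\]
Next, the chain rule for mutual information with Gaussian noise gives
\[
I(\bm y_T;\bm f_T)=\frac12\sum_t\log\left(1+\sigma_{\rm noise}^{-2}\sigma_t^2\right).
\]
This identity holds pathwise for the realized inputs, even adaptively chosen ones, because each posterior variance depends only on the inputs. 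Combining the two displays, $\sum_t\sigma_t^2\le C_1\gamma_T$ holds almost surely. Substituting into Step 2 gives ${\rm BCR}_T\le\sqrt{C_1\gamma_T\sum_t\zeta_t}+\sum_t\xi_t=B_T$.

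Finally, $B_T=\tilde O(\sqrt{T\gamma_T})$ follows directly from $\sum_t\zeta_t=\tilde O(T)$ and $\sum_t\xi_t=O(1)$.

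\emph{Main obstacle.} The delicate point is measurability in Step 1. Condition~\ref{cnd_alg} is stated for every fixed dataset, and $u_t,v_t$ and $\bm x_t$ may carry extra algorithmic randomness, such as $g_t$ in PIMS. So I must check that conditioning on $\mathcal D_{t-1}$, and then averaging, is legitimate. I must also check that the bound $\mathbb E[u_t^2]\le\zeta_t$ is meant unconditionally, which is how I would read \eqref{eq_cnd_alg2}. The continuous case (ii) needs no separate argument here, since any discretization is absorbed into Condition~\ref{cnd_alg} itself.
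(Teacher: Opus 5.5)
Your proof is correct and follows the paper's argument essentially step for step: the tower property with Condition~\ref{cnd_alg}, then Cauchy--Schwarz, then the pathwise bound $\sum_{t}\sigma^2(\bm x_t;\mathcal D_{t-1})\le C_1\gamma_T$ from the chord inequality and the information-gain identity. Your concavity argument is the same inequality as the paper's Lemma~\ref{260111120105}. The only cosmetic difference is the order in Step 2: you apply Cauchy--Schwarz to the expectation and then bound $\mathbb E[\sum_t\sigma_t^2]$, whereas the paper inserts the pathwise variance bound first and then uses Jensen on $\mathbb E[(\sum_t u_t^2)^{1/2}]$; both yield exactly $B_T$.
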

Lemma~\ref{general_RA} is a generalization of existing regret analyses.
The term $B_T$ is the derived regret bound defined using coefficients $\zeta_t$ and $\xi_t$ depending on the BO method.
For examples of $\zeta_t$ and $\xi_t$ for a specific BO method, see Appendix \ref{apn:cnd_alg}.

We use the following lemma, shown in Lemma 5.4 of \citep{Srinivas2010-Gaussian}, to prove Lemma~\ref{general_RA}:
\begin{lemma}\label{variance_bound}
\it Let Assumption~\ref{asm_basic} hold.
Then, for any $T\in\mathbb N$ and $\mathcal D_T=\left\{\left(\bm x_i,y_i\right)\right\}_{i=1}^T$, the following holds:
\begin{equation}
\sum_{t=1}^T\sigma^2{\left(\bm x_t;\mathcal D_{t-1}\right)}\le C_1\gamma_T,
\end{equation}
where $C_1=2/\log{\left(1+\sigma_{\rm noise}^{-2}\right)}$.
\end{lemma}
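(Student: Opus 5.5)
The statement to prove is Lemma~\ref{variance_bound}: for any data sequence, $\sum_{t=1}^T\sigma^2(\bm x_t;\mathcal D_{t-1})\le C_1\gamma_T$. I will follow Lemma 5.4 of \citet{Srinivas2010-Gaussian}. The argument has two parts: a pointwise inequality turning each posterior variance into a log term, and an identity expressing the sum of those log terms as the mutual information $I(\bm y_T;\bm f_T)$.

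\textbf{Information identity.} For the fixed inputs $\bm x_1,\ldots,\bm x_T$, apply the chain rule of mutual information. Conditioned on $\bm y_{t-1}$, the observation $y_t$ is Gaussian with variance $\sigma^2(\bm x_t;\mathcal D_{t-1})+\sigma_{\rm noise}^2$. The noise variance is $\sigma_{\rm noise}^2$. Hence
\begin{equation*}
I(\bm y_T;\bm f_T)=\sum_{t=1}^T \frac{1}{2}\log\left(1+\sigma_{\rm noise}^{-2}\sigma^2(\bm x_t;\mathcal D_{t-1})\right).
\end{equation*}
This relies on the posterior variance depending only on the inputs, not on the observed values $y_i$. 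Therefore the identity holds for any realized $\mathcal D_T$, including adaptively chosen inputs. The same fact gives $I(\bm y_T;\bm f_T)\le\gamma_T$ by Definition~\ref{MIG}.

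\textbf{Pointwise bound.} Assumption~\ref{asm_basic} gives $k(\bm x,\bm x)\le1$, so $s:=\sigma_{\rm noise}^{-2}\sigma^2(\bm x_t;\mathcal D_{t-1})\in[0,\sigma_{\rm noise}^{-2}]$. The map $s\mapsto \log(1+s)$ is concave, so on $[0,\sigma_{\rm noise}^{-2}]$ it lies above its chord:
\begin{equation*}
s\le \frac{\sigma_{\rm noise}^{-2}}{\log(1+\sigma_{\rm noise}^{-2})}\log(1+s).
\end{equation*}
Multiplying by $\sigma_{\rm noise}^2$ yields
\begin{equation*}
\sigma^2(\bm x_t;\mathcal D_{t-1})\le \frac{1}{\log(1+\sigma_{\rm noise}^{-2})}\log\left(1+\sigma_{\rm noise}^{-2}\sigma^2(\bm x_t;\mathcal D_{t-1})\right).
\end{equation*}

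\textbf{Combining.} Summing the pointwise bound over $t$ gives
\begin{equation*}
\sum_{t=1}^T\sigma^2(\bm x_t;\mathcal D_{t-1})\le \frac{2}{\log(1+\sigma_{\rm noise}^{-2})}\, I(\bm y_T;\bm f_T)\le C_1\gamma_T,
\end{equation*}
where the first step uses the information identity and the second uses $I(\bm y_T;\bm f_T)\le\gamma_T$.

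The only delicate step is justifying the information identity for arbitrary, possibly adaptively chosen, data. This is handled by noting that $\sigma^2(\cdot;\mathcal D_{t-1})$ is a deterministic function of $\bm x_1,\ldots,\bm x_{t-1}$. The Gaussian entropy calculation can then be carried out with the inputs held fixed.
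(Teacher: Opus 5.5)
Your proof is correct and is essentially the paper's argument. The paper gets the same pointwise bound $\log(1+\sigma_{\rm noise}^{-2})\,\sigma^2(\bm x_t;\mathcal D_{t-1})\le\log\bigl(1+\sigma_{\rm noise}^{-2}\sigma^2(\bm x_t;\mathcal D_{t-1})\bigr)$ from the convexity inequality $(1+a)^b\le 1+ab$ for $b\in[0,1]$ (Lemma~\ref{260111120105}), which is your chord bound for $s\mapsto\log(1+s)$ after taking logarithms, and then uses the information identity of Lemma~\ref{srinivas2010_5_3} and $I(\bm y_T;\bm f_T)\le\gamma_T$ exactly as you do; the only difference is that you sketch the chain-rule derivation of that identity rather than citing it.
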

This lemma also plays an important role in the proof of our main theorem.

\begin{proof}[\bf Proof of Lemma~\ref{variance_bound}]
The lemma is proved as
\begin{subequations}\label{260116112741}
\begin{align}
\sum_{t=1}^T\sigma^2{\left(\bm x_t;\mathcal D_{t-1}\right)}=&C_1\sum_{t=1}^T\frac{1}{2}\log{\left(1+\sigma_{\rm noise}^{-2}\right)}\sigma^2{\left(\bm x_t;{\mathcal D}_{t-1}\right)}\\
\le&C_1\sum_{t=1}^T\frac{1}{2}\log{\left(1+\sigma_{\rm noise}^{-2}\sigma^2{\left(\bm x_t;{\mathcal D}_{t-1}\right)}\right)}\\
=&C_1I{\left(\bm y_T;\bm f_T\right)}\\
\le&C_1\gamma_T,
\end{align}
\end{subequations}
where the grounds of the equations are the following: Eq. (\ref{260116112741}b), Lemma~\ref{260111120105}; Eq. (\ref{260116112741}c), Lemma~\ref{srinivas2010_5_3}; Eq. (\ref{260116112741}d), the Definition~\ref{MIG} of $\gamma_T$.
\end{proof}
Using Lemma~\ref{variance_bound}, we prove Lemma~\ref{general_RA}.
\begin{proof}[\bf Proof of Lemma~\ref{general_RA}]
The proof is the following:
\begin{subequations}\label{260116113933}
\begin{align}
{\rm BCR}_T=\mathbb E{\left[\sum_{t=1}^Tf^*-f{\left(\bm x_t\right)}\right]}
=&\sum_{t=1}^T\mathbb E_{\mathcal D_{t-1}}{\left[\mathbb E{\left[f^*-f{\left(\bm x_t\right)}\mid\mathcal D_{t-1}\right]}\right]}\\
\le&\sum_{t=1}^T\mathbb E_{\mathcal D_{t-1}}{\left[\mathbb E{\left[u_t\sigma{\left(\bm x_t;\mathcal D_{t-1}\right)}+v_t\mid\mathcal D_{t-1}\right]}\right]}\\
=&\mathbb E{\left[{\sum}_{t=1}^Tu_t\sigma{\left(\bm x_t;\mathcal D_{t-1}\right)}+v_t\right]}\\
\le&\mathbb E{\left[\sqrt{{\sum}_{t=1}^Tu_t^2}\sqrt{{\sum}_{t=1}^T\sigma^2{\left(\bm x_t;\mathcal D_{t-1}\right)}}+{\sum}_{t=1}^Tv_t\right]}\\
\le&\mathbb E{\left[\sqrt{{\sum}_{t=1}^Tu_t^2}\right]}\sqrt{C_1\gamma_T}+{\sum}_{t=1}^T\mathbb E{\left[v_t\right]}\\
\le&\sqrt{\mathbb E{\left[{\sum}_{t=1}^Tu_t^2\right]}}\sqrt{C_1\gamma_T}+{\sum}_{t=1}^T\mathbb E{\left[v_t\right]}\\
\le&B_T,
\end{align}
\end{subequations}
where the grounds of the equations are the following: Eq. (\ref{260116113933}b), Eq. (\ref{eq_cnd_alg1}) of Condition~\ref{cnd_alg}; Eq. (\ref{260116113933}d), Cauchy--Schwarz inequality; Eq. (\ref{260116113933}e), Lemma~\ref{variance_bound}; Eq. (\ref{260116113933}f), Jensen's inequality; Eq. (\ref{260116113933}g), Eq. (\ref{eq_cnd_alg2}) of Condition~\ref{cnd_alg} and the definition of $B_T$.
\end{proof}

\section{Proof of Theorem~\ref{thm1}}
\label{apn:thm1}
We use the following three lemmas to prove Theorem 1:
\begin{lemma}\label{virtual_regret}
\it 
Suppose that Condition~\ref{cnd_alg} holds and that either (i) Assumption~\ref{asm_basic} holds and $|\mathcal{X}| < \infty$, or (ii) Assumptions~\ref{asm_basic} and \ref{asm_derivative} hold.
Let $\bm x_t=\mathcal A{\left(\mathcal D^{\rm RKB}_{t-1}\right)}$. 
Then, the following holds:
\begin{equation}
\mathbb E{\left[\sum_{t=1}^Tg_t^*-g_t{\left(\bm x_t\right)}\right]}\le B_T,
\end{equation}
where $g_t\sim p{\left(f\mid\mathcal D_{\mathcal N_{t-1}}\right)}$ and $g_t^*=\max_{\bm x\in\mathcal X}g_t{\left(\bm x\right)}$. Bound $B_T$ is defined using $C_1=2/\log{\left(1+\sigma_{\rm noise}^{-2}\right)}$ as
\begin{equation}
B_T=\sqrt{C_1\gamma_T{\sum}_{t=1}^T\zeta_t}+\sum_{t=1}^T\xi_t\qquad\left(T\in\mathbb N\right).
\end{equation}
\end{lemma}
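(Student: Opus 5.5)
We will prove Lemma~\ref{virtual_regret} by reducing it to the sequential argument of Lemma~\ref{general_RA}, applied to the fantasized data $\mathcal D^{\rm RKB}_{t-1}$ and the fantasized function $g_t$ in place of $\mathcal D_{t-1}$ and $f$.

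\textbf{Step 1: the fantasized world is a legitimate Bayesian world.} Fix $t$ and condition on $\mathcal D_{\mathcal N_{t-1}}$. By construction, $g_t\sim p(f\mid\mathcal D_{\mathcal N_{t-1}})$, and the pending labels are $g_t(\bm x_i)+\varepsilon^{(t)}_i$ with fresh Gaussian noise. Hence the pair $(g_t,\mathcal D^{\rm RKB}_{t-1})$ has the same joint law as $(f,\mathcal D_{t-1})$ given $\mathcal D_{\mathcal N_{t-1}}$ and the inputs $\bm x_1,\ldots,\bm x_{t-1}$. It follows that $p(g_t\mid\mathcal D^{\rm RKB}_{t-1})$ is the GP posterior with mean $\mu(\cdot;\mathcal D^{\rm RKB}_{t-1})$ and variance $\sigma^2(\cdot;\mathcal D^{\rm RKB}_{t-1})$.

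Since $\bm x_t=\mathcal A(\mathcal D^{\rm RKB}_{t-1})$ uses only this data (plus independent algorithmic randomness), Condition~\ref{cnd_alg} applies with $g_t$ in the role of $f$. Its hypotheses require only that the posterior over the objective given the dataset is the GP posterior, with the same prior (Assumptions~\ref{asm_basic}, \ref{asm_derivative} hold for $g_t$ marginally). This gives
\begin{equation*}
\mathbb E\left[g_t^*-g_t(\bm x_t)\mid\mathcal D^{\rm RKB}_{t-1}\right]\le\mathbb E\left[u_t(\mathcal D^{\rm RKB}_{t-1})\,\sigma(\bm x_t;\mathcal D^{\rm RKB}_{t-1})+v_t(\mathcal D^{\rm RKB}_{t-1})\mid\mathcal D^{\rm RKB}_{t-1}\right].
\end{equation*}
Because the condition is stated for all datasets, the moment bounds $\mathbb E[u_t^2]\le\zeta_t$ and $\mathbb E[v_t]\le\xi_t$ also transfer to $\mathcal D^{\rm RKB}_{t-1}$.

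\textbf{Step 2: control the variance sum.} Sum over $t$, take full expectation, and apply Cauchy--Schwarz and Jensen as in Eq.~(\ref{260116113933}). This reduces the claim to showing
\begin{equation*}
\mathbb E\left[\sum_{t=1}^T\sigma^2(\bm x_t;\mathcal D^{\rm RKB}_{t-1})\right]\le C_1\gamma_T,
\end{equation*}
or, more crudely, a pathwise bound. The posterior variance depends only on the inputs, not the labels, so $\sigma^2(\bm x_t;\mathcal D^{\rm RKB}_{t-1})=\sigma^2(\bm x_t;\mathcal D_{t-1})$, because both datasets share the inputs $\bm x_1,\ldots,\bm x_{t-1}$. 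Lemma~\ref{variance_bound} then gives the sum bound $C_1\gamma_T$ pathwise, for any realized input sequence. Combining the steps yields exactly $B_T$.

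\textbf{Main obstacle.} The delicate point is Step~1. We must justify that the conditional expectation in Condition~\ref{cnd_alg} with $f$ replaced by $g_t$ is legitimate. In particular, $g_t$ must be conditionally independent of any extra information (the real pending $y_i$, and the past $g_s$ for $s<t$) given $\mathcal D^{\rm RKB}_{t-1}$. Otherwise the posterior of $g_t$ given everything the algorithm sees would not be the GP posterior. We handle this by noting that $\bm x_t$ is a function of $\mathcal D^{\rm RKB}_{t-1}$ and independent algorithm randomness only. Further, $g_t$ is generated from $\mathcal D_{\mathcal N_{t-1}}$ alone, which is contained in $\mathcal D^{\rm RKB}_{t-1}$, together with fresh randomness. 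Hence conditioning on $\mathcal D^{\rm RKB}_{t-1}$ suffices, and the towering in Eq.~(\ref{260116113933}a) is valid with $\mathcal D^{\rm RKB}_{t-1}$ as the conditioning variable.
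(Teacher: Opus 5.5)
Your proposal is correct and follows essentially the same route as the paper. You tower over $\mathcal D^{\rm RKB}_{t-1}$, use that $g_t$ given $\mathcal D^{\rm RKB}_{t-1}$ has the GP posterior law $p(f\mid\mathcal D_{t-1}=\mathcal D^{\rm RKB}_{t-1})$ to invoke Condition~\ref{cnd_alg}, and then apply Cauchy--Schwarz, the label-independence $\sigma(\bm x_t;\mathcal D^{\rm RKB}_{t-1})=\sigma(\bm x_t;\mathcal D_{t-1})$ together with Lemma~\ref{variance_bound}, and Jensen. Your argument for why conditioning on $\mathcal D^{\rm RKB}_{t-1}$ alone suffices (fresh randomness in $g_t$ and in $\mathcal A$) makes explicit a point the paper states in one line.
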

\begin{proof}[\bf Proof of Lemma~\ref{virtual_regret}]
The proof is the following:
\begin{subequations}\label{260116123114}
\begin{align}
\mathbb E{\left[\sum_{t=1}^Tg_t^*-g_t{\left(\bm x_t\right)}\right]}=&\sum_{t=1}^T\mathbb E_{\mathcal D^{\rm RKB}_{t-1}}{\left[\mathbb E{\left[g_t^*-g_t{\left(\bm x_t\right)}\mid\mathcal D^{\rm RKB}_{t-1}\right]}\right]}\\
=&\sum_{t=1}^T\mathbb E_{\mathcal D^{\rm RKB}_{t-1}}{\left[\mathbb E{\left[f^*-f{\left(\bm x_t\right)}\mid\mathcal D_{t-1}=\mathcal D^{\rm RKB}_{t-1}\right]}\right]}\\
\le&\sum_{t=1}^T\mathbb E_{\mathcal D^{\rm RKB}_{t-1}}{\left[\mathbb E{\left[u_t{\left(\mathcal D^{\rm RKB}_{t-1}\right)}\sigma{\left(\bm x_t;\mathcal D^{\rm RKB}_{t-1}\right)}+v_t{\left(\mathcal D^{\rm RKB}_{t-1}\right)}\mid\mathcal D^{\rm RKB}_{t-1}\right]}\right]}\\
=&\mathbb E{\left[{\sum}_{t=1}^Tu_t{\left(\mathcal D^{\rm RKB}_{t-1}\right)}\sigma{\left(\bm x_t;\mathcal D^{\rm RKB}_{t-1}\right)}+v_t{\left(\mathcal D^{\rm RKB}_{t-1}\right)}\right]}\\
\le&\mathbb E{\left[\sqrt{{\sum}_{t=1}^Tu_t^2{\left(\mathcal D^{\rm RKB}_{t-1}\right)}}\sqrt{{\sum}_{t=1}^T\sigma^2{\left(\bm x_t;\mathcal D^{\rm RKB}_{t-1}\right)}}+{\sum}_{t=1}^Tv_t{\left(\mathcal D^{\rm RKB}_{t-1}\right)}\right]}\\
\le&\mathbb E{\left[\sqrt{{\sum}_{t=1}^Tu_t^2{\left(\mathcal D^{\rm RKB}_{t-1}\right)}}\sqrt{{\sum}_{t=1}^T\sigma^2{\left(\bm x_t;\mathcal D_{t-1}\right)}}+{\sum}_{t=1}^Tv_t{\left(\mathcal D^{\rm RKB}_{t-1}\right)}\right]}\\
\le&\mathbb E{\left[\sqrt{{\sum}_{t=1}^Tu_t^2{\left(\mathcal D^{\rm RKB}_{t-1}\right)}}\right]}\sqrt{C_1\gamma_T}+{\sum}_{t=1}^T\mathbb E{\left[v_t{\left(\mathcal D^{\rm RKB}_{t-1}\right)}\right]}\\
\le&\sqrt{\mathbb E{\left[{\sum}_{t=1}^Tu_t^2{\left(\mathcal D^{\rm RKB}_{t-1}\right)}\right]}}\sqrt{C_1\gamma_T}+{\sum}_{t=1}^T\mathbb E{\left[v_t{\left(\mathcal D^{\rm RKB}_{t-1}\right)}\right]}\\
\le&B_T,
\end{align}
\end{subequations}
where the grounds of the equations are the following: Eq. (\ref{260116123114}b), $p{\left(g_t\mid\mathcal D^{\rm RKB}_{t-1}\right)}=p{\left(f\mid\mathcal D_{t-1}=\mathcal D^{\rm RKB}_{t-1}\right)}$; Eq. (\ref{260116123114}c), Eq. (\ref{eq_cnd_alg1}) of Condition~\ref{cnd_alg}; Eq. (\ref{260116123114}e), Cauchy--Schwarz inequality; Eq. (\ref{260116123114}f), $\sigma{\left(\bm x_t;\mathcal D^{\rm RKB}_{t-1}\right)}=\sigma{\left(\bm x_t;\mathcal D_{t-1}\right)}$; Eq. (\ref{260116123114}g), Lemma~\ref{variance_bound}; Eq. (\ref{260116123114}h), Jensen's inequality; Eq. (\ref{260116123114}i), Eq. (\ref{eq_cnd_alg2}) of Condition~\ref{cnd_alg} and the definition of $B_T$.

\end{proof}
\begin{lemma}[Modified from Lemma 3 of \citep{vakili2021-scalable}]\label{variance_bound2}
\it Let Assumption~\ref{asm_basic} hold and MIG $\gamma_T$ has an upper bound $\bar\gamma_T$ concave for $T$.
Then, for any $t\in\mathbb N$, $\mathcal D_{t-1}=\left\{\left(\bm x_i,y_i\right)\right\}_{i=1}^{t-1}\in 2^{\mathcal X\times\mathbb R}$, and $ \mathcal N_{t-1}\subset\left[t-1\right]$ satisfying $t-\left| \mathcal N_{t-1}\right|\le Q$, the following holds:
\begin{equation}\label{260109180029}
\sum_{t=1}^T\sigma^2{\left(\bm x_t;\mathcal D_{ \mathcal N_{t-1}}\right)}\le C_1Q\bar\gamma_{T/Q}.
\end{equation}
\end{lemma}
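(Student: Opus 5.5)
The plan is to split the index set $[T]$ into $Q$ interleaved subsequences, apply Lemma~\ref{variance_bound} to each, and then use concavity of $\bar\gamma$. For $q\in[Q]$, let $\mathcal T_q=\{t\in[T]: t\equiv q \pmod Q\}$, so that $|\mathcal T_q|\le \lceil T/Q\rceil$ and $\sum_q|\mathcal T_q|=T$.

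\textbf{Step 1 (index containment).} Fix $q$ and $t\in\mathcal T_q$. Every earlier element $s\in\mathcal T_q$ with $s<t$ satisfies $s\le t-Q$. I claim $[t-Q]\subset\mathcal N_{t-1}$. This is the point where the assumption has to be used carefully. The condition $t-|\mathcal N_{t-1}|\le Q$ alone only gives $|\mathcal N_{t-1}|\ge t-Q$, not containment. So I would rely on the intended setting, in which at most $Q-1$ of the indices in $[t-1]$ are pending, together with monotonicity $\mathcal N_t\supset\mathcal N_{t-1}$. If that is not enough for exact containment, the fallback is the argument of \citet{vakili2021-scalable}: for any $\mathcal N_{t-1}$ with $|[t-1]\setminus\mathcal N_{t-1}|\le Q-1$, one can still assign the indices in $[T]$ to $Q$ chains such that each predecessor in a chain lies in the conditioning set. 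Concretely, $s$ precedes $t$ in a chain only if $s\in\mathcal N_{t-1}$, and a greedy assignment works because at most $Q-1$ indices are unavailable at any time.

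\textbf{Step 2 (monotonicity of posterior variance).} Since conditioning on more data never increases GP posterior variance,
\begin{equation*}
\sigma^2\left(\bm x_t;\mathcal D_{\mathcal N_{t-1}}\right)\le \sigma^2\left(\bm x_t;\mathcal D_{\{s\in\mathcal T_q:\,s<t\}}\right).
\end{equation*}
The right-hand side is exactly the sequential posterior variance along the subsequence $(\bm x_s)_{s\in\mathcal T_q}$. Lemma~\ref{variance_bound} applied to this subsequence of length $T_q=|\mathcal T_q|$ then gives
\begin{equation*}
\sum_{t\in\mathcal T_q}\sigma^2\left(\bm x_t;\mathcal D_{\mathcal N_{t-1}}\right)\le C_1\gamma_{T_q}\le C_1\bar\gamma_{T_q}.
\end{equation*}

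\textbf{Step 3 (concavity).} Summing over $q$ and applying Jensen's inequality to the concave function $\bar\gamma$ yields
\begin{equation*}
\sum_{q=1}^Q C_1\bar\gamma_{T_q}\le C_1Q\,\bar\gamma_{\frac1Q\sum_q T_q}=C_1Q\bar\gamma_{T/Q}.
\end{equation*}
Here $\bar\gamma$ is understood as a concave function extended to nonnegative reals, and $\bar\gamma_0\ge0$ covers empty chains. This is the claimed bound.

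The main obstacle is Step 1: turning the counting hypothesis on $|\mathcal N_{t-1}|$ into a chain decomposition in which each element's chain predecessors are all in the observed set. The greedy-assignment version is what I would verify in detail.
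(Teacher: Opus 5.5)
\textbf{Step 1 fails as primarily stated, so the fallback is the proof, not an option.} The residue-class chains $\mathcal T_q=\{t: t\equiv q \pmod Q\}$ need $[t-Q]\subset\mathcal N_{t-1}$. This is false under asynchronous evaluation, even with at most $Q-1$ pending indices and monotone $\mathcal N_t$. Take $Q=2$, let the evaluation of $\bm x_1$ never return, and let every later query return before the next one is issued. Then $\mathcal N_{t-1}=\{2,\dots,t-1\}$ for $t\ge 3$, and $t-|\mathcal N_{t-1}|\le 2$ for all $t$. In the class $\{1,3,5,\dots\}$, the index $t=3$ has chain predecessor $1\notin\mathcal N_2=\{2\}$. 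So the inequality $\sigma^2(\bm x_3;\mathcal D_{\{2\}})\le\sigma^2(\bm x_3;\mathcal D_{\{1\}})$ that your Step 2 needs is false whenever $\bm x_3=\bm x_1$ and $k(\bm x_1,\bm x_2)=0$. The chains therefore have to be built adaptively from the realized sequence $(\mathcal N_t)_t$.

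\textbf{Your greedy fallback is correct and matches the paper.} The paper assigns $t$ to the free worker $w_t=\min\mathcal W_{t-1}$ and sets $\mathcal T_q=\{t\le T: w_t=q\}$. Its observation that worker $q$ has finished all its earlier jobs is exactly your chain property. To make your version rigorous:
\begin{enumerate}
\item At time $t$, a chain is blocked only if its last element lies in $[t-1]\setminus\mathcal N_{t-1}$.
\item Distinct blocked chains have distinct last elements, and $|[t-1]\setminus\mathcal N_{t-1}|\le Q-1$, so some chain is free; append $t$ to it.
\item The immediate predecessor of $t$ is then in $\mathcal N_{t-1}$. Each earlier chain element lies in $\mathcal N_{s-1}\subset\mathcal N_{t-1}$, where $s$ is that immediate predecessor, by induction and monotonicity $\mathcal N_t\supset\mathcal N_{t-1}$.
\end{enumerate}
Monotonicity is a standing assumption from Section~\ref{sec:preliminary}; the lemma statement omits it, but both arguments need it.

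\textbf{Drop the claim $|\mathcal T_q|\le\lceil T/Q\rceil$.} Adaptive chains can be very unbalanced; in the example above they are $\{1\}$ and $\{2,\dots,T\}$. This does not hurt you: Steps 2 and 3 only use $\sum_q|\mathcal T_q|=T$, concavity of $\bar\gamma$, and $\bar\gamma_0\ge 0$, so they go through unchanged with the adaptive $\mathcal T_q$.
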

\begin{proof}[\bf Proof of Lemma~\ref{variance_bound2}]
Let $\mathcal W_{t-1}\subset\left[Q\right]$ be the set of indices of available workers for the $t$-th evaluation, and let the worker $w_t=\min\mathcal W_{t-1}$ be actually assigned to evaluation.
Mathematically, the set $\mathcal W_{t-1}$ is defined as
\begin{equation}
\mathcal W_0=\left[Q\right],\qquad \mathcal W_t=\left(\mathcal W_{t-1}\backslash\left\{w_t\right\}\right)\cup\left\{w_{t'}\in\left[Q\right]\mid t'\in \mathcal N_t\backslash \mathcal N_{t-1}\right\}\qquad\left(t\in\mathbb N\right),
\end{equation}
where $w_{t'}$ with $t'\in\mathcal N_t\backslash \mathcal N_{t-1}$ is a worker that has completed evaluation and become available again at iteration $t+1$.
Let $\mathcal T_q\subset\left[T\right]$ be the set of evaluations that worker $q\in\left[Q\right]$ is assigned to, i.e.,
\begin{equation}
\mathcal T_q=\left\{t\le T\mid w_t=q\right\}.
\end{equation}
Obviously, the set $\mathcal T_q$ satisfies
\begin{equation}
\bigcup_{q=1}^Q\mathcal T_q=\left[T\right],\qquad\forall q,q'\in\left[Q\right],q\ne q'\Rightarrow\mathcal T_q\cap\mathcal T_{q'}=\emptyset.
\end{equation}
Therefore, we have
\begin{equation}\label{eq:proof_lem_variance_bound2}
\sum_{t=1}^T\sigma^2{\left(\bm x_t;\mathcal D_{ \mathcal N_{t-1}}\right)}=\sum_{q=1}^{Q}\sum_{t\in\mathcal T_q}\sigma^2{\left(\bm x_t;\mathcal D_{ \mathcal N_{t-1}}\right)}\le\sum_{q=1}^{Q}\sum_{t\in\mathcal T_q}\sigma^2{\left(\bm x_t;\mathcal D_{ \mathcal N_{t-1}\cap\mathcal T_q}\right)}.
\end{equation}
The inequality is obtained from $ \mathcal N_{t-1}\supset \mathcal N_{t-1}\cap\mathcal T_q$.
When worker $q$ is assigned to evaluation $t$, worker $q$ has already completed the past evaluations assigned to them, i.e.,
\begin{equation}
\forall t,t'\in\mathcal T_q,t>t'\Rightarrow t'\in \mathcal N_{t-1}.
\end{equation}
Hence, the inner summation of the right-hand side of Eq. (\ref{eq:proof_lem_variance_bound2}) is considered as the following operation: adding and learning each data in $\mathcal D_{\mathcal T_q}$ one by one and summing the prediction variances.
Then, Lemma \ref{variance_bound2} is proved as
\begin{equation}
\sum_{q=1}^{Q}\sum_{t\in\mathcal T_q}\sigma^2{\left(\bm x_t;\mathcal D_{ \mathcal N_{t-1}\cap\mathcal T_q}\right)}\le \sum_{q=1}^{Q}C_1\gamma_{\left|\mathcal T_q\right|}\le C_1Q\sum_{q=1}^{Q}\frac{1}{Q}\bar\gamma_{\left|\mathcal T_q\right|}\le C_1Q\bar\gamma_{T/Q},
\end{equation}
where the grounds are the following: the first inequality, Lemma \ref{variance_bound}; the third inequality, Jensen's inequality and concavity of $\bar\gamma_T$.
\end{proof}

\begin{lemma}\label{takeno24_3_2}
\it Let Assumption~\ref{asm_basic} hold.
For any $t\in\mathbb N$, $\mathcal D_{t-1}$, and finite $\mathcal X'\subset\mathcal X$, let $g_t\sim p{\left(f\mid\mathcal D_{t-1}\right)}$ and define $\eta_t$, $\theta_t\in\mathbb R$ as
\begin{equation}
\eta_t=\max_{\bm x\in\mathcal X'}\frac{g_t{\left(\bm x\right)}-\mu{\left(\bm x;\mathcal D_{t-1}\right)}}{\sigma{\left(\bm x;\mathcal D_{t-1}\right)}},\qquad\theta_t=\eta_t^2\mathbbm 1{\left\{\eta_t\ge0\right\}},
\end{equation}
where $\mathbbm 1{\left\{\eta_t\ge0\right\}}$ is $1$ if $\eta_t\ge0$. Otherwise $0$. Then, the following holds:
\begin{equation}
\mathbb E{\left[\theta_t\mid\mathcal D_{t-1}\right]}\le2+2\log{\left(\left|\mathcal X'\right|/2\right)}.
\end{equation}
\end{lemma}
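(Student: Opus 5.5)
Conditioned on $\mathcal D_{t-1}$, the standardized values $Z_{\bm x}=\bigl(g_t(\bm x)-\mu(\bm x;\mathcal D_{t-1})\bigr)/\sigma(\bm x;\mathcal D_{t-1})$ for $\bm x\in\mathcal X'$ are each marginally $\mathcal N(0,1)$, though correlated. Points with $\sigma(\bm x;\mathcal D_{t-1})=0$ satisfy $g_t(\bm x)=\mu(\bm x;\mathcal D_{t-1})$ almost surely and can be dropped or treated as contributing $0$; removing them only shrinks $|\mathcal X'|$. Since $\theta_t=(\max_{\bm x}Z_{\bm x})_+^2$, the bound depends only on the marginals, so a union bound over the finite set $\mathcal X'$ suffices.

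The plan is to use the tail-integral representation
\begin{equation*}
\mathbb E[\theta_t\mid\mathcal D_{t-1}]=\int_0^\infty \Pr\bigl(\theta_t>s\mid\mathcal D_{t-1}\bigr)\,ds=\int_0^\infty \Pr\bigl(\eta_t>\sqrt{s}\mid\mathcal D_{t-1}\bigr)\,ds .
\end{equation*}
We bound the integrand by $\min\{1,\ |\mathcal X'|\Pr(Z>\sqrt s)\}$ with $Z\sim\mathcal N(0,1)$, and use the Gaussian tail bound $\Pr(Z>c)\le\frac12 e^{-c^2/2}$ for $c\ge0$. The integrand is then at most $\min\{1,\ \frac{|\mathcal X'|}{2}e^{-s/2}\}$.

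Next we split the integral at $s_0=2\log(|\mathcal X'|/2)$, the point where the two bounds coincide. This assumes $|\mathcal X'|\ge2$; if $|\mathcal X'|=1$, directly $\mathbb E[Z_+^2]=1/2\le 2+2\log(1/2)\approx0.61$, so that case holds separately. On $[0,s_0]$ the integrand is at most $1$, contributing $s_0$. On $[s_0,\infty)$ we get
\begin{equation*}
\frac{|\mathcal X'|}{2}\int_{s_0}^\infty e^{-s/2}\,ds=|\mathcal X'|e^{-s_0/2}=2 .
\end{equation*}
Summing gives $2+2\log(|\mathcal X'|/2)$, as claimed.

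The only delicate points are two. First, the tail constant $\frac12 e^{-c^2/2}$ must be used rather than the cruder $e^{-c^2/2}$, since the factor $1/2$ is exactly what produces the $|\mathcal X'|/2$ inside the logarithm. Second, the small-$|\mathcal X'|$ edge case above needs separate checking. Neither step is a genuine obstacle.
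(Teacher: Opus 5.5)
Your proof is correct and is essentially the paper's argument. The paper uses the same union bound, the same $\frac{1}{2}e^{-c^2/2}$ Gaussian tail and the same separate $|\mathcal X'|=1$ case, but writes them as a quantile bound $F^{-1}(1-\delta)\le\beta_\delta=2\log(|\mathcal X'|/(2\delta))$ integrated over $\delta\in(0,1)$, which is your tail integral of $\min\{1,\frac{|\mathcal X'|}{2}e^{-s/2}\}$ computed in the other orientation; applying the union bound directly to $g_t$, as you do, also skips the paper's unnecessary detour through the equal distribution of $(f,\bm x^*_{f,\eta})$ and $(g_t,\bm x^*_{g,\eta})$.
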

\begin{proof}[\bf Proof of Lemma~\ref{takeno24_3_2}]
The proof mainly follows the proof of Lemma 3.2 of \citep{takeno2024-posterior}.
If $\left|\mathcal X'\right|=1$, $\eta_t$ and $\eta^2_t$ follow the standard normal and Chi-squared distributions, respectively.
Hence, we have 
$\mathbb E{\left[\theta_t\mid\mathcal D_{t-1}\right]}=1/2<0.61\simeq2+2\log{\left(\left|\mathcal X'\right|/2\right)}$.
Next, we consider the case $\left|\mathcal X'\right|>1$.
Let $\beta_\delta=2\log{\left(\left|\mathcal X'\right|/\left(2\delta\right)\right)}$ for any $\delta\in\left(0,1\right)$.
We first show
\begin{equation}\label{260108175104}
{\rm Pr}{\left(\theta_t\le\beta_\delta\mid\mathcal D_{t-1}\right)}\ge1-\delta\qquad\left(\delta\in\left(0,1\right)\right).
\end{equation}
The probability is transformed as
\begin{equation}
\begin{split}
{\rm Pr}{\left(\theta_t\le\beta_{\delta}\mid\mathcal D_{t-1}\right)}=&{\rm Pr}{\left(\eta_t^2\mathbbm 1{\left\{\eta_t\ge0\right\}}\le\beta_{\delta}\mid\mathcal D_{t-1}\right)}\\
=&{\rm Pr}{\left(\eta_t\mathbbm 1{\left\{\eta_t\ge0\right\}}\le\beta_\delta^{1/2}\mid\mathcal D_{t-1}\right)}\\
=&{\rm Pr}{\left(\eta_t\le\beta_\delta^{1/2}\mid\mathcal D_{t-1}\right)}.
\end{split}
\end{equation}
We define $\bm x_{f,\eta}^*$ and $\bm x_{g,\eta}^*$ as 
\begin{equation}
\bm x_{f,\eta}^*=\mathop{\rm argmax}\limits_{\bm x\in\mathcal X'}\frac{f{\left(\bm x\right)}-\mu{\left(\bm x;\mathcal D_{t-1}\right)}}{\sigma{\left(\bm x;\mathcal D_{t-1}\right)}},\qquad\bm x_{g,\eta}^*=\mathop{\rm argmax}\limits_{\bm x\in\mathcal X'}\frac{g_t{\left(\bm x\right)}-\mu{\left(\bm x;\mathcal D_{t-1}\right)}}{\sigma{\left(\bm x;\mathcal D_{t-1}\right)}}.
\end{equation}
Because of $g_t\sim p{\left(f\mid\mathcal D_{t-1}\right)}$, pairs $\left(f,\bm x_{f,\eta}^*\right)$ and $\left(g_t,\bm x_{g,\eta}^*\right)$ follow the same distribution under the condition $\mathcal D_{t-1}$.
Hence, we have
\begin{equation}
\begin{split}
{\rm Pr}{\left(\eta_t\le\beta_\delta^{1/2}\mid\mathcal D_{t-1}\right)}=&{\rm Pr}{\left(\frac{g_t{\left(\bm x_{g,\eta}^*\right)}-\mu{\left(\bm x_{g,\eta}^*;\mathcal D_{t-1}\right)}}{\sigma{\left(\bm x_{g,\eta}^*;\mathcal D_{t-1}\right)}}\le\beta_\delta^{1/2}\mid\mathcal D_{t-1}\right)}\\
=&{\rm Pr}{\left(\frac{f{\left(\bm x_{f,\eta}^*\right)}-\mu{\left(\bm x_{f,\eta}^*;\mathcal D_{t-1}\right)}}{\sigma{\left(\bm x_{f,\eta}^*;\mathcal D_{t-1}\right)}}\le\beta_\delta^{1/2}\mid\mathcal D_{t-1}\right)}\\
=&{\rm Pr}{\left(f{\left(\bm x_{f,\eta}^*\right)}\le\mu{\left(\bm x_{f,\eta}^*;\mathcal D_{t-1}\right)}+\beta_\delta^{1/2}\sigma{\left(\bm x_{f,\eta}^*;\mathcal D_{t-1}\right)}\mid\mathcal D_{t-1}\right)}.
\end{split}
\end{equation}
Then, from Lemma~\ref{takeno23_4_1}, we have
\begin{equation}
\begin{split}
&{\rm Pr}{\left(f{\left(\bm x_\eta^*\right)}\le\mu{\left(\bm x_\eta^*;\mathcal D_{t-1}\right)}+\beta_\delta^{1/2}\sigma{\left(\bm x_\eta^*;\mathcal D_{t-1}\right)}\mid\mathcal D_{t-1}\right)}\\
\ge&{\rm Pr}{\left(\forall\bm x\in\mathcal X',f{\left(\bm x\right)}\le\mu{\left(\bm x;\mathcal D_{t-1}\right)}+\beta_\delta^{1/2}\sigma{\left(\bm x;\mathcal D_{t-1}\right)}\mid\mathcal D_{t-1}\right)}\\
\ge&1-\delta,
\end{split}
\end{equation}
which proves Eq. (\ref{260108175104}).

We define the cumulative distribution function $F$ of $\theta_t$ conditioned by $\mathcal D_{t-1}$ and its generalized inverse function $F^{-1}$ as
\begin{equation}
F{\left(\beta\right)}={\rm Pr}{\left(\theta_t\le\beta\mid\mathcal D_{t-1}\right)},\quad F^{-1}{\left(p\right)}=\min{\left\{\beta'\in\mathbb R\cup\left\{\infty\right\}\mid F{\left(\beta'\right)}\ge p\right\}}\quad\left(\beta\ge0,p\in\left[0,1\right]\right).
\end{equation}
Then, Eq. (\ref{260108175104}) is equivalent to
\begin{equation}
\beta_\delta\ge F^{-1}{\left(1-\delta\right)}.
\end{equation}
Hence, we have
\begin{equation}
2+2\log{\left(\left|\mathcal X'\right|/2\right)}=\int_0^1\beta_\delta d\delta\ge\int_0^1F^{-1}{\left(1-\delta\right)}d\delta=\int_0^1F^{-1}{\left(\delta\right)}d\delta.
\end{equation}
From the same principle of the inverse transform sampling, the following holds:
\begin{equation}
\int_0^1F^{-1}{\left(\delta\right)}d\delta=\mathbb E_{\delta\sim\mathcal U{\left(0,1\right)}}{\left[F^{-1}{\left(\delta\right)}\right]}=\mathbb E{\left[\theta_t\mid\mathcal D_{t-1}\right]},
\end{equation}
where $\mathcal U$ denotes the uniform distribution. This concludes the proof.
\end{proof}
\thmOne*
\begin{proof}[\bf Proof of Theorem~\ref{thm1}]
Because of $g_t\sim p{\left(f\mid {\mathcal D}_{\mathcal N_{t-1}}\right)}$, we can transform ${\rm BCR}_T$ as
\begin{equation}\label{260111125418}
\begin{split}
{\rm BCR}_T=\mathbb E{\left[\sum_{t=1}^Tf^*-f{\left(\bm x_t\right)}\right]}=&\sum_{t=1}^T\mathbb E_{{\mathcal D}_{\mathcal N_{t-1}}}{\left[\mathbb E{\left[g_t^*-f{\left(\bm x_t\right)}\mid{\mathcal D}_{\mathcal N_{t-1}}\right]}\right]}\\
=&\sum_{t=1}^T\mathbb E_{{\mathcal D}_{\mathcal N_{t-1}}}{\left[\mathbb E{\left[g_t^*-g_t{\left(\bm x_t\right)}+g_t{\left(\bm x_t\right)}-f{\left(\bm x_t\right)}\mid{\mathcal D}_{\mathcal N_{t-1}}\right]}\right]}\\
=&\underbrace{\mathbb E{\left[\sum_{t=1}^Tg_t^*-g_t{\left(\bm x_t\right)}\right]}}_{A_1}+\underbrace{\mathbb E{\left[\sum_{t=1}^Tg_t{\left(\bm x_t\right)}-f{\left(\bm x_t\right)}\right]}}_{A_2}.
\end{split}
\end{equation}
From Lemma~\ref{virtual_regret}, we have $A_1\le B_T$. We transform $A_2$ as
\begin{equation}
\begin{split}
A_2=\mathbb E{\left[\sum_{t=1}^Tg_t{\left(\bm x_t\right)}-f{\left(\bm x_t\right)}\right]}=&\sum_{t=1}^T\mathbb E_{{\mathcal D}_{\mathcal N_{t-1}}}{\left[\mathbb E{\left[g_t{\left(\bm x_t\right)}-f{\left(\bm x_t\right)}\mid{\mathcal D}_{\mathcal N_{t-1}}\right]}\right]}\\
=&\sum_{t=1}^T\mathbb E{\left[g_t{\left(\bm x_t\right)}-\mu{\left(\bm x_t;{\mathcal D}_{\mathcal N_{t-1}}\right)}\right]}\\
=&\sum_{t=1}^T\mathbb E{\left[\frac{g_t{\left(\bm x_t\right)}-\mu{\left(\bm x_t;{\mathcal D}_{\mathcal N_{t-1}}\right)}}{\sigma{\left(\bm x_t;{\mathcal D}_{\mathcal N_{t-1}}\right)}}\sigma{\left(\bm x_t;{\mathcal D}_{\mathcal N_{t-1}}\right)}\right]},
\end{split}
\end{equation}
Let $\eta_t=\mathop{\max}\limits_{\bm x\in\mathcal X}\frac{g_t{\left(\bm x\right)}-\mu{\left(\bm x;{\mathcal D}_{\mathcal N_{t-1}}\right)}}{\sigma{\left(\bm x;{\mathcal D}_{\mathcal N_{t-1}}\right)}}$ and $\theta_t=\eta_t^2\mathbbm 1{\left\{\eta_t\ge0\right\}}$.
Then, $A_2$ is bounded above as
\begin{subequations}\label{260111140007}
\begin{align}
A_2\le\sum_{t=1}^T\mathbb E{\left[\eta_t\sigma{\left(\bm x_t;{\mathcal D}_{\mathcal N_{t-1}}\right)}\right]}\le&\sum_{t=1}^T\mathbb E{\left[\eta_t\mathbbm 1{\left\{\eta_t\ge0\right\}}\sigma{\left(\bm x_t;{\mathcal D}_{\mathcal N_{t-1}}\right)}\right]}\\
\le&\mathbb E{\left[\sqrt{{\sum}_{t=1}^T\eta_t^2\mathbbm 1{\left\{\eta_t\ge0\right\}}}\sqrt{{\sum}_{t=1}^T\sigma^2{\left(\bm x_t;{\mathcal D}_{\mathcal N_{t-1}}\right)}}\right]}\\
\le&\mathbb E{\left[\sqrt{{\sum}_{t=1}^T\theta_t}\right]}\sqrt{C_1Q\bar\gamma_{T/Q}}\\
\le&\sqrt{{\sum}_{t=1}^T\mathbb E{\left[\theta_t\right]}}\sqrt{C_1Q\bar\gamma_{T/Q}}\\
=&\sqrt{{\sum}_{t=1}^T\mathbb E_{\mathcal D_{\mathcal N_{t-1}}}{\left[\mathbb E{\left[\theta_t\mid\mathcal D_{\mathcal N_{t-1}}\right]}\right]}}\sqrt{C_1Q\bar\gamma_{T/Q}}\\
\le&\sqrt{C_1C_2QT\bar\gamma_{T/Q}}.
\end{align}
\end{subequations}
where $C_1=2/\log{\left(1+\sigma_{\rm noise}^{-2}\right)}$ and $C_2=2+2\log{\left(\left|\mathcal X\right|/2\right)}$. The grounds of the equations are the following: Eq. (\ref{260111140007}b), Cauchy--Schwarz inequality; Eq. (\ref{260111140007}c), Lemma~\ref{variance_bound2}; Eq. (\ref{260111140007}d), Jensen's inequality; Eq. (\ref{260111140007}f), Lemma~\ref{takeno24_3_2} with $\mathcal X'=\mathcal X$. As shown above, we have ${\rm BCR}_T=A_1+A_2$, $A_1\le B_T$, and $A_2\le\sqrt{C_1C_2QT\bar\gamma_{T/Q}}$, which proves Theorem~\ref{thm1}.
\end{proof}

\section{Proof of Theorem~\ref{thm2}}
\label{apn:thm2}
We first introduce the notation and lemmas needed to prove Theorem~\ref{thm2}.
Let $\left(l_t\right)_{t\in\mathbb N}$ be an arbitrary sequence of positive numbers.
We equip $\left[0,r\right]^d$ with the metric induced by 1-norm $\left\|\cdot\;\right\|_1$.
Then, the largest axis-aligned cube included in a ball of radius $l_t/2$ has edge length $l_t/d$. Hence,  set $\left[0,r\right]^d$ can be covered by $\lceil dr/l_t\rceil^d$ balls of radius $l_t/2$. By this fact and Exercise 4.26 (b) of \cite{Vershynin2018-high}, set $\mathcal X\subset\left[0,r\right]^d$ can also be covered by the same number of balls with radius $l_t$.
That is, there is some set of centers $\mathcal X_t\subset\mathcal X$ that satisfies $\left|\mathcal X_t\right|\le\lceil dr/l_t\rceil^d$ and
\begin{equation}
\forall\bm x\in\mathcal X,\exists\bm x'\in\mathcal X_t,\left\|\bm x-\bm x'\right\|_1\le l_t.
\end{equation}
For $\bm x\in\mathcal X$, we define $\left[\bm x\right]_t$ as one of $\bm x'\in\mathcal X_t$ that satisfies $\left\|\bm x-\bm x'\right\|_1\le l_t$.

We use the following lemmas to prove Theorem 2:
\begin{lemma}[Lemma H.2 of \cite{Takeno2023-randomized}]\label{takeno23_h_2}
\it Suppose that Assumptions~\ref{asm_basic}, \ref{asm_derivative}, and $l_t\le\frac{1}{bt^2}/\left(\sqrt{\log{\left(ad\right)}}+\sqrt{\pi}/2\right)$ hold.
Then, the following holds:
\begin{equation}
\sum_{t=1}^T\mathbb E{\left[\sup_{\bm x\in\mathcal X}\left|f{\left(\bm x\right)}-f{\left(\left[\bm x\right]_t\right)}\right|\right]}\le\frac{\pi^2}{6}\qquad\left(T\in\mathbb N\right).
\end{equation}
\end{lemma}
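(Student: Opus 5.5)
The plan is to follow the standard route for discretizing the derivative bound, as in Lemma H.2 of \citep{Takeno2023-randomized} and Theorem 2 of \citep{Srinivas2010-Gaussian}. Assumption~\ref{asm_derivative} controls the sup of every partial derivative, so $f$ is almost surely Lipschitz with respect to $\|\cdot\|_1$, with random constant $L_f=\max_{j}\sup_{\bm x}|\partial f(\bm x)/\partial x_j|$. This holds because $\mathcal X$ is convex, so the segment from $\bm x$ to $[\bm x]_t$ stays in $\mathcal X$, and the mean value theorem gives
\begin{equation*}
|f(\bm x)-f([\bm x]_t)|\le L_f\|\bm x-[\bm x]_t\|_1\le L_f l_t.
\end{equation*}
Hence $\mathbb E[\sup_{\bm x}|f(\bm x)-f([\bm x]_t)|]\le l_t\,\mathbb E[L_f]$. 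The whole proof then reduces to bounding $\mathbb E[L_f]$ and choosing $l_t$.

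To bound $\mathbb E[L_f]$, I will use the tail-integral formula together with a union bound over the $d$ coordinates. First, $\Pr(L_f>L)\le \min\{1, ad\,e^{-L^2/b^2}\}$. Next, set $L_0=b\sqrt{\log(ad)}$ (if $ad<1$, replace $L_0$ by $0$, which only helps). Then
\begin{equation*}
\mathbb E[L_f]=\int_0^\infty \Pr(L_f>L)\,dL\le L_0+\int_{L_0}^\infty ad\,e^{-L^2/b^2}\,dL.
\end{equation*}
Substituting $L=L_0+s$ and using $e^{-(L_0+s)^2/b^2}\le e^{-L_0^2/b^2}e^{-s^2/b^2}=(ad)^{-1}e^{-s^2/b^2}$, the remaining integral is at most $\int_0^\infty e^{-s^2/b^2}ds=b\sqrt\pi/2$. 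Therefore
\begin{equation*}
\mathbb E[L_f]\le b\left(\sqrt{\log(ad)}+\sqrt\pi/2\right).
\end{equation*}
This matches the constant appearing in the hypothesis on $l_t$, which confirms the intended route.

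Finally, plugging in the assumed bound on $l_t$ gives $l_t\,\mathbb E[L_f]\le 1/t^2$. Summing over $t$ yields $\sum_{t=1}^T 1/t^2\le\pi^2/6$.

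I expect the main obstacle to be the measurability and Lipschitz step. Two things need checking. First, the sup over the uncountable set $\mathcal X$ must be a well-defined random variable; continuity of the sample paths, which Assumption~\ref{asm_derivative} implicitly presumes through differentiability, handles this. Second, the Lipschitz claim relies on convexity of $\mathcal X$; with the $\ell_1$ metric the constant is exactly $\max_j\sup|\partial_j f|$ and no factor of $d$ appears. The tail integral itself is routine.
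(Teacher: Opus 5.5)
Your proposal is correct and follows the same route as the cited Lemma H.2 of \citep{Takeno2023-randomized}, which the paper uses without reproving: the mean-value bound $\sup_{\bm x\in\mathcal X}|f(\bm x)-f([\bm x]_t)|\le L_f l_t$ in the $\ell_1$ metric, then $\mathbb E[L_f]\le b(\sqrt{\log(ad)}+\sqrt{\pi}/2)$ via a union bound and the tail-integral formula (as in \citep{Kandasamy2018-Parallelised}), and finally $l_t\,\mathbb E[L_f]\le 1/t^2$ summed to at most $\pi^2/6$. Your shift estimate $e^{-(L_0+s)^2/b^2}\le e^{-L_0^2/b^2}e^{-s^2/b^2}$ validly replaces the usual complementary-error-function bound, and since the hypothesis on $l_t$ already presumes $ad\ge1$, your side remark about $ad<1$ is harmless.
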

\begin{lemma}[Lemma D.4 of \cite{takeno2024-posterior}]\label{takeno24_d_4}
\it Assume the same premise on $k$ as Theorem~\ref{kusakawa22_e_4}.
Define $L_\sigma$ by Eq. (\ref{L_sigma}).
Suppose that Assumption~\ref{asm_basic} and $l_t\le\frac{1}{L_\sigma t^2}$ hold.
Then, the following holds for any $t\in\mathbb N$ and $\mathcal D_{t-1}$:
\begin{equation}
\sup_{\bm x\in\mathcal X}\left|\sigma{\left(\bm x;\mathcal D_{t-1}\right)}-\sigma{\left(\left[\bm x\right]_t;\mathcal D_{t-1}\right)}\right|\le\frac{1}{t^2}.
\end{equation}
\end{lemma}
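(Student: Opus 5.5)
The plan is to reduce the claim to a Lipschitz property of the posterior standard deviation that holds uniformly over all data sets. This is what Lemma~\ref{kusakawa22_e_4} (the result defining $L_\sigma$ in Eq.~(\ref{L_sigma})) supplies under the stated kernel premise. The required form is
\begin{equation*}
\left|\sigma{\left(\bm x;\mathcal D_{t-1}\right)}-\sigma{\left(\bm x';\mathcal D_{t-1}\right)}\right|\le L_\sigma\left\|\bm x-\bm x'\right\|_1\qquad\left(\bm x,\bm x'\in\mathcal X\right),
\end{equation*}
with $L_\sigma$ depending only on $k$, and not on $t$, $\mathcal D_{t-1}$, or the number of data points. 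Granting this, the rest is immediate. Fix $t$, $\mathcal D_{t-1}$ and $\bm x\in\mathcal X$. By construction of the cover $\mathcal X_t$ we have $\|\bm x-[\bm x]_t\|_1\le l_t$. Hence the difference $|\sigma(\bm x;\mathcal D_{t-1})-\sigma([\bm x]_t;\mathcal D_{t-1})|$ is at most $L_\sigma l_t\le L_\sigma\cdot\frac{1}{L_\sigma t^2}=1/t^2$. Taking the supremum over $\bm x$ gives the claim.

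If Lemma~\ref{kusakawa22_e_4} is only available in a weaker form, I would prove the uniform Lipschitz bound directly with an RKHS projection argument.
\begin{itemize}
\item Write $\sigma^2(\bm x;\mathcal D_{t-1})$ as the squared norm of $P^\perp k(\cdot,\bm x)$. Here $P^\perp$ is the orthogonal projection, in the RKHS of the noise-augmented kernel, onto the complement of the span of the representers of the observed inputs (each augmented with its own noise coordinate). Equivalently, $\sigma^2$ is the conditional variance of $f(\bm x)$ given the noisy observations, i.e.\ an $L^2$ projection residual.
\item Since $P^\perp$ is a contraction, the reverse triangle inequality gives $|\sigma(\bm x)-\sigma(\bm x')|\le\|k(\cdot,\bm x)-k(\cdot,\bm x')\|_{\mathcal H}$. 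The right-hand side equals $\sqrt{k(\bm x,\bm x)+k(\bm x',\bm x')-2k(\bm x,\bm x')}$, which is independent of the data.
\item For the kernels covered (linear, Gaussian, Mat\'ern with sufficient smoothness), a second-order Taylor expansion of $k$ around the diagonal bounds the expression under the square root by $C\|\bm x-\bm x'\|^2$. Its square root is then at most $\sqrt C\,\|\bm x-\bm x'\|_2\le\sqrt C\,\|\bm x-\bm x'\|_1$, which identifies $L_\sigma$.
\end{itemize}

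The main obstacle is exactly this uniformity over $\mathcal D_{t-1}$. A naive differentiation of the closed-form $\sigma(\bm x;\mathcal D_{t-1})$ produces factors of $(\bm K+\sigma_{\rm noise}^2\bm I)^{-1}$ and of $1/\sigma$, which could blow up with $t$ or near previously observed points. The projection and contraction viewpoint avoids both problems, because it never divides by $\sigma$ and never touches the Gram matrix. The remaining smoothness check on $k$ is routine for the listed kernel classes and is absorbed into the premise of Lemma~\ref{kusakawa22_e_4}.
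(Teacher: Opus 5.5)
Your main argument is correct and is the intended one. The paper cites this lemma from Takeno et al.\ without reproducing a proof, and it follows directly from the data-independent Lipschitz bound of Lemma~\ref{kusakawa22_e_4} together with $\|\bm x-[\bm x]_t\|_1\le l_t\le 1/(L_\sigma t^2)$, exactly as you wrote. Your fallback projection argument is also sound, provided you replace the local ``second-order Taylor expansion'' with a global bound on $2-2k(\bm x,\bm x')$, for example via $1-\cos u\le u^2/2$ in the spectral representation of the stationary kernels.
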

\thmTwo*
\begin{proof}[\bf Proof of Theorem~\ref{thm2}]
In the same way as Eq. (\ref{260111125418}), we transform ${\rm BCR}_T$ as
\begin{equation}
{\rm BCR}_T=\mathbb E{\left[\sum_{t=1}^Tf^*-f{\left(\bm x_t\right)}\right]}=\underbrace{\mathbb E{\left[\sum_{t=1}^Tg_t^*-g_t{\left(\bm x_t\right)}\right]}}_{A_1}+\underbrace{\mathbb E{\left[\sum_{t=1}^Tg_t{\left(\bm x_t\right)}-f{\left(\bm x_t\right)}\right]}}_{A_2}.
\end{equation}
From Lemma~\ref{virtual_regret}, we have $A_1\le B_T$.
Let $l_t=\frac{1}{Lt^2}$ for $L=\max{\left\{L_\sigma,b\left(\sqrt{\log{\left(ad\right)}}+\sqrt{\pi}/2\right)\right\}}$.
We transform $A_2$ as
\begin{equation}
\begin{split}
A_2=&\mathbb E{\left[\sum_{t=1}^Tg_t{\left(\bm x_t\right)}-f{\left(\bm x_t\right)}\right]}\\
=&\underbrace{\mathbb E{\left[\sum_{t=1}^Tg_t{\left(\bm x_t\right)}-g{\left(\left[\bm x_t\right]_t\right)}\right]}}_{A_3}
+\underbrace{\mathbb E{\left[\sum_{t=1}^Tg_t{\left(\left[\bm x_t\right]_t\right)}-f{\left(\left[\bm x_t\right]_t\right)}\right]}}_{A_4}
+\underbrace{\mathbb E{\left[\sum_{t=1}^Tf{\left(\left[\bm x_t\right]_t\right)}-f{\left(\bm x_t\right)}\right]}}_{A_5}\\
\end{split}
\end{equation}
Then, from Lemma~\ref{takeno23_h_2}, we have $A_3\le\pi^2/6$ and $A_5\le\pi^2/6$.
We transform $A_4$ as
\begin{equation}
\begin{split}
A_4=&\mathbb E{\left[\sum_{t=1}^Tg_t{\left(\left[\bm x_t\right]_t\right)}-f{\left(\left[\bm x_t\right]_t\right)}\right]}\\
=&\mathbb E_{\mathcal D_{\mathcal N_{t-1}}}{\left[\mathbb E{\left[\sum_{t=1}^Tg_t{\left(\left[\bm x_t\right]_t\right)}-f{\left(\left[\bm x_t\right]_t\right)}\mid\mathcal D_{\mathcal N_{t-1}}\right]}\right]}\\
=&\mathbb E{\left[\sum_{t=1}^Tg_t{\left(\left[\bm x_t\right]_t\right)}-\mu{\left(\left[\bm x_t\right]_t;\mathcal D_{\mathcal N_{t-1}}\right)}\right]}\\
=&\mathbb E{\left[\sum_{t=1}^T\frac{g_t{\left(\left[\bm x_t\right]_t\right)}-\mu{\left(\left[\bm x_t\right]_t;\mathcal D_{\mathcal N_{t-1}}\right)}}{\sigma{\left(\left[\bm x_t\right]_t;\mathcal D_{\mathcal N_{t-1}}\right)}}\sigma{\left(\left[\bm x_t\right]_t;\mathcal D_{\mathcal N_{t-1}}\right)}\right]}\\
\end{split}
\end{equation}
Let $\eta_t=\mathop{\max}\limits_{\bm x\in\mathcal X_t}\frac{g_t{\left(\bm x\right)}-\mu{\left(\bm x;{\mathcal D}_{\mathcal N_{t-1}}\right)}}{\sigma{\left(\bm x;{\mathcal D}_{\mathcal N_{t-1}}\right)}}$.
Then, $A_4$ is bounded above as
\begin{equation}
\begin{split}
A_4\le&\mathbb E{\left[\sum_{t=1}^T\eta_t\sigma{\left(\left[\bm x_t\right]_t;\mathcal D_{\mathcal N_{t-1}}\right)}\right]}\\
\le&\underbrace{\mathbb E{\left[\sum_{t=1}^T\eta_t\sigma{\left(\bm x_t;\mathcal D_{\mathcal N_{t-1}}\right)}\right]}}_{A_6}
+\underbrace{\mathbb E{\left[\sum_{t=1}^T\eta_t\left|\sigma{\left(\bm x_t;\mathcal D_{\mathcal N_{t-1}}\right)}-\sigma{\left(\left[\bm x_t\right]_t;\mathcal D_{\mathcal N_{t-1}}\right)}\right|\right]}}_{A_7}\\
\end{split}
\end{equation}
In the same way as Eq. (\ref{260111140007}), $A_6$ is bounded above as
\begin{equation}
A_6=\mathbb E{\left[\sum_{t=1}^T\eta_t\sigma{\left(\bm x_t;\mathcal D_{\mathcal N_{t-1}}\right)}\right]}\le\sqrt{C_1Qs_TT\bar\gamma_{T/Q}},
\end{equation}
where $s_t\ge2+2\log{\left(\left|\mathcal X_t\right|/2\right)}$ is defined as
\begin{equation}
s_t=2+2d\log{\lceil dr/l_t\rceil}-2\log2=2+2d\log{\lceil drLt^2\rceil}-2\log2\qquad\left(t\in\mathbb N\right).
\end{equation}
Also, $A_7$ is bounded above as
\begin{subequations}\label{260112191011}
\begin{align}
A_7=&\mathbb E{\left[{\sum}_{t=1}^T\eta_t\left|\sigma{\left(\bm x_t;\mathcal D_{\mathcal N_{t-1}}\right)}-\sigma{\left(\left[\bm x_t\right]_t;\mathcal D_{\mathcal N_{t-1}}\right)}\right|\right]}\\
\le&{\sum}_{t=1}^T\frac{1}{t^2}\mathbb E{\left[\eta_t\right]}\\
\le&{\sum}_{t=1}^T\frac{1}{t^2}\mathbb E{\left[\eta_t\mathbbm 1{\left\{\eta_t\ge0\right\}}\right]}\\
=&{\sum}_{t=1}^T\frac{1}{t^2}\mathbb E{\left[\sqrt{\eta_t^2\mathbbm 1{\left\{\eta_t\ge0\right\}}}\right]}\\
\le&{\sum}_{t=1}^T\frac{1}{t^2}\sqrt{\mathbb E{\left[\eta_t^2\mathbbm 1{\left\{\eta_t\ge0\right\}}\right]}}\\
\le&{\sum}_{t=1}^T\frac{1}{t^2}\sqrt{2+2\log{\left(\left|\mathcal X_t\right|/2\right)}}\\
\le&{\sum}_{t=1}^T\frac{1}{t^2}\sqrt{2+2\log{\left(\left|\mathcal X_T\right|/2\right)}}\\
=&\frac{\pi^2}{6}\sqrt{s_T}.
\end{align}
\end{subequations}
The grounds of the equations are the following: Eq. (\ref{260112191011}b), Lemma~\ref{takeno24_d_4}; Eq. (\ref{260112191011}e), Jensen's inequality; Eq. (\ref{260112191011}f), Lemma~\ref{takeno24_3_2} with $\mathcal X'=\mathcal X_t$; Eq. (\ref{260112191011}g), the monotonicity of $\left|\mathcal X_t\right|$. Finally, the proof is concluded as
\begin{equation}
\begin{split}
&{\rm BCR}_T=A_1+A_2\le A_1+A_3+A_5+A_6+A_7,\quad A_2=A_3+A_4+A_5,\quad A_4\le A_6+A_7,\\
&A_1\le B_T,\quad A_3\le\frac{\pi^2}{6},\quad A_5\le\frac{\pi^2}{6},\quad A_6\le\sqrt{C_1Qs_TT\bar\gamma_{T/Q}},\quad A_7\le\frac{\pi^2}{6}\sqrt{s_T}.
\end{split}
\end{equation}
\end{proof}

\section{Proof of Theorem 3}
\label{apn:thm3}
\thmThree*
\begin{proof}[\bf Proof of Theorem~\ref{thm3}]
The regret ${\rm BSR}_t$ is monotonically decreasing with respect to $t$ as shown in the following:
\begin{equation}\label{260112123405}
\begin{split}
{\rm BSR}_t=&\mathbb E{\left[f^*-f{\left(\hat{\bm x}_t\right)}\right]}\\
=&\mathbb E{\left[f^*-f{\left(\hat{\bm x}_T\right)}+f{\left(\hat{\bm x}_T\right)}-f{\left(\hat{\bm x}_t\right)}\right]}\\
=&\mathbb E{\left[f^*-f{\left(\hat{\bm x}_T\right)}\right]}+\mathbb E_{\mathcal D_T}{\left[\mathbb E{\left[f{\left(\hat{\bm x}_T\right)}-f{\left(\hat{\bm x}_t\right)}\mid\mathcal D_T\right]}\right]}\\
=&\mathbb E{\left[f^*-f{\left(\hat{\bm x}_T\right)}\right]}+\mathbb E_{\mathcal D_T}{\left[\mu{\left(\hat{\bm x}_T;\mathcal D_T\right)}-\mu{\left(\hat{\bm x}_t;\mathcal D_T\right)}\right]}\\
\ge&\mathbb E{\left[f^*-f{\left(\hat{\bm x}_T\right)}\right]}\\
=&{\rm BSR}_T\qquad\qquad\qquad\qquad\left(T,t\in\mathbb N,T>t\right),
\end{split}
\end{equation}
where the inequality is obtained because $\hat{\bm x}_T$ maximizes $\mu{\left(\;\cdot\;;\mathcal D_T\right)}$.
Moreover, ${\rm BSR}_t$ is bounded above as
\begin{equation}\label{260112123418}
\begin{split}
{\rm BSR}_t=&\mathbb E{\left[f^*-f{\left(\hat{\bm x}_t\right)}\right]}\\
=&\mathbb E{\left[f^*-f{\left(\mathcal A{\left(\mathcal D_{t-1}\right)}\right)}+f{\left(\mathcal A{\left(\mathcal D_{t-1}\right)}\right)}-f{\left(\hat{\bm x}_t\right)}\right]}\\
=&\mathbb E{\left[f^*-f{\left(\mathcal A{\left(\mathcal D_{t-1}\right)}\right)}\right]}+\mathbb E_{\mathcal D_t}{\left[\mathbb E{\left[f{\left(\mathcal A{\left(\mathcal D_{t-1}\right)}\right)}-f{\left(\hat{\bm x}_t\right)}\mid\mathcal D_t\right]}\right]}\\
=&\mathbb E{\left[f^*-f{\left(\mathcal A{\left(\mathcal D_{t-1}\right)}\right)}\right]}+\mathbb E_{\mathcal D_t}{\left[\mu{\left(\mathcal A{\left(\mathcal D_{t-1}\right)};\mathcal D_t\right)}-\mu{\left(\hat{\bm x}_t;\mathcal D_t\right)}\right]}\\
\le&\mathbb E{\left[f^*-f{\left(\mathcal A{\left(\mathcal D_{t-1}\right)}\right)}\right]}\\
=&\mathbb E_{\mathcal D_{\mathcal N_{t-1}},\left(\bm x_i\right)_{i=1}^{t-1}}{\left[\mathbb E{\left[f^*-f{\left(\mathcal A{\left(\mathcal D_{t-1}\right)}\right)}\mid\mathcal D_{\mathcal N_{t-1}},\left(\bm x_i\right)_{i=1}^{t-1}\right]}\right]}\\
=&\mathbb E_{\mathcal D_{\mathcal N_{t-1}},\left(\bm x_i\right)_{i=1}^{t-1}}{\left[\mathbb E{\left[g_t^*-g_t{\left(\mathcal A{\left(\mathcal D^{\rm RKB}_{t-1}\right)}\right)}\mid\mathcal D_{\mathcal N_{t-1}},\left(\bm x_i\right)_{i=1}^{t-1}\right]}\right]}\\
=&\mathbb E{\left[g_t^*-g_t{\left(\mathcal A{\left(\mathcal D^{\rm RKB}_{t-1}\right)}\right)}\right]}\\
=&\mathbb E{\left[g_t^*-g_t{\left(\bm x_t\right)}\right]}\qquad\qquad\qquad\qquad\qquad\qquad\qquad\left(t\in\mathbb N\right).
\end{split}
\end{equation}
Line 5 in Eq. (\ref{260112123418}) is obtained because $\hat{\bm x}_t$ maximizes $\mu{\left(\;\cdot\;;\mathcal D_t\right)}$.
Line 7 in Eq. (\ref{260112123418}) is obtained from $g_t\sim p{\left(f\mid \mathcal D_{\mathcal N_{t-1}}\right)}$.
From Eqs. (\ref{260112123405}) and (\ref{260112123418}), we have
\begin{equation}
{\rm BSR}_T\le\frac{1}{T}\sum_{t=1}^T{\rm BSR}_t\le\frac{1}{T}\underbrace{\mathbb E{\left[\sum_{t=1}^Tg_t^*-g_t{\left(\bm x_t\right)}\right]}}_{A_1}
\end{equation}
From Lemma~\ref{virtual_regret}, we have $A_1\le B_T$, which concludes the proof.
\end{proof}
\section{Auxiliary lemmas}
\begin{lemma}[Lemma 5.3 of \cite{Srinivas2010-Gaussian}]\label{srinivas2010_5_3}
\it Let Assumption~\ref{asm_basic} hold.
Then, for any $T\in\mathbb N$ and $\mathcal D_T=\left\{\left(\bm x_i,y_i\right)\right\}_{i=1}^T$, the following holds:
\begin{equation}
I{\left(\bm y_T;\bm f_T\right)}=\frac{1}{2}\sum_{t=1}^T\log{\left(1+\sigma_{\rm noise}^{-2}\sigma^2{\left(\bm x_t;\mathcal D_{t-1}\right)}\right)},
\end{equation}
where $\bm y_T=\left[y_1\;\cdots\;y_T\right]^\top$ and $\bm f_T=\left[f{\left(\bm x_1\right)}\;\cdots\;f{\left(\bm x_T\right)}\right]^\top$.
\end{lemma}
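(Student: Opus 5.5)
This is Lemma 5.3 of \citep{Srinivas2010-Gaussian}, so the proof goes through the chain rule for mutual information combined with the Gaussian entropy formula. First I would write $I(\bm y_T;\bm f_T)=H(\bm y_T)-H(\bm y_T\mid\bm f_T)$. Given $\bm f_T$, the observations are $\bm y_T=\bm f_T+\bm\varepsilon_T$ with independent $\mathcal N(0,\sigma_{\rm noise}^2)$ noise under Assumption~\ref{asm_basic}. Hence
\[
H(\bm y_T\mid\bm f_T)=\tfrac{T}{2}\log(2\pi e\sigma_{\rm noise}^2).
\]

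Next I would expand $H(\bm y_T)$ by the chain rule of entropy: $H(\bm y_T)=\sum_{t=1}^T H(y_t\mid\bm y_{t-1})$. Conditioned on $\mathcal D_{t-1}$, the posterior of $f$ is a GP, so $y_t$ is Gaussian with variance $\sigma^2(\bm x_t;\mathcal D_{t-1})+\sigma_{\rm noise}^2$. This gives
\[
H(y_t\mid\bm y_{t-1})=\tfrac12\log\bigl(2\pi e(\sigma_{\rm noise}^2+\sigma^2(\bm x_t;\mathcal D_{t-1}))\bigr).
\]
The key fact is that the posterior variance does not depend on the observed values $\bm y_{t-1}$, only on the inputs. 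Therefore the conditional entropy is deterministic given the inputs, and no expectation over $\bm y_{t-1}$ is needed. Subtracting the noise entropy term by term then yields $\frac12\sum_t\log(1+\sigma_{\rm noise}^{-2}\sigma^2(\bm x_t;\mathcal D_{t-1}))$.

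The main subtlety is the status of the inputs. The lemma treats $\bm x_1,\dots,\bm x_T$ as fixed, as in Definition~\ref{MIG}. If instead the inputs were chosen adaptively, the input sequence would have to be conditioned on throughout. I would state explicitly that the inputs are fixed, so that every conditional distribution above is exactly Gaussian with the stated variance. The only remaining computation is the identity $\frac12\log(2\pi e(a+b))-\frac12\log(2\pi e a)=\frac12\log(1+b/a)$.
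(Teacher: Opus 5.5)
Your proposal is correct and is essentially the standard argument from \citet{Srinivas2010-Gaussian}: the decomposition $I(\bm y_T;\bm f_T)=H(\bm y_T)-H(\bm y_T\mid\bm f_T)$, the chain rule for $H(\bm y_T)$, and the fact that the Gaussian posterior variance depends only on the inputs; the paper cites this result rather than reproving it. Your insistence on treating the inputs as fixed is also the right reading, since the paper applies the identity deterministically to the realized inputs in Lemma~\ref{variance_bound} before bounding it by $\gamma_T$ via Definition~\ref{MIG}.
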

\begin{lemma}\label{takeno23_4_1}
\it Let Assumption~\ref{asm_basic} hold.
Then, for any $\delta\in\left(0,1\right)$, $t\in\mathbb N$, $\mathcal D_{t-1}$, and finite $\mathcal X'\subset\mathcal X$, the following holds:
\begin{equation}
{\rm Pr}{\left(\forall\bm x\in\mathcal X',f{\left(\bm x\right)}\le\mu{\left(\bm x;\mathcal D_{t-1}\right)}+\beta_\delta^{1/2}\sigma{\left(\bm x;\mathcal D_{t-1}\right)}\mid\mathcal D_{t-1}\right)}\ge 1-\delta,
\end{equation}
where $\beta_\delta=2\log{\left(\left|\mathcal X'\right|/\left(2\delta\right)\right)}$.
\end{lemma}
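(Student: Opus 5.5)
The last statement is Lemma~\ref{takeno23_4_1}: a union-bound confidence statement for the posterior over a finite set $\mathcal X'$. The plan is to condition on $\mathcal D_{t-1}$ throughout. Under Assumption~\ref{asm_basic}, each $f(\bm x)$ given $\mathcal D_{t-1}$ is Gaussian with mean $\mu(\bm x;\mathcal D_{t-1})$ and variance $\sigma^2(\bm x;\mathcal D_{t-1})$. We may assume $\sigma(\bm x;\mathcal D_{t-1})>0$, which holds since $\sigma_{\rm noise}>0$ whenever $k(\bm x,\bm x)>0$. The degenerate case $\sigma=0$ makes $f(\bm x)=\mu(\bm x;\mathcal D_{t-1})$ almost surely, so the event holds trivially there.

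First, reduce to a one-point tail bound via the union bound:
\begin{equation*}
{\rm Pr}\left(\exists\bm x\in\mathcal X',f(\bm x)>\mu(\bm x;\mathcal D_{t-1})+\beta_\delta^{1/2}\sigma(\bm x;\mathcal D_{t-1})\mid\mathcal D_{t-1}\right)\le\sum_{\bm x\in\mathcal X'}{\rm Pr}\left(Z>\beta_\delta^{1/2}\right),
\end{equation*}
where $Z\sim\mathcal N(0,1)$ is the standardized value $(f(\bm x)-\mu(\bm x;\mathcal D_{t-1}))/\sigma(\bm x;\mathcal D_{t-1})$. Only the upper tail appears, because the statement is one-sided.

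Second, apply the Gaussian tail bound ${\rm Pr}(Z>c)\le\frac12 e^{-c^2/2}$ for $c\ge0$, the same one used in \citet{Srinivas2010-Gaussian}. This bound follows by writing ${\rm Pr}(Z>c)=\int_0^\infty\phi(c+s)\,ds$ and using $\phi(c+s)\le e^{-c^2/2}\phi(s)$ for $c,s\ge0$. With $c^2=\beta_\delta=2\log(|\mathcal X'|/(2\delta))$, each term is at most $\frac12\cdot 2\delta/|\mathcal X'|=\delta/|\mathcal X'|$. Summing over the $|\mathcal X'|$ points gives total failure probability at most $\delta$, which proves the claim.

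The only delicate point is that $\beta_\delta$ could be negative when $|\mathcal X'|<2\delta$, that is, when $|\mathcal X'|=1$ and $\delta>1/2$; then $\beta_\delta^{1/2}$ is not real. I would handle this case directly. For $|\mathcal X'|=1$ and $\delta\ge1/2$, read $\beta_\delta^{1/2}$ as $0$ (or any nonnegative threshold): the success probability is then at least $\Phi(0)=1/2\ge1-\delta$, so the bound still holds. Everywhere else $\beta_\delta\ge0$ and the argument above applies verbatim. I expect no real obstacle beyond stating this convention cleanly.
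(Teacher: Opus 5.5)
Your proof is correct and follows the paper's route: a union bound over $\mathcal X'$ combined with the one-sided tail bound $1-\Phi(c)\le\frac12 e^{-c^2/2}$ (the paper's Lemma~\ref{takeno23_H_3}, proved by the same shift-of-integration argument you sketch). Your extra treatment of $\sigma(\bm x;\mathcal D_{t-1})=0$ and of $\beta_\delta<0$ (when $|\mathcal X'|=1$, $\delta>1/2$) covers edge cases that the paper's proof leaves implicit; the paper only uses the lemma in the case $|\mathcal X'|\ge 2$, where $\beta_\delta>0$.
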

\begin{proof}[\bf Proof of Lemma~\ref{takeno23_4_1}]
We define event $E{\left(\bm x\right)}$ as
\begin{equation}
E{\left(\bm x\right)}\Leftrightarrow f{\left(\bm x\right)}\le\mu{\left(\bm x;\mathcal D_{t-1}\right)}+\beta_\delta^{1/2}\sigma{\left(\bm x;\mathcal D_{t-1}\right)}\qquad\left(\bm x\in\mathcal X'\right).
\end{equation}
Then, from Lemma~\ref{takeno23_H_3}, we have
\begin{equation}
{\rm Pr}{\left(\neg E{\left(\bm x\right)}\mid\mathcal D_{t-1}\right)}=1-\Phi{\left(\beta_\delta^{1/2}\right)}\le\frac{\delta}{\left|\mathcal X'\right|}\qquad\left(\bm x\in\mathcal X'\right).
\end{equation}
Therefore, Lemma~\ref{takeno23_4_1} is proved as
\begin{equation}
\begin{split}
&{\rm Pr}{\left(\forall\bm x\in\mathcal X',E{\left(\bm x\right)}\mid\mathcal D_{t-1}\right)}\\
=&1-{\rm Pr}{\left(\exists\bm x\in\mathcal X',\neg E{\left(\bm x\right)}\mid\mathcal D_{t-1}\right)}\\
\ge&1-\sum_{\bm x\in\mathcal X'}{\rm Pr}{\left(\neg E{\left(\bm x\right)}\mid\mathcal D_{t-1}\right)}\\
\ge&1-\delta.
\end{split}
\end{equation}
\end{proof}
\begin{lemma}\label{260111120105}
\it The following inequality holds:
\begin{equation}
\left(1+a\right)^b\le 1+ab\qquad\left(a>0,b\in\left[0,1\right]\right).
\end{equation}
\end{lemma}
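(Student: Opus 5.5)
This is Bernoulli's inequality for exponents in $[0,1]$, and I would prove it with a one-variable monotonicity argument. Fix $b\in[0,1]$ and define
\begin{equation*}
h(a)=1+ab-\left(1+a\right)^b\qquad\left(a\ge0\right).
\end{equation*}
The goal is then to show $h(a)\ge0$ for all $a>0$.

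\textbf{Step 1 (value at zero).} At $a=0$ we have $h(0)=1-1=0$.

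\textbf{Step 2 (sign of the derivative).} Differentiating gives
\begin{equation*}
h'(a)=b\left(1-\left(1+a\right)^{b-1}\right).
\end{equation*}
Since $b-1\le0$ and $1+a\ge1$, we have $\left(1+a\right)^{b-1}\le1$. Together with $b\ge0$ this gives $h'(a)\ge0$ for all $a\ge0$.

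\textbf{Step 3 (conclusion).} Because $h$ is nondecreasing on $[0,\infty)$ and $h(0)=0$, we get $h(a)\ge0$ for every $a>0$, which is the claim.

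The degenerate exponents $b=0$ and $b=1$ give equality directly, so they need no separate treatment.

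An equivalent route is concavity. For fixed $a$, the map $b\mapsto(1+a)^b=e^{b\log(1+a)}$ is convex, so on $[0,1]$ it lies below the chord joining its values at $b=0$ and $b=1$. That chord is exactly $1+ab$.

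I do not expect a real obstacle here. The only point needing care is the sign of $b-1$ in Step 2. That is also what the application in the proof of Lemma~\ref{variance_bound} uses: there one takes $a=\sigma_{\rm noise}^{-2}$ and $b=\sigma^2\left(\bm x_t;\mathcal D_{t-1}\right)\in[0,1]$, which is guaranteed by $k(\bm x,\bm x)\le1$. The lemma is stated for $b\in[0,1]$, so $b=0$ is covered by the argument above.
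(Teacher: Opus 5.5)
Your proof is correct, but your main argument takes a different route from the paper's. You fix the exponent $b$ and treat $h(a)=1+ab-(1+a)^b$ as a function of the base, using $h(0)=0$ and $h'(a)=b\bigl(1-(1+a)^{b-1}\bigr)\ge 0$. The paper instead fixes $a>0$ and works in the exponent. The map $x\mapsto(1+a)^x$ has second derivative $(\log(1+a))^2(1+a)^x>0$, so it is convex on $[0,1]$ and lies below its chord, which gives $(1+a)^b\le(1-b)\cdot 1+b(1+a)=1+ab$.

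That chord argument is exactly the ``equivalent route'' you sketch at the end. Note that it uses \emph{convexity} of $b\mapsto(1+a)^b$; your description is right, only the label ``concavity'' is off.

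The paper's version is slightly more economical. It needs no base point and no sign analysis of a derivative, and the hypothesis $b\in[0,1]$ enters simply as ``$b$ lies between the chord's endpoints.'' Your monotonicity argument is the standard calculus proof of Bernoulli's inequality. It locates the role of $b\le 1$ in the sign of $b-1$, and it is equally elementary.

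Either proof suffices for the application in Lemma~\ref{variance_bound}, where $a=\sigma_{\rm noise}^{-2}$ and $b=\sigma^2(\bm x_t;\mathcal D_{t-1})\in[0,1]$.
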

\begin{proof}[\bf Proof of Lemma~\ref{260111120105}]
Let $a$ be an arbitrary positive number.
We define $f:\left[0,1\right]\to\mathbb R$ as
\begin{equation}
f{\left(x\right)}=\left(1+a\right)^x\qquad\left(x\in\left[0,1\right]\right).
\end{equation}
Then, $f$ is convex because the second derivative $f''$ of $f$ satisfies
\begin{equation}
f''{\left(x\right)}=\left(\log{\left(1+a\right)}\right)^2\left(1+a\right)^x>0\qquad\left(x\in\left[0,1\right]\right).
\end{equation}
Therefore, we have 
\begin{equation}
f{\left(b\right)}=\left(1+a\right)^b\le\left(1-b\right)f{\left(0\right)}+bf{\left(1\right)}=1+ab\qquad\left(b\in\left[0,1\right]\right),
\end{equation}
which proves Lemma~\ref{260111120105}.
\end{proof}
\begin{lemma}\label{takeno23_H_3}
\it The survival function of the standard normal distribution can be bounded above as
\begin{equation}
1-\Phi{\left(c\right)}\le\frac{1}{2}\exp{\left(-\frac{c^2}{2}\right)}\qquad\left(c > 0\right),
\end{equation}
where $\Phi$ is the cumulative distribution function of the standard normal distribution.
\end{lemma}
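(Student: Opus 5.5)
We need to prove $1-\Phi(c)\le \tfrac12 e^{-c^2/2}$ for $c>0$. The plan is a change of variables in the Gaussian tail integral, shifting the integration variable by $c$. We write
\begin{equation*}
1-\Phi(c)=\int_c^\infty \phi(x)\,dx=\int_0^\infty \frac{1}{\sqrt{2\pi}}\exp\!\left(-\frac{(u+c)^2}{2}\right)du
\end{equation*}
via $x=u+c$.

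Next we expand $(u+c)^2=u^2+2uc+c^2$ and pull out the factor $e^{-c^2/2}$. This gives
\begin{equation*}
1-\Phi(c)=e^{-c^2/2}\int_0^\infty \frac{1}{\sqrt{2\pi}}e^{-u^2/2}e^{-uc}\,du.
\end{equation*}

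For $u\ge0$ and $c>0$ we have $e^{-uc}\le 1$. Hence the remaining integral is at most $\int_0^\infty\phi(u)\,du=1/2$, by symmetry of the standard normal density. This yields the claim.

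No step is a real obstacle. The only point needing care is keeping the sign condition $u\ge0$, $c>0$ so that $e^{-uc}\le1$ holds. For $c=0$ the bound holds with equality, consistent with the lemma; the strict inequality for $c>0$ is not needed.
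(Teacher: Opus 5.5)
Your proof is correct and is the same as the paper's: factor out $e^{-c^2/2}$, drop the factor $e^{-c(r-c)}\le 1$ (valid since $c>0$ and $r\ge c$), and evaluate what remains as a half-Gaussian integral equal to $1/2$. The only cosmetic difference is that you make the substitution $u=r-c$ explicit, while the paper keeps the variable $r$ and writes $r^2-c^2=(r-c)^2+2c(r-c)$.
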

\begin{proof}[\bf Proof of Lemma~\ref{takeno23_H_3}]
The proof is the following:
\begin{equation}
\begin{split}
1-\Phi{\left(c\right)}=&\int_c^\infty\frac{1}{\sqrt{2\pi}}\exp{\left(-\frac{r^2}{2}\right)}dr\\
=&\exp{\left(-\frac{c^2}{2}\right)}\int_c^\infty\frac{1}{\sqrt{2\pi}}\exp{\left(-\frac{r^2-c^2}{2}\right)}dr\\
=&\exp{\left(-\frac{c^2}{2}\right)}\int_c^\infty\frac{1}{\sqrt{2\pi}}\exp{\left(-\frac{\left(r-c\right)^2}{2}-c\left(r-c\right)\right)}dr\\
\le&\exp{\left(-\frac{c^2}{2}\right)}\int_c^\infty\frac{1}{\sqrt{2\pi}}\exp{\left(-\frac{\left(r-c\right)^2}{2}\right)}dr\\
=&\frac{1}{2}\exp{\left(-\frac{c^2}{2}\right)}.
\end{split}
\end{equation}
\end{proof}

\begin{lemma}[Theorem E.4 of \cite{kusakawa2022-Bayesian}]\label{kusakawa22_e_4}
\it Let Assumption \ref{asm_basic} hold. Suppose that the kernel $k$ is one of the following: linear kernel, Gaussian kernel, or Mat\'ern-$\nu$ kernel with $\nu>1$. 
Define $L_\sigma$ as
\begin{equation}\label{L_sigma}
\begin{split}
&L_\sigma=\begin{cases}
1&\mbox{($k$ is a linear kernel)},\\
\frac{\sqrt{2}}{l}&\mbox{($k$ is a Gaussian kernel)},\\
\frac{\sqrt{2}}{l}\sqrt{\frac{\nu}{\nu-1}}&\mbox{($k$ is a Mat\'ern-$\nu$ kernel)}.
\end{cases}
\end{split}
\end{equation}
Then, the following holds:
\begin{equation}
\begin{split}
&\forall t\in\mathbb N,\forall\mathcal D_{t-1}\in2^{\left(\mathcal X\times\mathbb R\right)},\forall\bm x,\bm x'\in\mathcal X,\\
&\left|\sigma{\left(\bm x;\mathcal D_{t-1}\right)}-\sigma{\left(\bm x';\mathcal D_{t-1}\right)}\right|\le L_\sigma\left\|\bm x-\bm x'\right\|_1.
\end{split}
\end{equation}
\end{lemma}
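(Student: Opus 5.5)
The plan is to reduce the Lipschitz property of the posterior standard deviation to the same property for the \emph{prior} canonical metric $d_k(\bm x,\bm x')=\sqrt{k(\bm x,\bm x)-2k(\bm x,\bm x')+k(\bm x',\bm x')}$. Then we bound $d_k$ kernel by kernel.

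First, fix $\mathcal D_{t-1}$ and view $e_{\bm x}=f(\bm x)-\mu(\bm x;\mathcal D_{t-1})$ as an element of $L^2$ under the conditional law given $\mathcal D_{t-1}$, so that $\sigma(\bm x;\mathcal D_{t-1})=\|e_{\bm x}\|_{L^2}$. The reverse triangle inequality gives
\begin{equation*}
\left|\sigma(\bm x;\mathcal D_{t-1})-\sigma(\bm x';\mathcal D_{t-1})\right|\le\|e_{\bm x}-e_{\bm x'}\|_{L^2}=\sqrt{\mathrm{Var}\left(f(\bm x)-f(\bm x')\mid\mathcal D_{t-1}\right)}.
\end{equation*}
Under Assumption~\ref{asm_basic}, the posterior covariance is $k-\bm k_{\mathcal N}^\top(\bm K+\sigma_{\rm noise}^2\bm I)^{-1}\bm k_{\mathcal N}$, and the subtracted term is a positive semidefinite kernel. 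Hence the conditional variance of any linear functional, in particular $f(\bm x)-f(\bm x')$, is at most its prior variance $d_k(\bm x,\bm x')^2$. This step is independent of $t$ and of the data, which yields the required uniformity over all $t$ and $\mathcal D_{t-1}$.

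Second, I bound $d_k$ by $L_\sigma\|\bm x-\bm x'\|_1$, using $\|\cdot\|_2\le\|\cdot\|_1$ throughout.
\begin{itemize}
\item For the linear kernel, $d_k(\bm x,\bm x')^2=\|\bm x-\bm x'\|_2^2$, so $L_\sigma=1$.
\item For the Gaussian kernel, $d_k^2=2(1-e^{-\rho^2/(2l^2)})$ with $\rho=\|\bm x-\bm x'\|_2$. The inequality $1-e^{-u}\le u$ gives $d_k^2\le\rho^2/l^2$, so $d_k\le\rho/l\le(\sqrt2/l)\|\bm x-\bm x'\|_1$.
\item For Mat\'ern-$\nu$ with $\nu>1$, write $k=\kappa(\rho)$ with $\kappa(0)=1$ (the normalized case; a prior variance below $1$ only helps). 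The radial profile is twice differentiable at $0$ with $\kappa''(0)=-\nu/((\nu-1)l^2)$, since the spectral density has finite second moment exactly when $\nu>1$. Also $-\kappa''(0)=\int\omega^2\,dF(\omega)$ for the one-dimensional spectral measure $F$, and $1-\cos(\omega\rho)\le\omega^2\rho^2/2$, so $1-\kappa(\rho)\le-\kappa''(0)\rho^2/2$. Therefore $d_k^2=2(1-\kappa(\rho))\le\frac{\nu}{\nu-1}\frac{\rho^2}{l^2}$, which gives the stated $L_\sigma$ (up to the harmless factor $\sqrt2$ slack).
\end{itemize}

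The main obstacle is the Mat\'ern case. There one must justify the second-moment identity for the Mat\'ern spectral density, namely a Student-$t$-type density with $2\nu$ degrees of freedom scaled by $\sqrt{2\nu}/l$, whose variance per coordinate is $\frac{2\nu}{l^2}\cdot\frac{1}{2\nu-2}$. One must also track the constants carefully. I would compute this variance directly from the Student-$t$ form rather than differentiating Bessel functions. The first step, that the posterior variance of differences is dominated by the prior one, is routine.
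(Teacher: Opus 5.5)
The paper contains no proof of this lemma. It imports it from Theorem E.4 of \cite{kusakawa2022-Bayesian}, so there is no in-paper argument to compare against, and your proposal stands as a correct, self-contained proof.

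Your first step is the right reduction. The subtracted term $\bm k_{\mathcal N}(\bm x)^\top(\bm K+\sigma_{\rm noise}^2\bm I)^{-1}\bm k_{\mathcal N}(\bm x')$ is itself a positive semidefinite kernel. Combined with the reverse triangle inequality, this gives the data-free bound $|\sigma(\bm x;\mathcal D_{t-1})-\sigma(\bm x';\mathcal D_{t-1})|\le d_k(\bm x,\bm x')$, which is what makes the constant uniform in $t$ and $\mathcal D_{t-1}$.

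In the Mat\'ern step you can skip $\kappa''(0)$ entirely. Bochner's theorem on the line through $\bm x$ and $\bm x'$ gives directly $1-\kappa(\rho)=\int(1-\cos\omega\rho)\,dF(\omega)\le\frac{\rho^2}{2}\int\omega^2\,dF(\omega)$. Here $F$ is the law of $\bm\omega^\top\bm u$, with $\bm u=(\bm x-\bm x')/\rho$ and $\bm\omega$ having density proportional to $(2\nu/l^2+\|\bm\omega\|^2)^{-(\nu+d/2)}$. This uses angular frequency, matching your $\cos(\omega\rho)$; that is, $\bm\omega$ is a $d$-variate Student-$t$ with $2\nu$ degrees of freedom and scale matrix $l^{-2}\bm I$. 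Its second moment is $\frac{2\nu}{2\nu-2}\,l^{-2}=\frac{\nu}{(\nu-1)l^2}$ in every direction and for every $d$, and it is finite exactly when $\nu>1$. This agrees with the Bessel expansion $\kappa(\rho)=1-\frac{\nu}{2(\nu-1)}\rho^2/l^2+o(\rho^2)$, so your value of $\kappa''(0)$ is correct.

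Compared with the bare citation, your route buys transparency and a sharper constant. With this paper's parameterizations, $\exp(-\|\bm x-\bm x'\|^2/(2l^2))$ and argument $\sqrt{2\nu}\|\bm x-\bm x'\|/l$, you get $L_\sigma=1/l$ and $L_\sigma=\sqrt{\nu/(\nu-1)}/l$ even with respect to $\|\cdot\|_2$. The stated constants therefore carry a factor $\sqrt2$ of slack, and the lemma follows a fortiori.
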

\section{Examples where condition \ref{cnd_alg} holds}
\label{apn:cnd_alg}
This section provides the details of five BO algorithms, UCB~\citep{Srinivas2010-Gaussian}, improved randomized UCB (IR-UCB)~\citep{Takeno2023-randomized}, PIMS~\citep{takeno2024-posterior}, EIMS~\citep{takeno2025-regretEI}, and TS \citep{Russo2014-learning} that satisfy Condition \ref{cnd_alg}. We also provide $u_t$, $v_t$, $\zeta_t$, and $\xi_t$ in those cases. If $\mathcal X$ is infinite, we employ Assumption~\ref{asm_derivative} and consider a finite subset $\mathcal X_t\subset\mathcal X$ for some $l_t>0$ as the proof of Theorem~\ref{thm2}. For $\bm x\in\mathcal X$, let $\left[\bm x\right]_t$ be one of $\bm x'\in\mathcal X_t$ that satisfies $\left\|\bm x-\bm x'\right\|_1\le l_t$. For convenience, we restate Condition \ref{cnd_alg}.
\cndalg*
\subsection{UCB}
In GP-UCB, the algorithm $\mathcal A$ is defined as
\begin{equation}\label{A_GP-UCB}
\mathcal A{\left(\mathcal D_{t-1}\right)}=\mathop{\rm argmax}\limits_{\bm x\in\mathcal X}\mu{\left(\bm x;\mathcal D_{t-1}\right)}+\beta_t^{1/2}\sigma{\left(\bm x;\mathcal D_{t-1}\right)}.
\end{equation}
If $\mathcal X$ is finite, Condition \ref{cnd_alg} holds for the following bounds:
\begin{equation}
\begin{split}
&u_t\left(\mathcal D_{t-1}\right)=\beta_t^{1/2},\quad v_t\left(\mathcal D_{t-1}\right)=f{\left(\bm x^*\right)}-\mu{\left(\bm x^*;\mathcal D_{t-1}\right)}-\beta_t^{1/2}\sigma{\left(\bm x^*;\mathcal D_{t-1}\right)},\\
&\zeta_t=\beta_t,\quad\xi_t=1/t^2,
\end{split}
\end{equation}
where $\beta_t=2\log{\left(\left|\mathcal X\right|t^2/\sqrt{2\pi}\right)}$.

If $\mathcal X$ is infinite, Condition \ref{cnd_alg} holds for the following bounds:
\begin{equation}
\begin{split}
&u_t\left(\mathcal D_{t-1}\right)=\beta_t^{1/2},\quad v_t\left(\mathcal D_{t-1}\right)=f{\left(\bm x^*\right)}-\mu{\left(\left[\bm x^*\right]_t;\mathcal D_{t-1}\right)}-\beta_t^{1/2}\sigma{\left(\left[\bm x^*\right]_t;\mathcal D_{t-1}\right)},\\
&\zeta_t=\beta_t,\quad\xi_t=2/t^2,
\end{split}
\end{equation}
where $\beta_t=2d\log{\left\lceil dr/l_t\right\rceil}+2\log{\left(t^2/\sqrt{2\pi}\right)}$ and $l_t=\left(bt^2\left(\sqrt{\log{\left(ad\right)}}+\sqrt{\pi}/2\right)\right)^{-1}$.
These results are given in the proof of Theorem B.1 of \citep{Takeno2023-randomized}.
\subsection{IR-UCB}
In IRGP-UCB, the algorithm $\mathcal A$ is defined as Eq. \eqref{A_GP-UCB} with random $\beta_t$.
If $\mathcal X$ is finite, Condition \ref{cnd_alg} holds for the following bounds:
\begin{equation}
\begin{split}
&u_t\left(\mathcal D_{t-1}\right)=\beta_t^{1/2},\quad v_t\left(\mathcal D_{t-1}\right)=0,\\
&\zeta_t=2+2\log{\left(\left|\mathcal X\right|/2\right)},\quad\xi_t=0,
\end{split}
\end{equation}
where $\beta_t$ follows the following distribution:
\begin{equation}
p{\left(\beta_t\right)}=\begin{cases}
\frac{1}{2}\exp{\left(-\frac{\beta_t-s}{2}\right)}&\left(\beta_t\ge s\right),\\
0&\left(\beta_t<s\right),
\end{cases}\qquad s=2\log{\left(\left|\mathcal X\right|/2\right)}.
\end{equation}
This result is given in the proof of Theorem 4.2 of \citep{Takeno2023-randomized}.

If $\mathcal X$ is infinite, Condition \ref{cnd_alg} holds for the following bounds:
\begin{equation}
\begin{split}
&u_t\left(\mathcal D_{t-1}\right)=\beta_t^{1/2},\quad v_t\left(\mathcal D_{t-1}\right)=f{\left(\bm x^*\right)}-f{\left(\left[\bm x^*\right]_t\right)},\\
&\zeta_t=2+2d\log{\left\lceil dr/l_t\right\rceil}-2\log2,\quad\xi_t=1/t^2,
\end{split}
\end{equation}
where $\beta_t$ follows the following distribution:
\begin{equation}
p{\left(\beta_t\right)}=\begin{cases}
\frac{1}{2}\exp{\left(-\frac{\beta_t-s_t}{2}\right)}&\left(\beta_t\ge s_t\right),\\
0&\left(\beta_t<s_t\right),
\end{cases}\qquad s_t=2d\log{\left\lceil dr/l_t\right\rceil}-2\log2,
\end{equation}
where $l_t=\left(bt^2\left(\sqrt{\log{\left(ad\right)}}+\sqrt{\pi}/2\right)\right)^{-1}$.
This result is given in the proof of Theorem 4.3 of \citep{Takeno2023-randomized}.
\subsection{PIMS}
In PIMS, the algorithm $\mathcal A$ is defined as
\begin{equation}
\mathcal A{\left(\mathcal D_{t-1}\right)}=\mathop{\rm argmax}\limits_{\bm x\in\mathcal X}1-\Phi{\left(\frac{g_t^*-\mu{\left(\bm x;\mathcal D_{t-1}\right)}}{\sigma{\left(\bm x;\mathcal D_{t-1}\right)}}\right)},
\end{equation}
where $g_t\sim p{\left(f\mid\mathcal D_{t-1}\right)}$ and $g_t^*=\max_{\bm x\in\mathcal X}g_t{\left(\bm x\right)}$.
If $\mathcal X$ is finite, Condition \ref{cnd_alg} holds for the following bounds:
\begin{equation}
\begin{split}
&u_t\left(\mathcal D_{t-1}\right)=\frac{g_t^*-\mu{\left(\bm x_t;\mathcal D_{t-1}\right)}}{\sigma{\left(\bm x_t;\mathcal D_{t-1}\right)}},\quad v_t\left(\mathcal D_{t-1}\right)=0,\\
&\zeta_t=2+2\log{\left(\left|\mathcal X\right|/2\right)},\quad\xi_t=0.
\end{split}
\end{equation}
This result is given in the proof of Theorem 4.1 of \citep{takeno2024-posterior}.

If $\mathcal X$ is infinite, Condition \ref{cnd_alg} holds for the following bounds:
\begin{equation}
\begin{split}
&u_t\left(\mathcal D_{t-1}\right)=\frac{{\tilde g}_t^*-\mu{\left(\tilde {\bm x}_t;\mathcal D_{t-1}\right)}}{\sigma{\left(\tilde {\bm x}_t;\mathcal D_{t-1}\right)}},\quad v_t\left(\mathcal D_{t-1}\right)=\frac{g_t^*-g_t{\left(\left[\bm z_t^*\right]_t\right)}}{\sigma{\left(\tilde {\bm x}_t;\mathcal D_{t-1}\right)}}\sigma{\left(\bm x_t;\mathcal D_{t-1}\right)},\\
&\zeta_t=2+2d\log{\left\lceil dr/l_t\right\rceil}-2\log2,\quad\xi_t=1/t^2,
\end{split}
\end{equation}
where ${\tilde g}_t^*$, $\bm z_t^*$, $\tilde {\bm x}_t$, and $l_t$ are defined as
\begin{equation}
\begin{split}
&{\tilde g}_t^*=\max_{\bm x\in\mathcal X_t}g_t{\left(\bm x\right)},\quad\bm z_t^*=\mathop{\rm argmax}\limits_{\bm x\in\mathcal X}g_t{\left(\bm x\right)},\quad\tilde {\bm x}_t=\mathop{\rm argmin}\limits_{\bm x\in\mathcal X_t}\frac{{\tilde g}_t^*-\mu{\left(\bm x;\mathcal D_{t-1}\right)}}{\sigma{\left(\bm x;\mathcal D_{t-1}\right)}},\\
&l_t=\left(bt^2\left(\sqrt{\log{\left(ad\right)}}+\sqrt{\pi}/2\right)\sqrt{\left(\sigma_{\rm noise}^2+t-1\right)/\sigma_{\rm noise}^2}\right)^{-1}.
\end{split}
\end{equation}
This result is given in the proof of Theorem 4.2 of \citep{takeno2024-posterior}.
\subsection{EIMS}
In EIMS, the algorithm $\mathcal A$ is defined as
\begin{equation}
\mathcal A{\left(\mathcal D_{t-1}\right)}=\mathop{\rm argmax}\limits_{\bm x\in\mathcal X}\mathbb E\left[\max{\left\{f{\left(\bm x\right)}-g_t^*,0\right\}}\mid\mathcal D_{t-1}, g^*_t \right],
\end{equation}
where $g_t\sim p{\left(f\mid\mathcal D_{t-1}\right)}$ and $g_t^*=\max_{\bm x\in\mathcal X}g_t{\left(\bm x\right)}$.
If $\mathcal X$ is finite, Condition \ref{cnd_alg} holds for the following bounds:
\begin{equation}
\begin{split}
&u_t\left(\mathcal D_{t-1}\right)=\frac{g_t^*-\mu{\left(\bm x_t;\mathcal D_{t-1}\right)}}{\sigma{\left(\bm x_t;\mathcal D_{t-1}\right)}},\quad v_t\left(\mathcal D_{t-1}\right)=0,\\
&\zeta_t=\log{\left(\frac{\sigma_{\rm noise}^2+t-1}{\sigma_{\rm noise}^2}\right)}+C_2+\sqrt{2\pi C_2},\quad\xi_t=0.
\end{split}
\end{equation}
where $C_2=2+2\log{\left(\left|\mathcal X\right|/2\right)}$.
This result is given in the proof of Theorem 4.6 of \citep{takeno2025-regretEI}.

If $\mathcal X$ is infinite, Condition \ref{cnd_alg} holds for the following bounds:
\begin{equation}
\begin{split}
&u_t\left(\mathcal D_{t-1}\right)=\frac{g_t^*-\mu{\left(\bm x_t;\mathcal D_{t-1}\right)}}{\sigma{\left(\bm x_t;\mathcal D_{t-1}\right)}},\quad v_t\left(\mathcal D_{t-1}\right)=0,\\
&\zeta_t=\log{\left(\frac{\sigma_{\rm noise}^2+t-1}{\sigma_{\rm noise}^2}\right)}+C_t+\sqrt{2\pi C_t},\quad\xi_t=0,
\end{split}
\end{equation}
where $C_t$ is defined as
\begin{equation}
C_t=8\left(1+d\log m_t\right),\quad m_t=\max{\left\{2,bdr\sqrt{\left(\sigma_{\rm noise}^2+t-1\right)\log{\left(2ad\right)}/\sigma_{\rm noise}^2}\right\}}.
\end{equation}
This result is given in the proof of Theorem 4.8 of \citep{takeno2025-regretEI}.
\subsection{TS}
In TS, the algorithm $\mathcal A$ is defined as
\begin{equation}
\mathcal A{\left(\mathcal D_{t-1}\right)}=\mathop{\rm argmax}\limits_{\bm x\in\mathcal X}g_t{\left(\bm x\right)},\quad g_t\sim p{\left(f\mid\mathcal D_{t-1}\right)}.
\end{equation}
If $\mathcal X$ is finite, Condition \ref{cnd_alg} holds for the following bounds:
\begin{equation}
\begin{split}
&u_t\left(\mathcal D_{t-1}\right)=\frac{g_t{\left(\bm x_t\right)}-\mu{\left(\bm x_t;\mathcal D_{t-1}\right)}}{\sigma{\left(\bm x_t;\mathcal D_{t-1}\right)}},\quad v_t\left(\mathcal D_{t-1}\right)=0,\\
&\zeta_t=2+2\log{\left(\left|\mathcal X\right|/2\right)},\quad\xi_t=0.
\end{split}
\end{equation}
This result is given in the proof of Theorem 3.1 of \citep{takeno2024-posterior}.

If $\mathcal X$ is infinite, Condition \ref{cnd_alg} holds for the following bounds:
\begin{equation}
\begin{split}
&u_t\left(\mathcal D_{t-1}\right)=\frac{g_t{\left(\left[\bm x_t\right]_t\right)}-\mu{\left(\left[\bm x_t\right]_t;\mathcal D_{t-1}\right)}}{\sigma{\left(\left[\bm x_t\right]_t;\mathcal D_{t-1}\right)}},\\
&v_t\left(\mathcal D_{t-1}\right)=f{\left(\bm x^*\right)}-f{\left(\left[\bm x^*\right]_t\right)}+f{\left(\left[\bm x_t\right]_t\right)}-f{\left(\bm x_t\right)}+u_t\left(\mathcal D_{t-1}\right)\left|\sigma{\left(\left[\bm x_t\right]_t;\mathcal D_{t-1}\right)}-\sigma{\left(\bm x_t;\mathcal D_{t-1}\right)}\right|,\\
&\zeta_t=2+2d\log{\left(\lceil dr/l_t\rceil\right)}-2\log2,\quad\xi_t=\left(2+\sqrt{\zeta_t}\right)/t^2,
\end{split}
\end{equation}
where $l_t$ is defined using $L_\sigma$ in Lemma \ref{kusakawa22_e_4} as
\begin{equation}
l_t=\frac{1}{Lt^2},\quad L=\max{\left\{L_\sigma,b\left(\sqrt{\log{\left(ad\right)}}+\sqrt{\pi}/2\right)\right\}}.
\end{equation}
This result is given in the proof of Theorem 3.2 of \citep{takeno2025-regretEI}.

%% file: manuscripts/9_appendix_fig.tex
\section{Result of additional experiments}
\label{apn:result}
We present the results of experiments in synchronous and asynchronous settings with various numbers of workers, $Q=\left\{4,8,16\right\}$.
\paragraph{Setting of synchronous experiments.}
The setting is the same as that of Section \ref{sec:experiment}, except for $Q$.

\paragraph{Setting of asynchronous experiments.}
We assumed that each evaluation takes time that follows an independent half-normal distribution.
We report the change in the simple regret $f(\bm x^*)-\max_{i\in\left[t\right]}f{\left(\bm x_i\right)}$ or the best objective value $\max_{i\in\left[t\right]}f{\left(\bm x_i\right)}$ over time instead of iterations.
Each experiment was conducted 100 times under different conditions, and each experiment was terminated after a certain number of evaluations.
Therefore, we present the results up to the earliest end time across the 100 experimental trials.
Other settings are the same as those of the experiments in the synchronous setting.
\paragraph{Result.}
The experimental results are presented in the following figures: for synthetic objectives, Figs. \ref{fig:result_syn_syn} and \ref{fig:result_syn_asyn}; for benchmark objectives, Figs. \ref{fig:result_ben_syn} and \ref{fig:result_ben_asyn}; and for emulators, Figs. \ref{fig:result_eml_syn}, \ref{fig:result_eml_asyn}, \ref{fig:result_eml_syn2}, and \ref{fig:result_eml_asyn2}.
Moreover, Fig. \ref{fig:result_comparison_apn} summarizes the results across all conditions, focusing on the improvement in the value of the objective function throughout the experiments.
As shown in Fig. \ref{fig:result_comparison_apn}, RKB-PIMS and RKB-UCB perform at least as well as other theoretically guaranteed methods, such as PTS and BUCB.
In particular, RKB-PIMS consistently demonstrates strong performance, even compared to qEI, a state-of-the-art method.
These results indicate that RKB has not only theoretical support but also strong practical performance.
\begin{figure}[t]
\centering
\includegraphics[width=.9\linewidth]{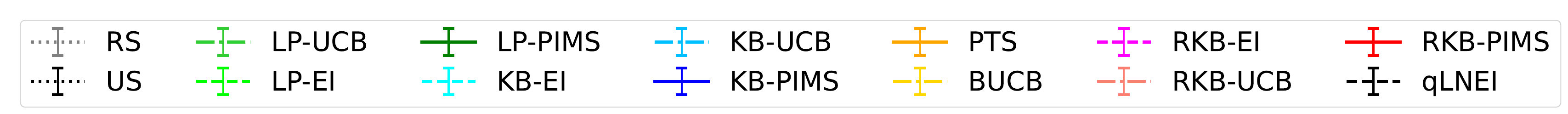}\\
\includegraphics[height=.25\linewidth]{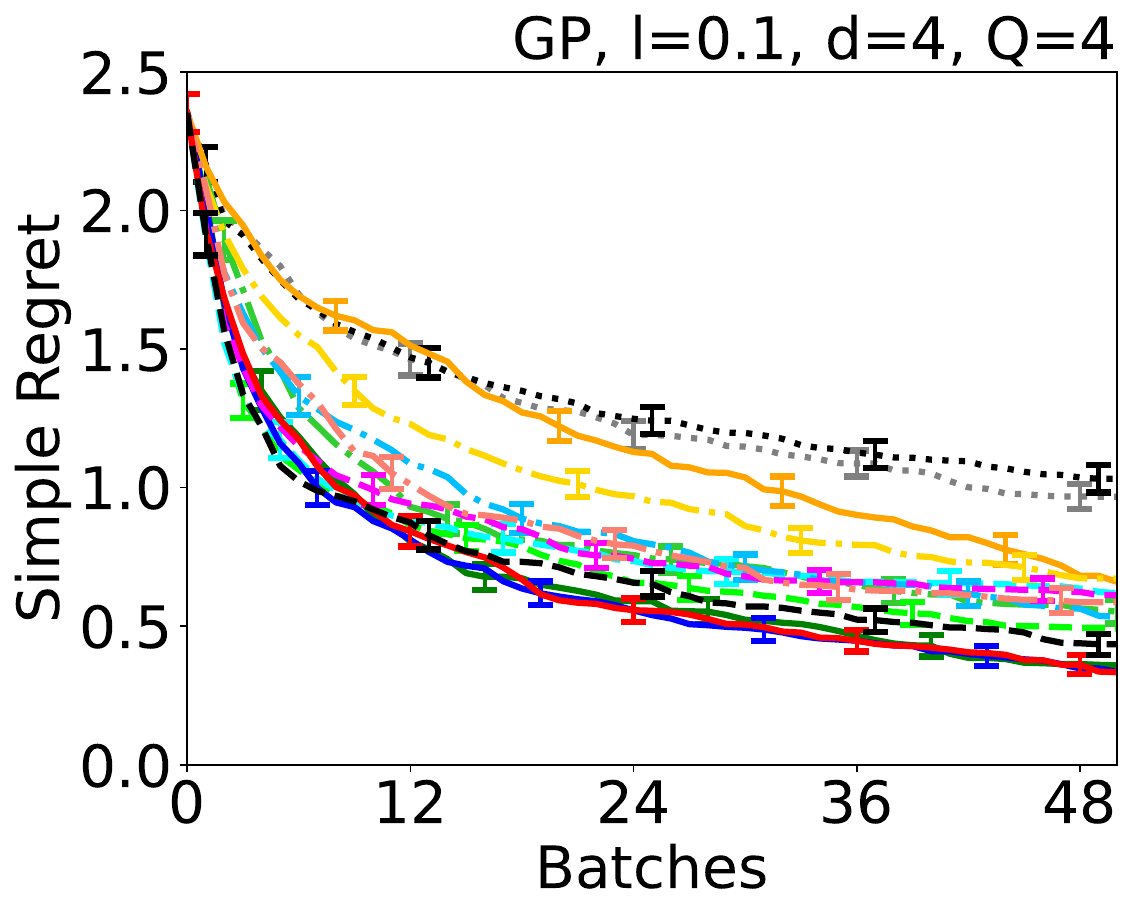}
\includegraphics[height=.25\linewidth]{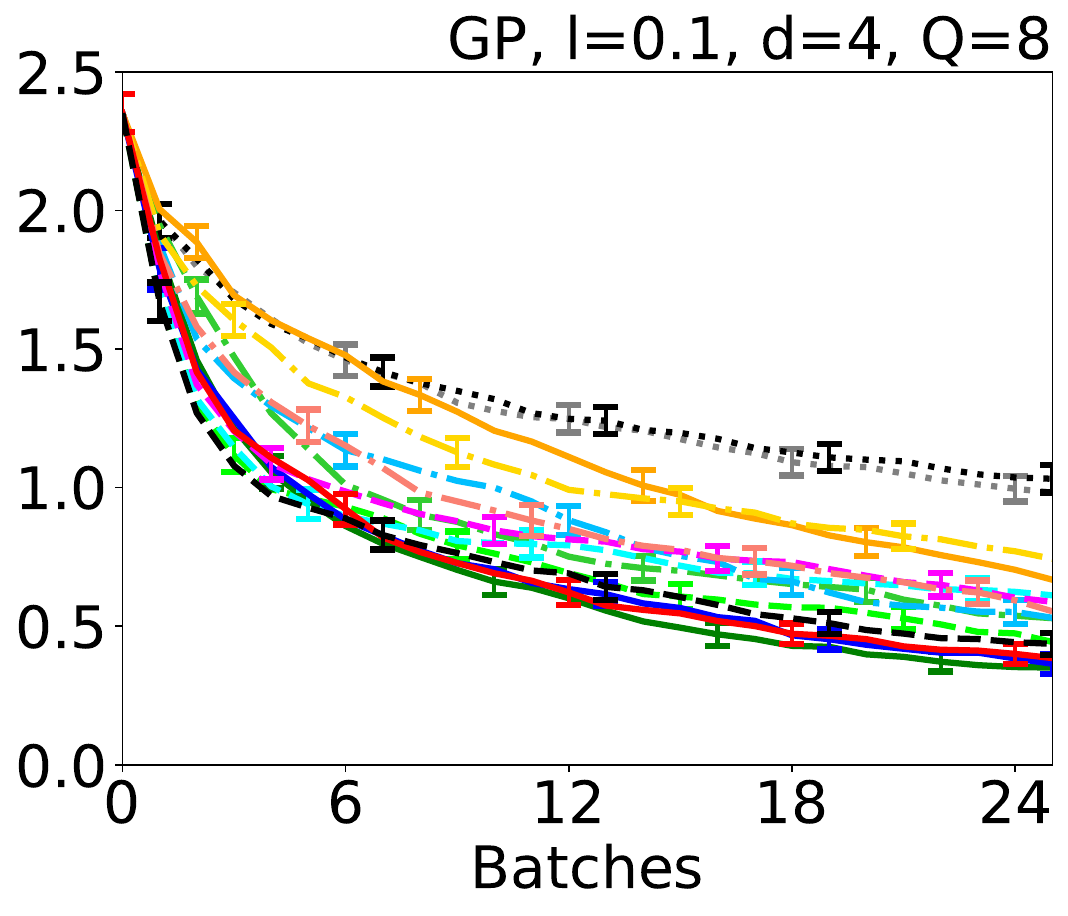}
\includegraphics[height=.25\linewidth]{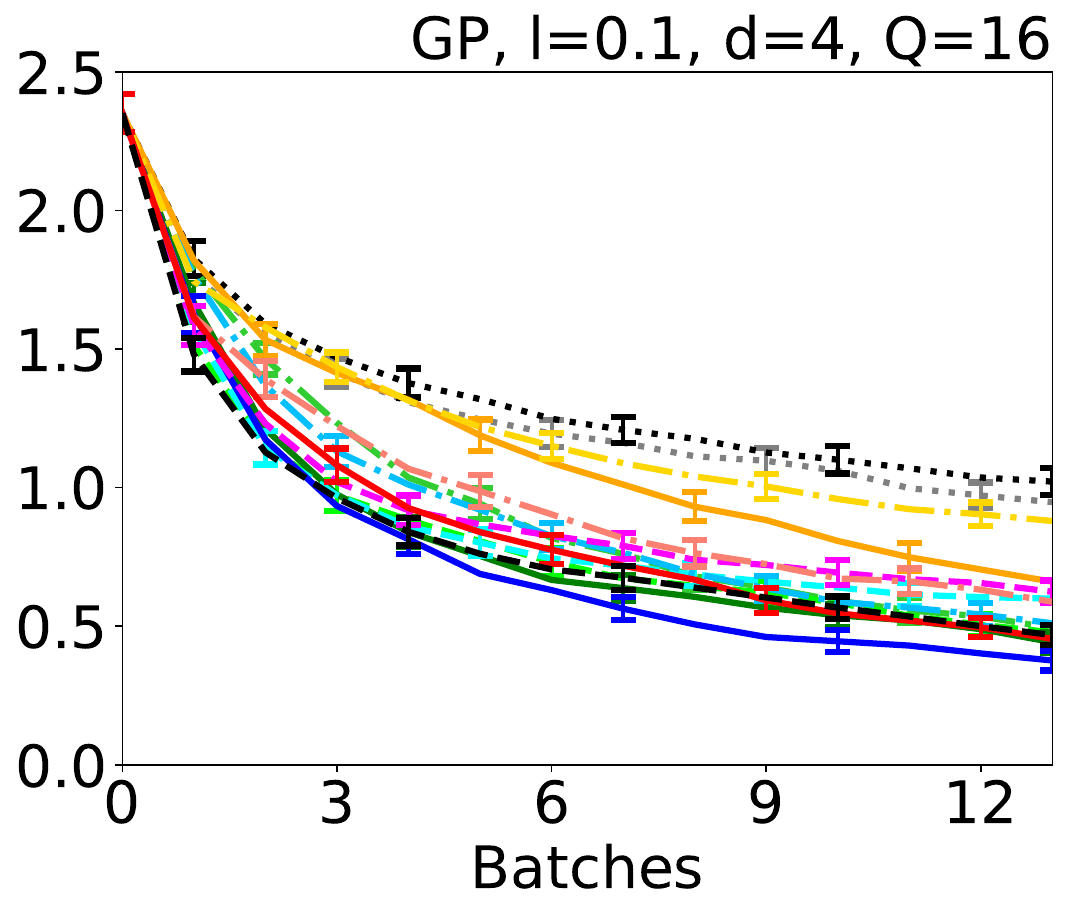}
\includegraphics[height=.25\linewidth]{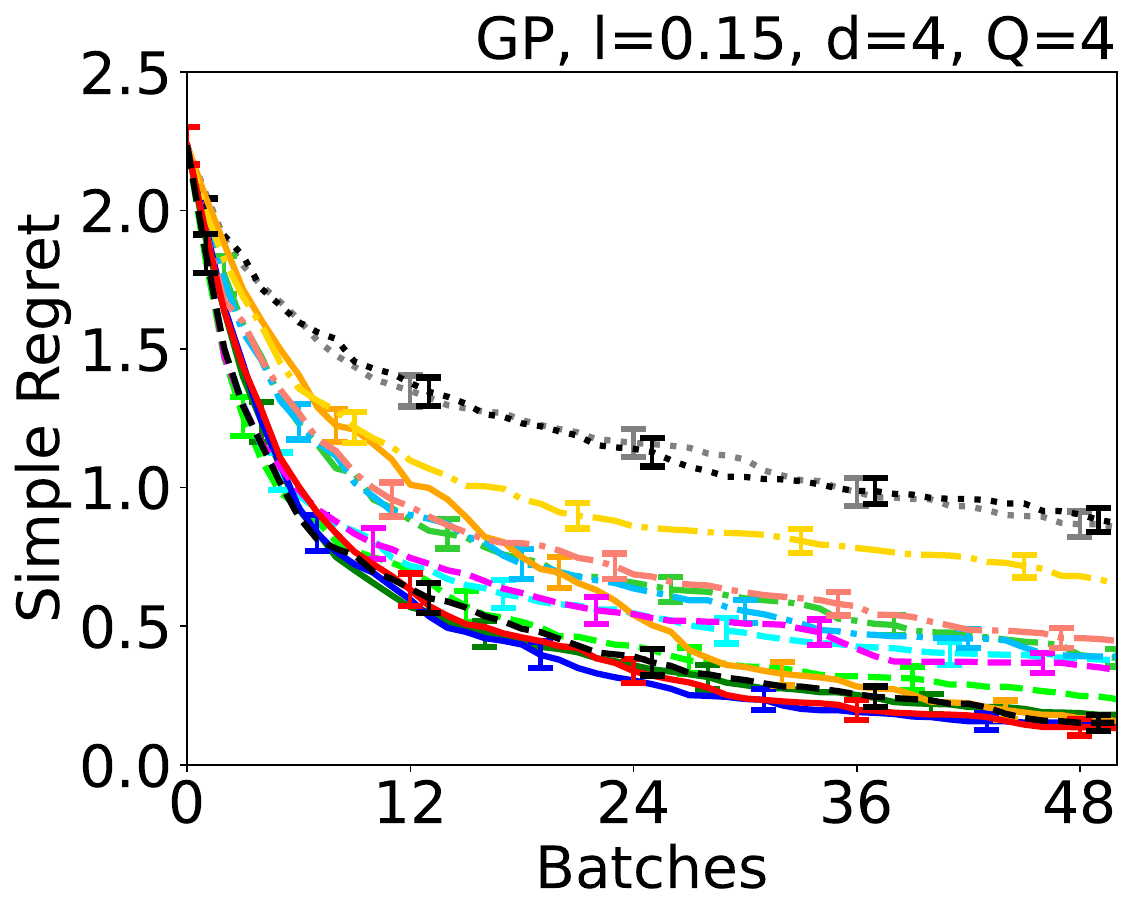}
\includegraphics[height=.25\linewidth]{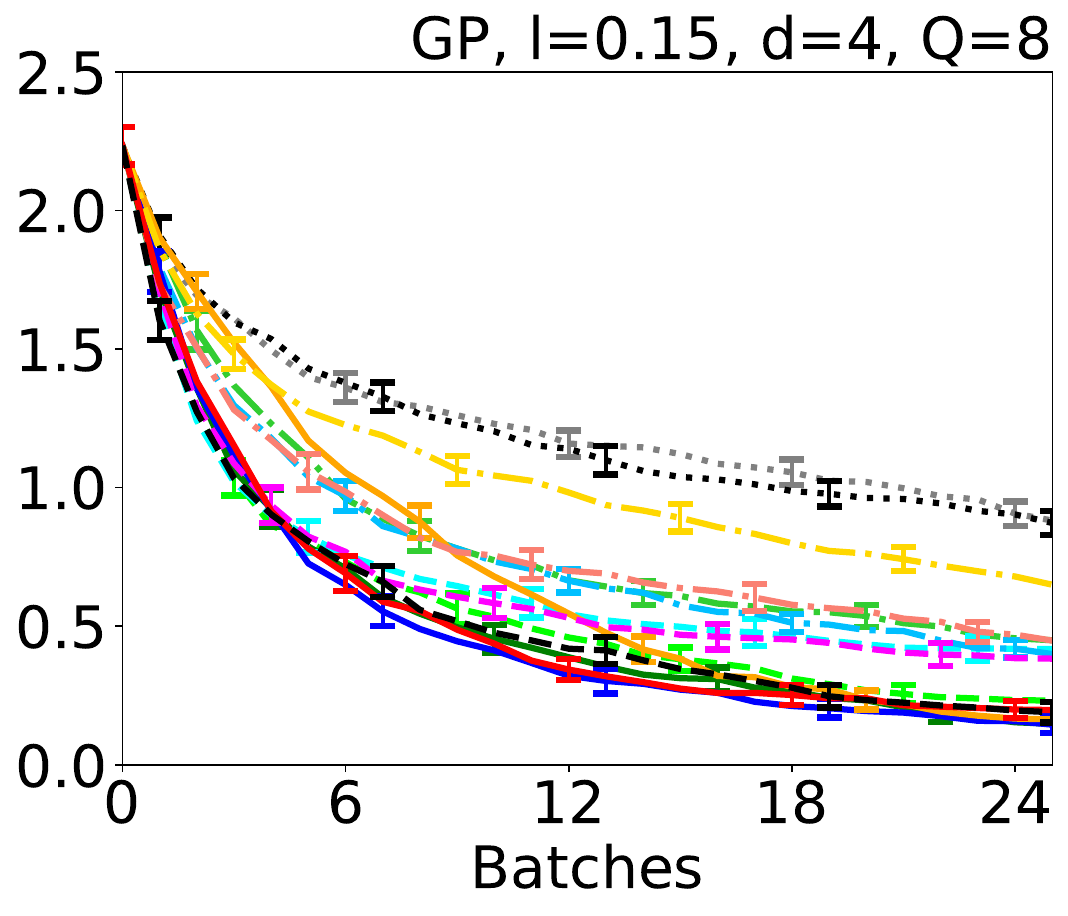}
\includegraphics[height=.25\linewidth]{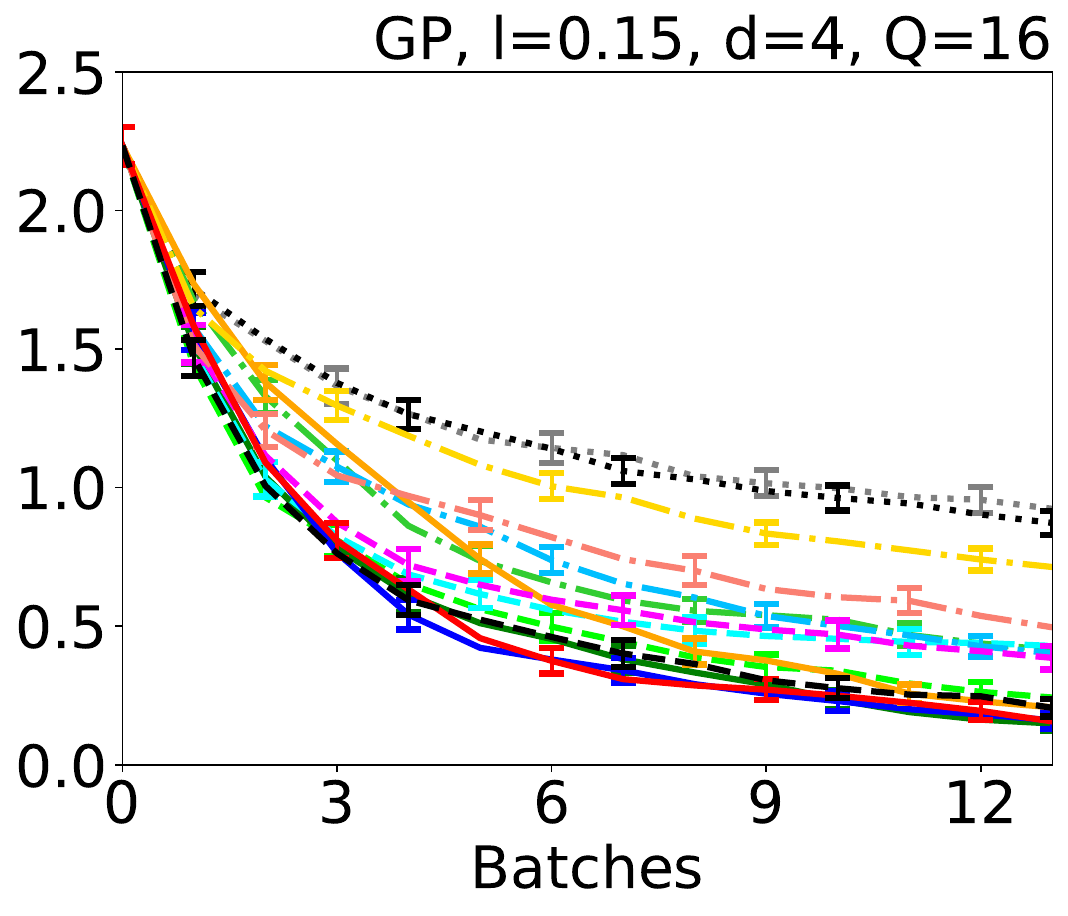}
\caption{
Result of experiments on synthetic objectives with synchronous setting. The lines and error bars mean average and standard error of the simple regret $f^*-\max_{i\in\left[t\right]}f{\left(\bm x_i\right)}$ across the 100 experiments on each condition. One batch corresponds to $Q$ iterations.
}
\label{fig:result_syn_syn}
\end{figure}

\begin{figure}[t]
\centering
\includegraphics[width=.9\linewidth]{figures/legend_syn.pdf}\\
\includegraphics[height=.25\linewidth]{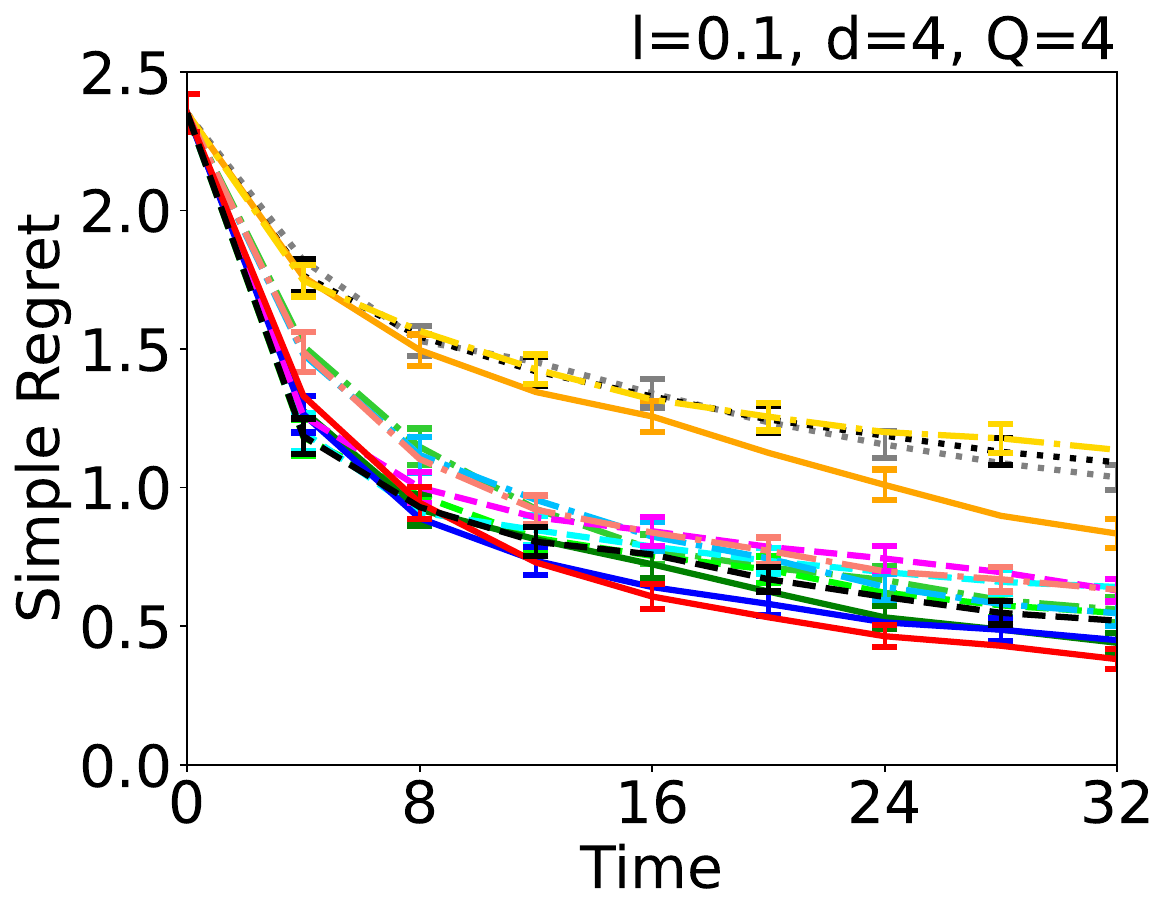}
\includegraphics[height=.25\linewidth]{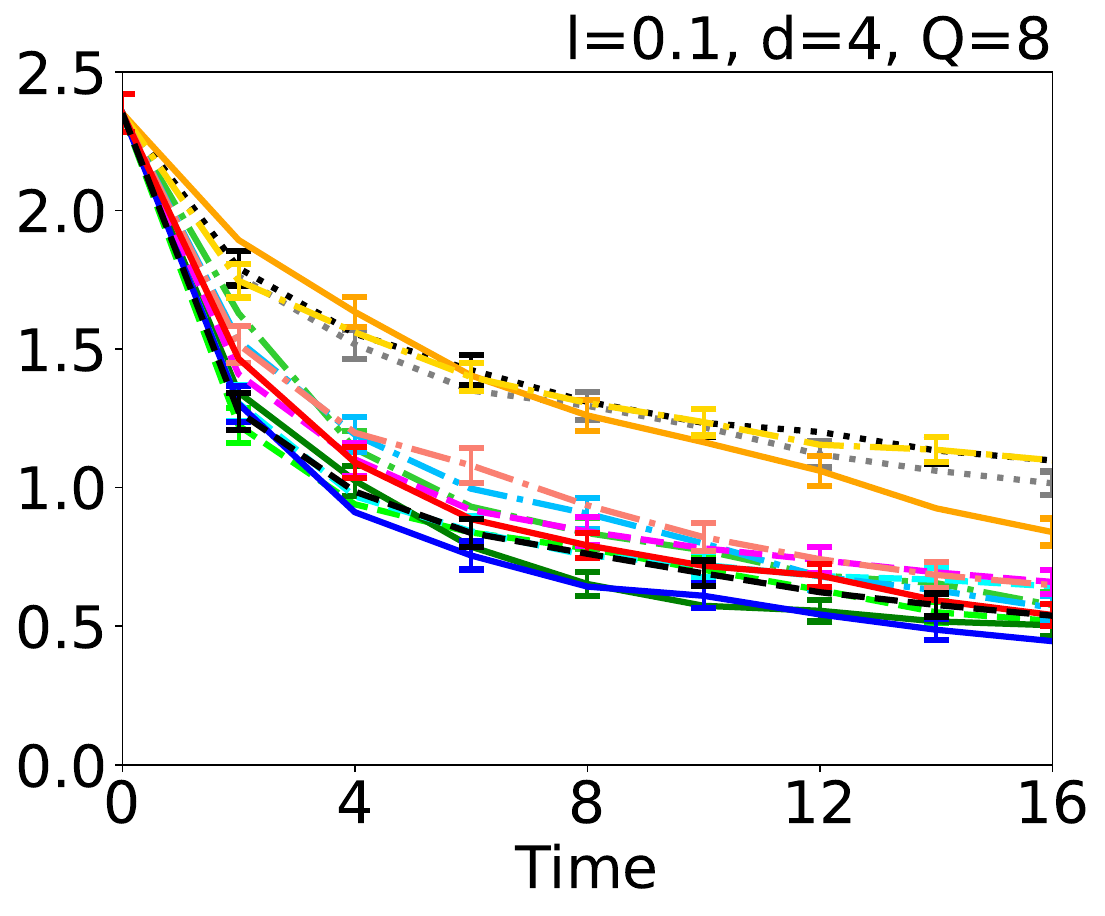}
\includegraphics[height=.25\linewidth]{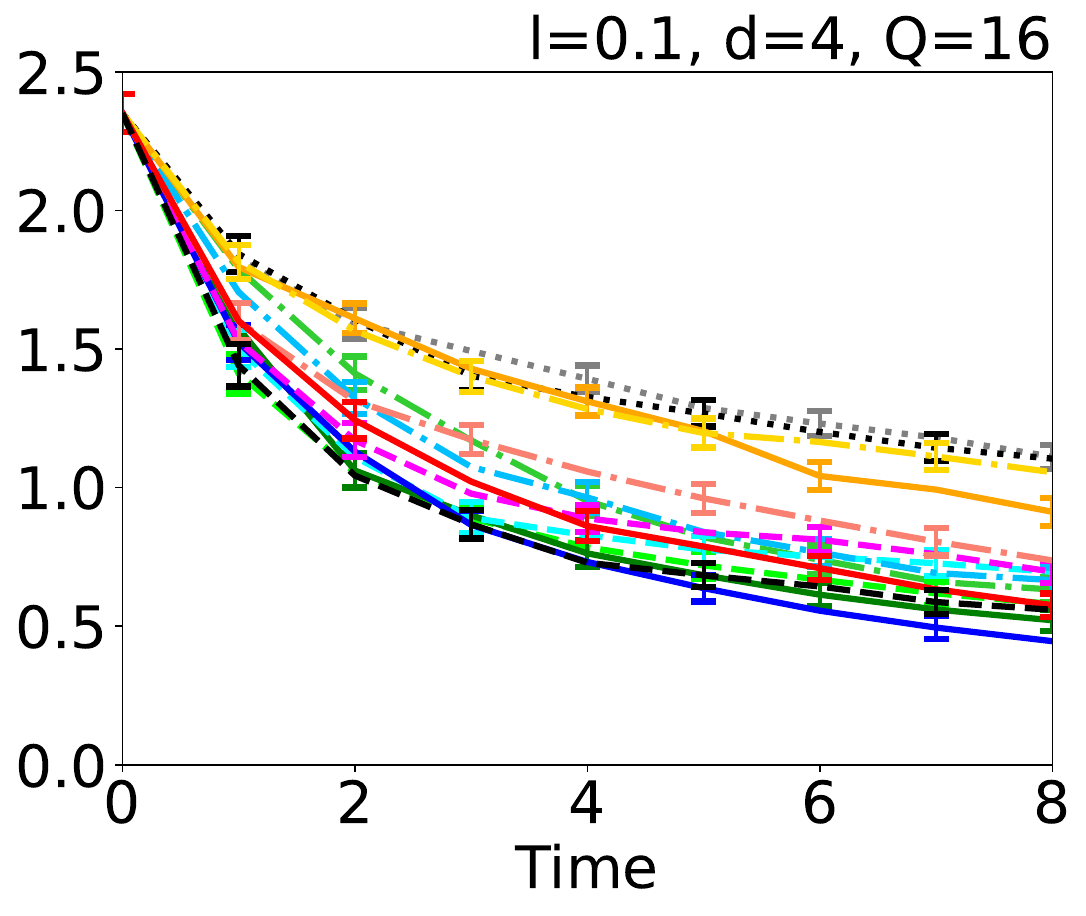}
\includegraphics[height=.25\linewidth]{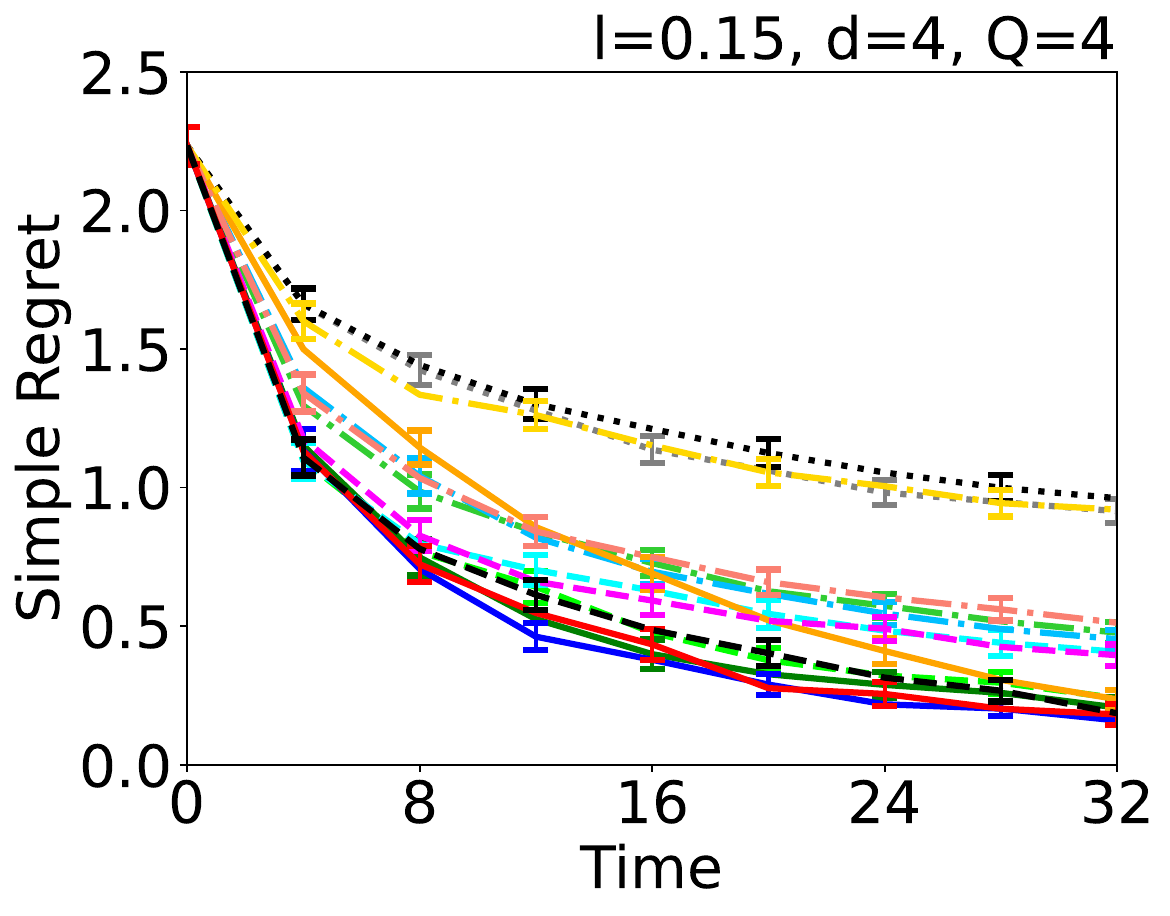}
\includegraphics[height=.25\linewidth]{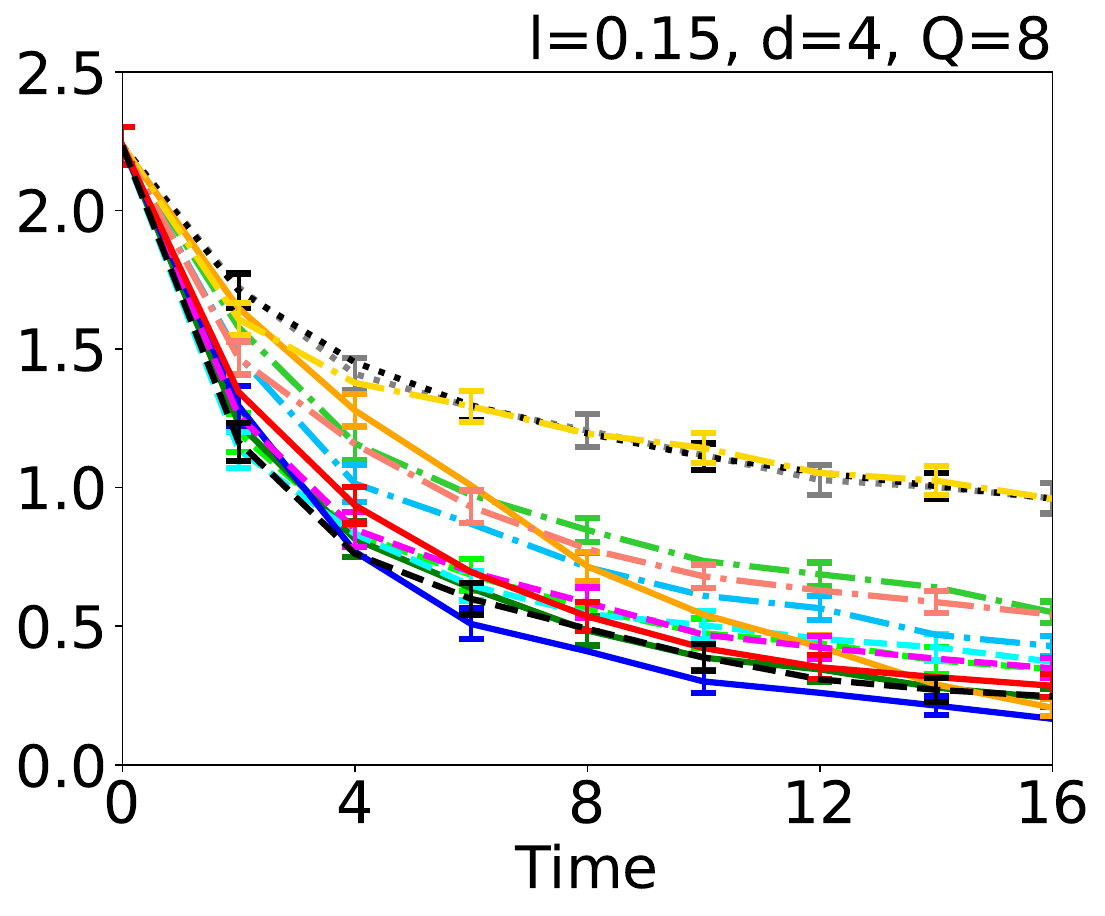}
\includegraphics[height=.25\linewidth]{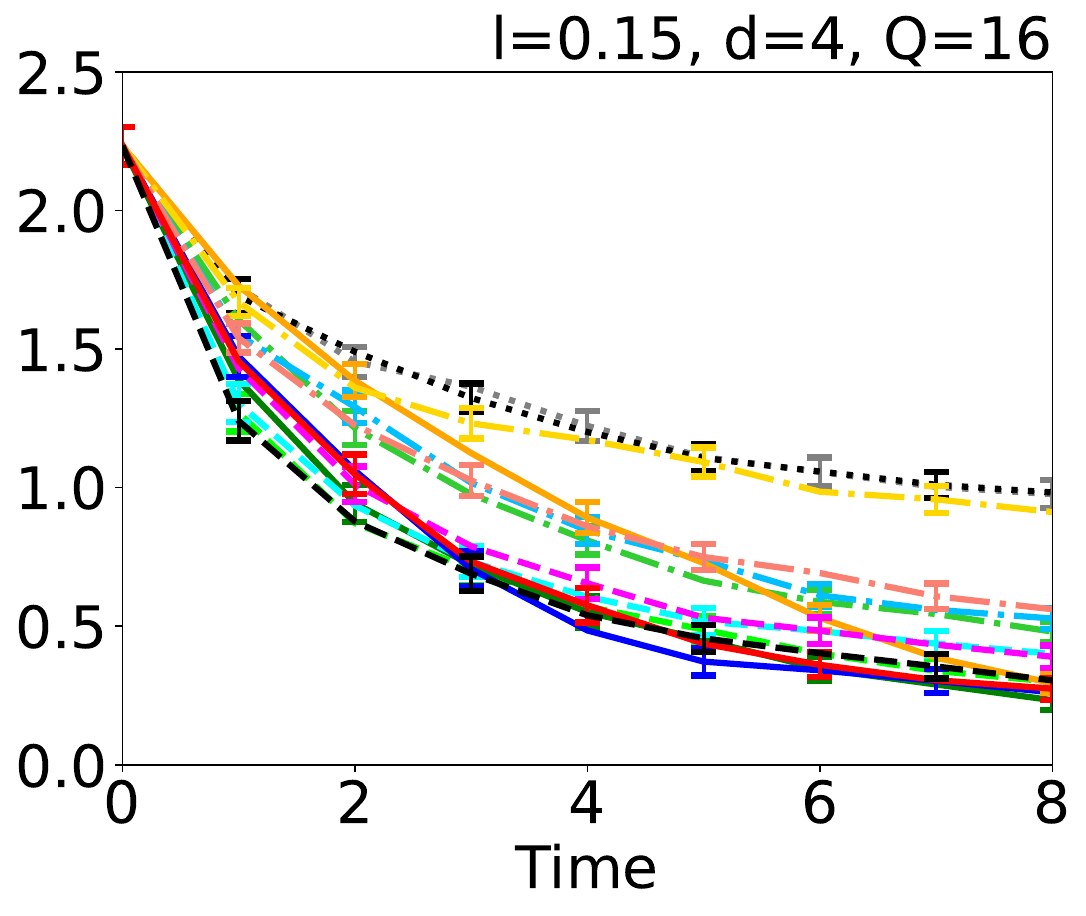}
\caption{
Result of experiments on synthetic objectives with asynchronous setting. The lines and error bars mean average and standard error of the simple regret $f^*-\max_{i\in\left[t\right]}f{\left(\bm x_i\right)}$ across the 100 experiments on each condition.
}
\label{fig:result_syn_asyn}
\end{figure}
\begin{figure}[t]
\centering
\includegraphics[width=.9\linewidth]{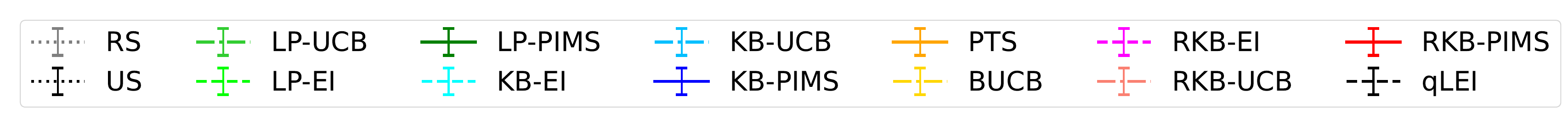}\\
\includegraphics[height=.25\linewidth]{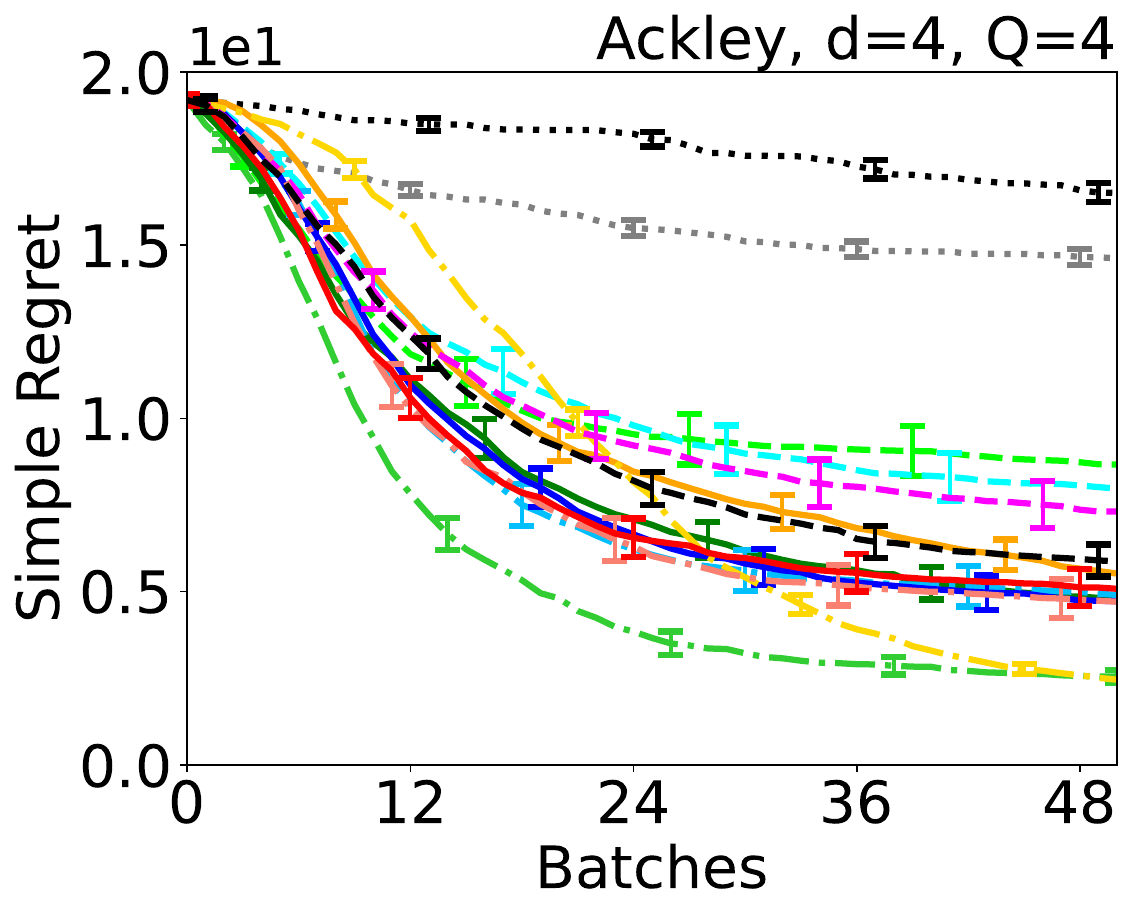}
\includegraphics[height=.25\linewidth]{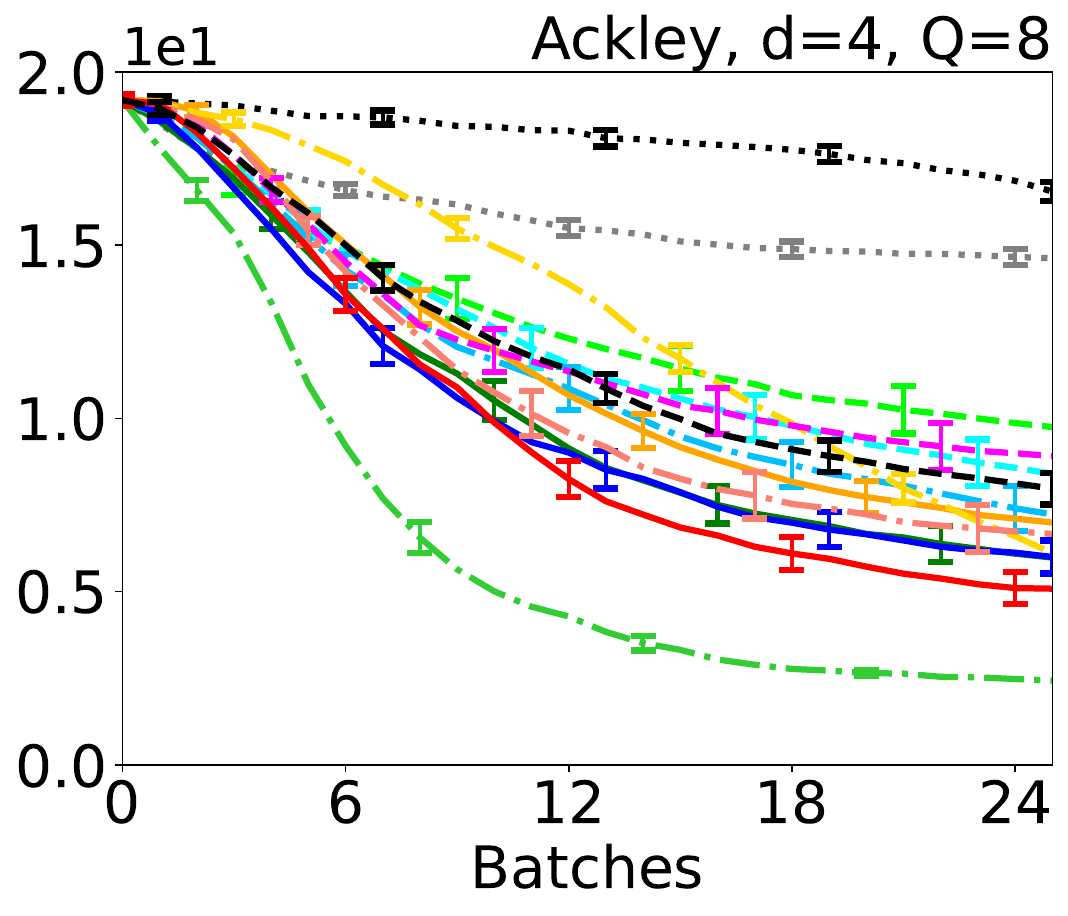}
\includegraphics[height=.25\linewidth]{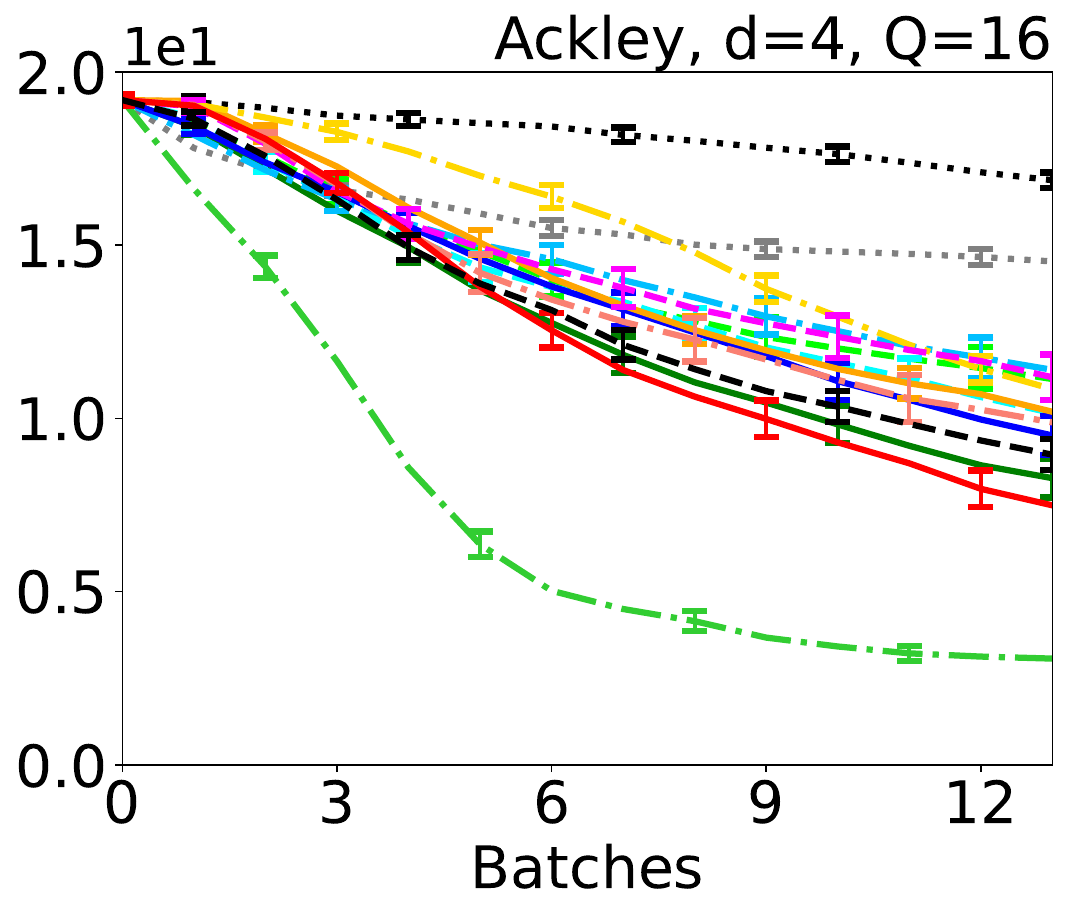}
\includegraphics[height=.25\linewidth]{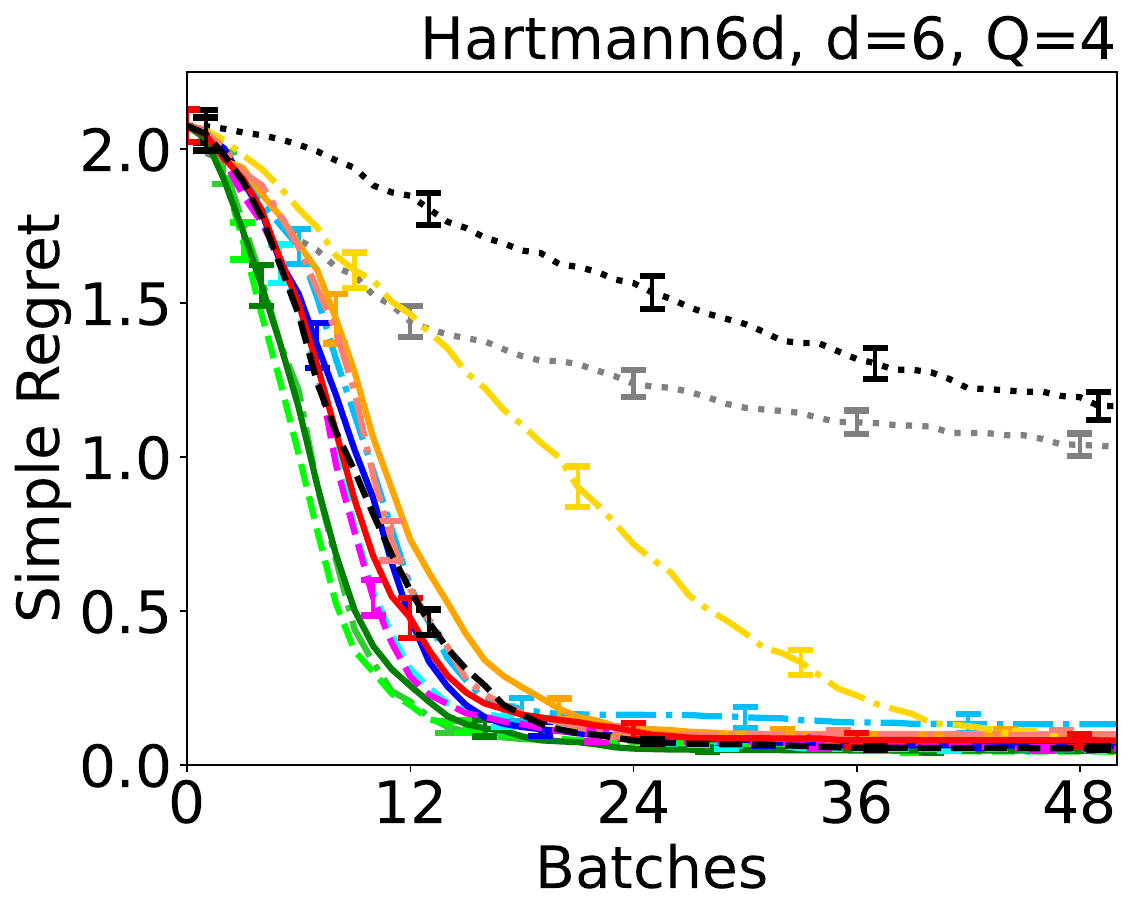}
\includegraphics[height=.25\linewidth]{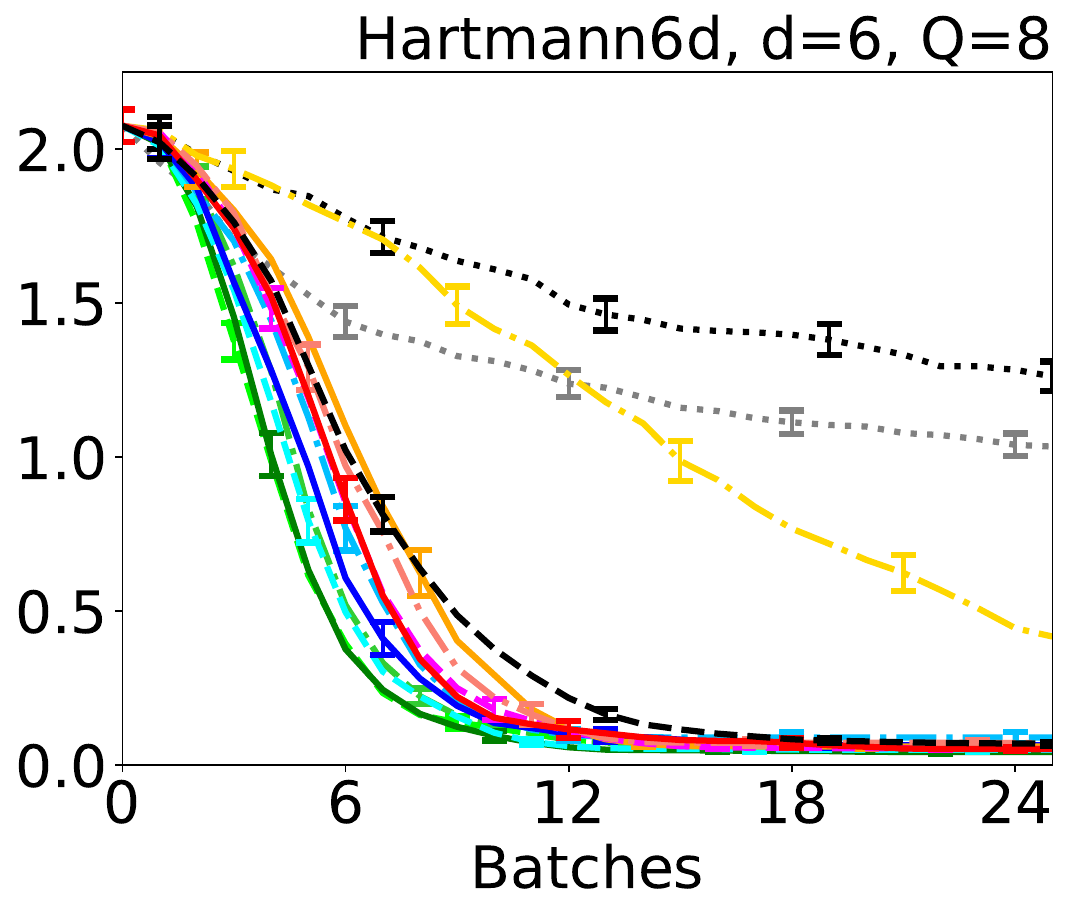}
\includegraphics[height=.25\linewidth]{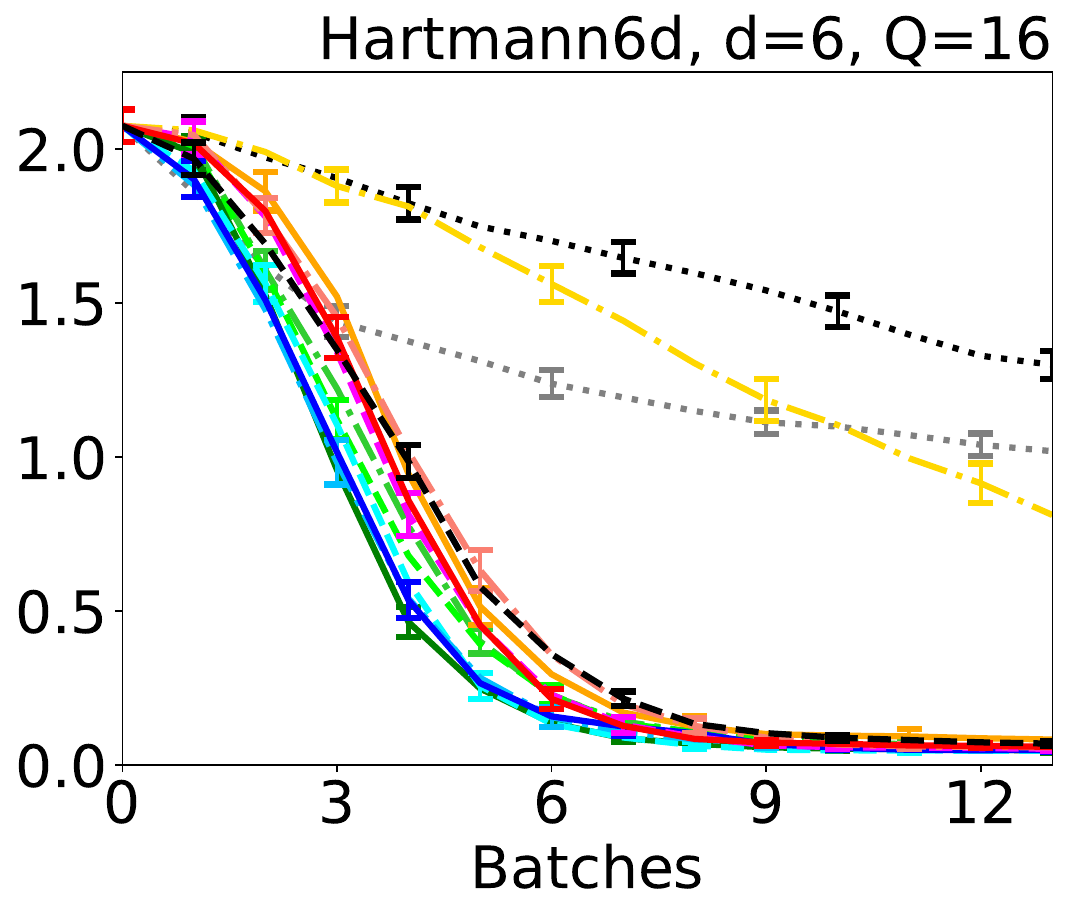}
\includegraphics[height=.25\linewidth]{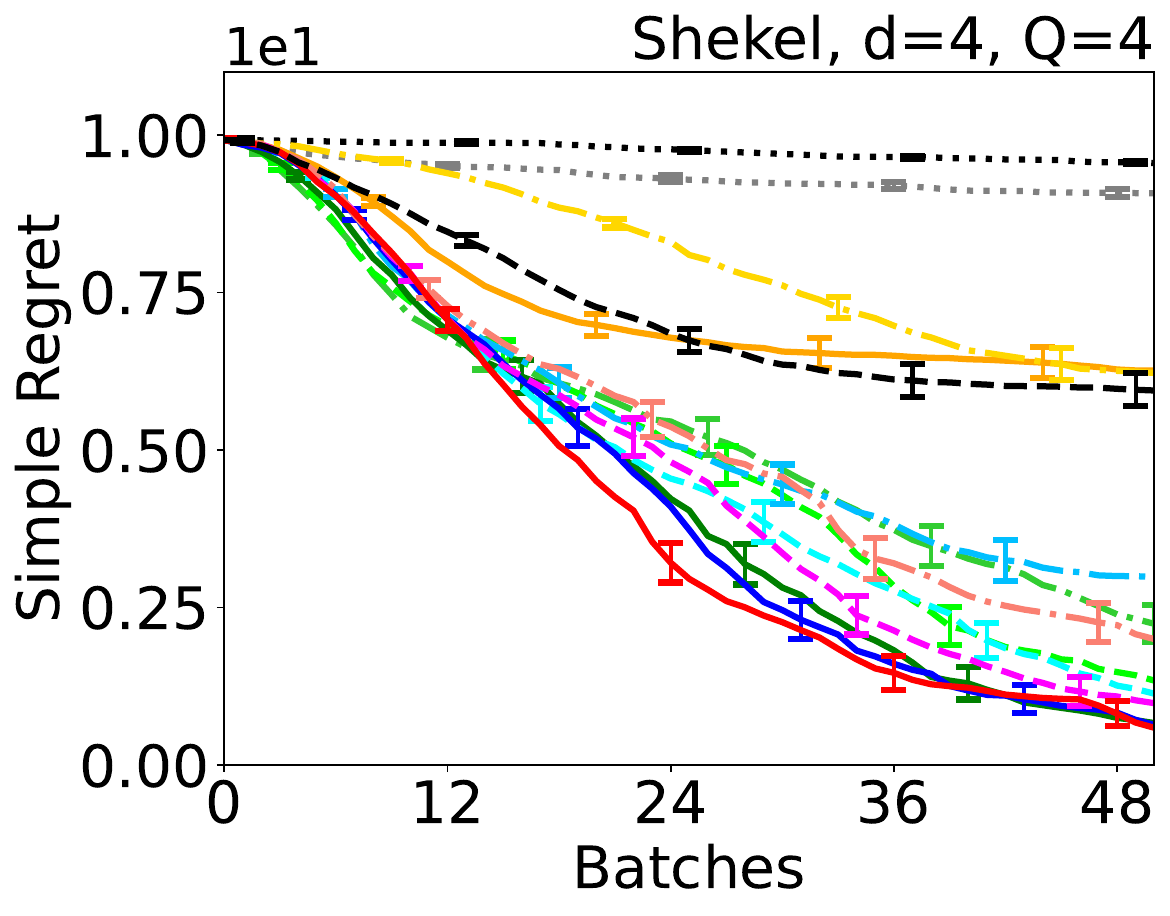}
\includegraphics[height=.25\linewidth]{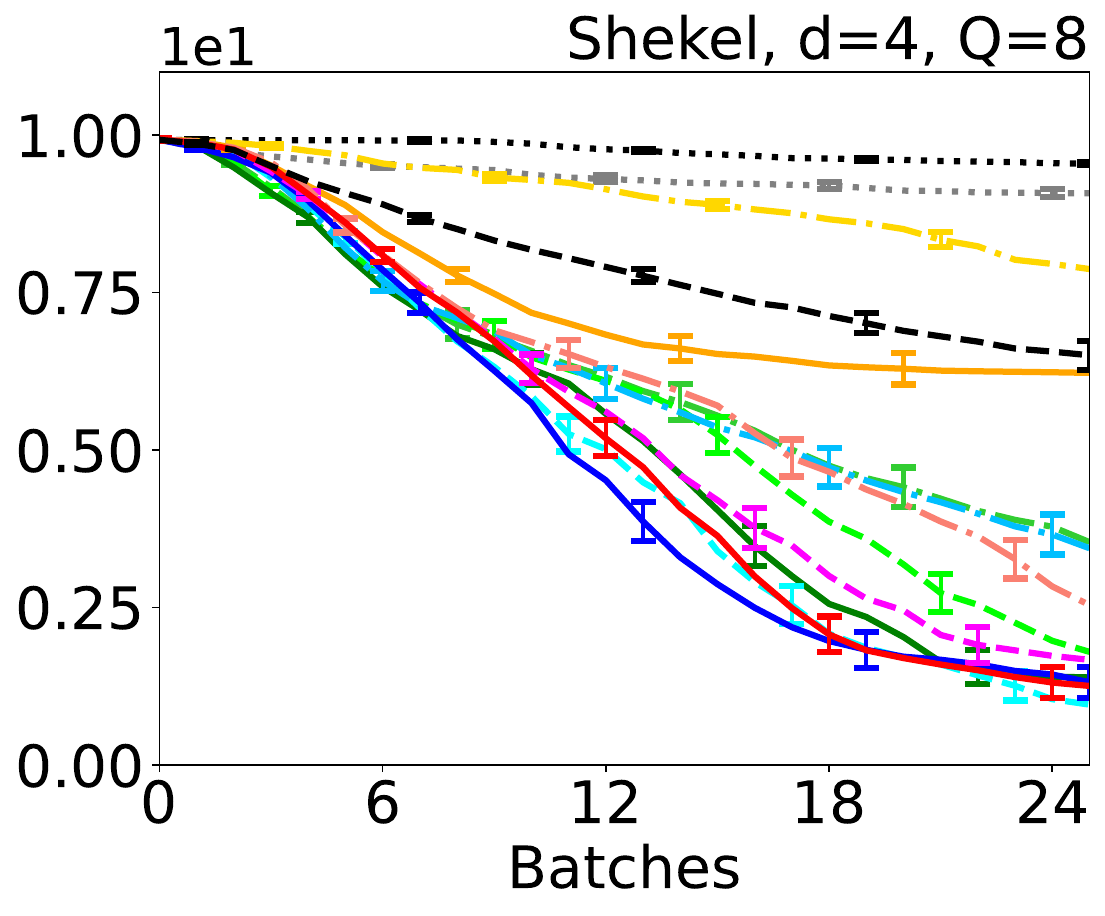}
\includegraphics[height=.25\linewidth]{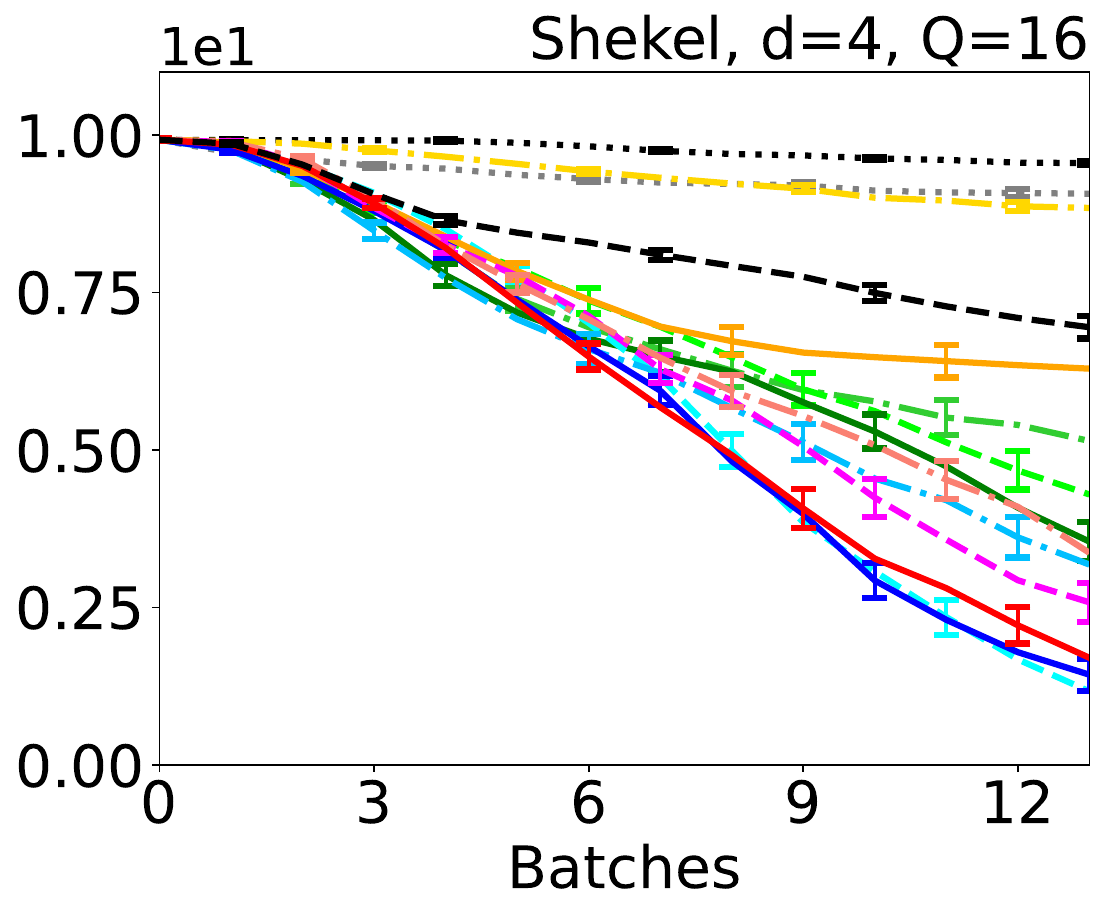}
\includegraphics[height=.25\linewidth]{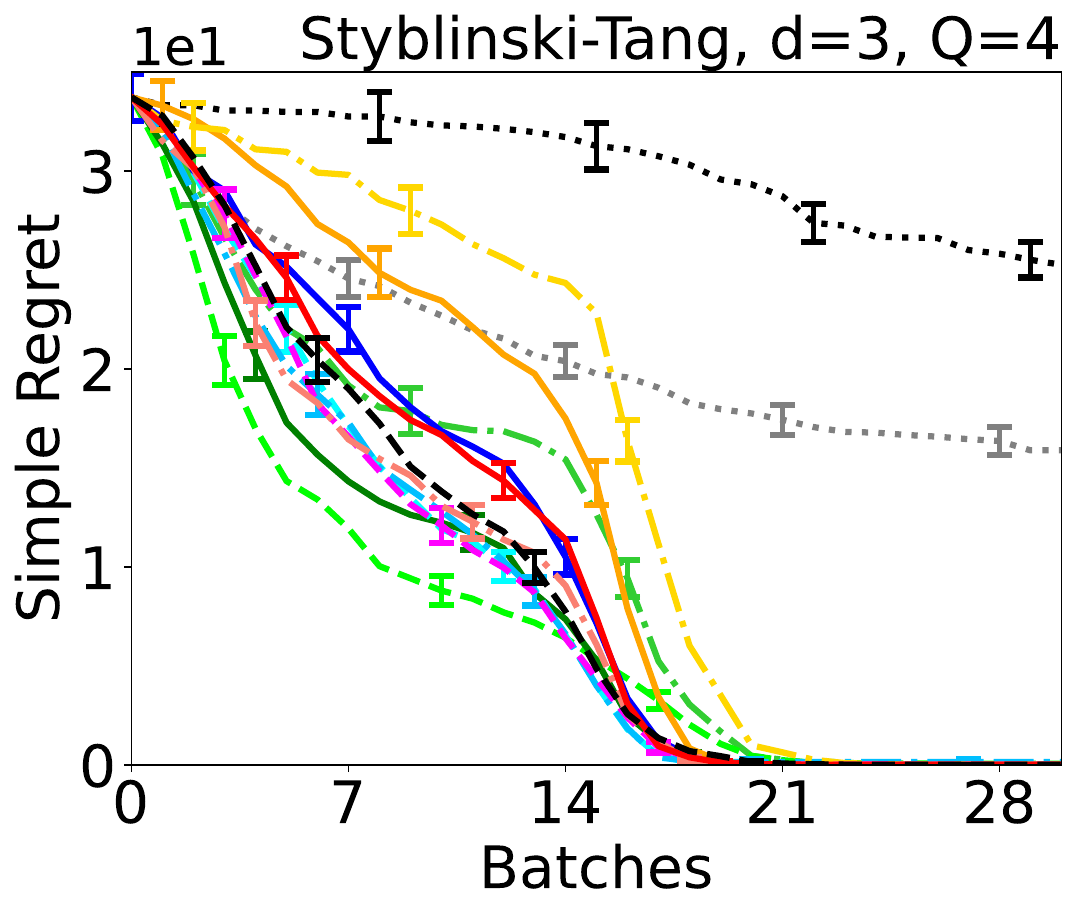}
\includegraphics[height=.25\linewidth]{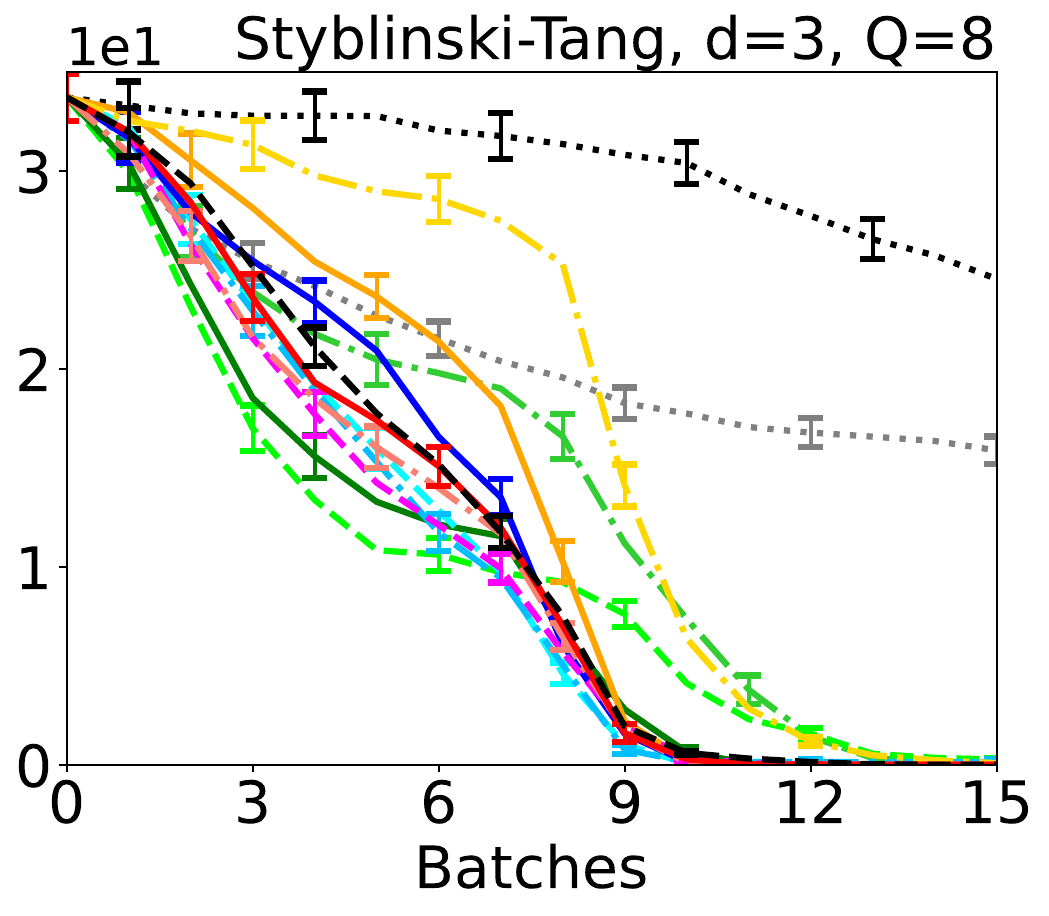}
\includegraphics[height=.25\linewidth]{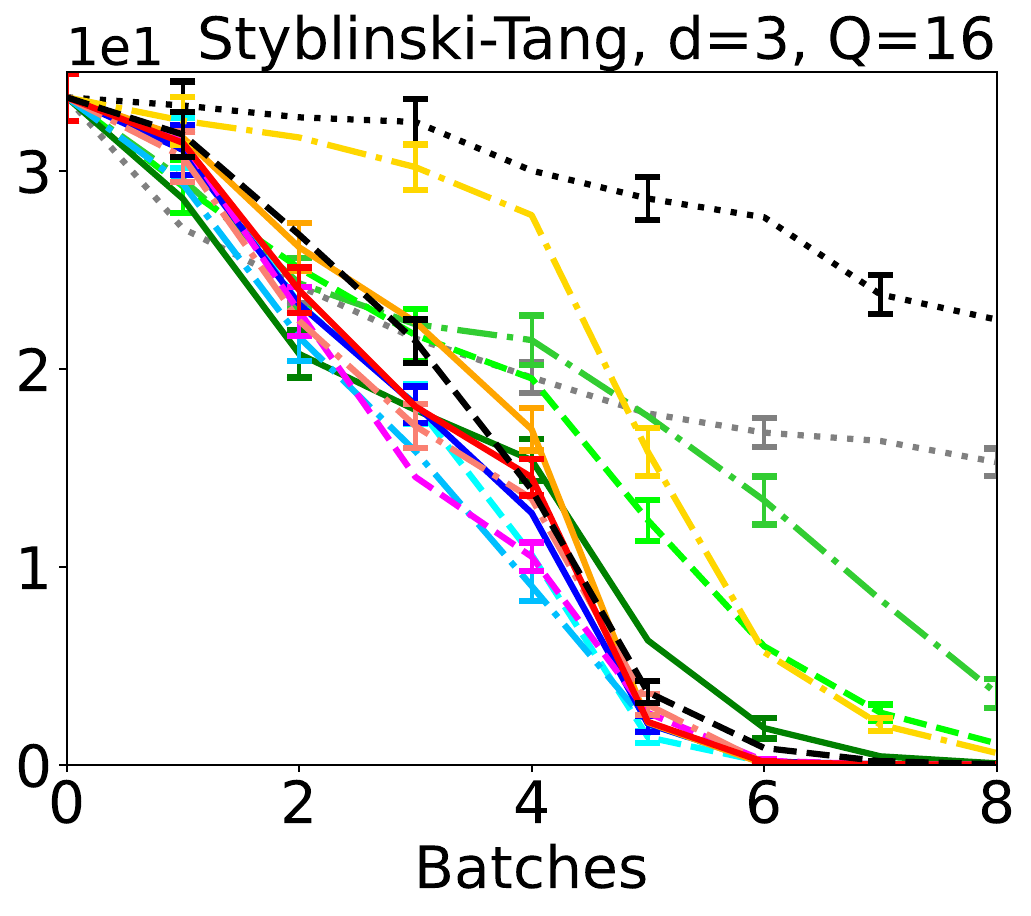}
\caption{
Result of experiments on benchmark functions with synchronous setting. The lines and error bars mean average and standard error of the simple regret $f^*-\max_{i\in\left[t\right]}f{\left(\bm x_i\right)}$ across the 100 experiments on each condition. One batch corresponds to $Q$ iterations.
}
\label{fig:result_ben_syn}
\end{figure}

\begin{figure}[t]
\centering
\includegraphics[width=.9\linewidth]{figures/legend_ben.pdf}\\
\includegraphics[height=.25\linewidth]{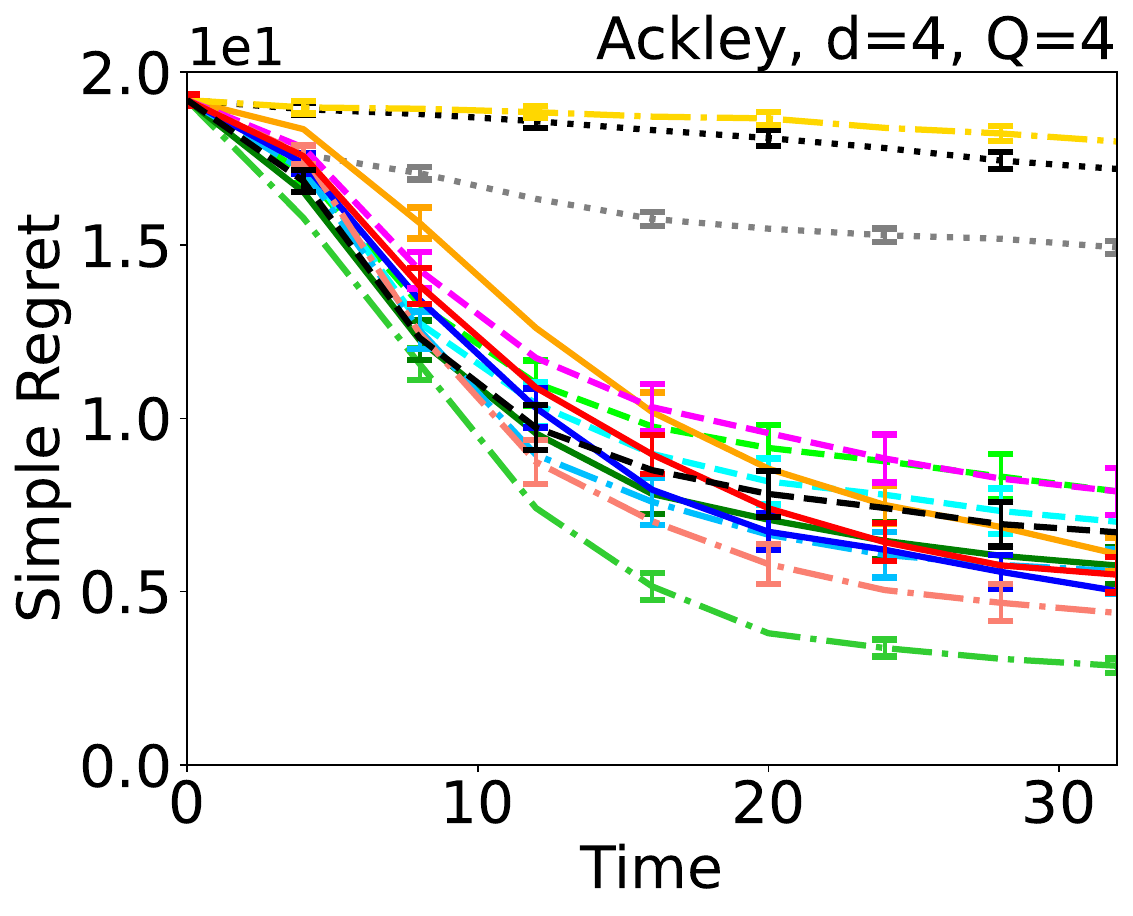}
\includegraphics[height=.25\linewidth]{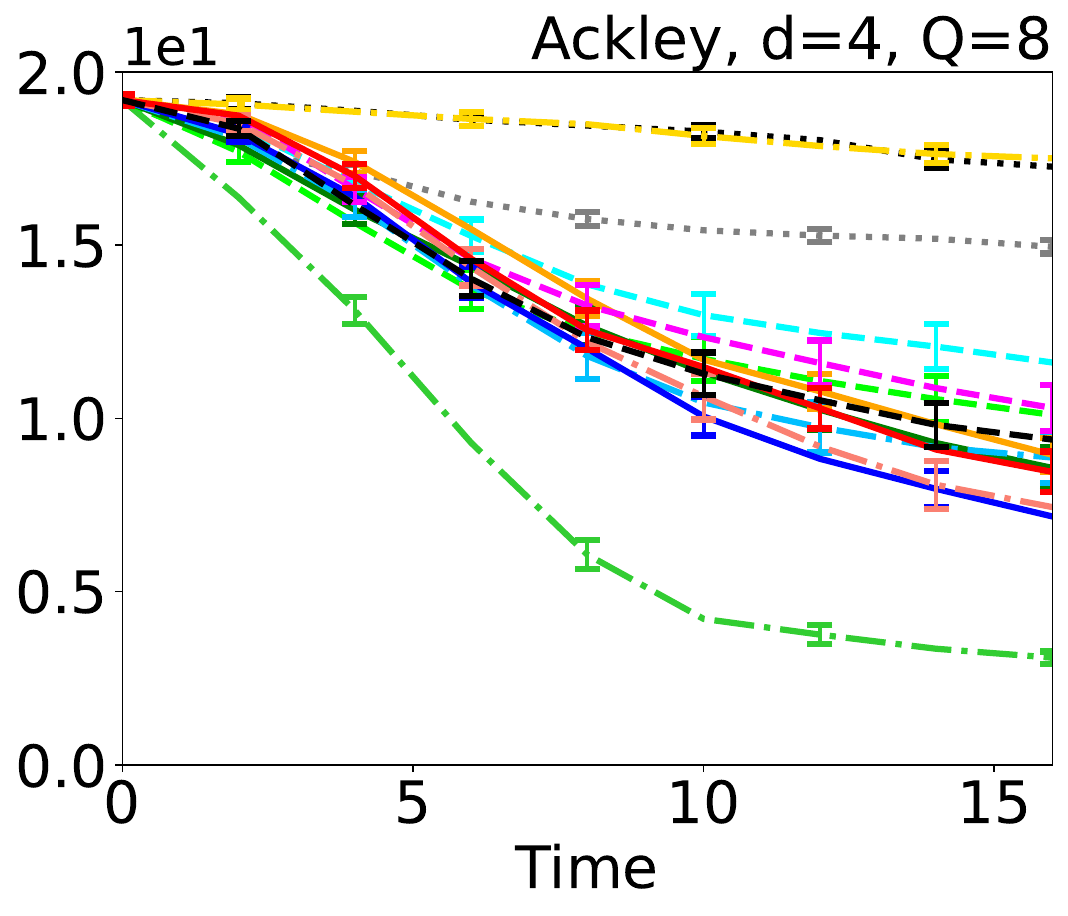}
\includegraphics[height=.25\linewidth]{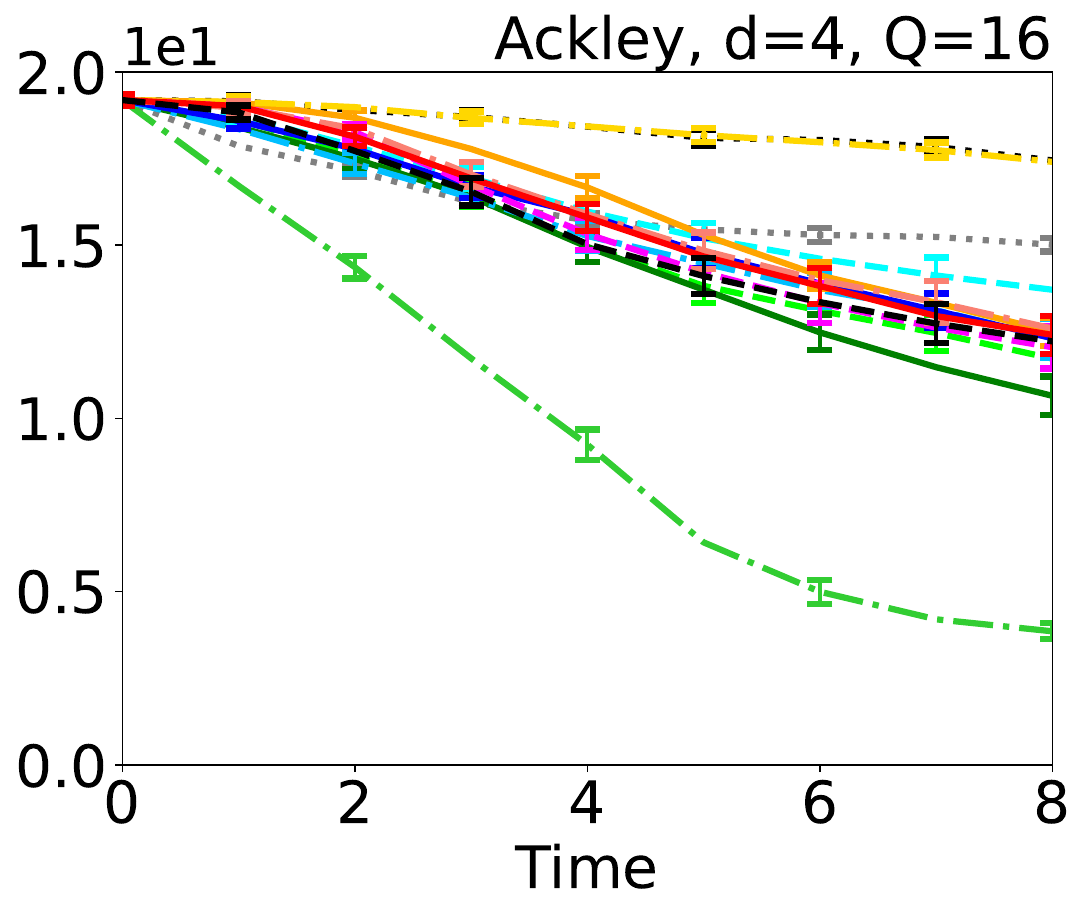}
\includegraphics[height=.25\linewidth]{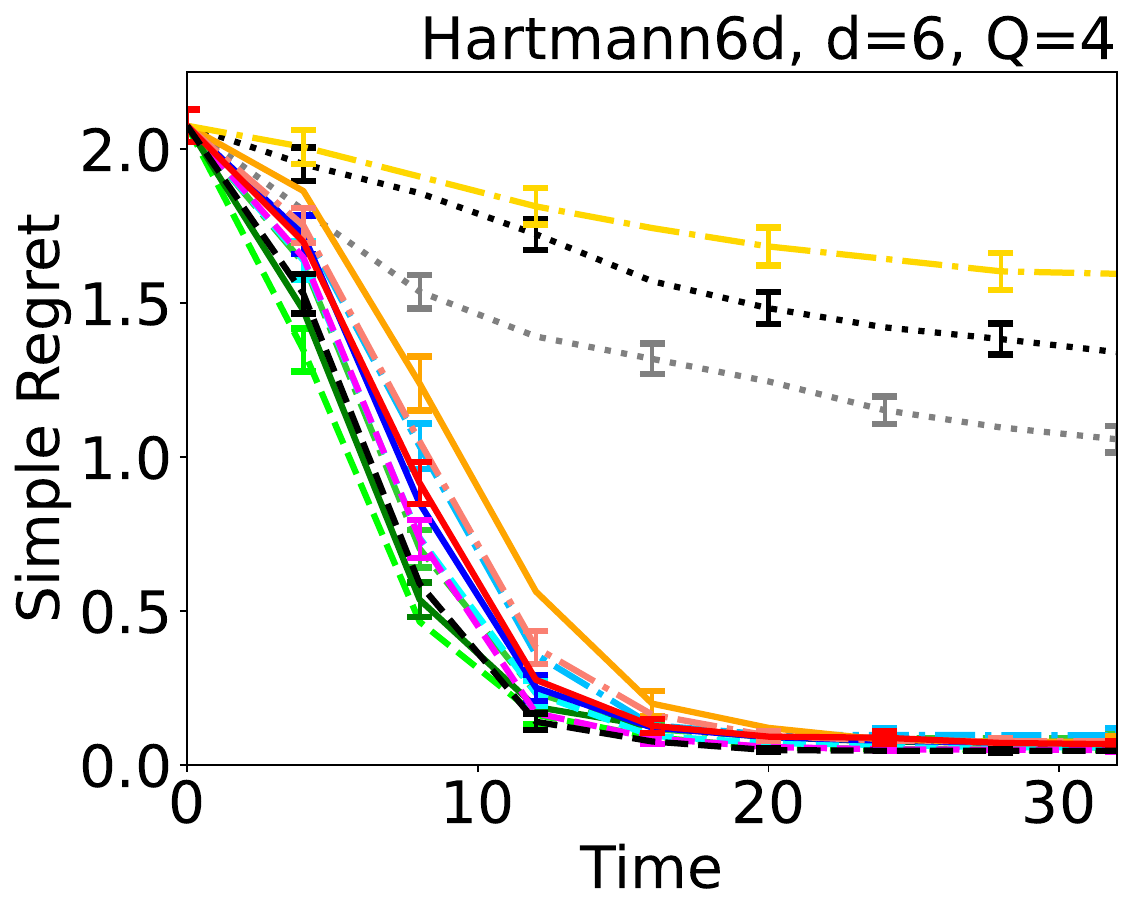}
\includegraphics[height=.25\linewidth]{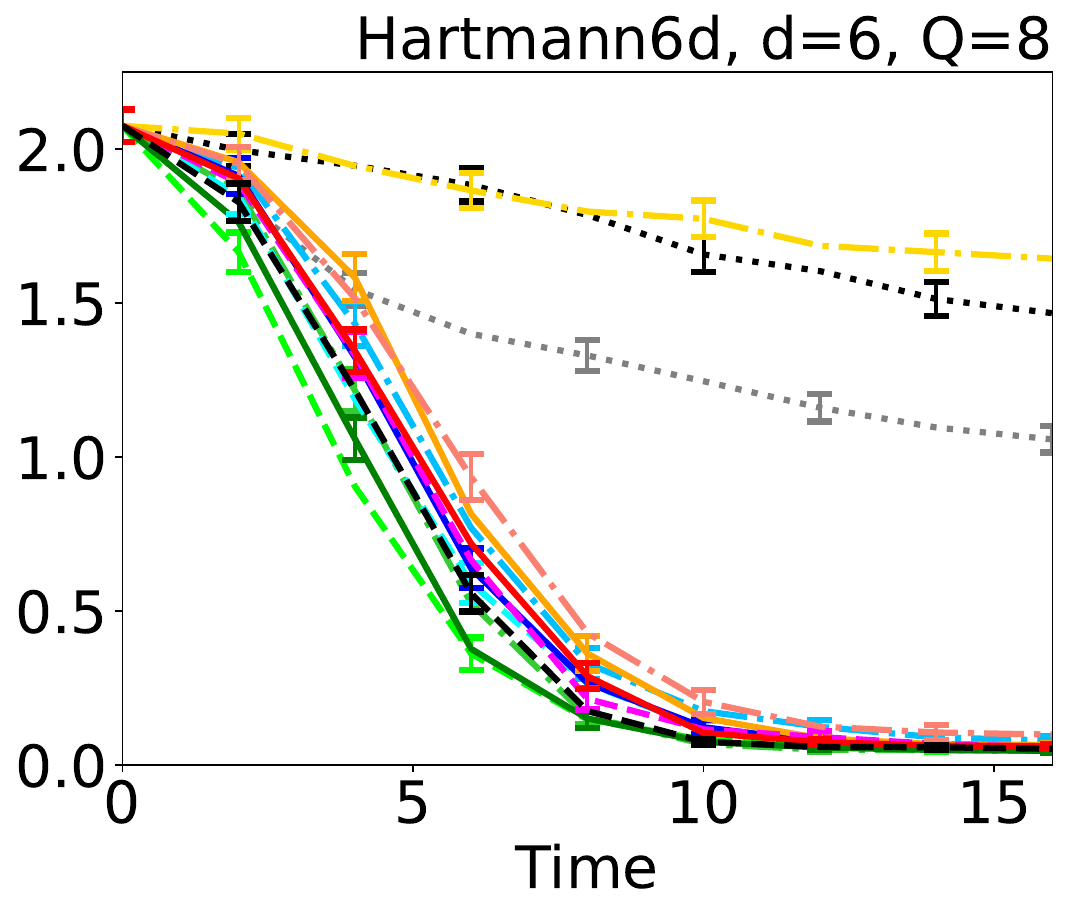}
\includegraphics[height=.25\linewidth]{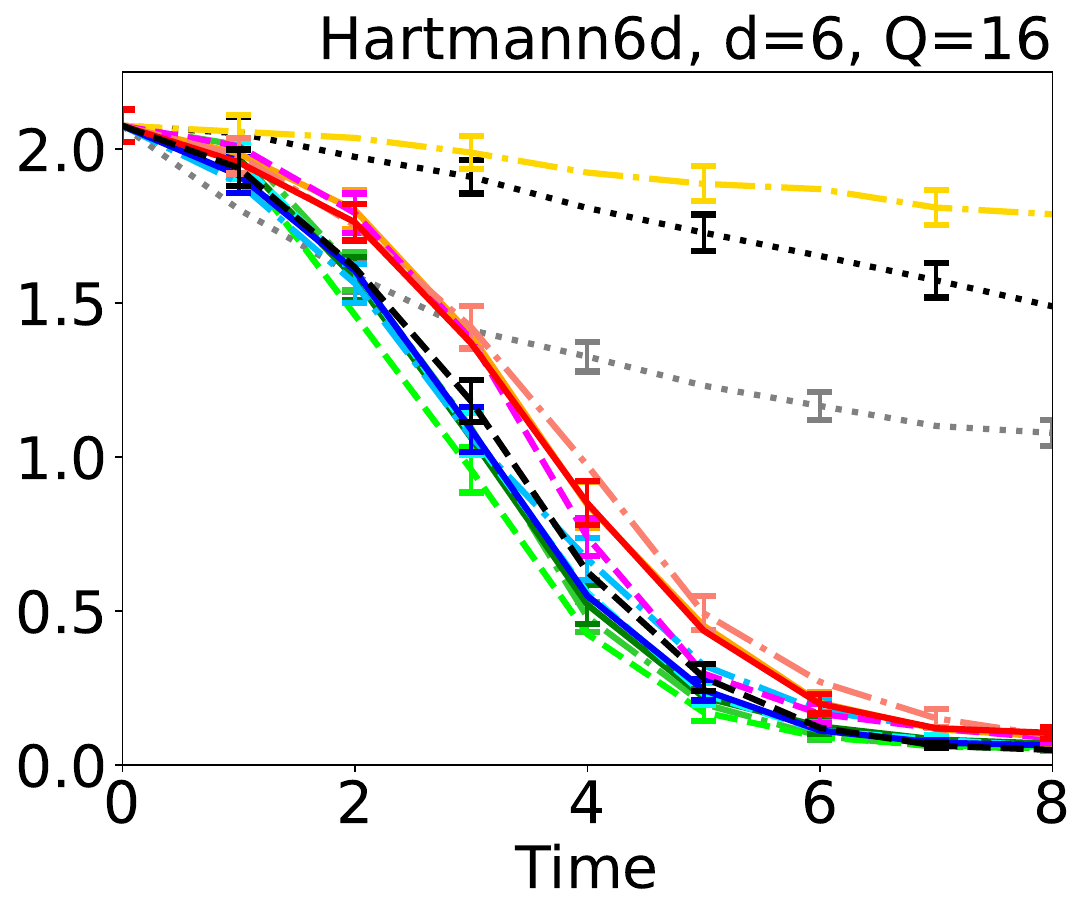}
\includegraphics[height=.25\linewidth]{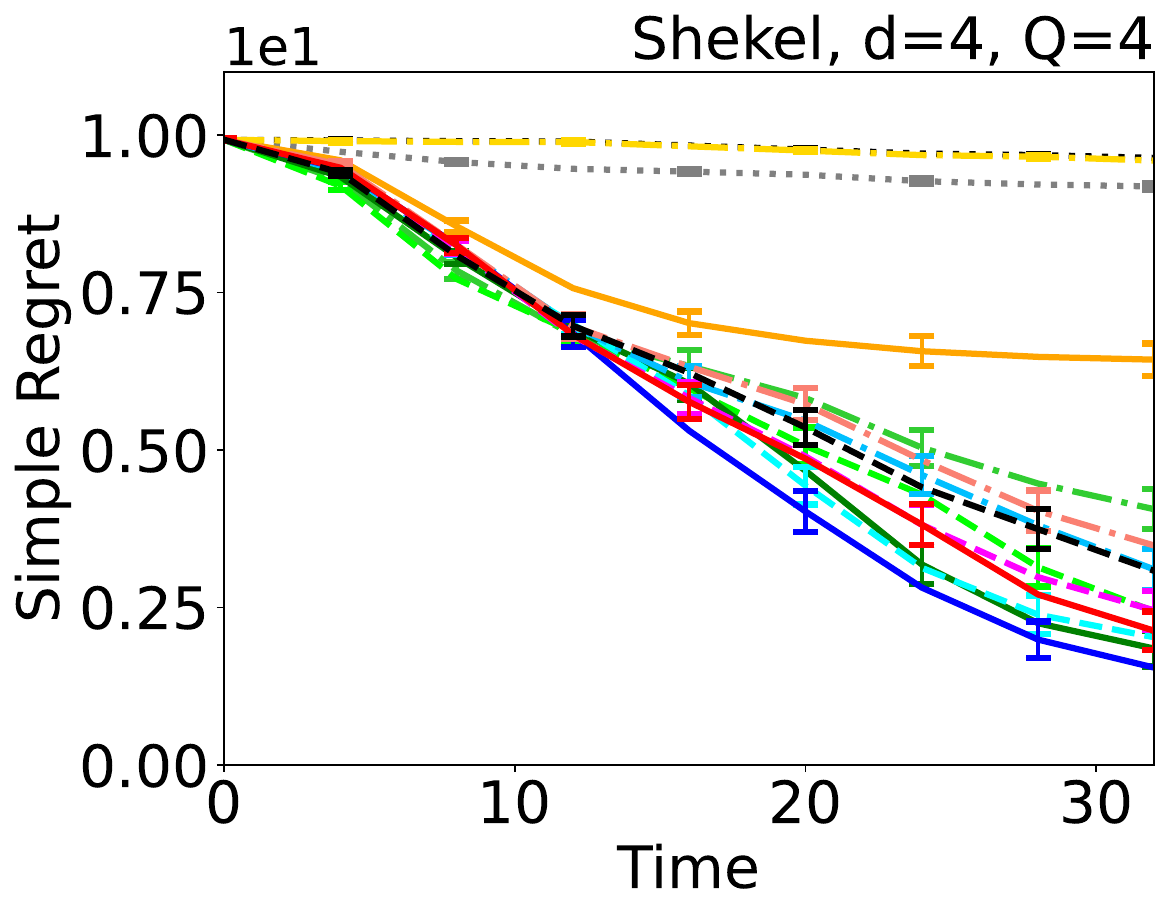}
\includegraphics[height=.25\linewidth]{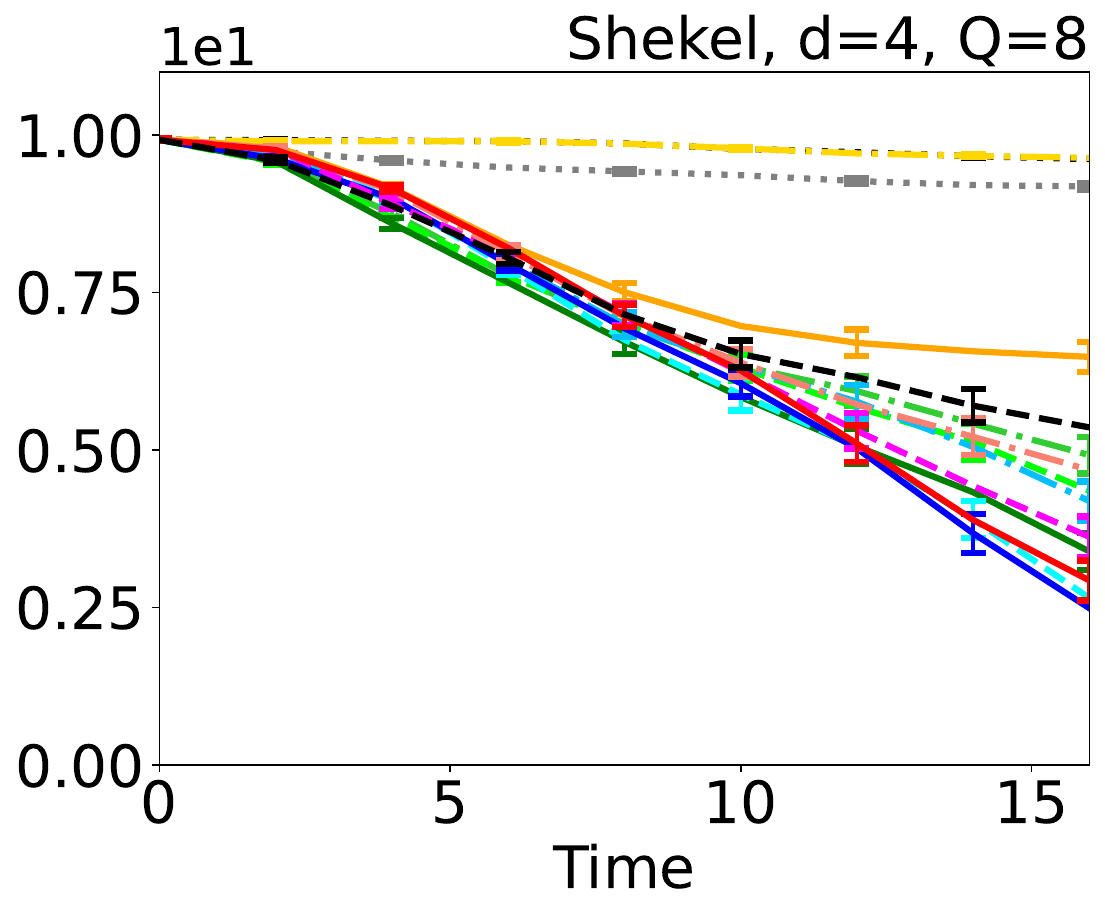}
\includegraphics[height=.25\linewidth]{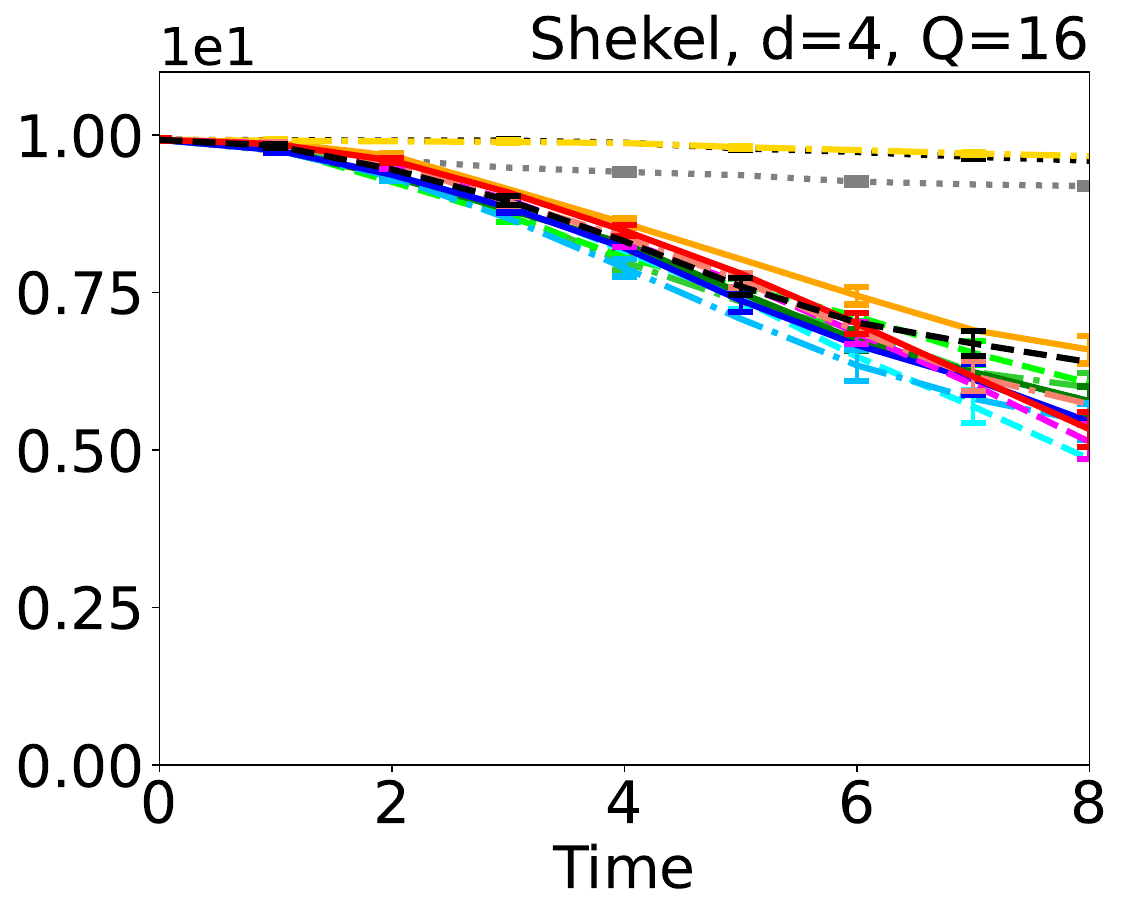}
\includegraphics[height=.25\linewidth]{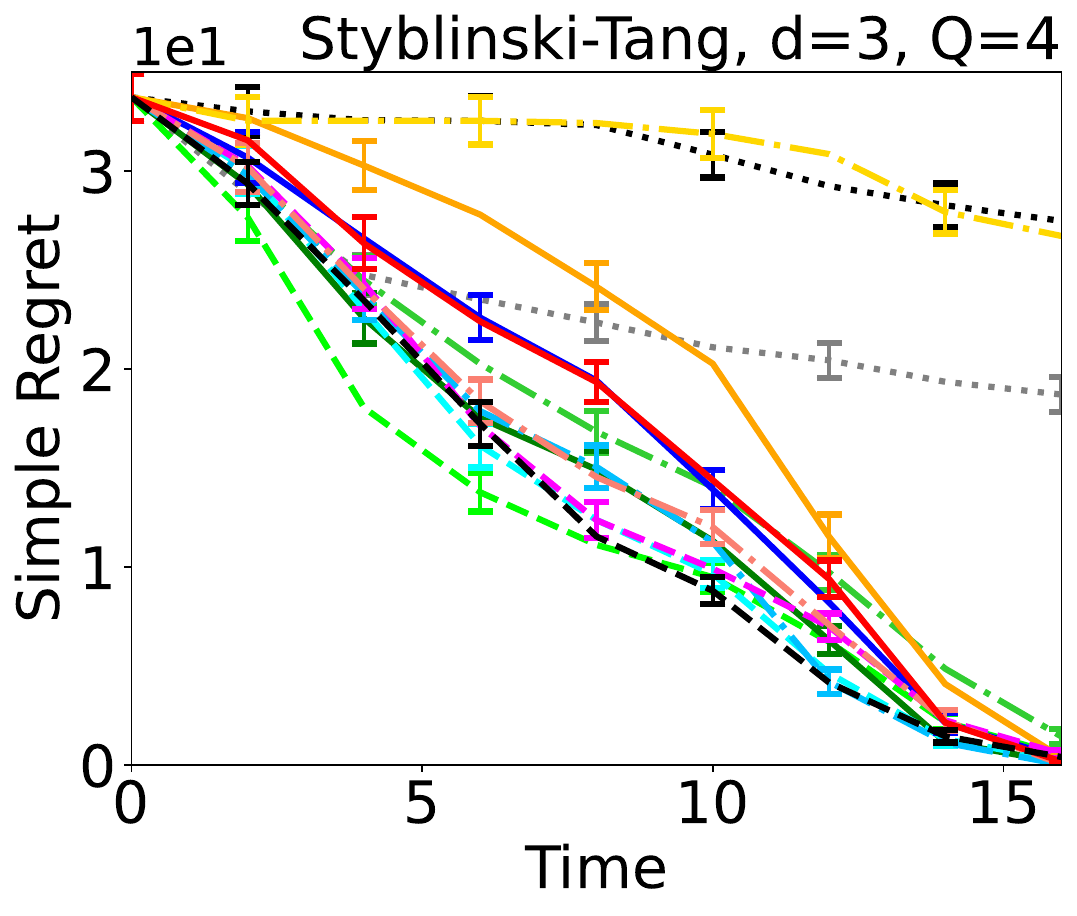}
\includegraphics[height=.25\linewidth]{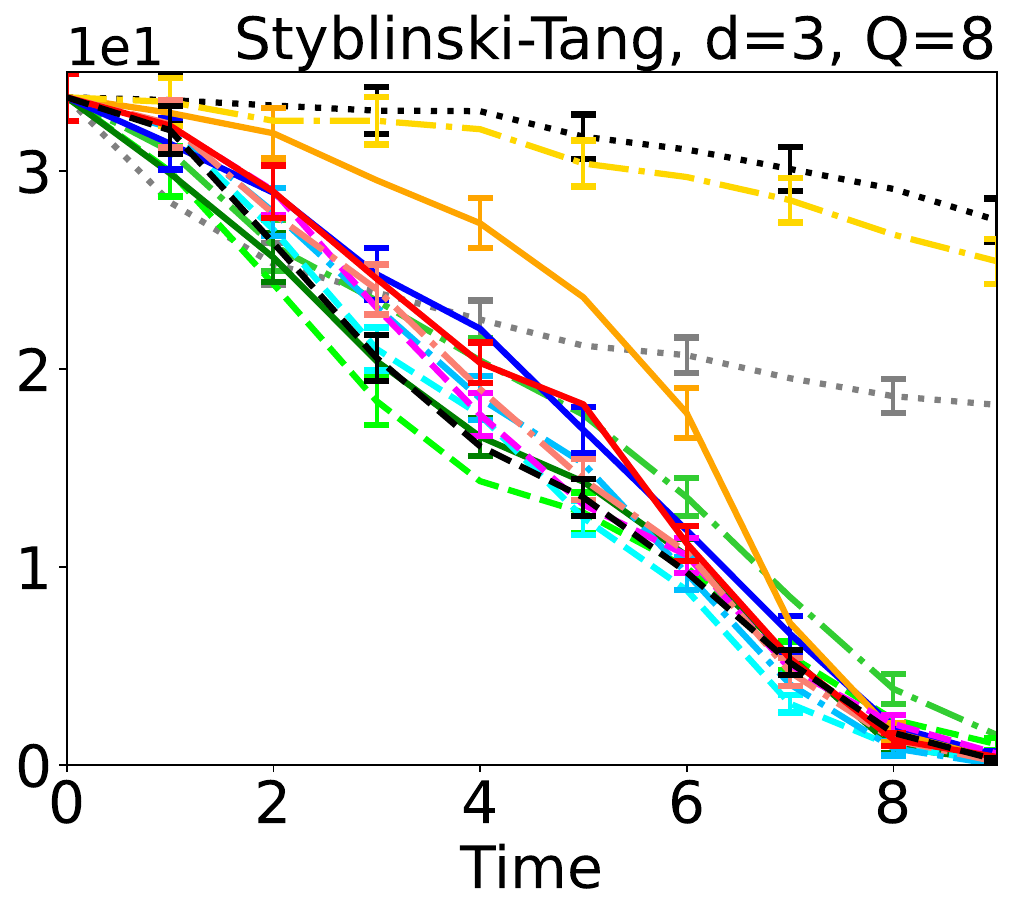}
\includegraphics[height=.25\linewidth]{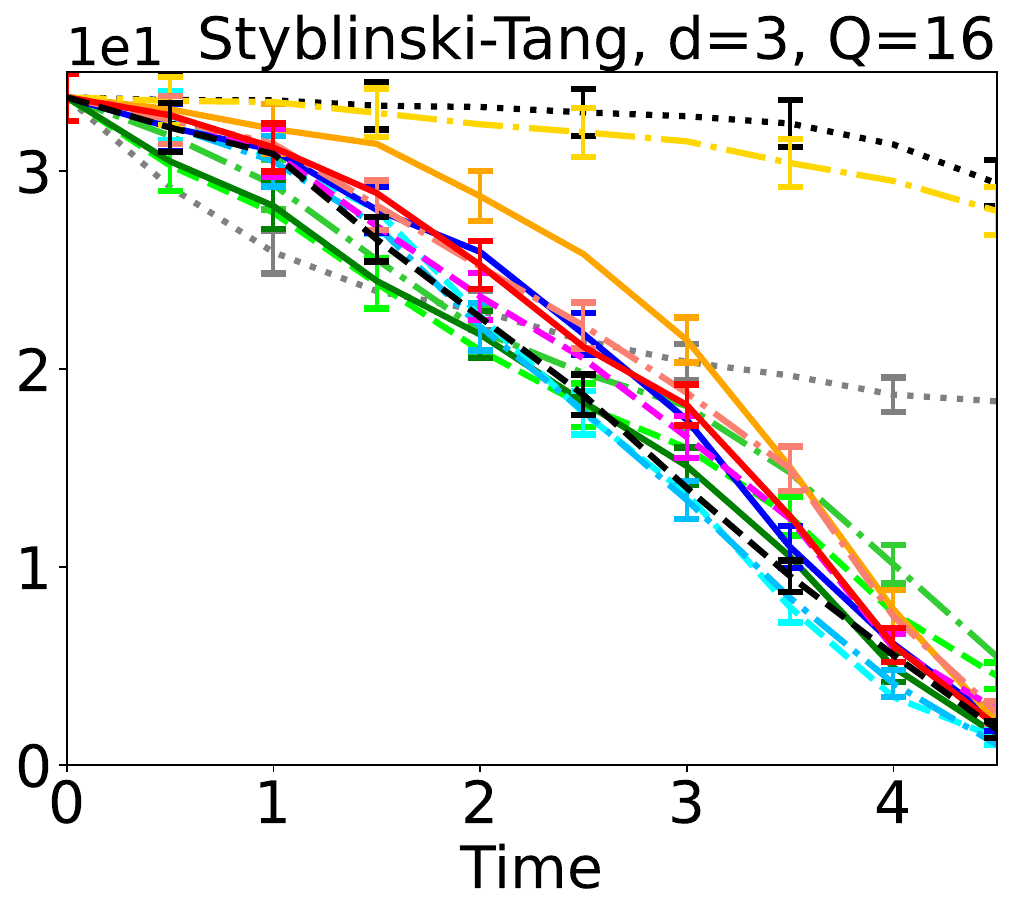}
\caption{
Result of experiments on benchmark functions with asynchronous setting. The lines and error bars mean average and standard error of the simple regret $f^*-\max_{i\in\left[t\right]}f{\left(\bm x_i\right)}$ across the 100 experiments on each condition.
}
\label{fig:result_ben_asyn}
\end{figure}
\begin{figure}[t]
\centering
\includegraphics[width=.9\linewidth]{figures/legend.pdf}\\
\includegraphics[height=.25\linewidth]{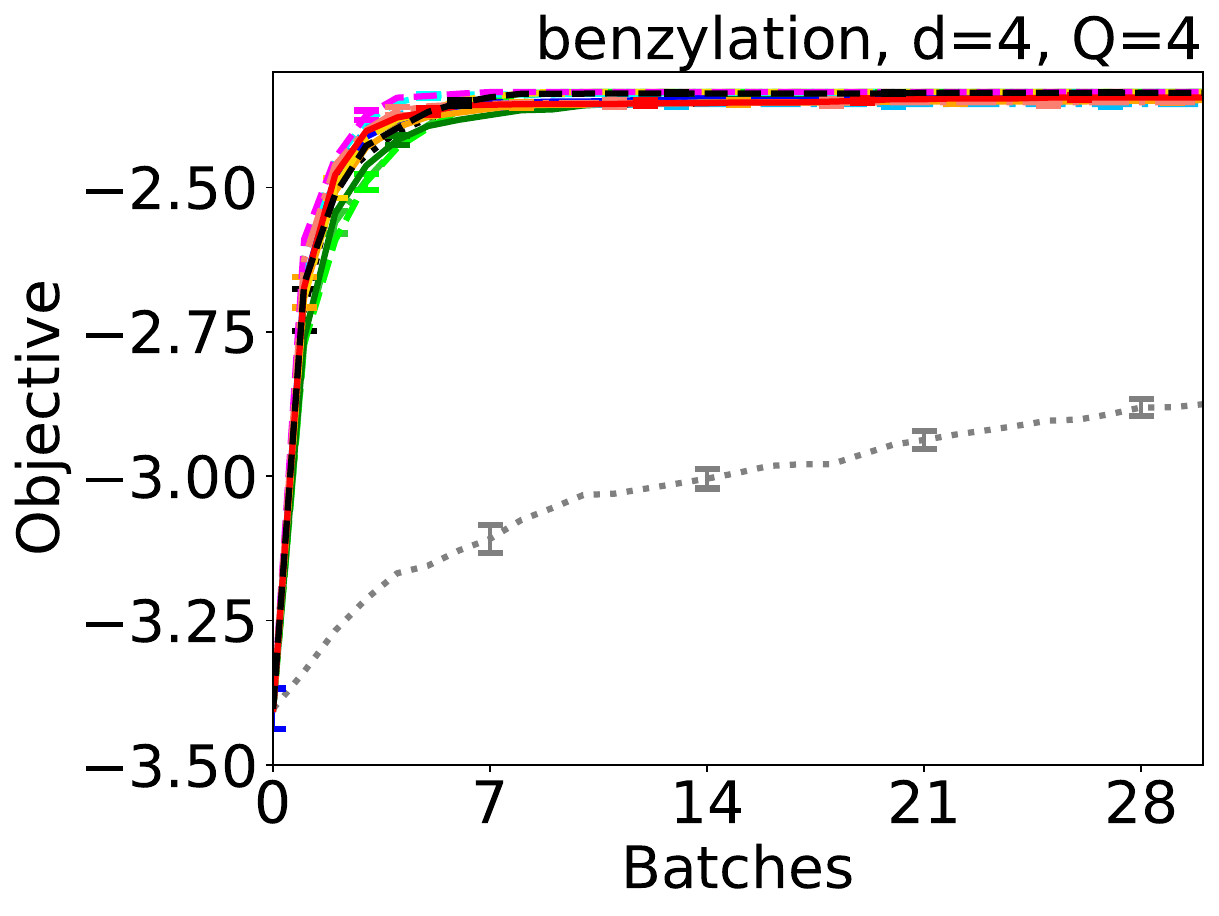}
\includegraphics[height=.25\linewidth]{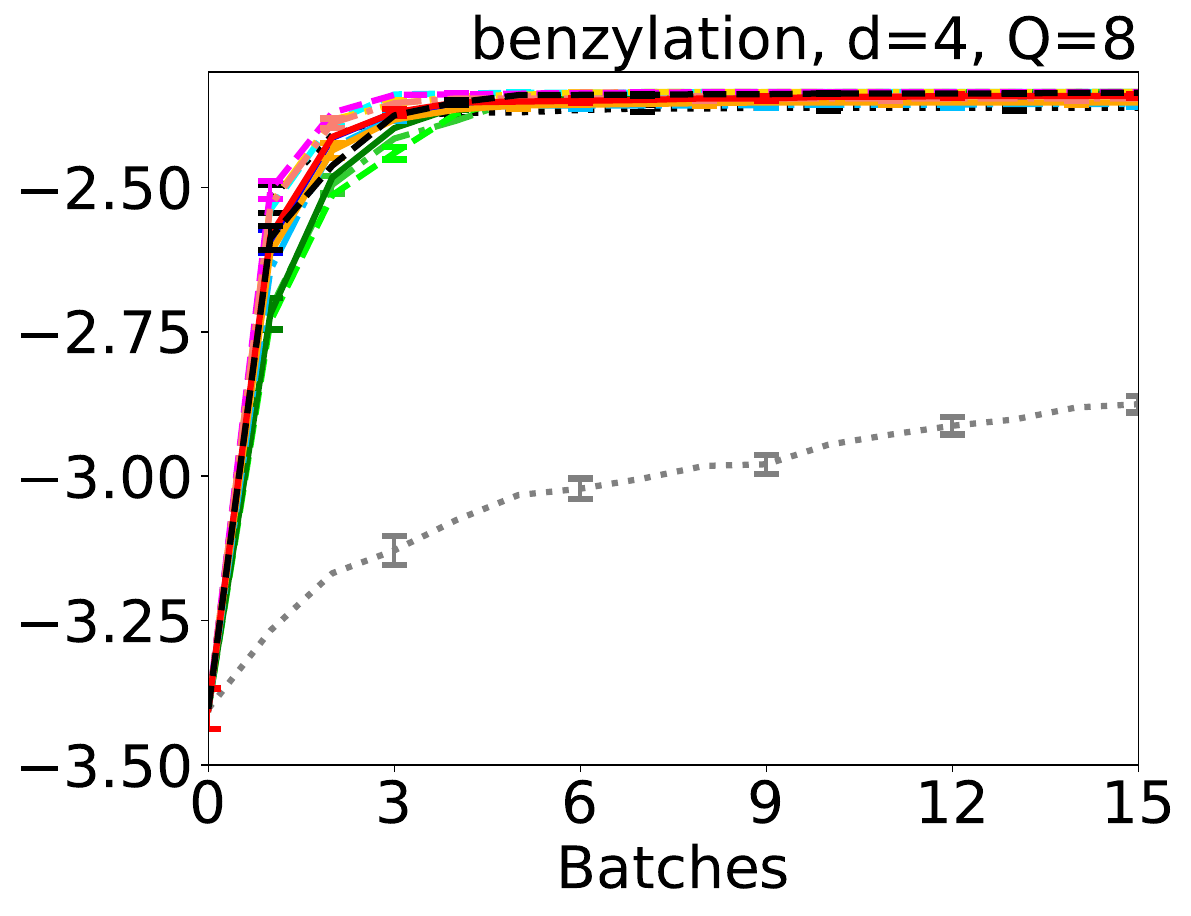}
\includegraphics[height=.25\linewidth]{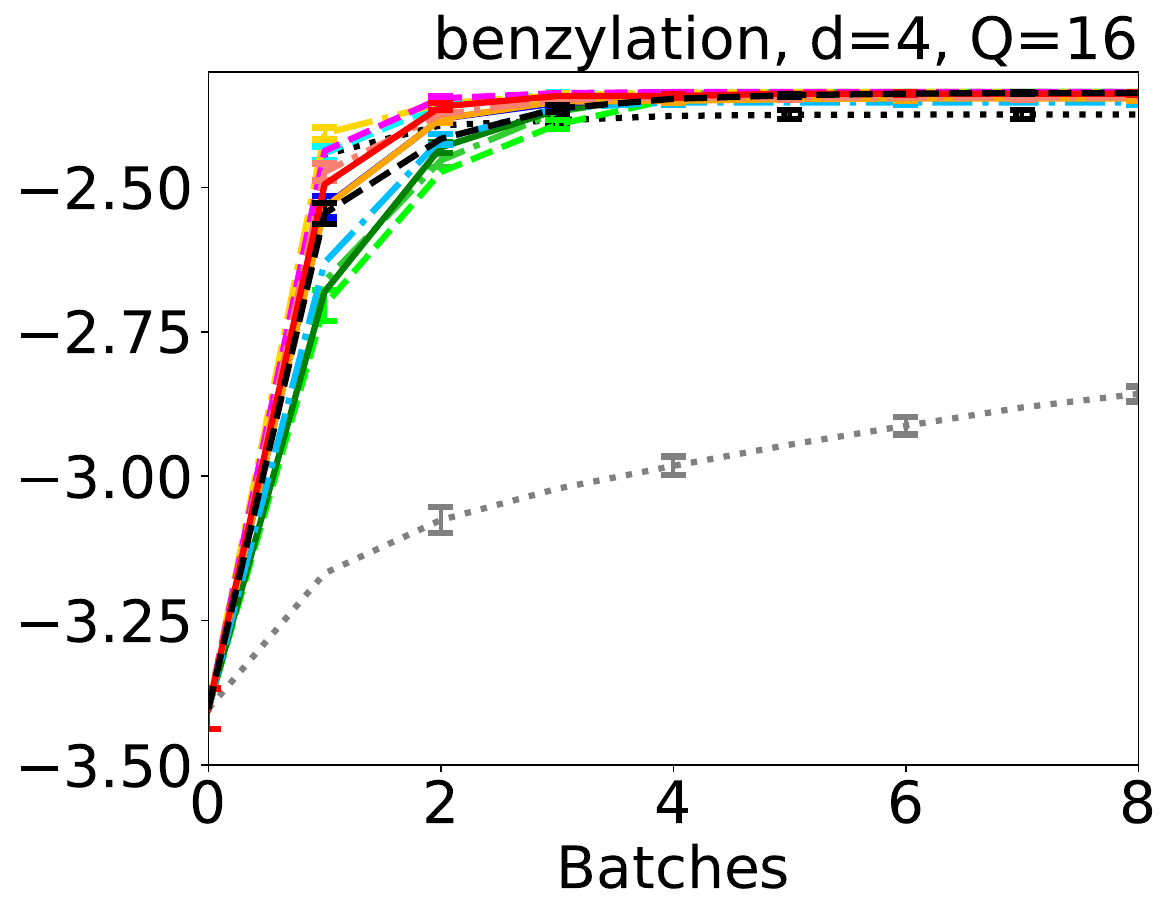}
\includegraphics[height=.25\linewidth]{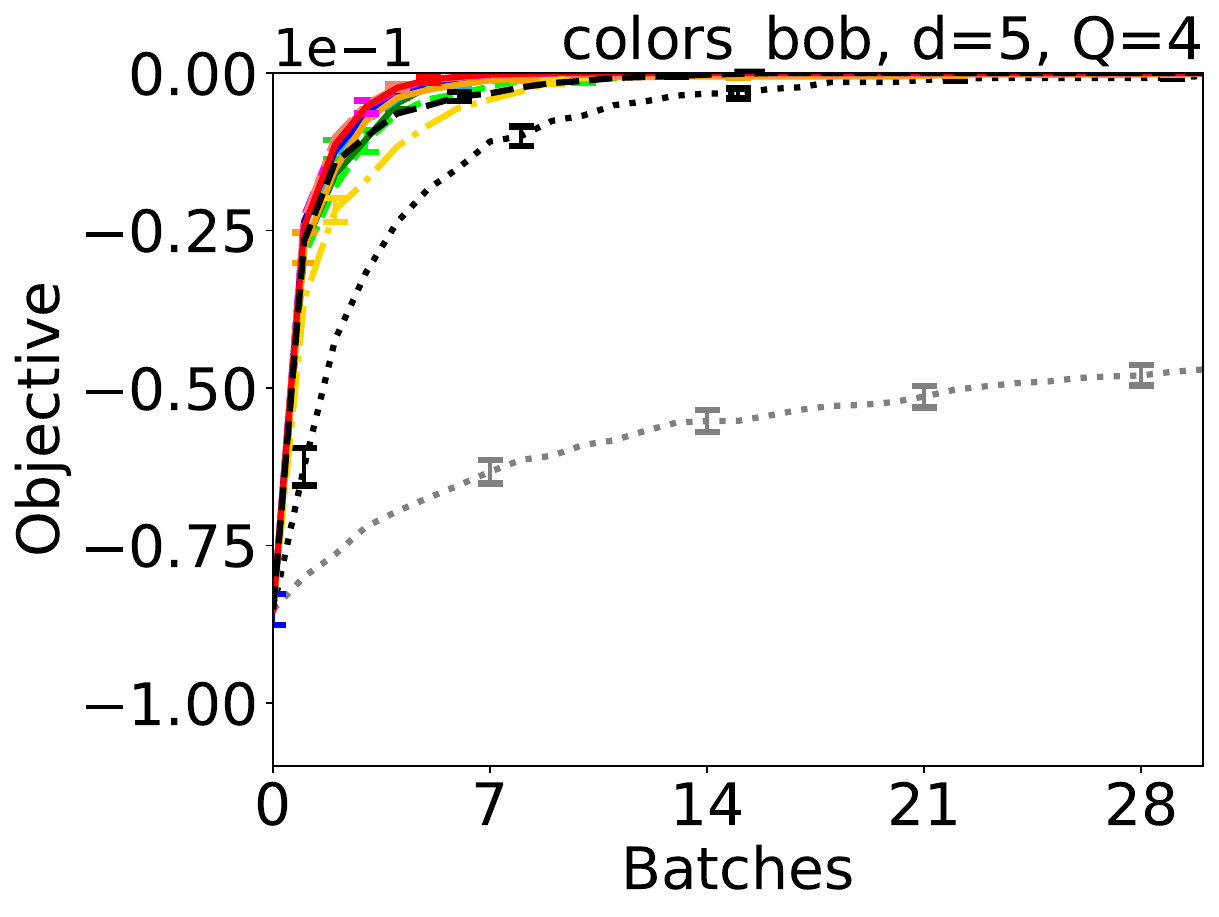}
\includegraphics[height=.25\linewidth]{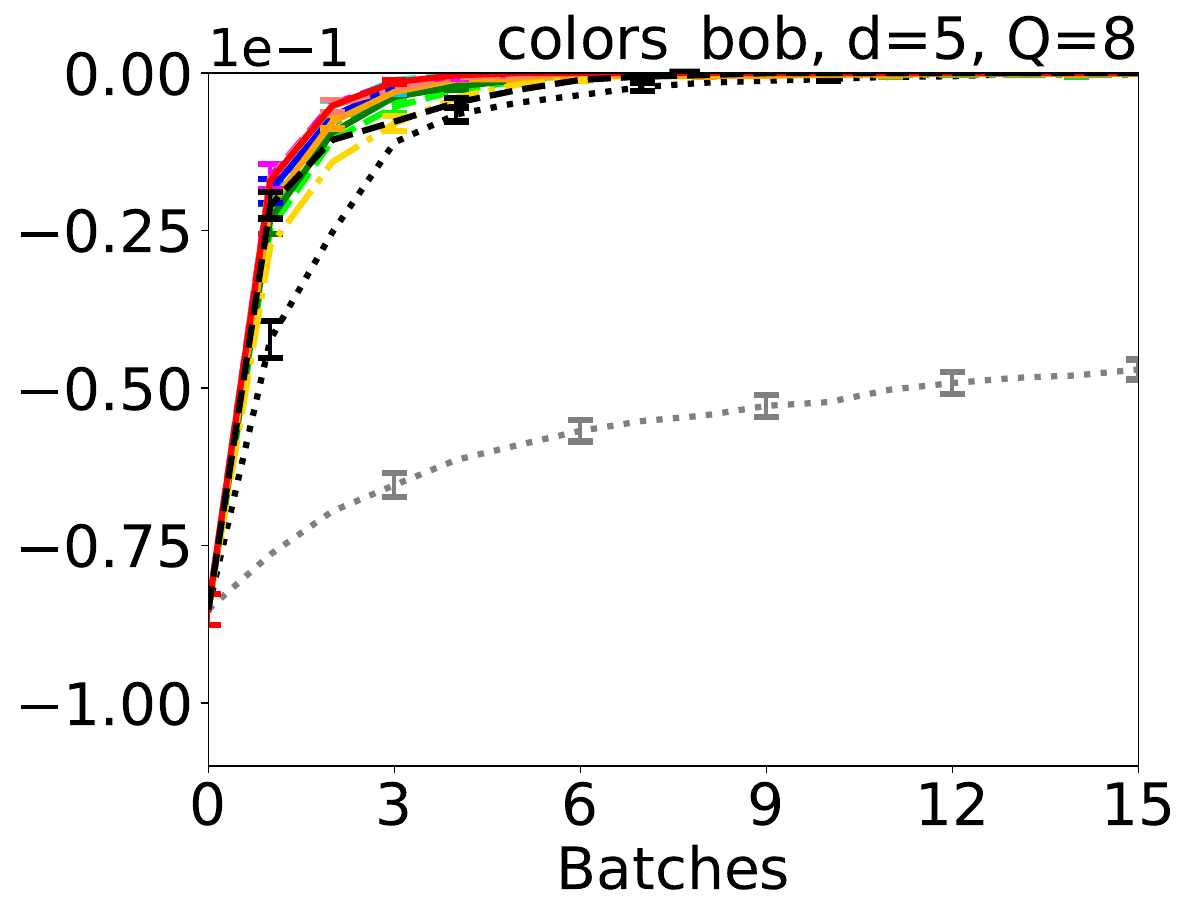}
\includegraphics[height=.25\linewidth]{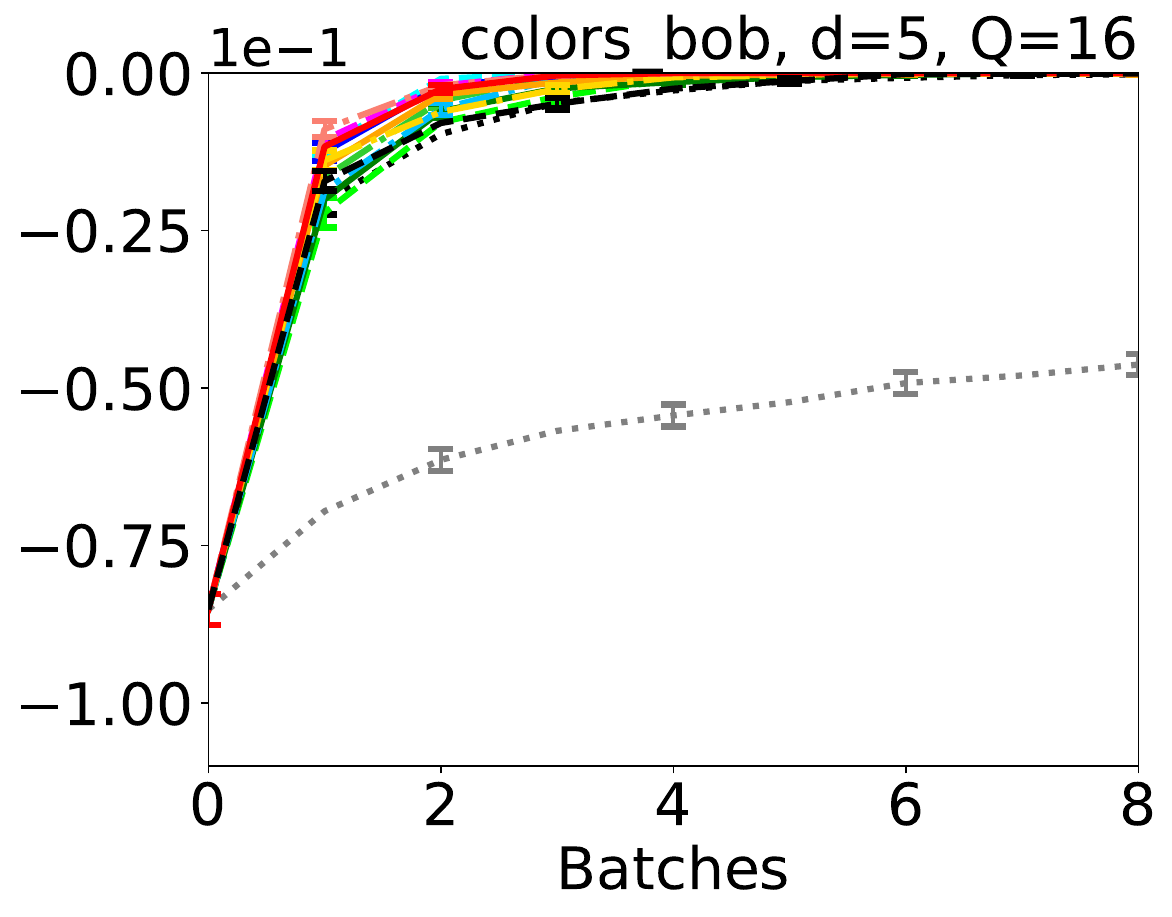}
\includegraphics[height=.25\linewidth]{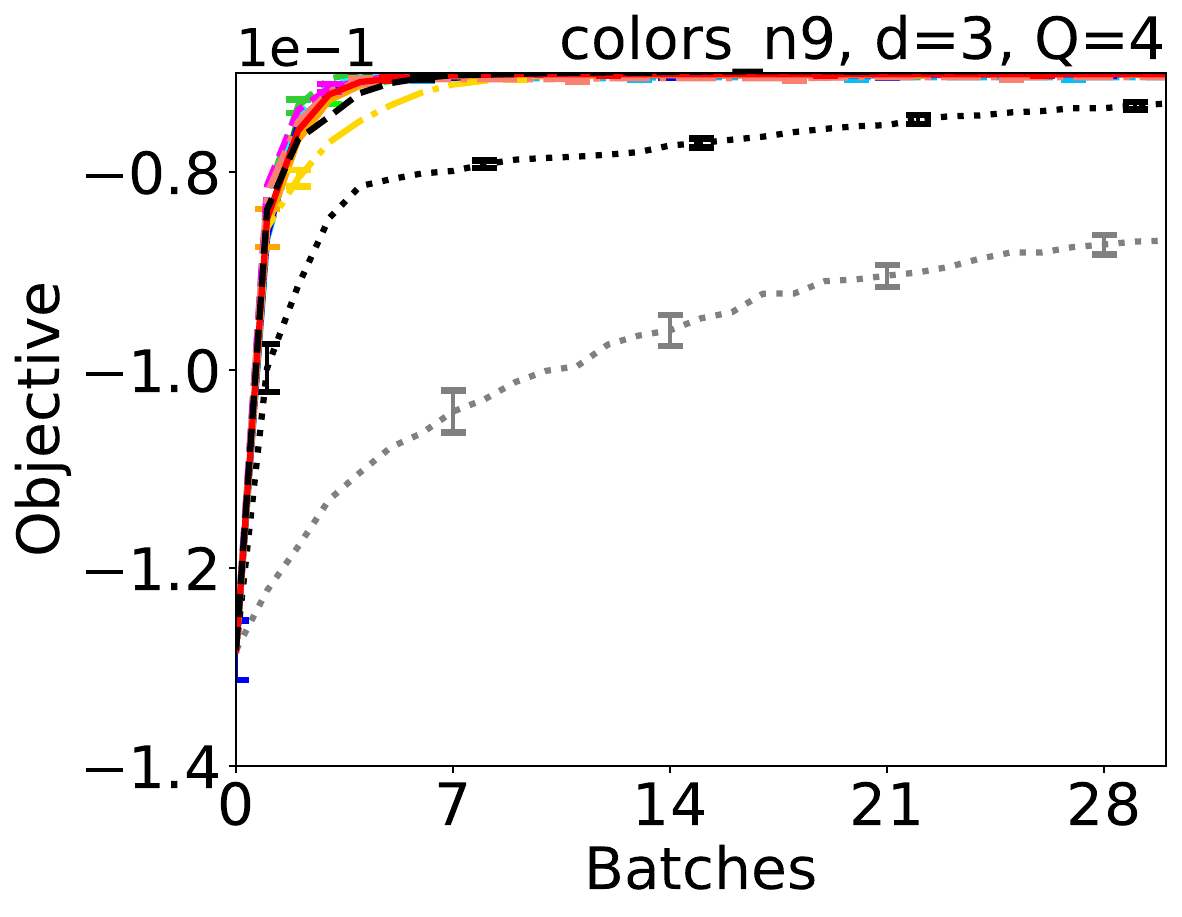}
\includegraphics[height=.25\linewidth]{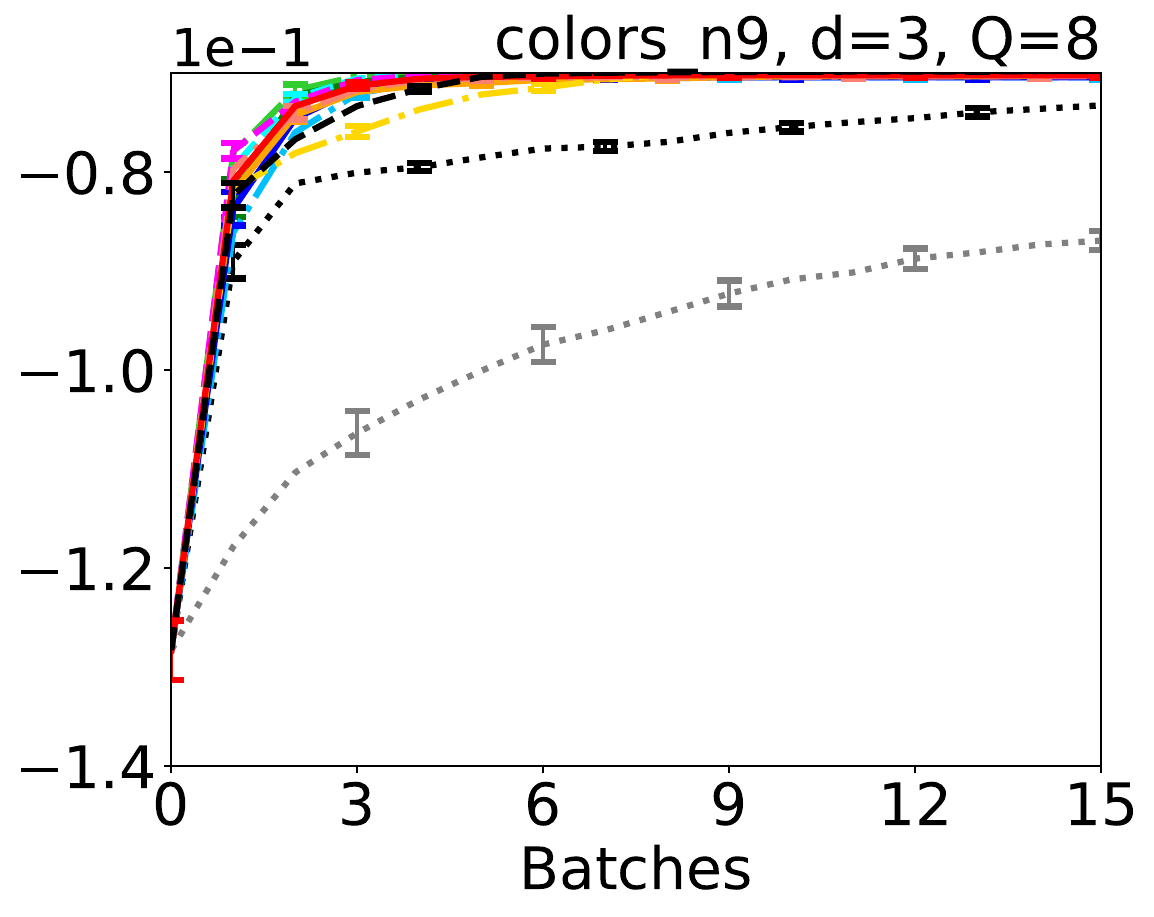}
\includegraphics[height=.25\linewidth]{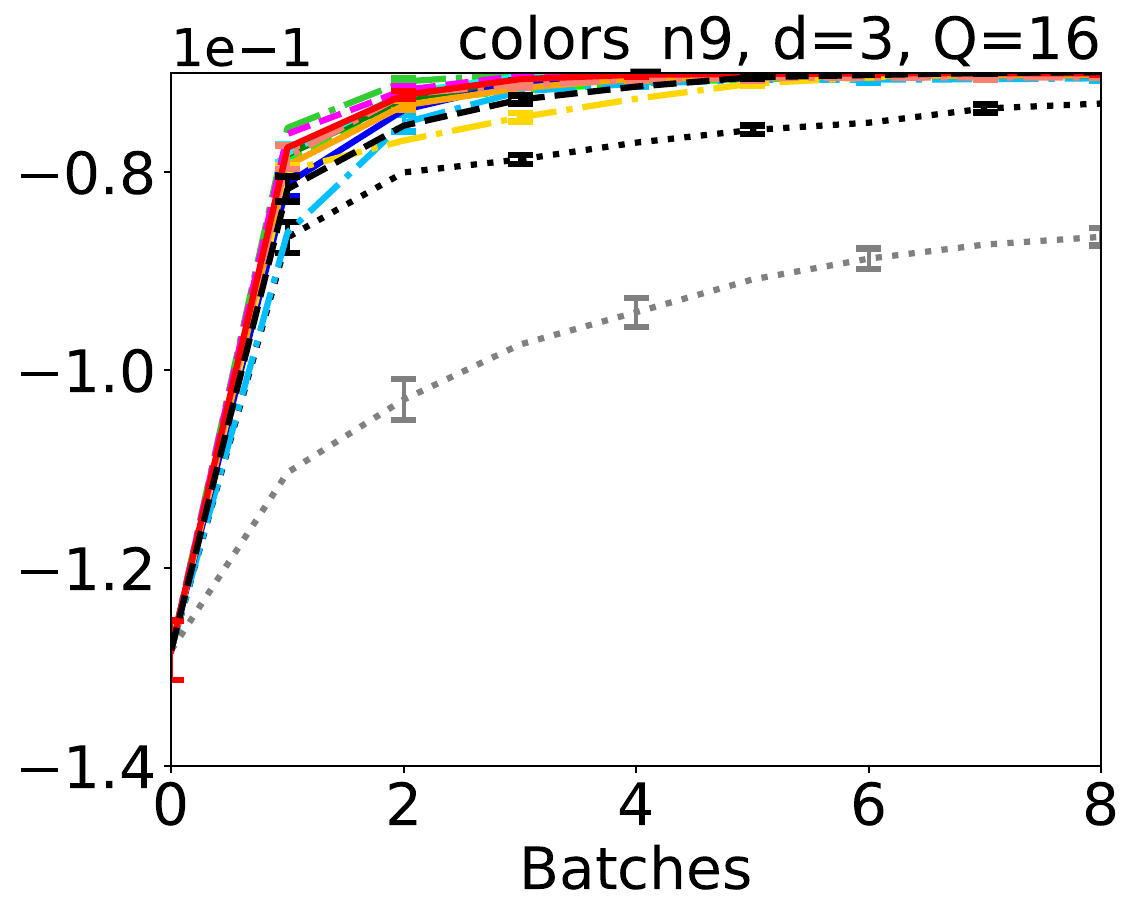}
\includegraphics[height=.25\linewidth]{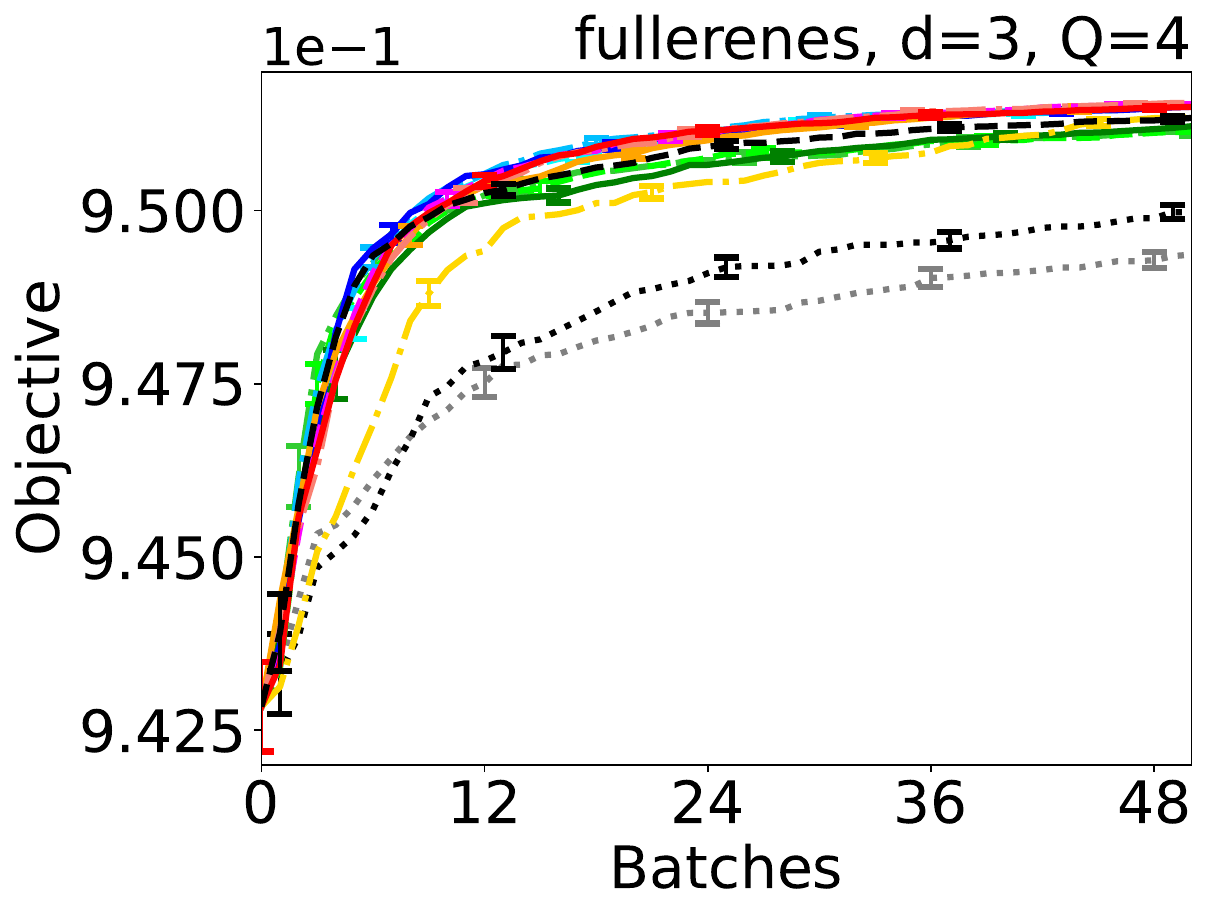}
\includegraphics[height=.25\linewidth]{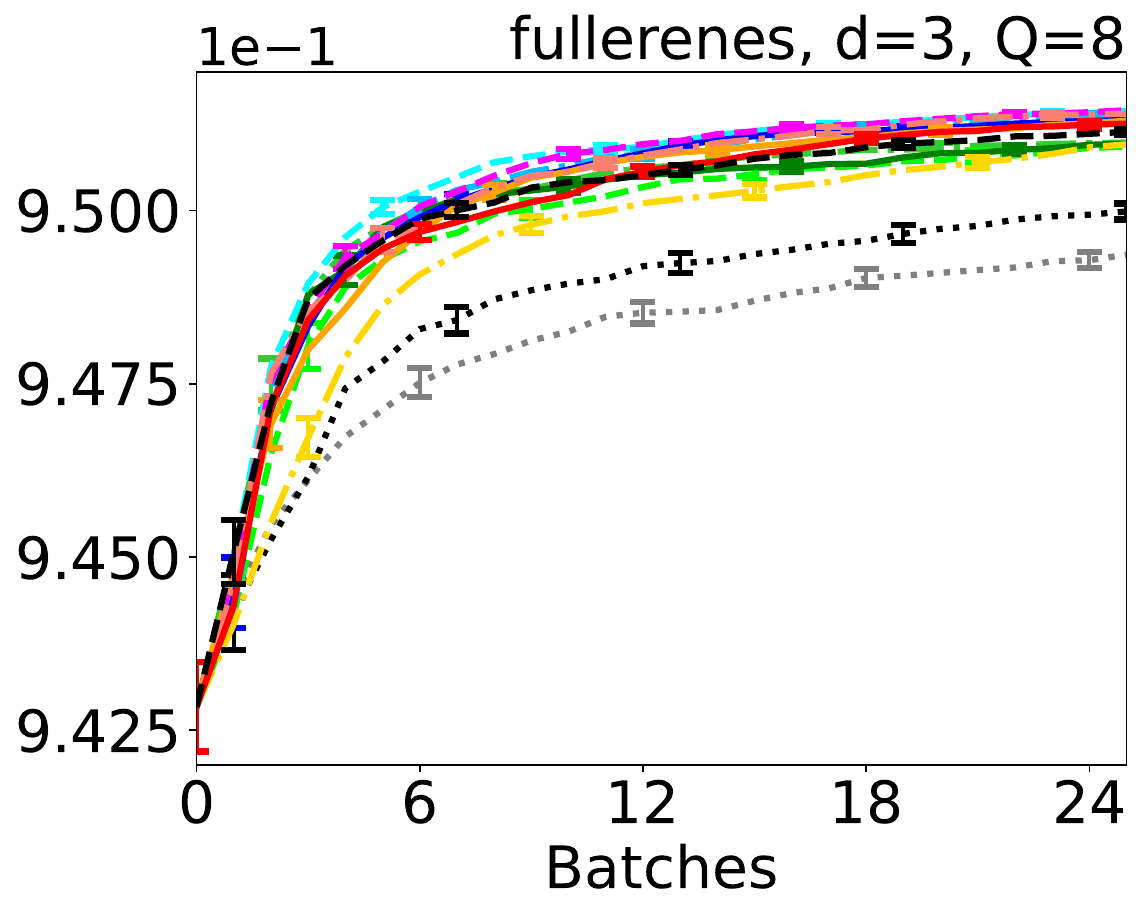}
\includegraphics[height=.25\linewidth]{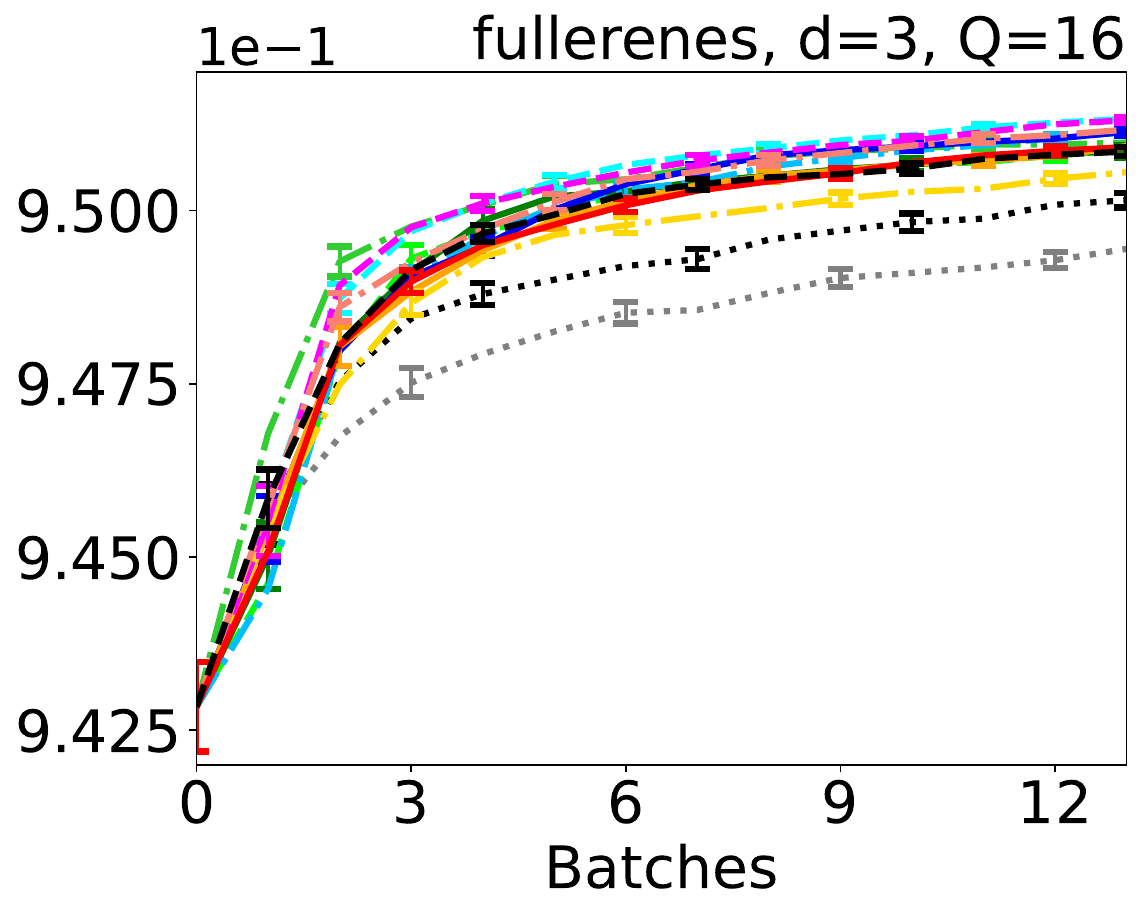}
\includegraphics[height=.25\linewidth]{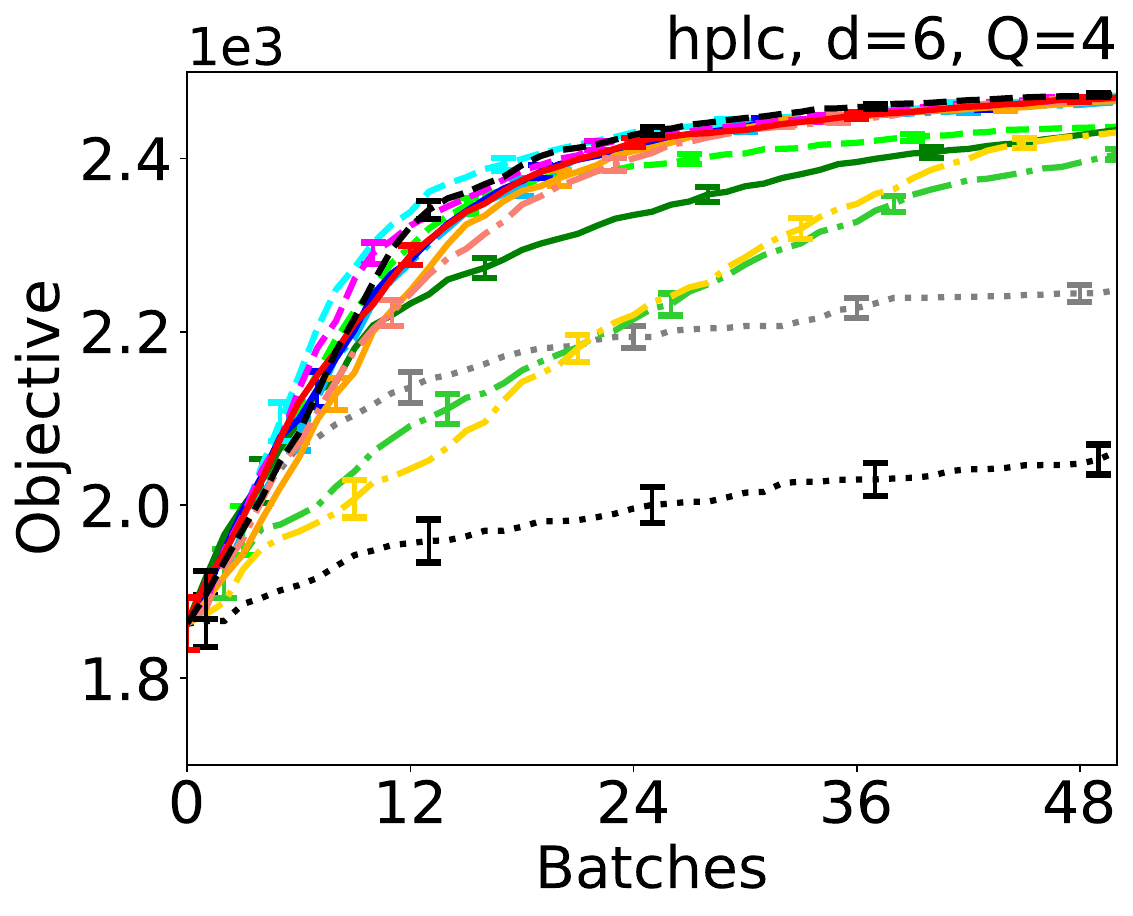}
\includegraphics[height=.25\linewidth]{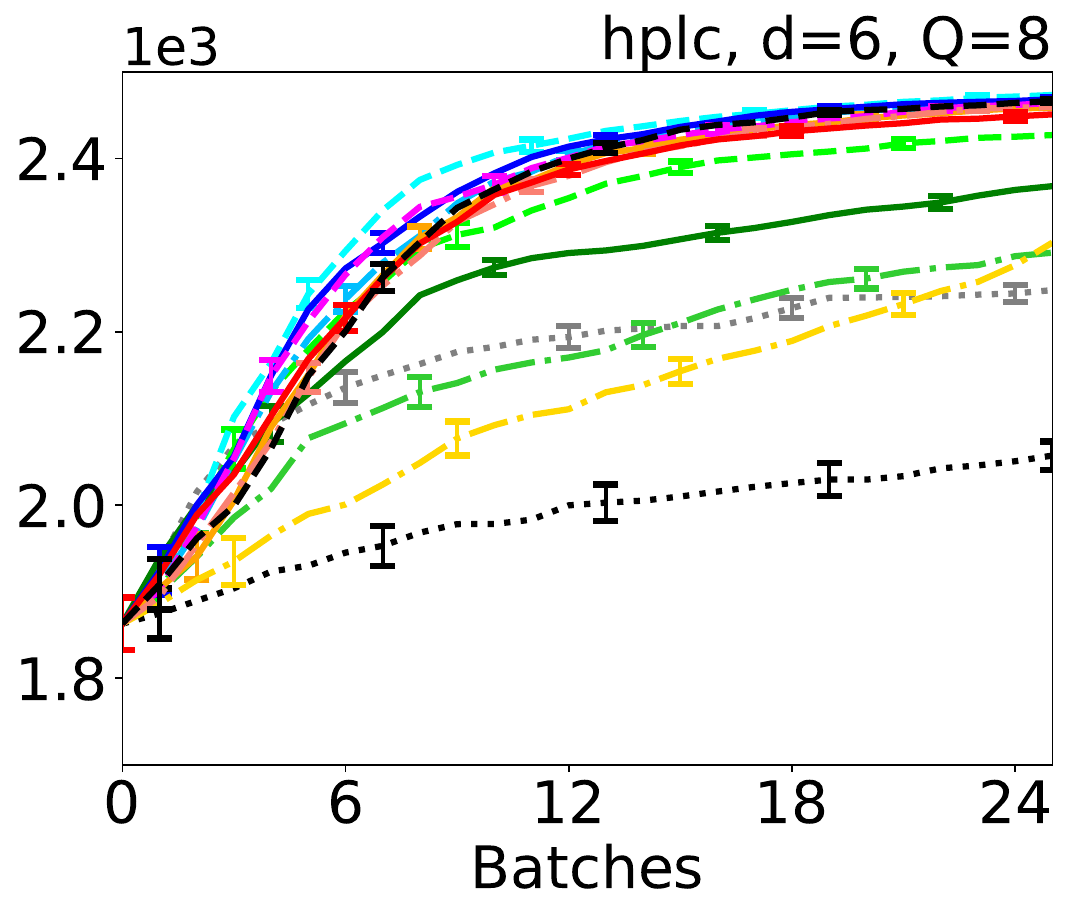}
\includegraphics[height=.25\linewidth]{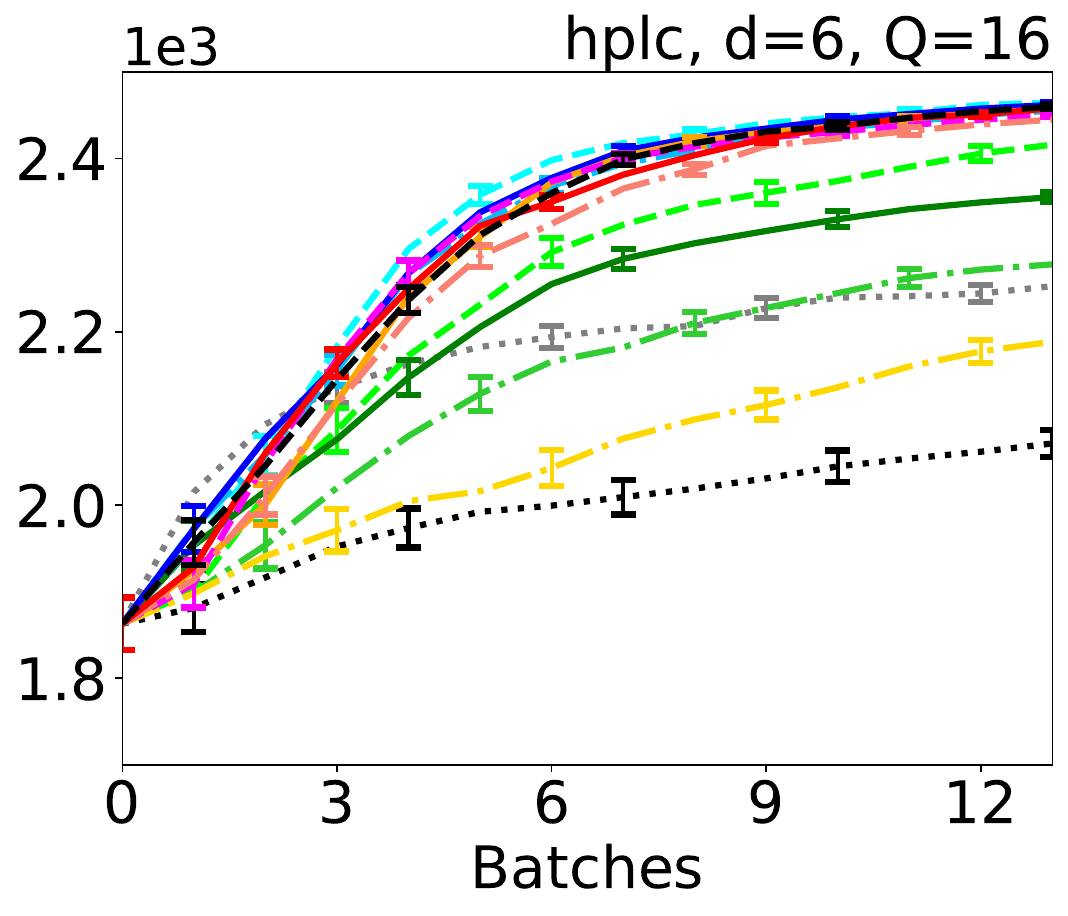}
\caption{
Result of experiments on emulators with synchronous setting. The lines and error bars mean average and standard error of the best objective value $\max_{i\in\left[t\right]}f{\left(\bm x_i\right)}$ across the 100 experiments on each condition. One batch corresponds to $Q$ iterations.
}
\label{fig:result_eml_syn}
\end{figure}

\begin{figure}[t]
\centering
\includegraphics[width=.9\linewidth]{figures/legend.pdf}\\
\includegraphics[height=.25\linewidth]{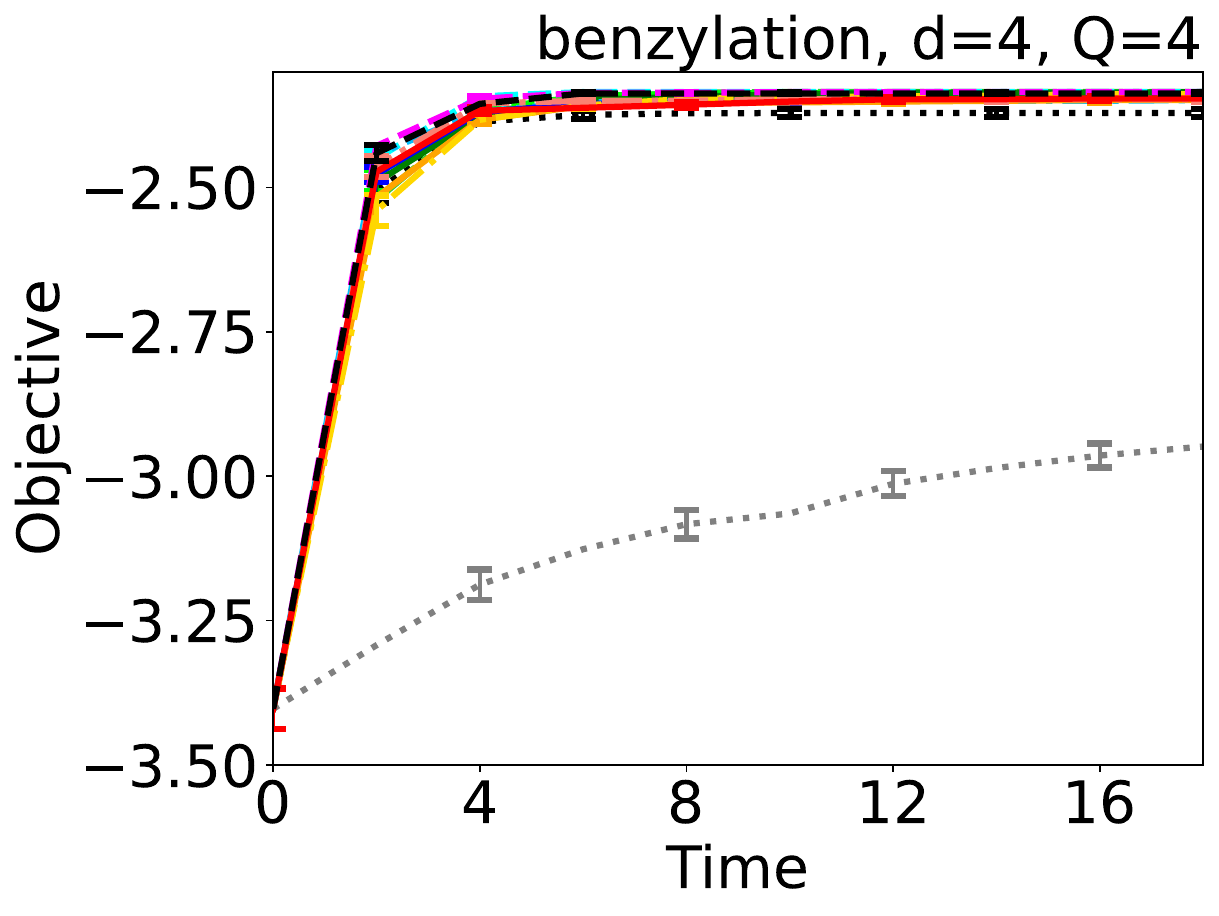}
\includegraphics[height=.25\linewidth]{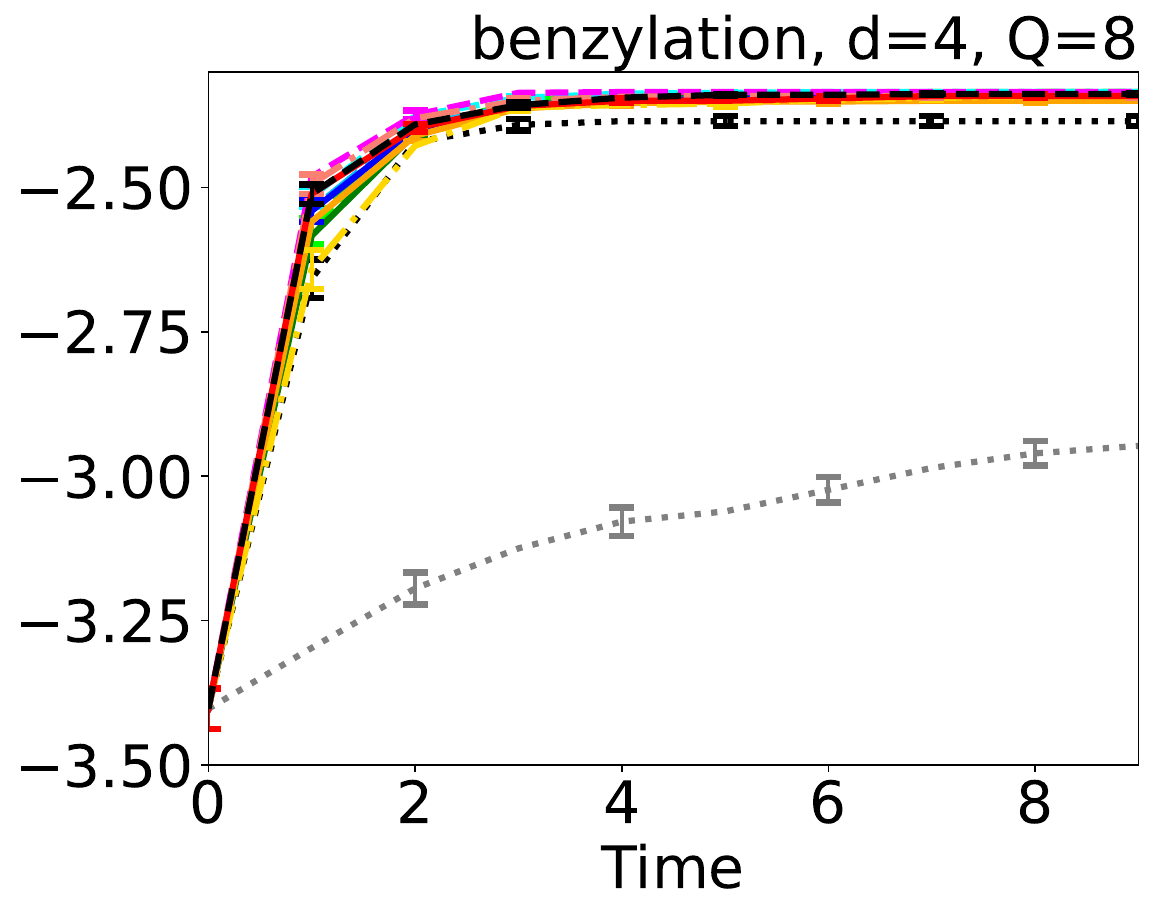}
\includegraphics[height=.25\linewidth]{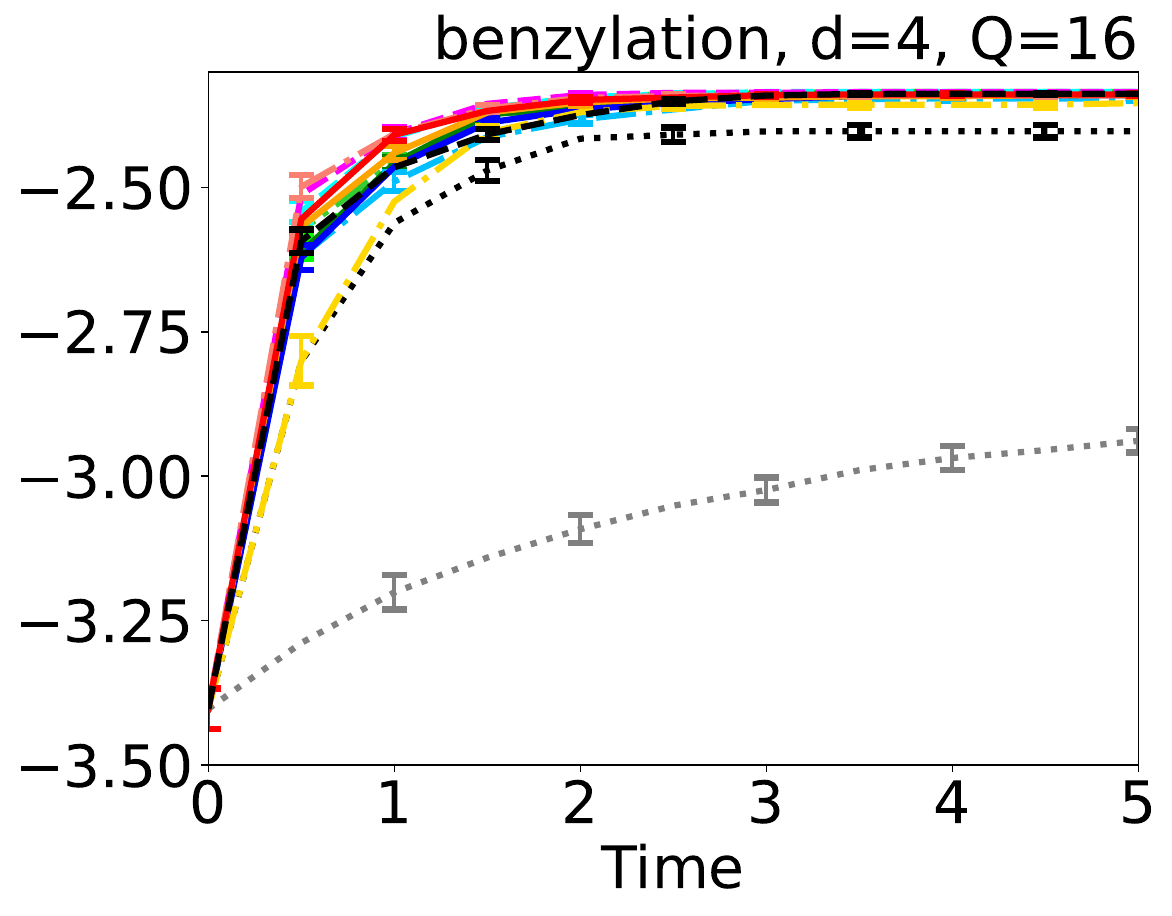}
\includegraphics[height=.25\linewidth]{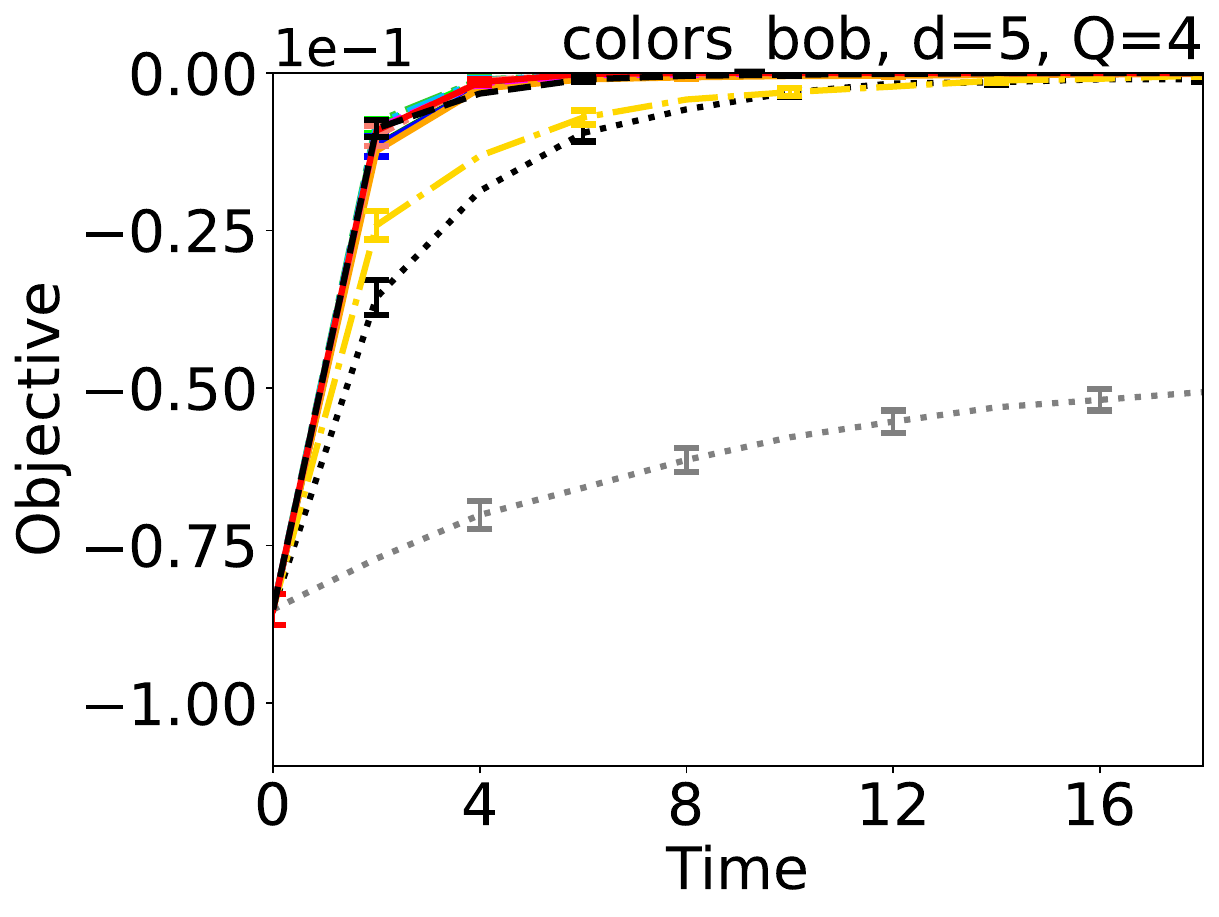}
\includegraphics[height=.25\linewidth]{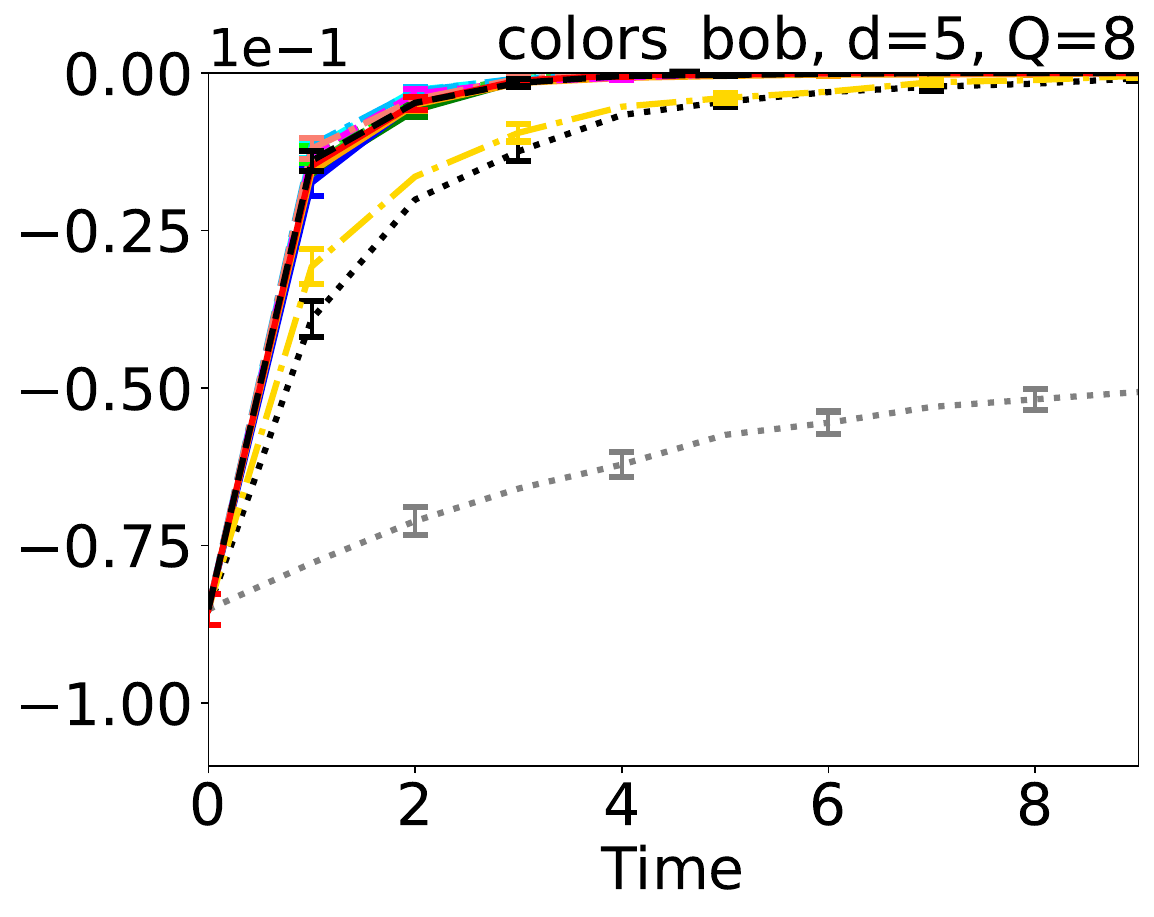}
\includegraphics[height=.25\linewidth]{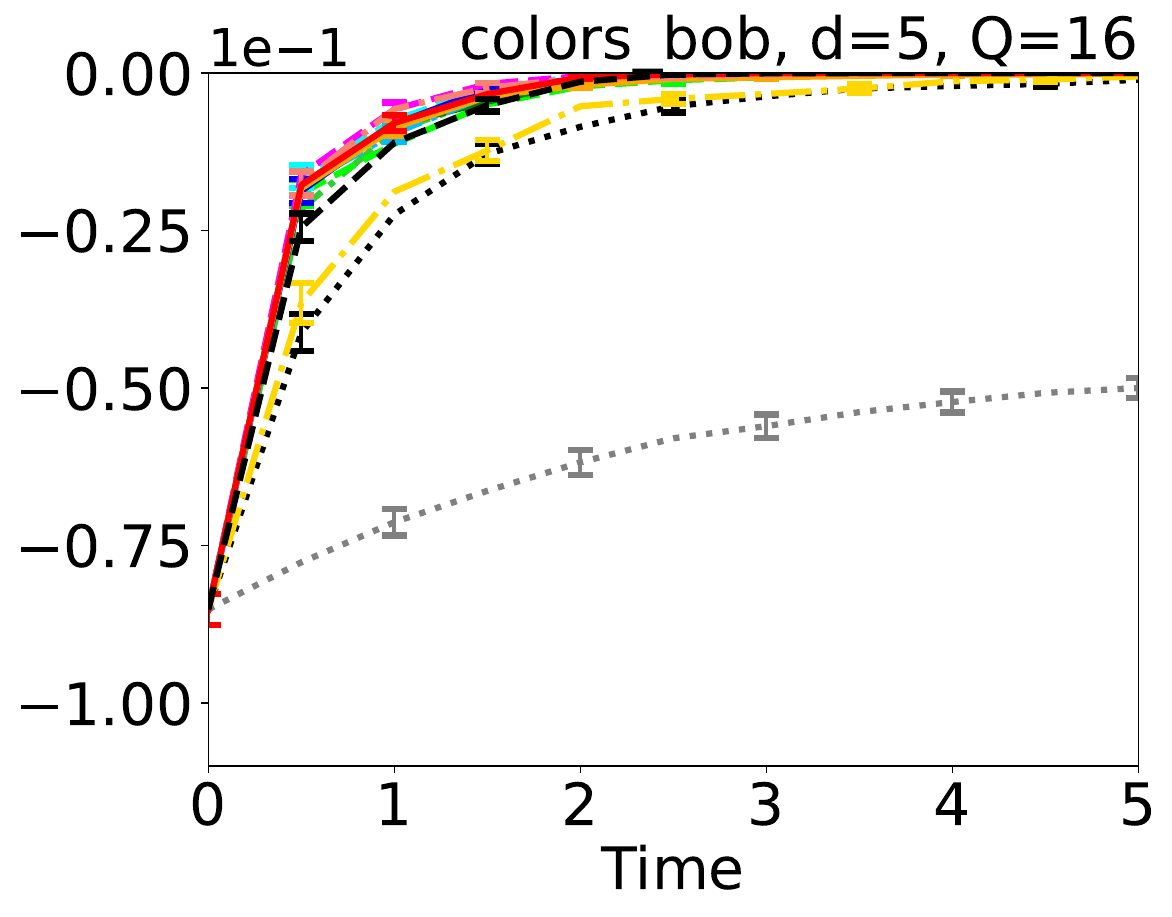}
\includegraphics[height=.25\linewidth]{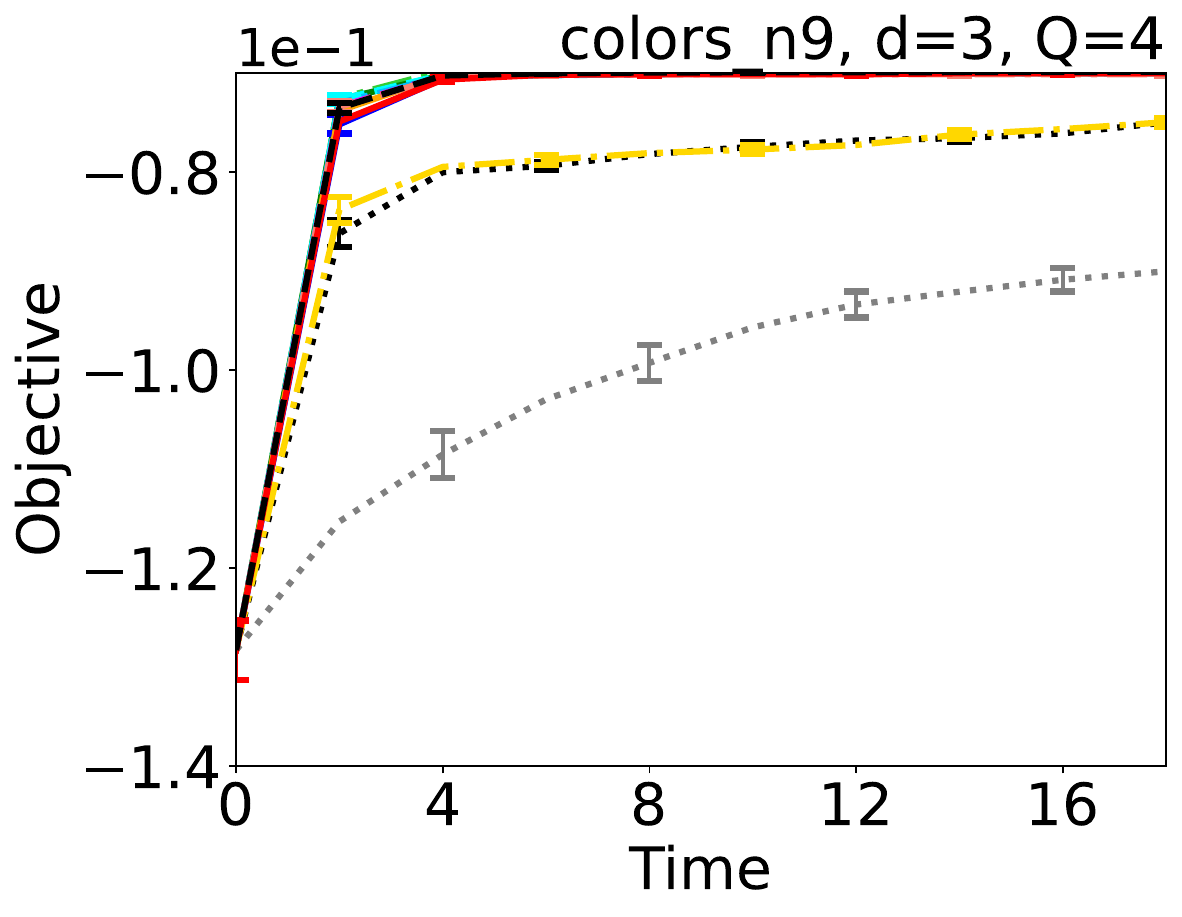}
\includegraphics[height=.25\linewidth]{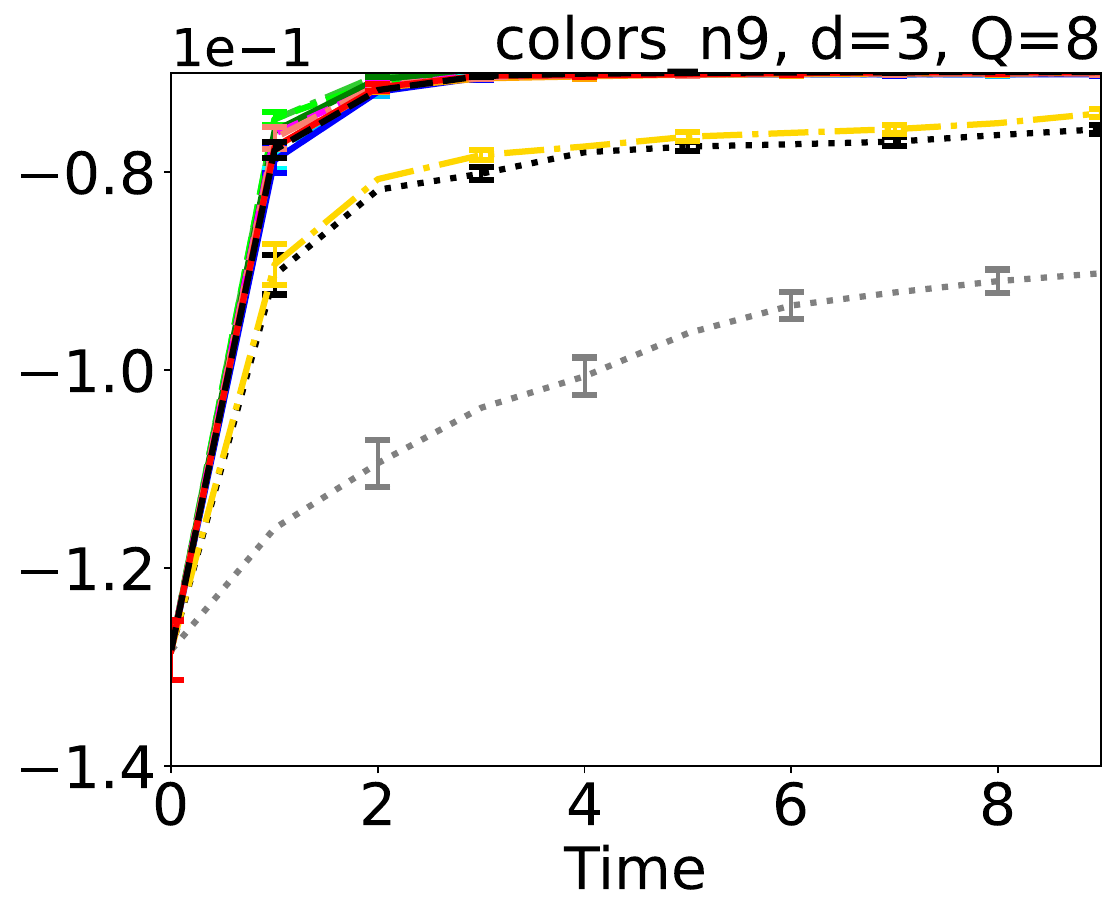}
\includegraphics[height=.25\linewidth]{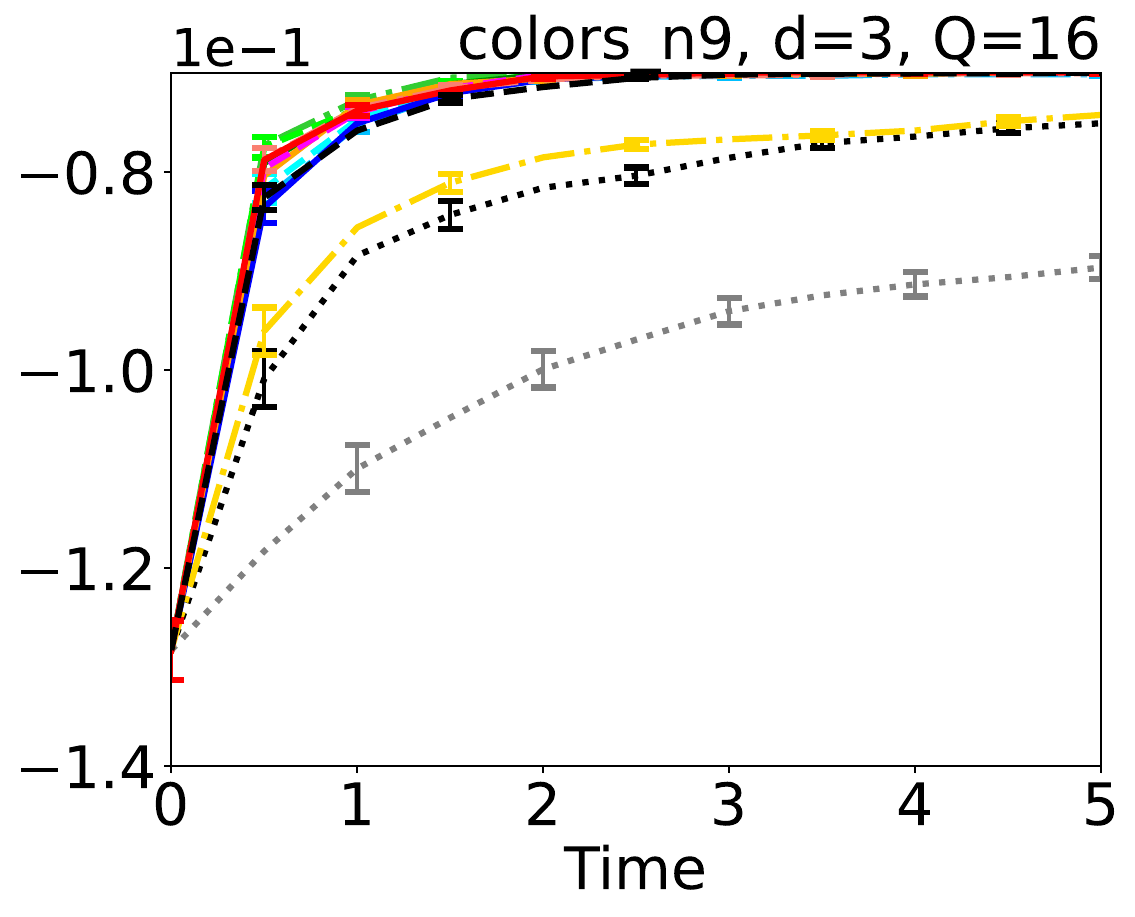}
\includegraphics[height=.25\linewidth]{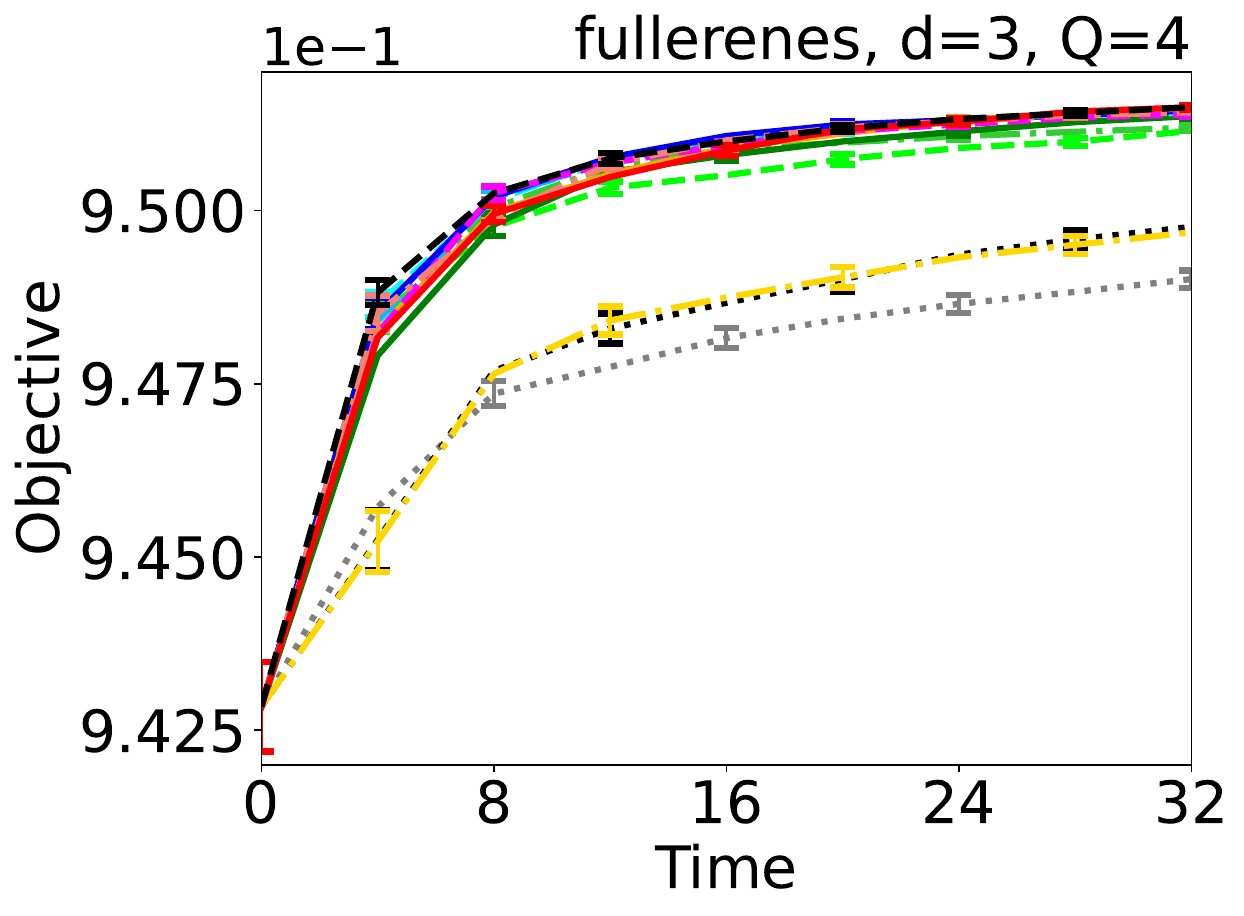}
\includegraphics[height=.25\linewidth]{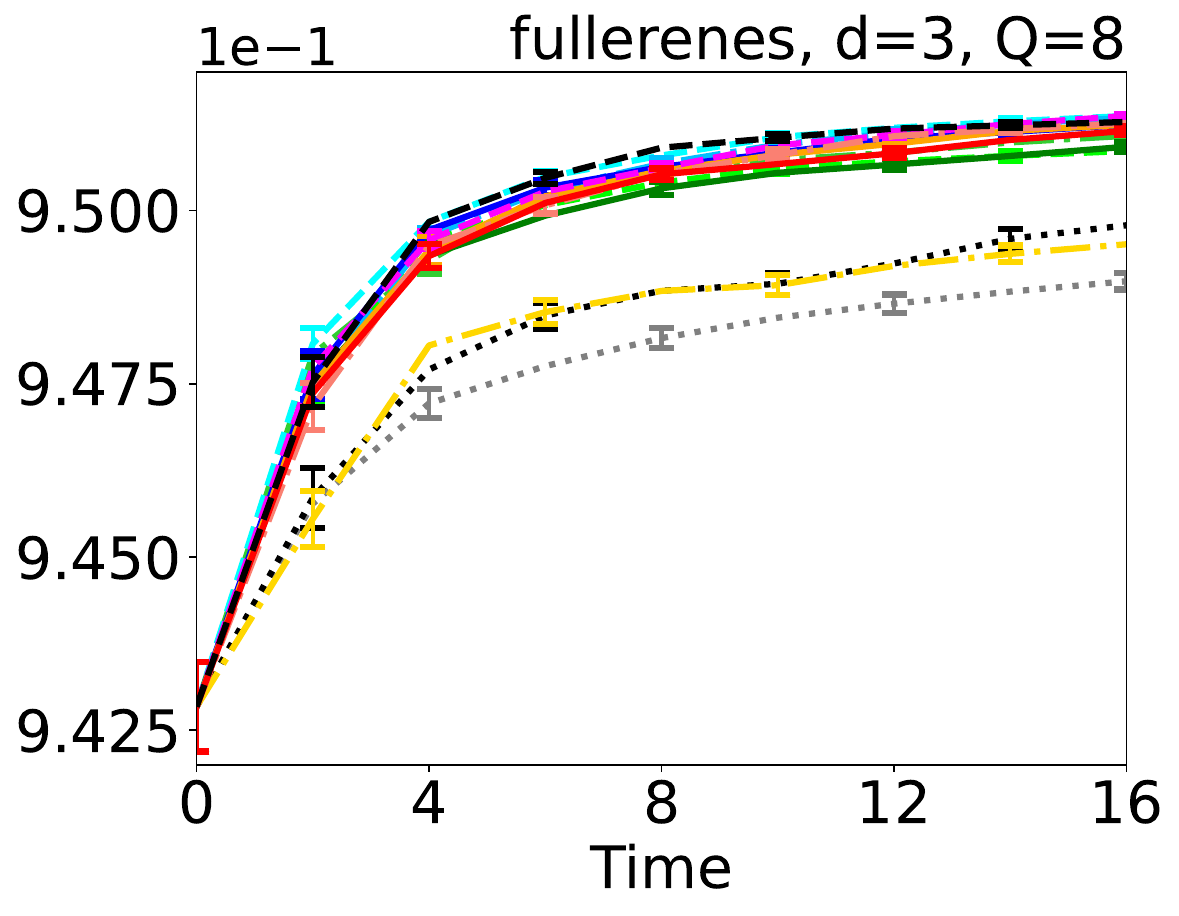}
\includegraphics[height=.25\linewidth]{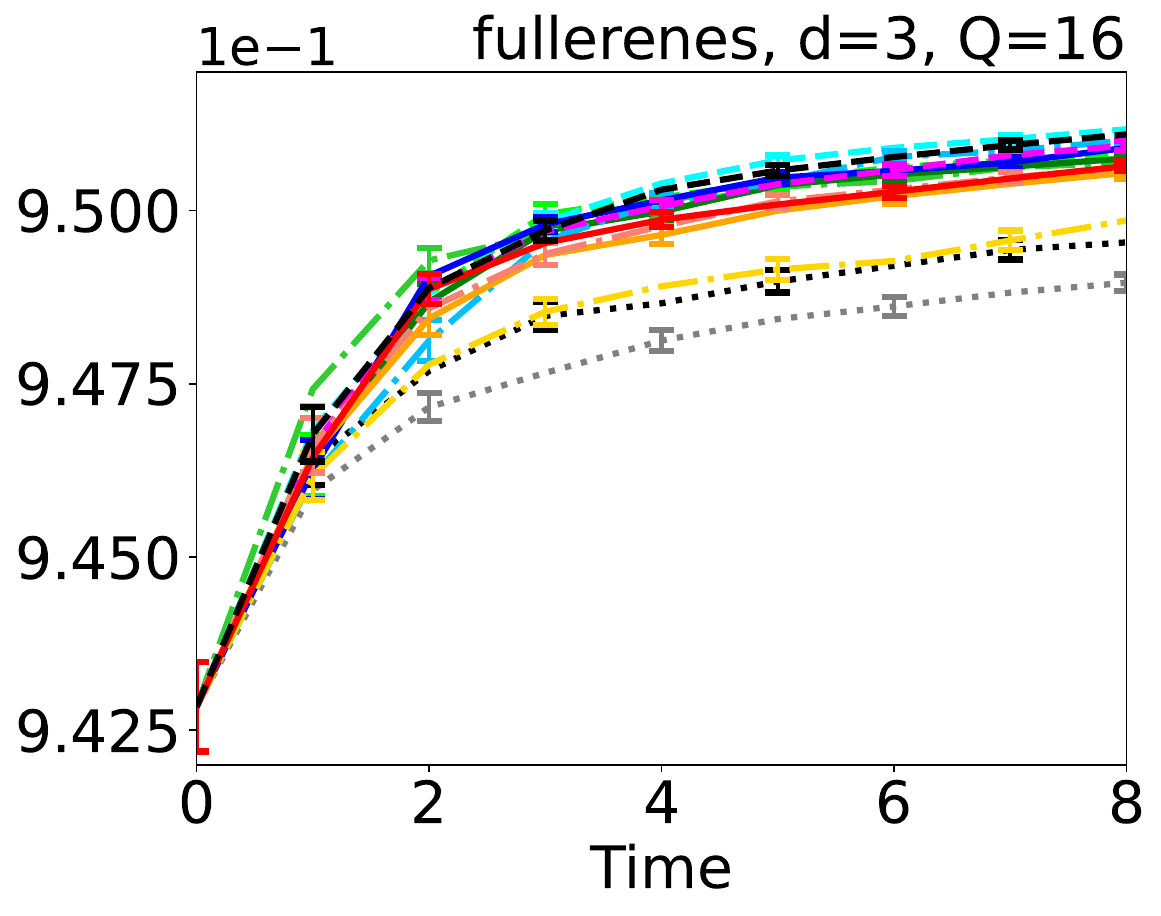}
\includegraphics[height=.25\linewidth]{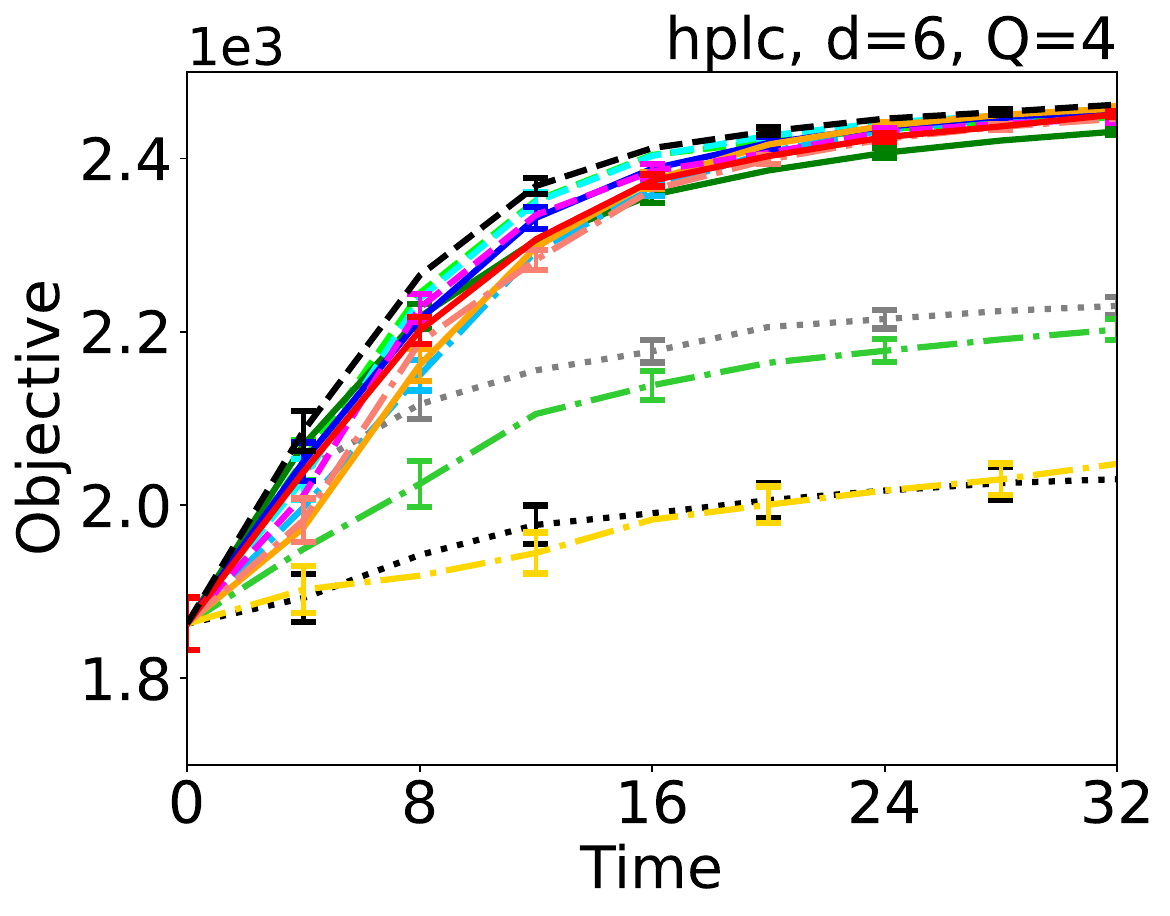}
\includegraphics[height=.25\linewidth]{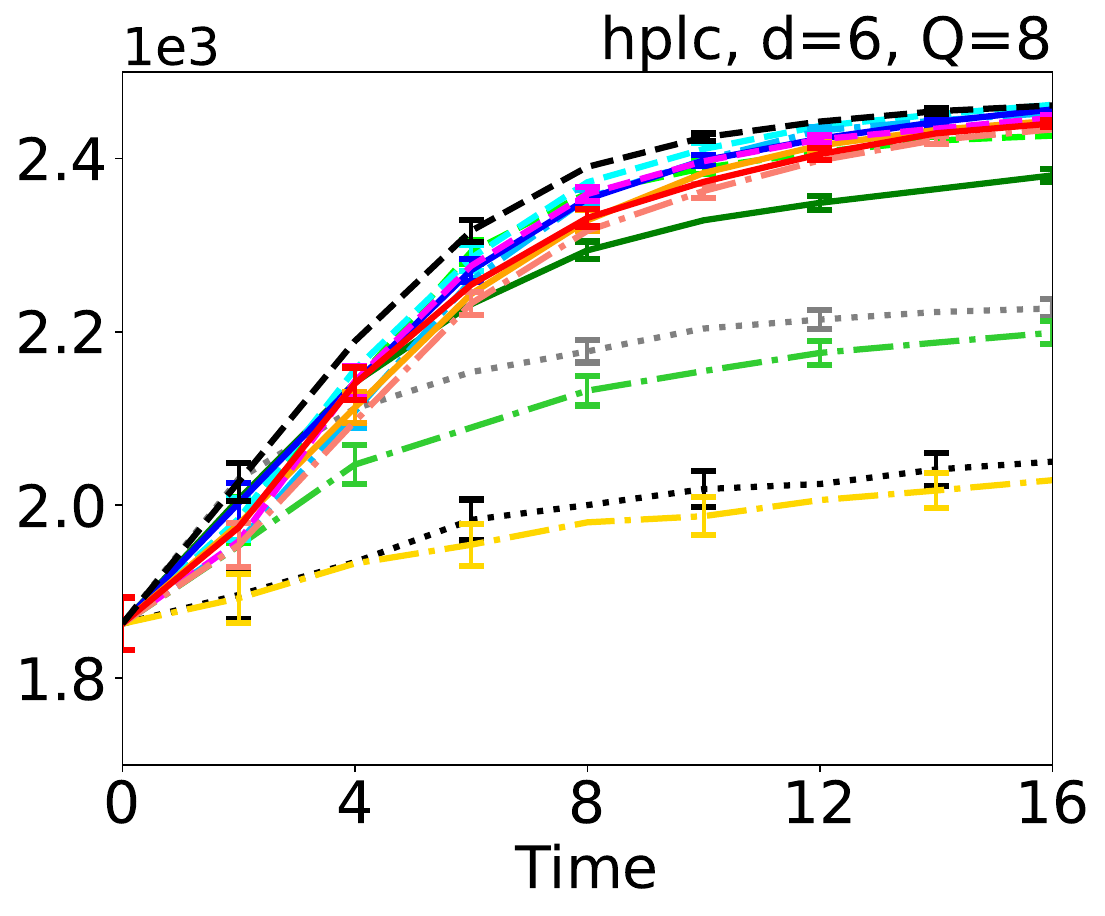}
\includegraphics[height=.25\linewidth]{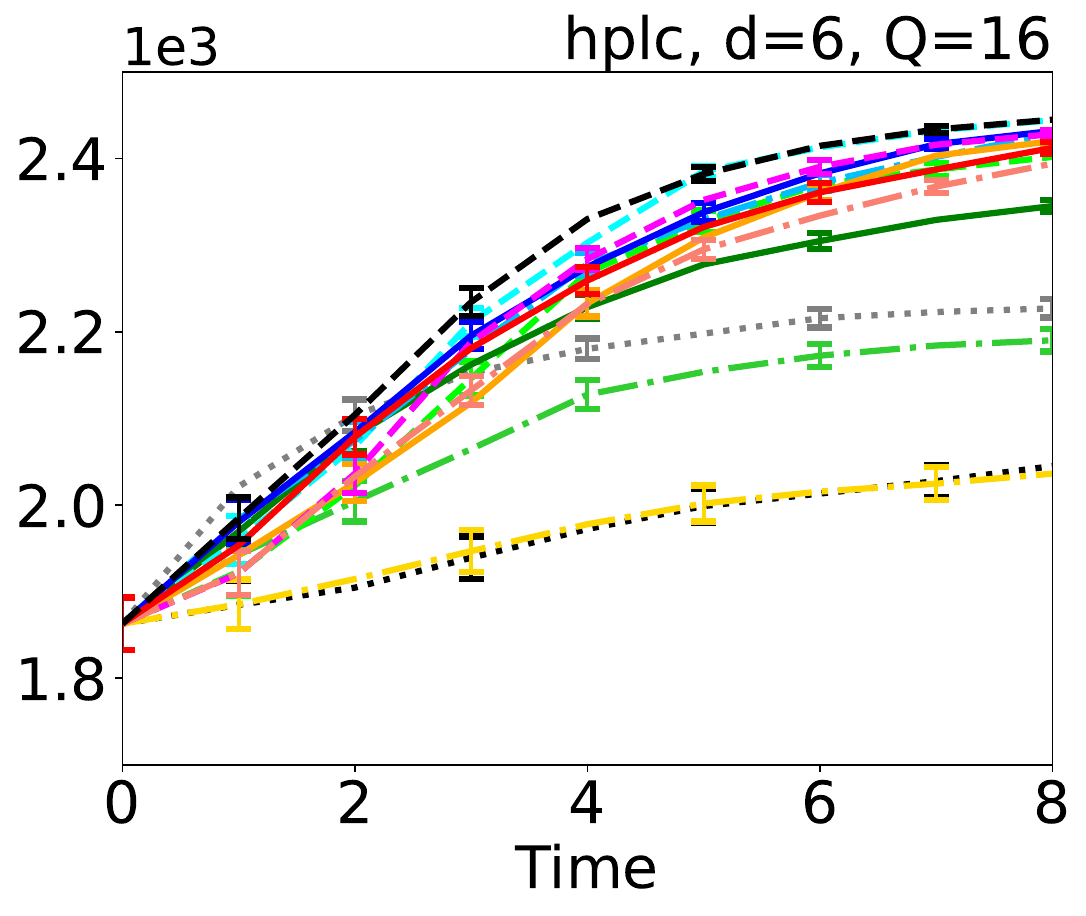}
\caption{
Result of experiments on emulators with asynchronous setting. The lines and error bars mean average and standard error of the best objective value $\max_{i\in\left[t\right]}f{\left(\bm x_i\right)}$ across the 100 experiments on each condition.
}
\label{fig:result_eml_asyn}
\end{figure}
\begin{figure}[t]
\centering
\includegraphics[width=.9\linewidth]{figures/legend.pdf}\\
\includegraphics[height=.25\linewidth]{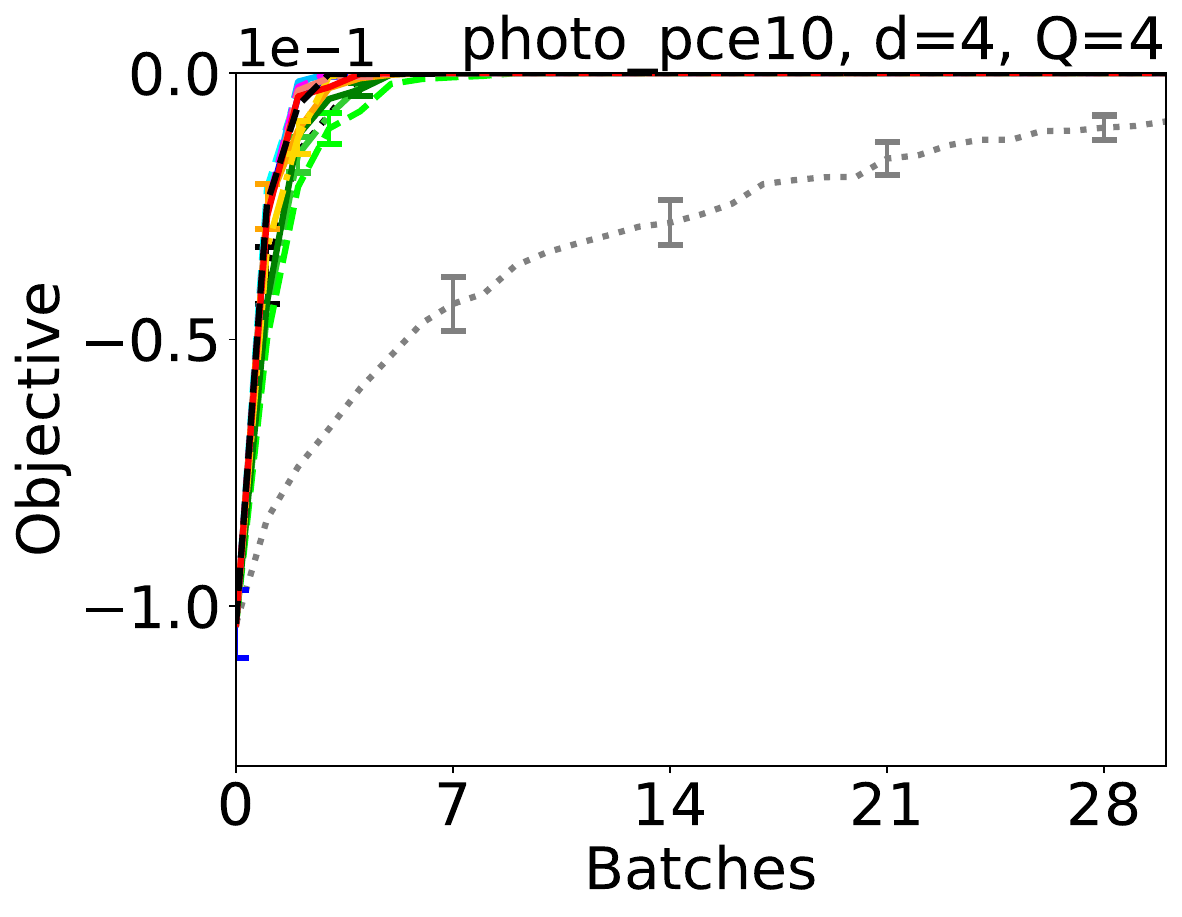}
\includegraphics[height=.25\linewidth]{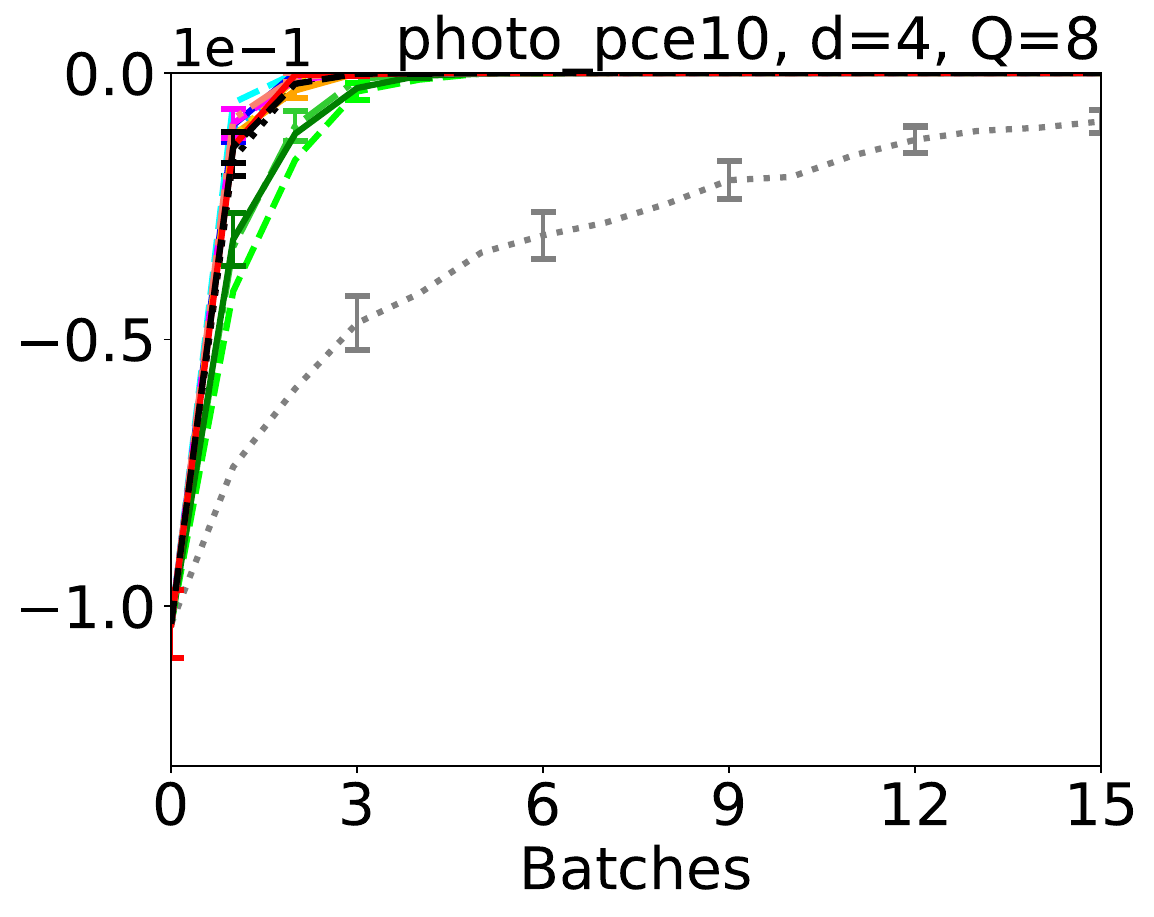}
\includegraphics[height=.25\linewidth]{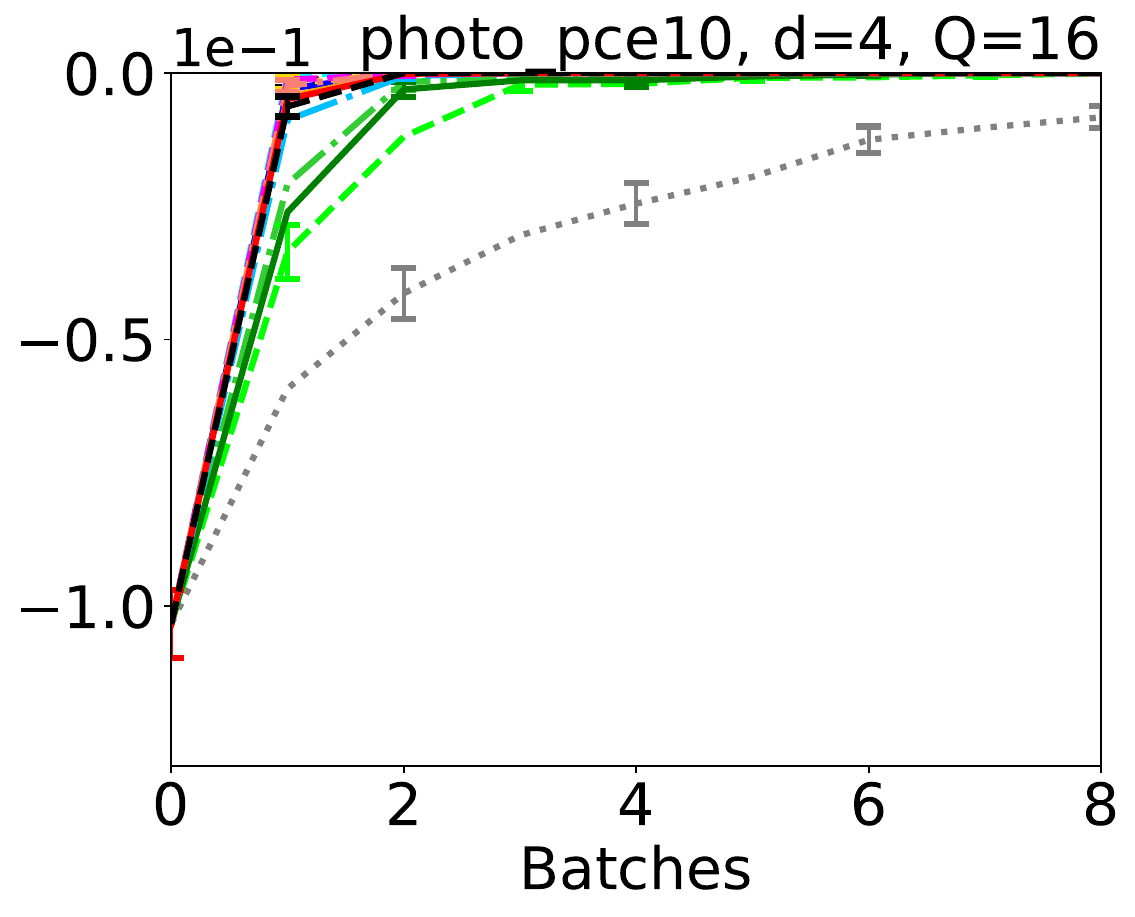}
\includegraphics[height=.25\linewidth]{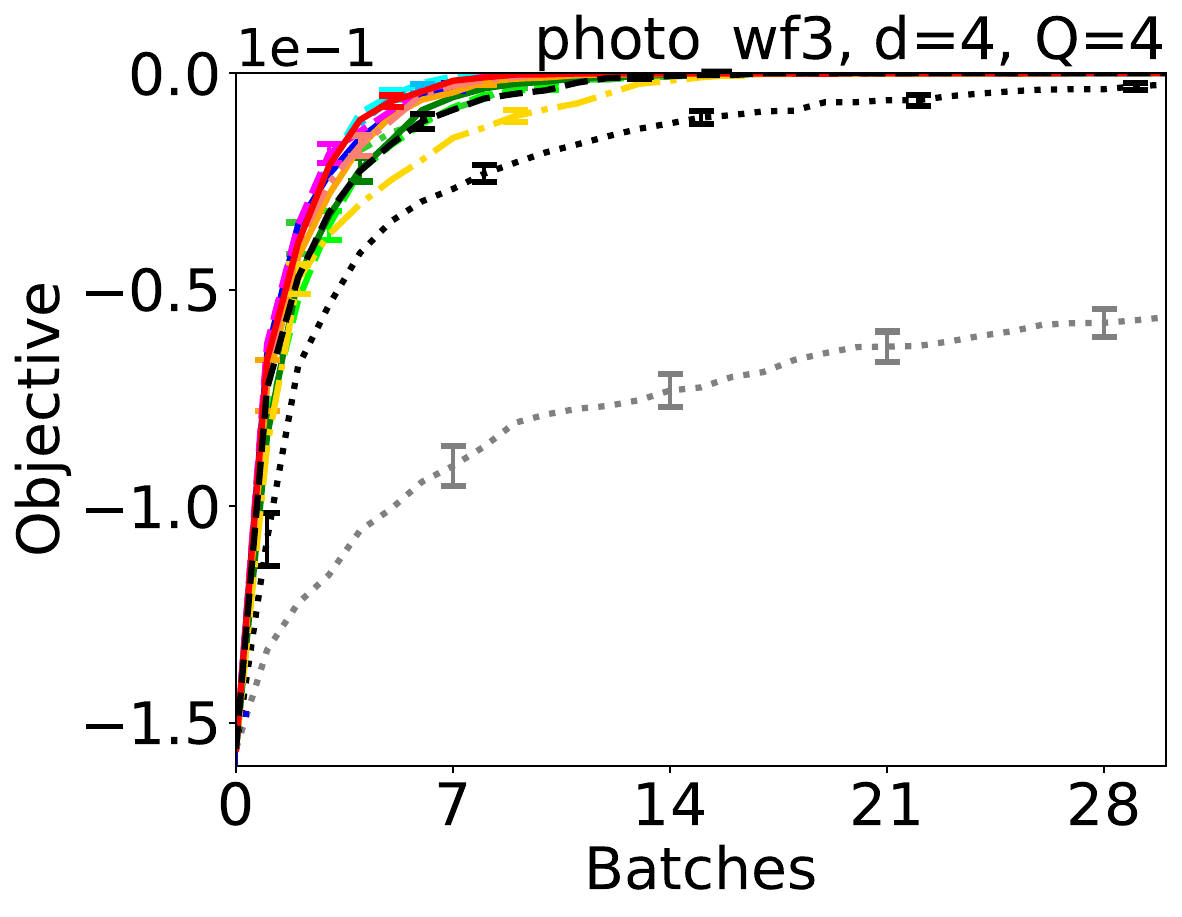}
\includegraphics[height=.25\linewidth]{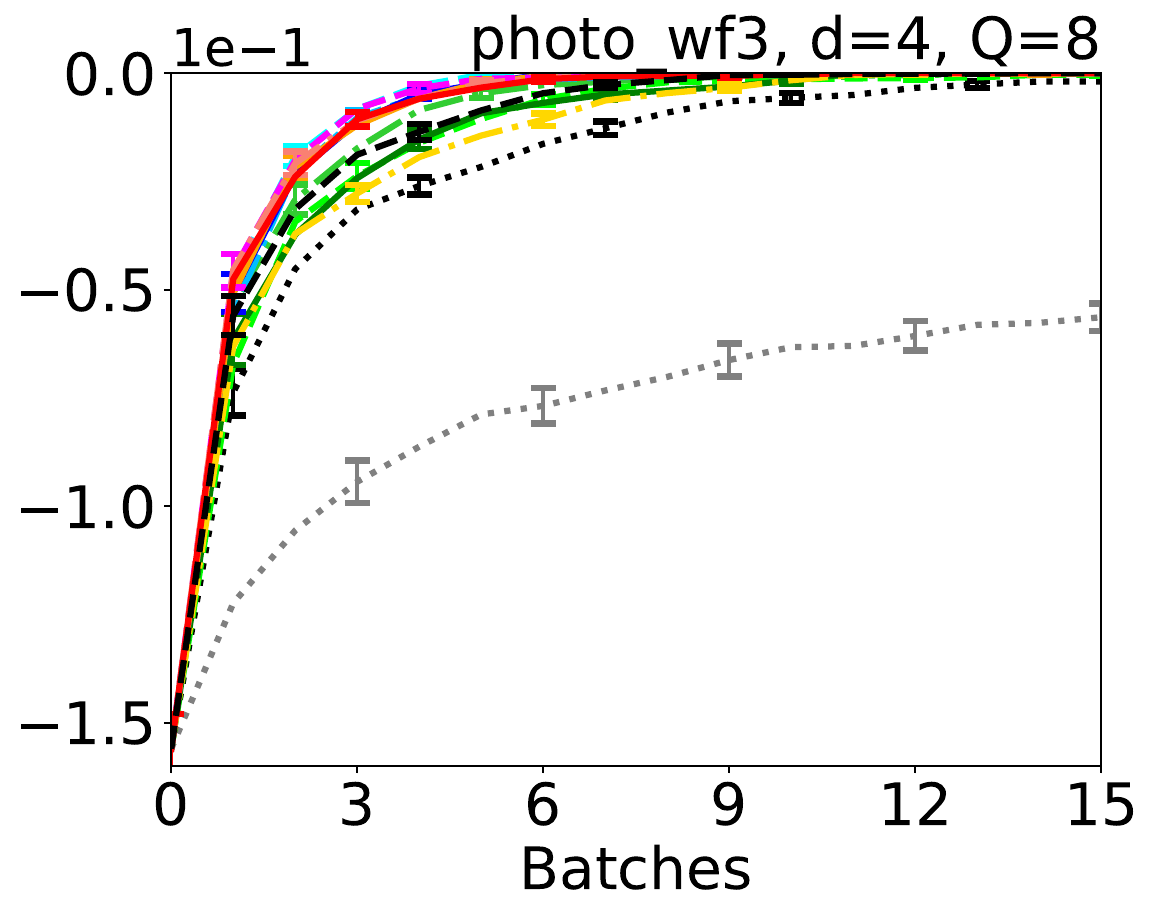}
\includegraphics[height=.25\linewidth]{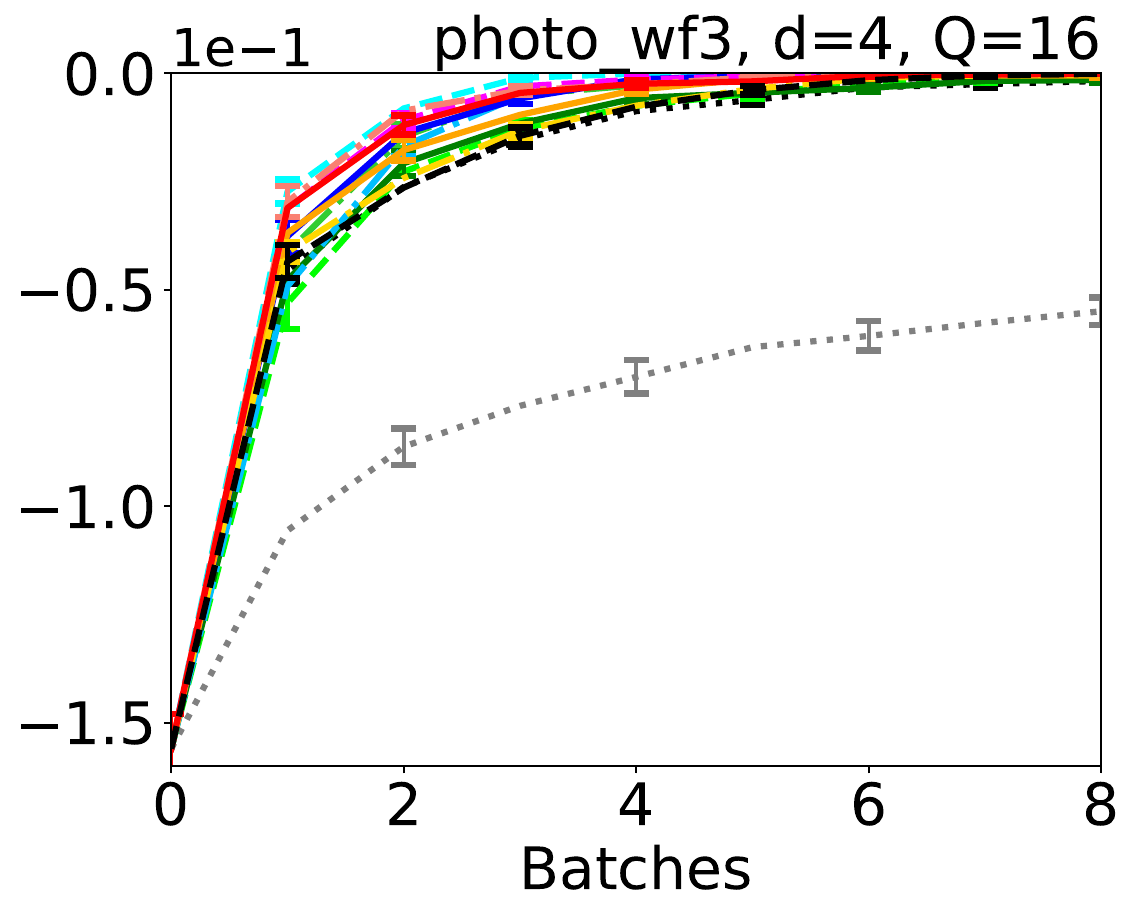}
\includegraphics[height=.25\linewidth]{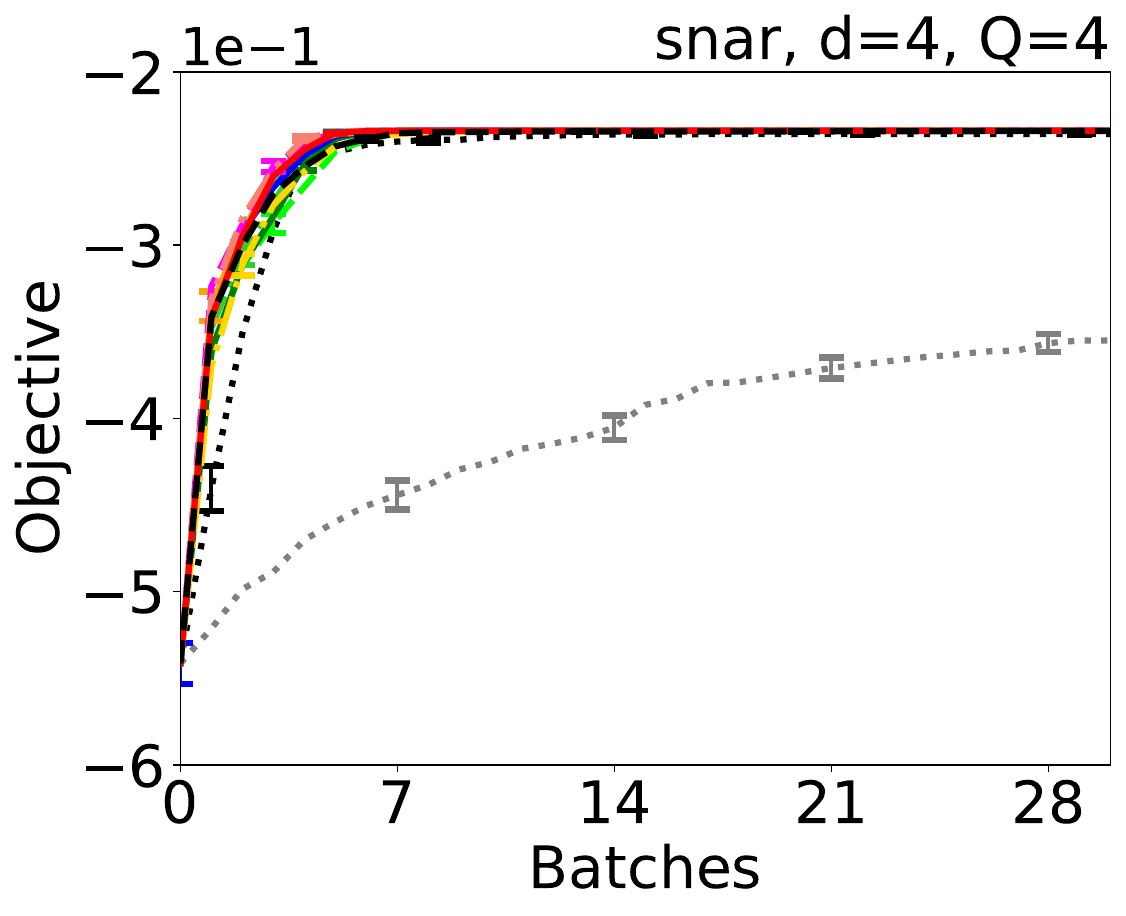}
\includegraphics[height=.25\linewidth]{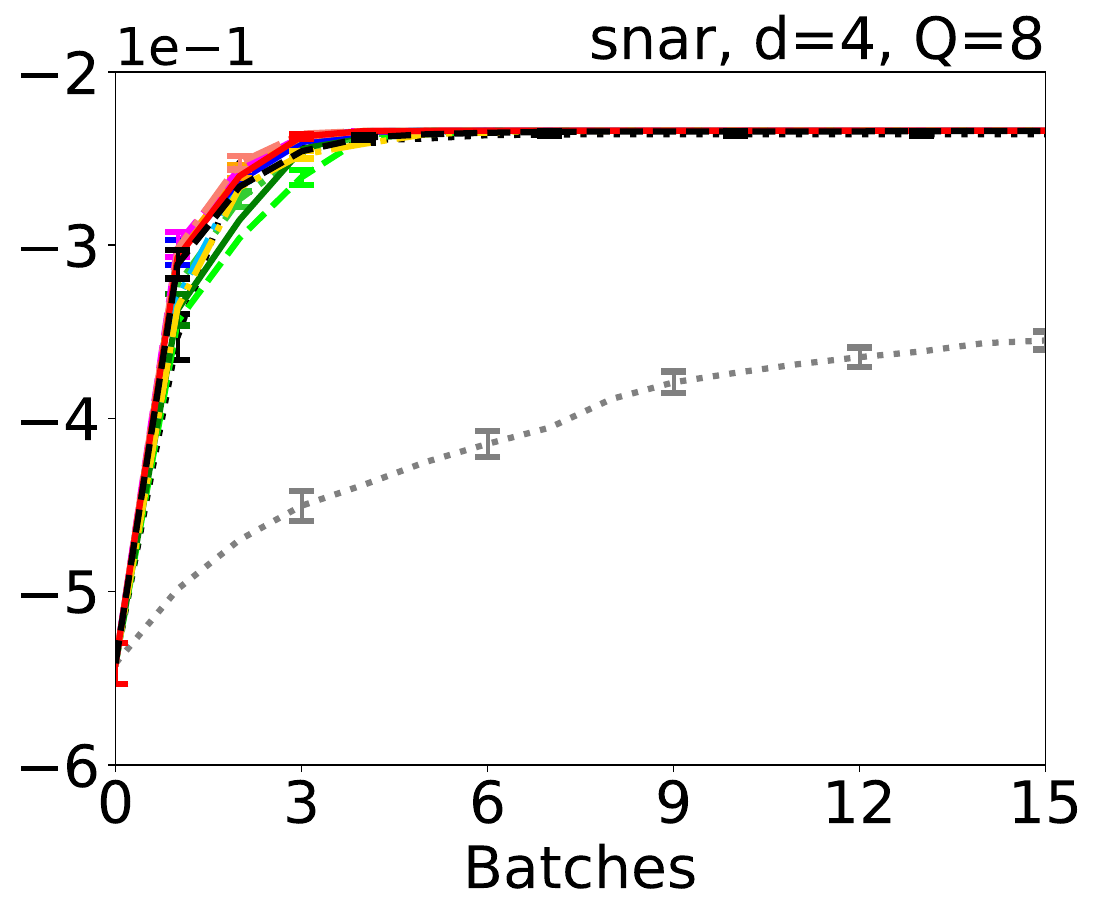}
\includegraphics[height=.25\linewidth]{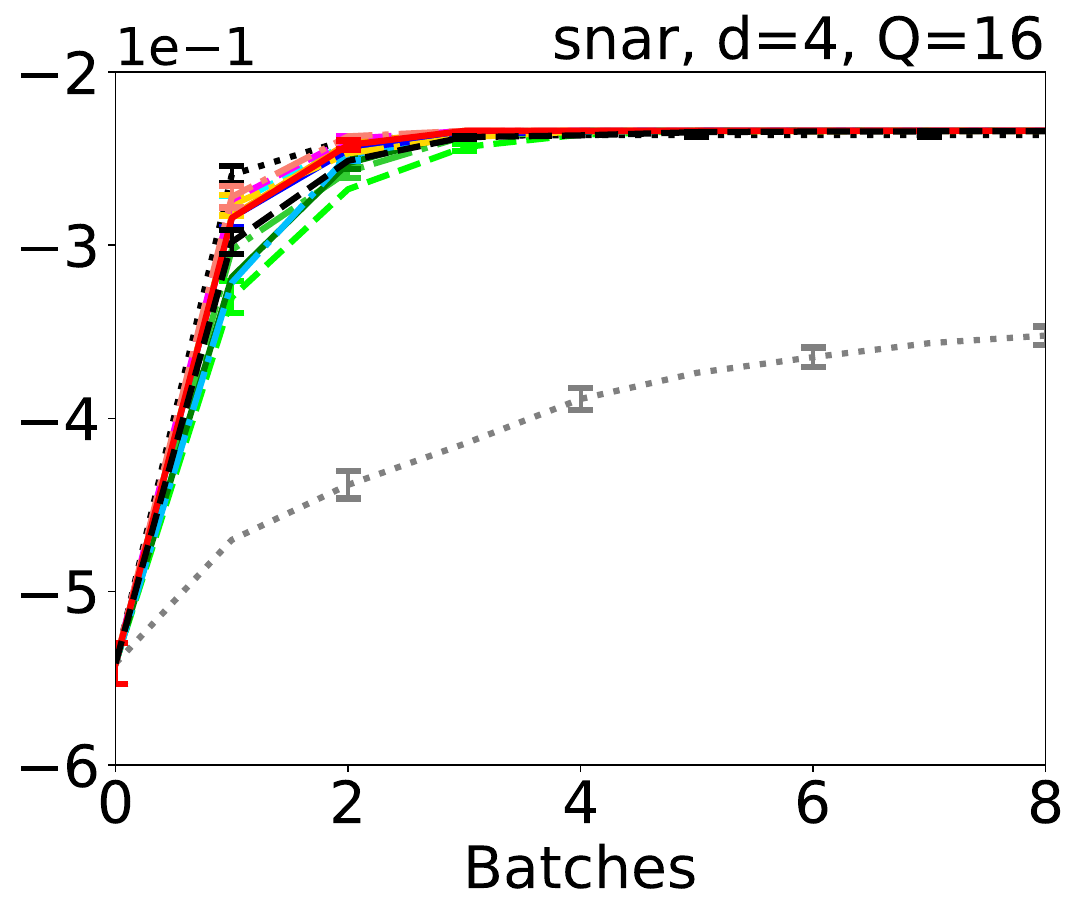}
\includegraphics[height=.25\linewidth]{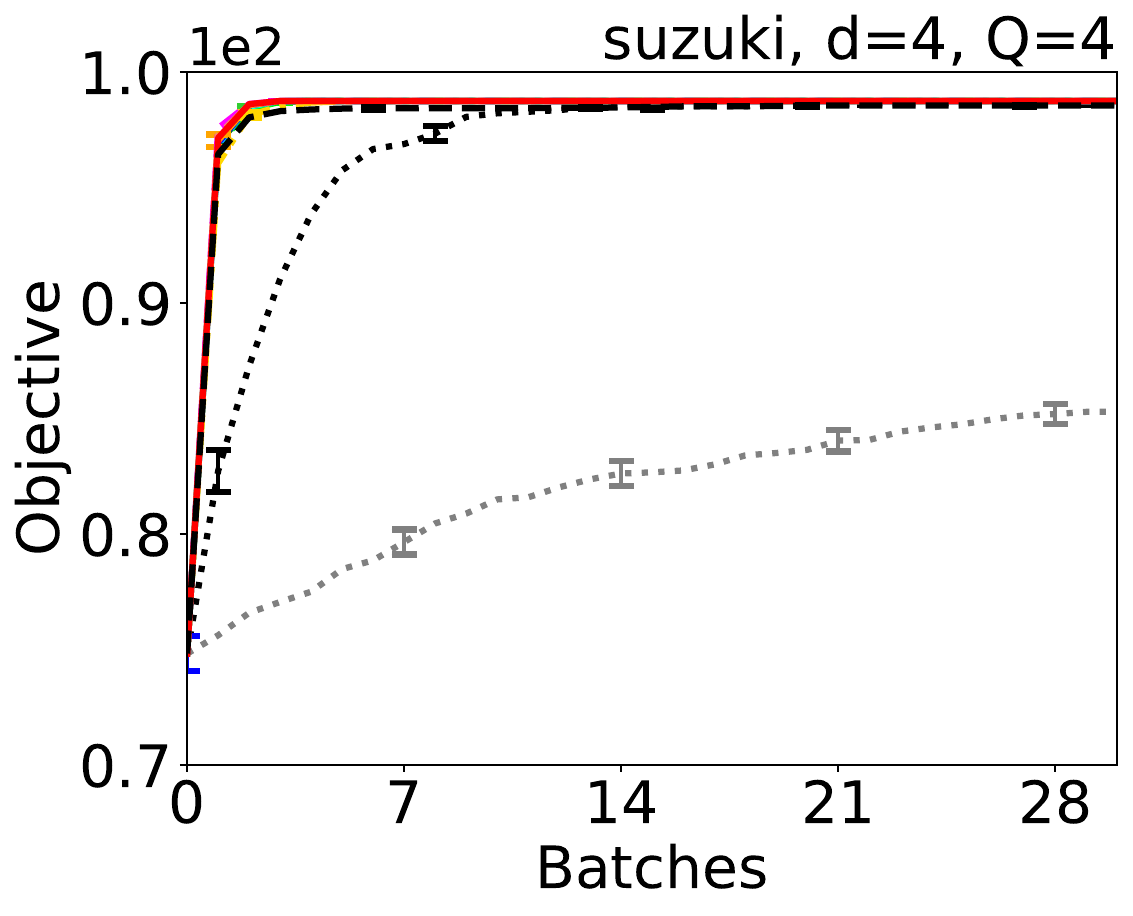}
\includegraphics[height=.25\linewidth]{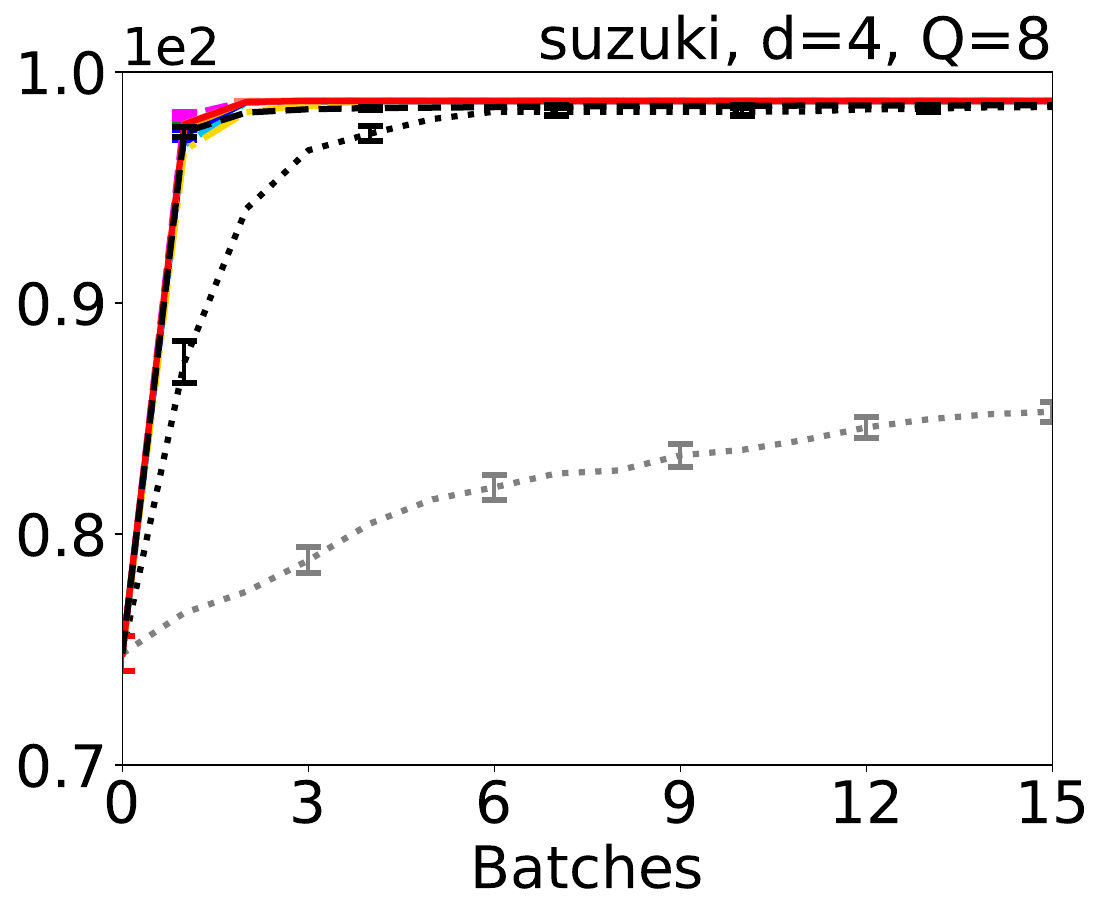}
\includegraphics[height=.25\linewidth]{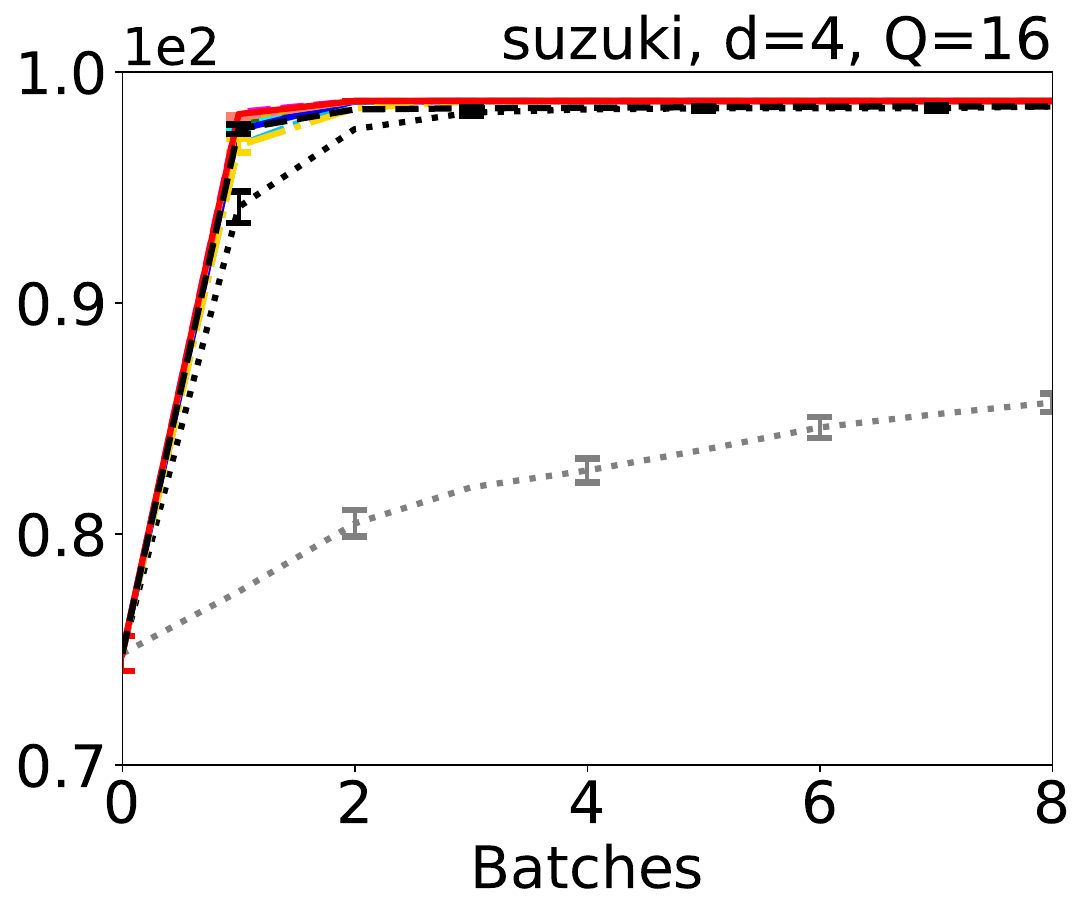}
\caption{
Result of experiments on emulators with synchronous setting. The lines and error bars mean average and standard error of the best objective value $\max_{i\in\left[t\right]}f{\left(\bm x_i\right)}$ across the 100 experiments on each condition. One batch corresponds to $Q$ iterations.
}
\label{fig:result_eml_syn2}
\end{figure}

\begin{figure}[t]
\centering
\includegraphics[width=.9\linewidth]{figures/legend.pdf}\\
\includegraphics[height=.25\linewidth]{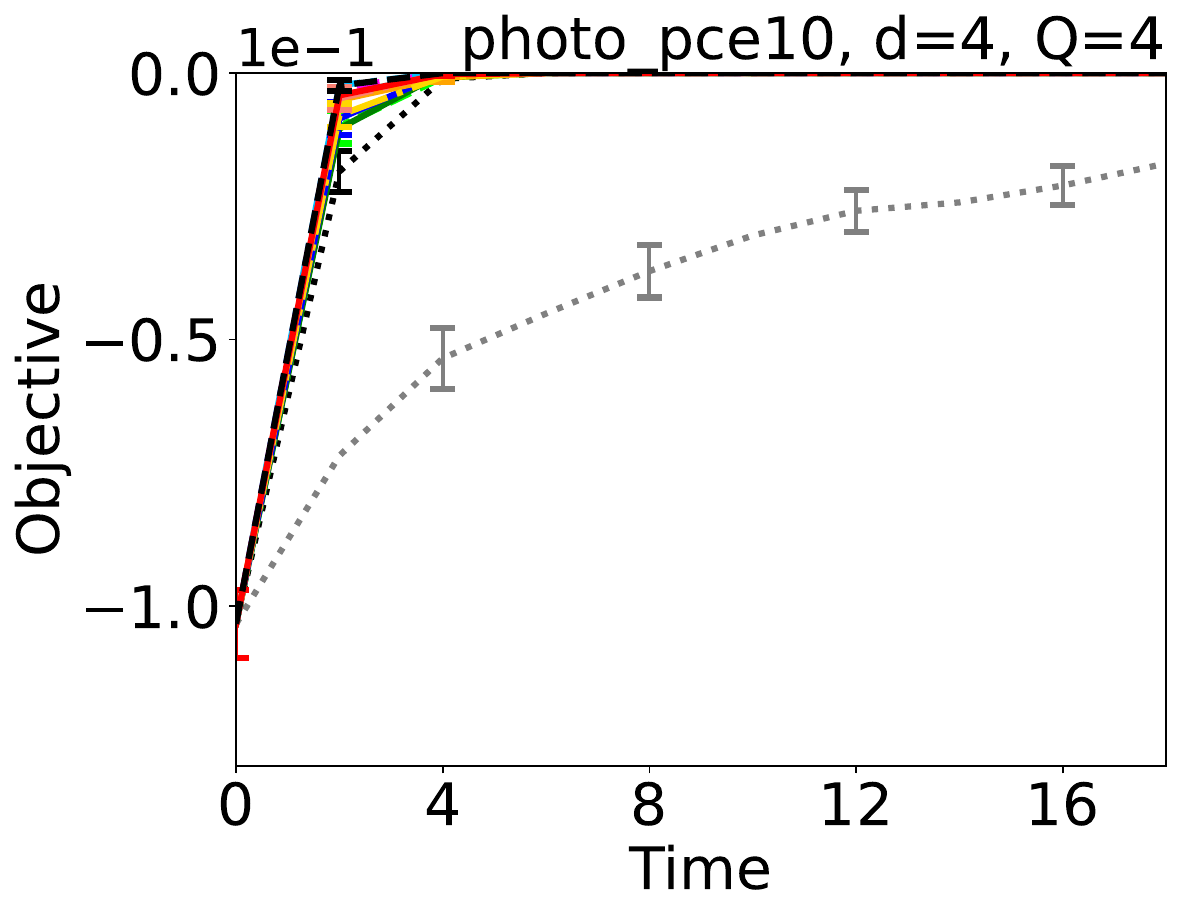}
\includegraphics[height=.25\linewidth]{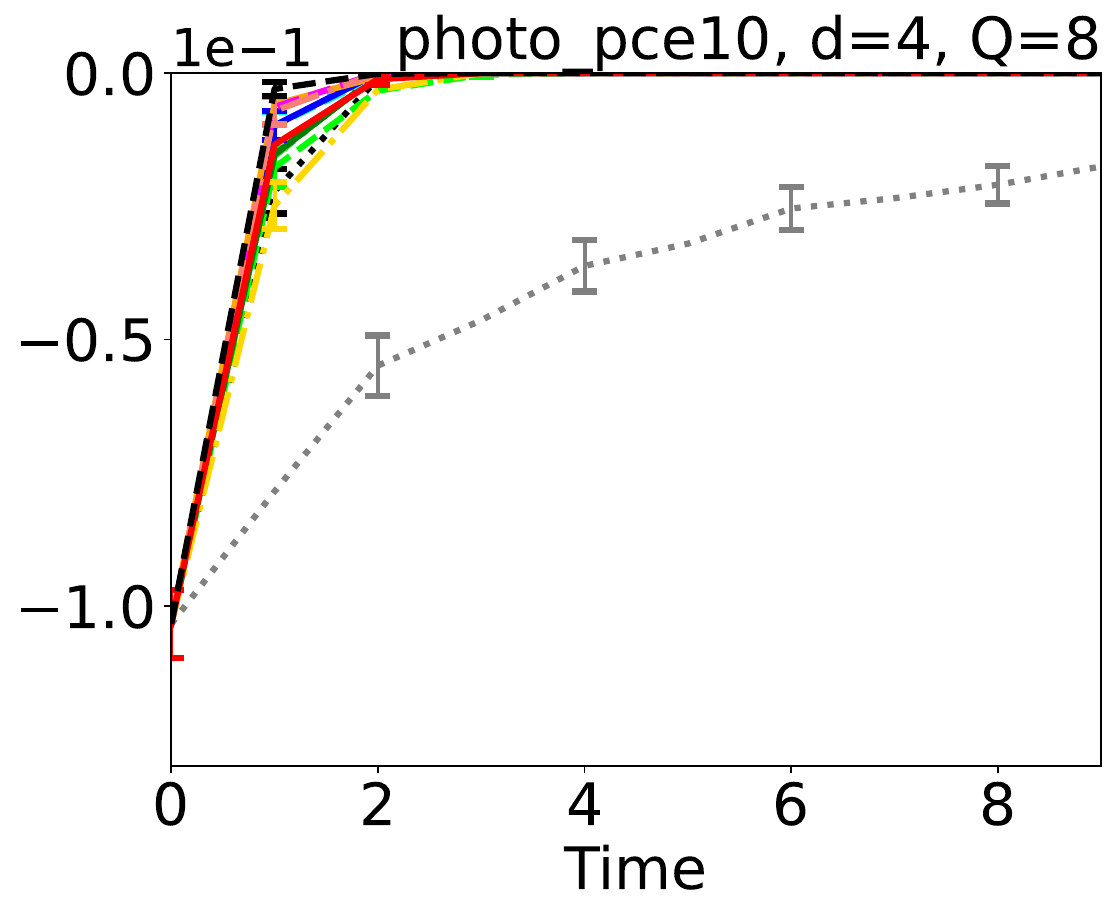}
\includegraphics[height=.25\linewidth]{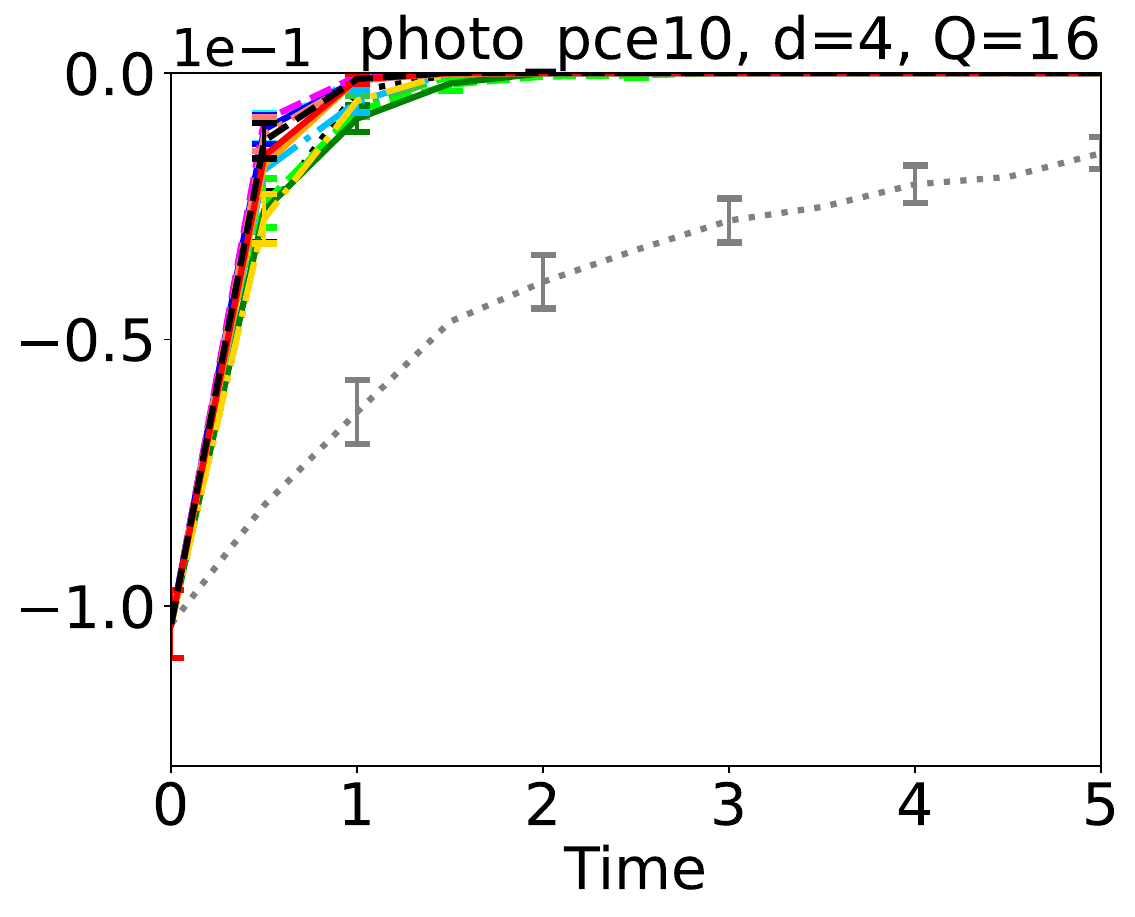}
\includegraphics[height=.25\linewidth]{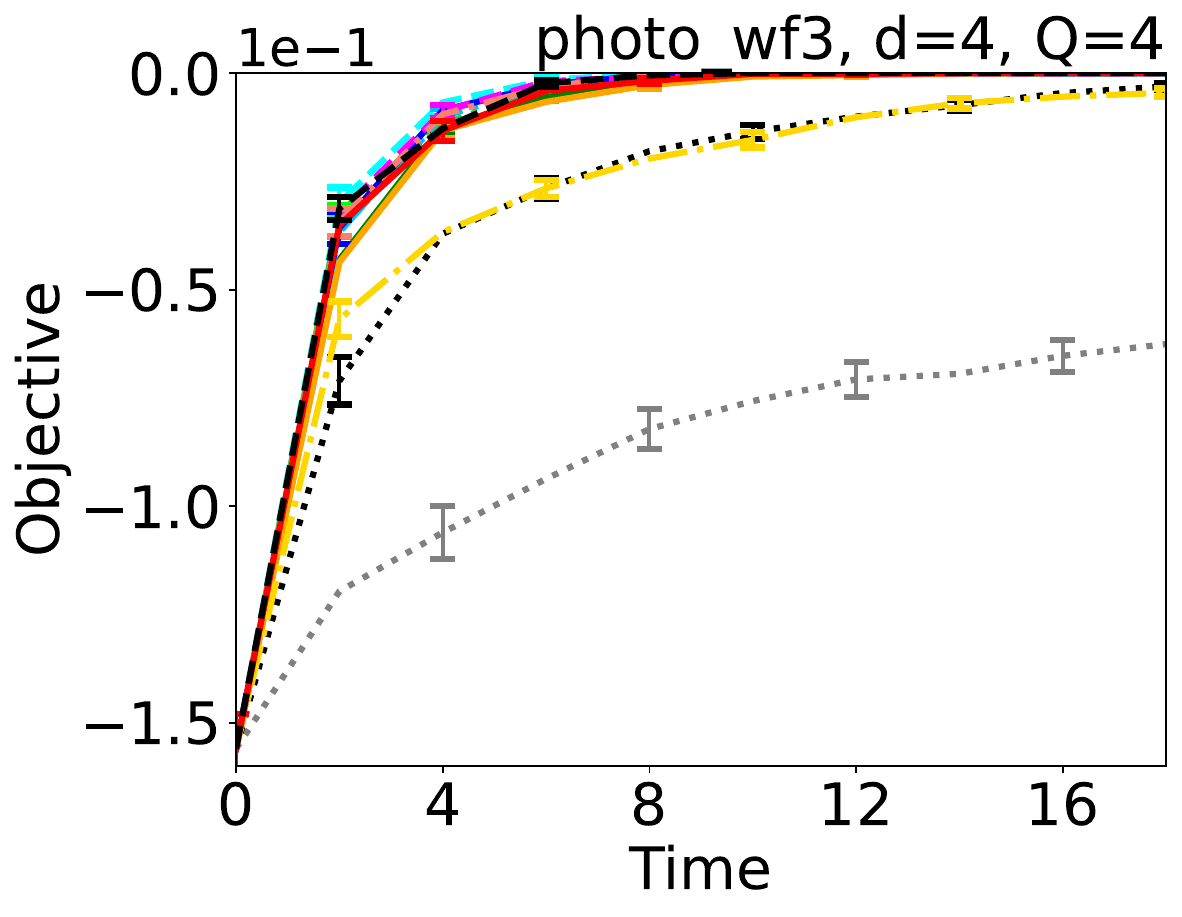}
\includegraphics[height=.25\linewidth]{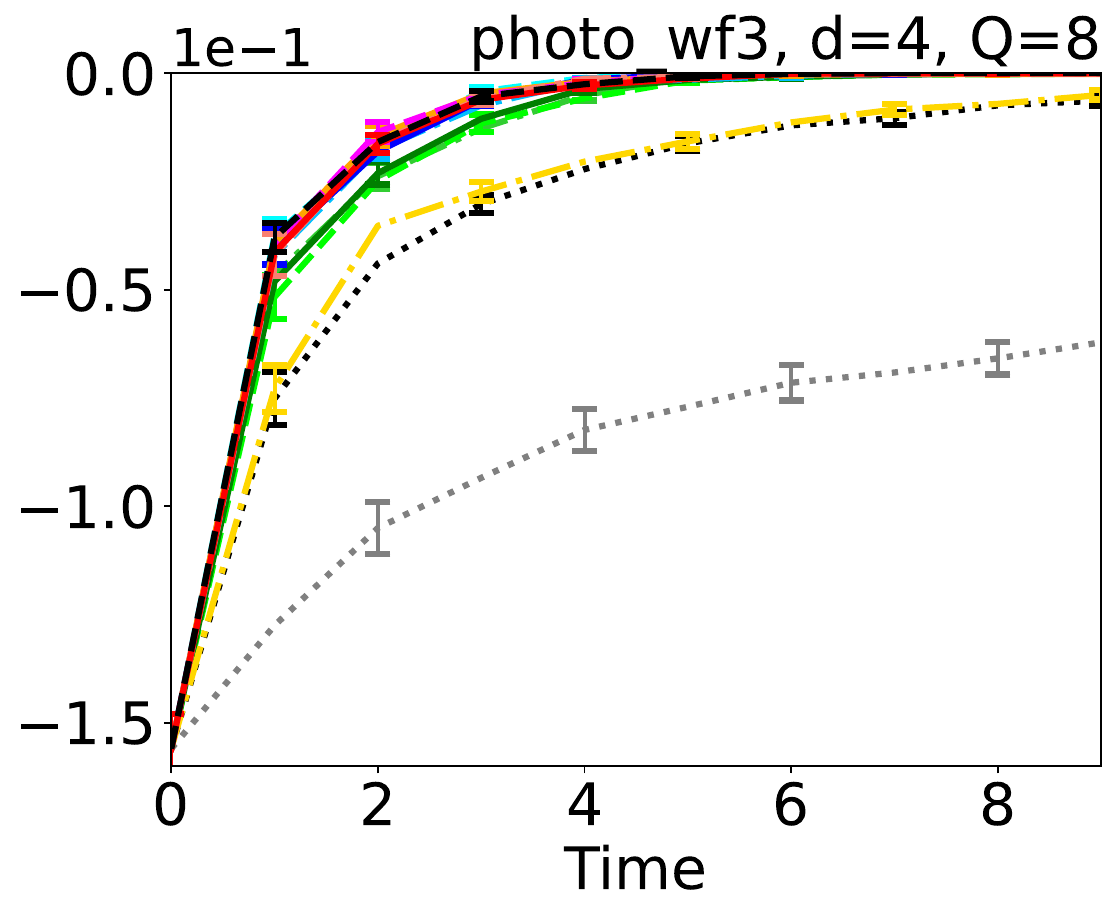}
\includegraphics[height=.25\linewidth]{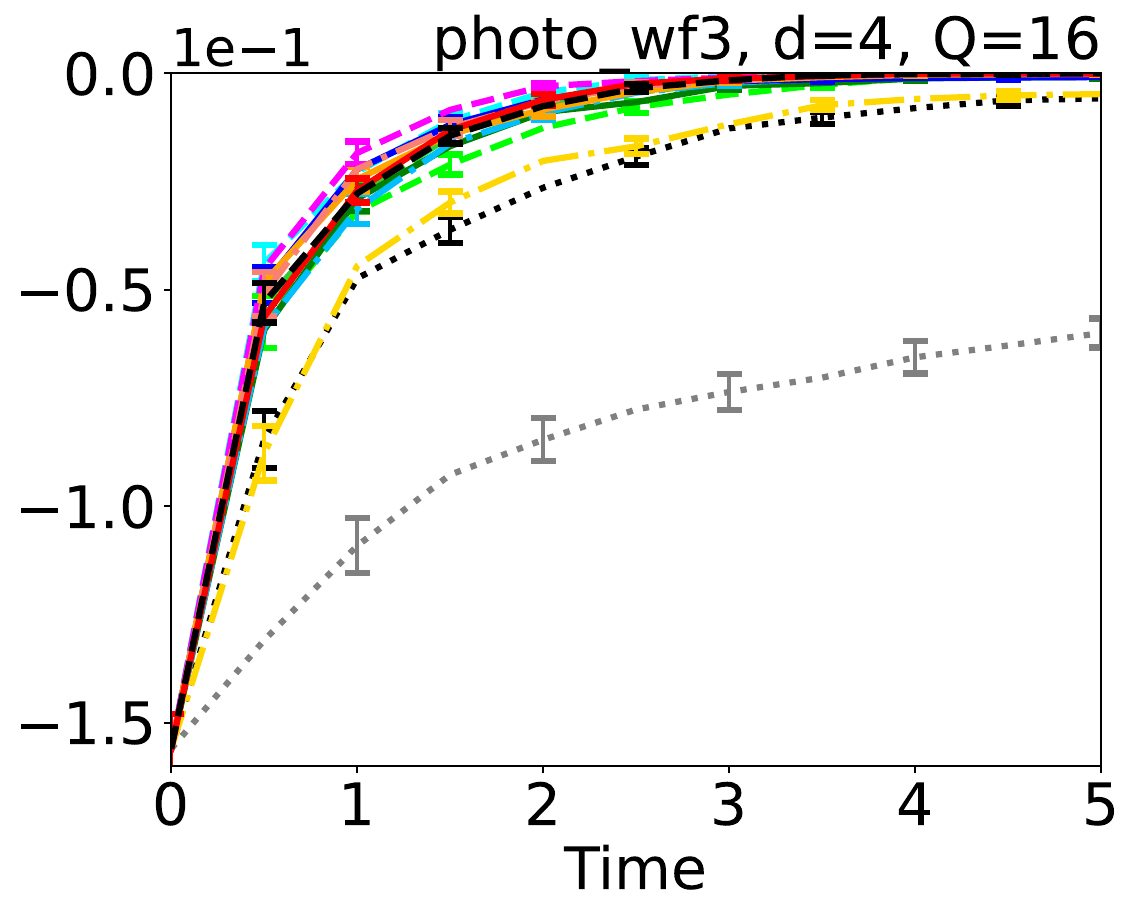}
\includegraphics[height=.25\linewidth]{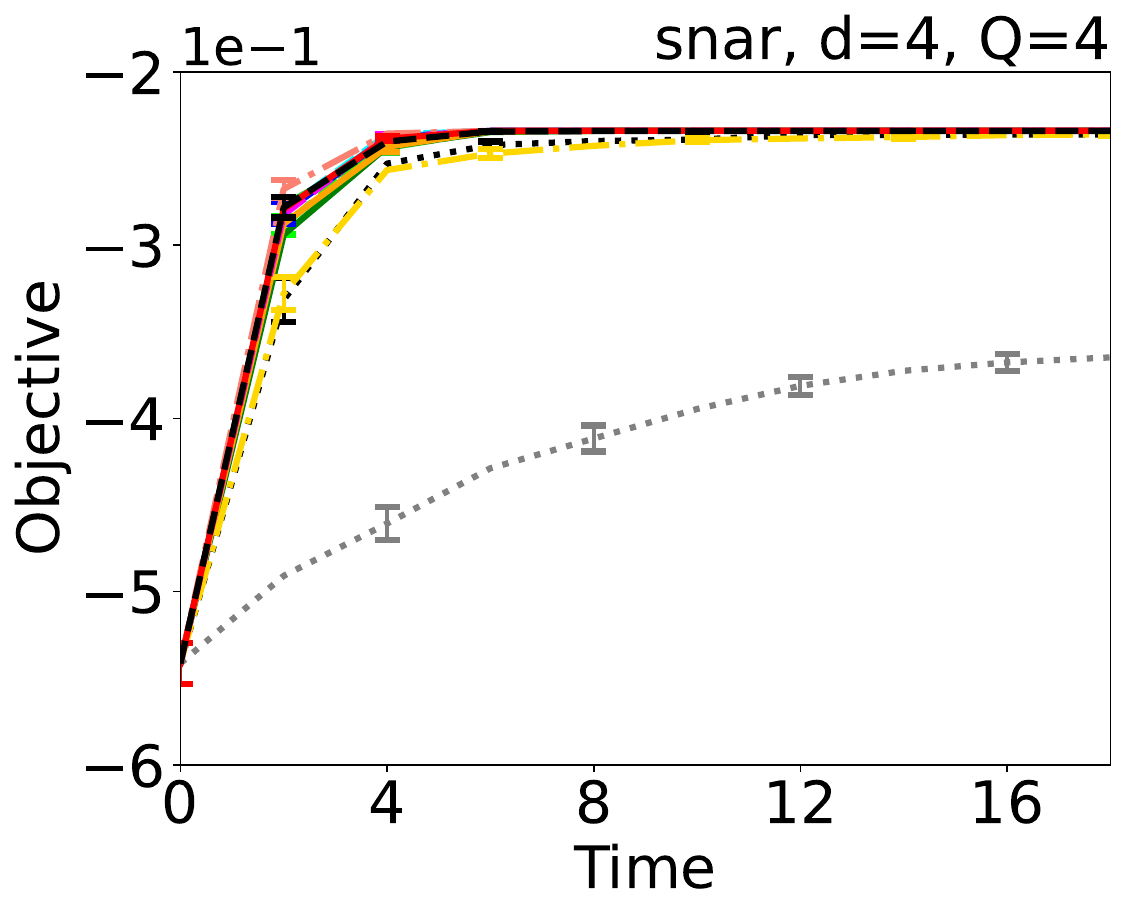}
\includegraphics[height=.25\linewidth]{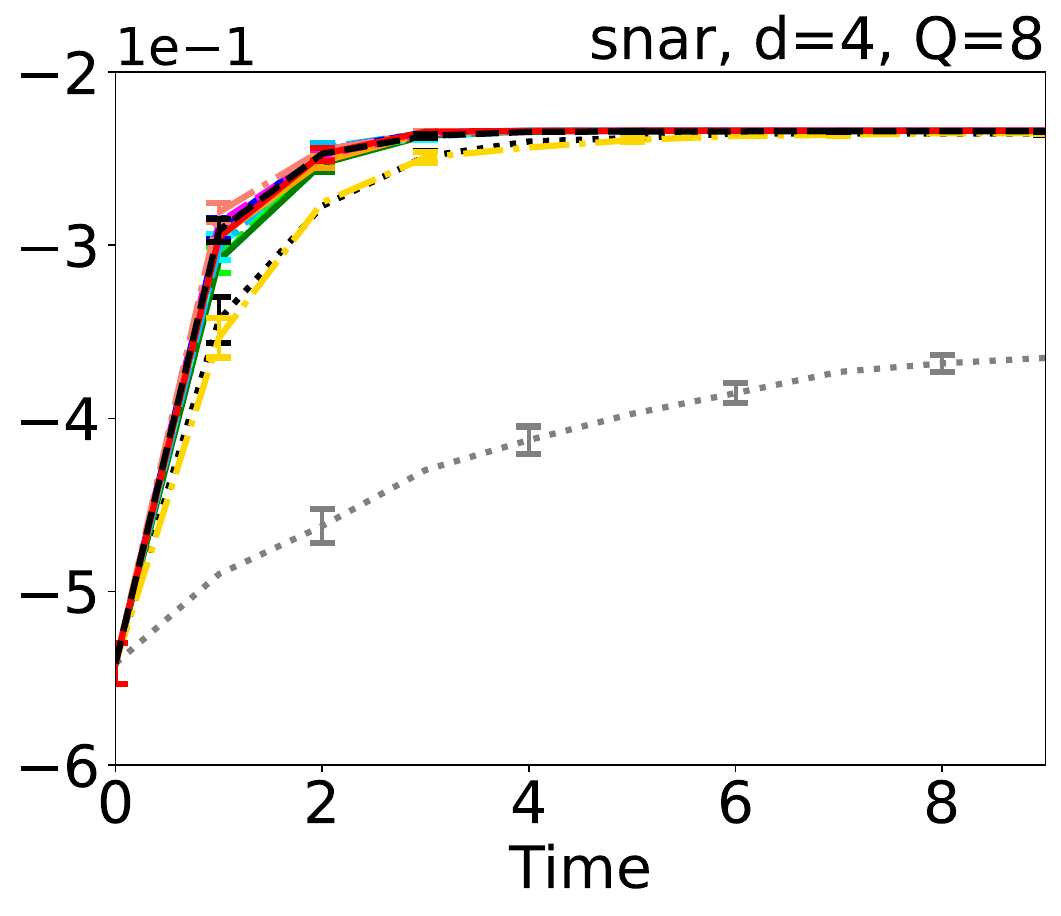}
\includegraphics[height=.25\linewidth]{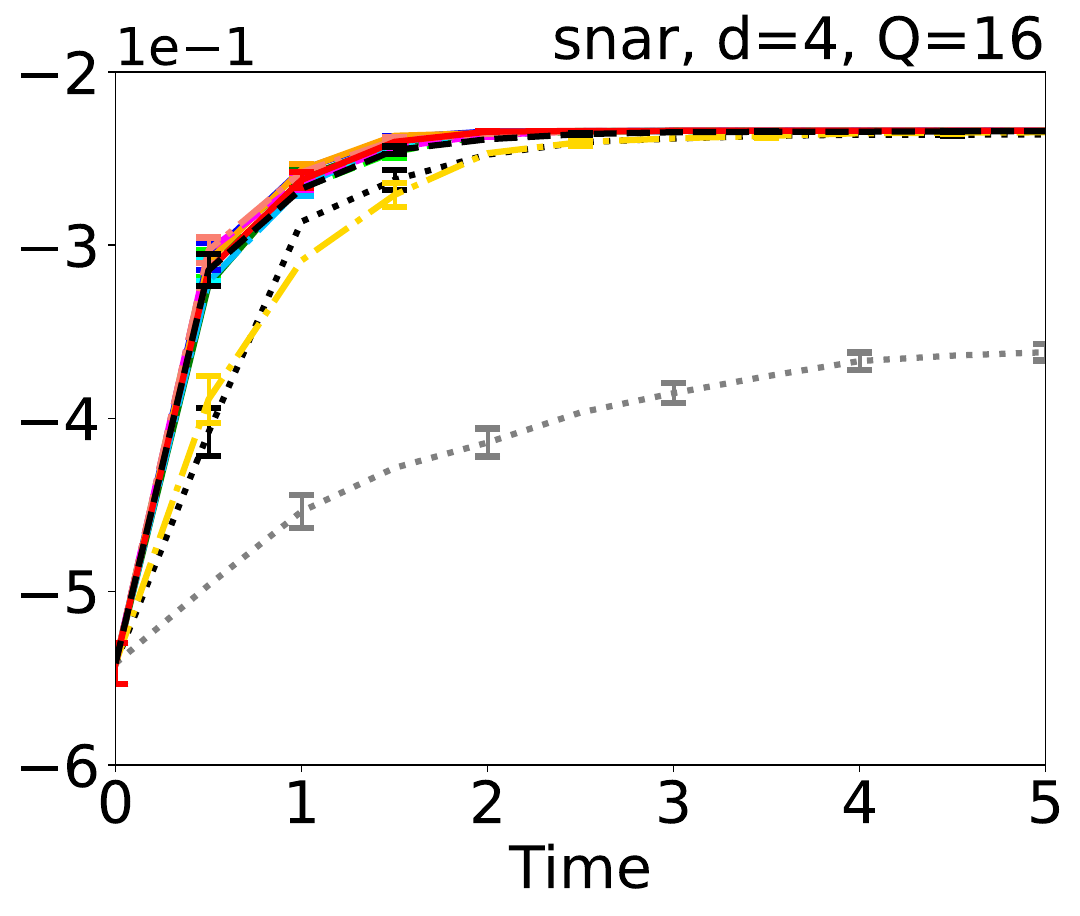}
\includegraphics[height=.25\linewidth]{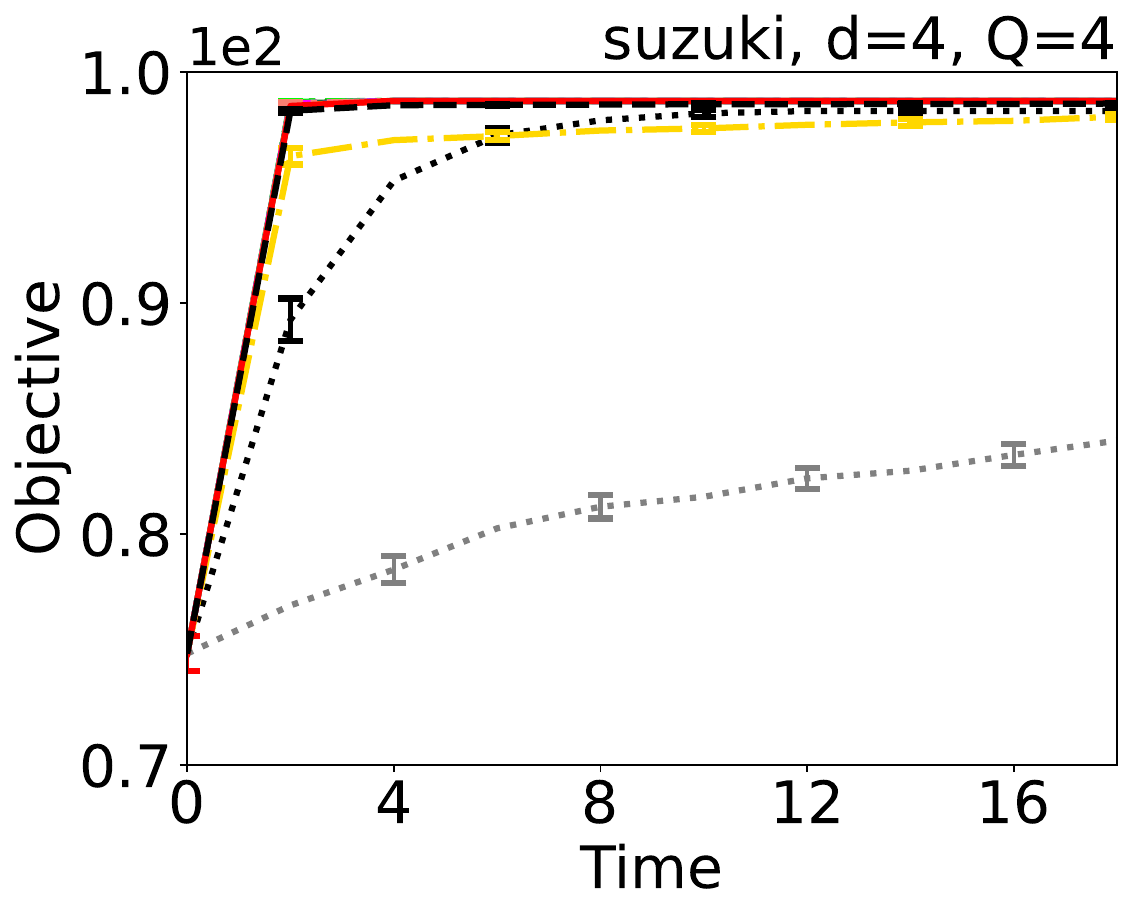}
\includegraphics[height=.25\linewidth]{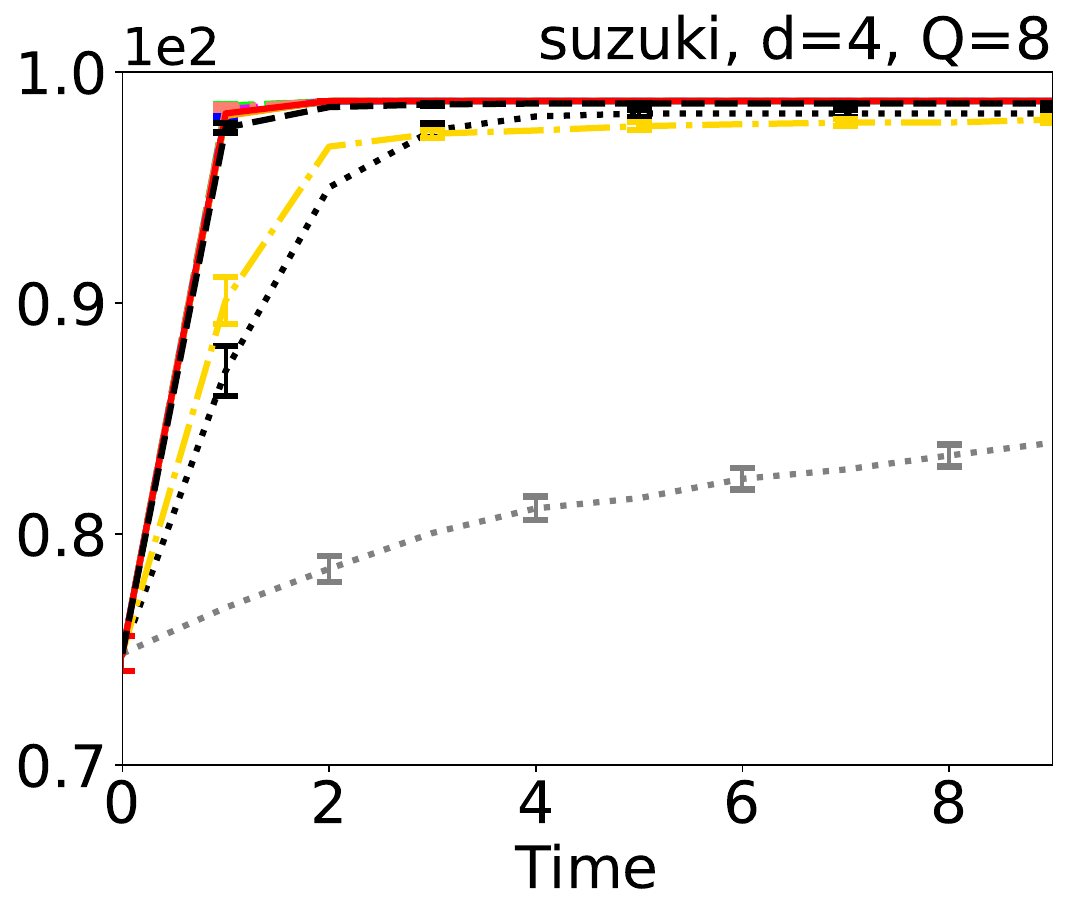}
\includegraphics[height=.25\linewidth]{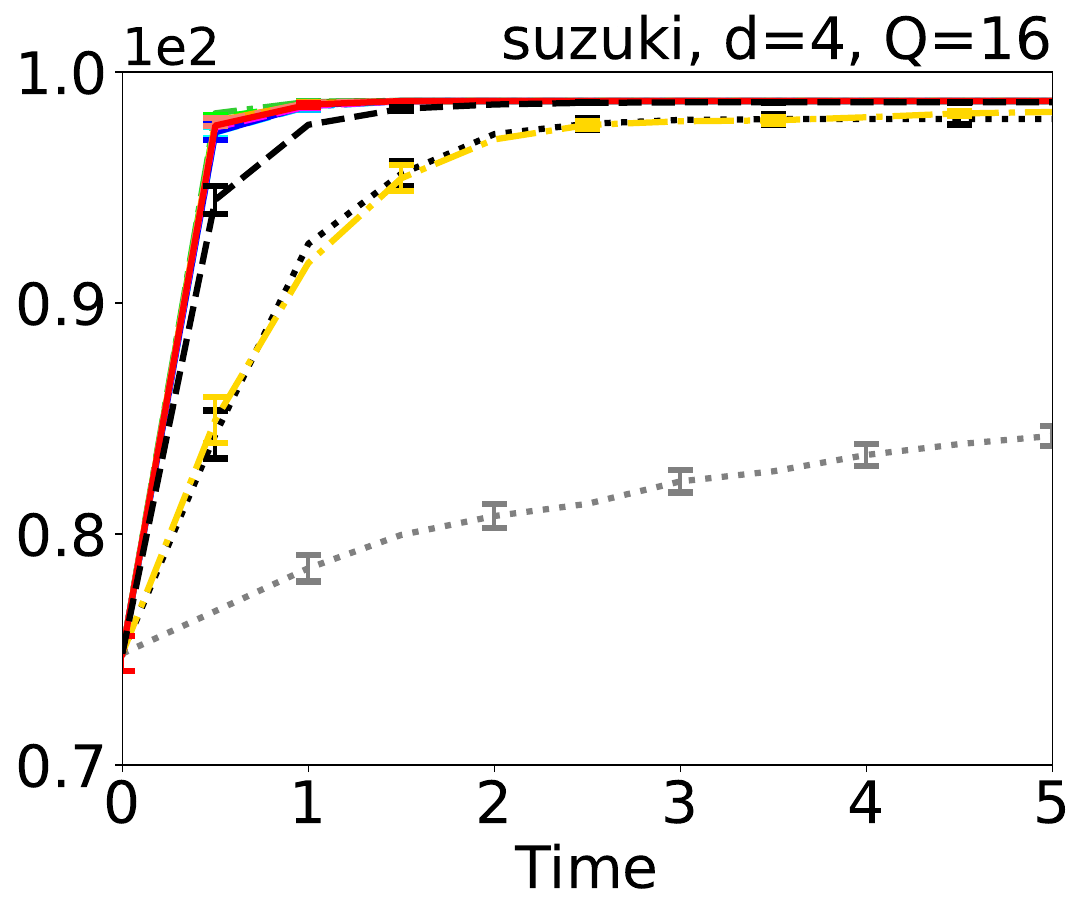}
\caption{
Result of experiments on emulators with asynchronous setting. The lines and error bars mean average and standard error of the best objective value $\max_{i\in\left[t\right]}f{\left(\bm x_i\right)}$ across the 100 experiments on each condition.
}
\label{fig:result_eml_asyn2}
\end{figure}

\begin{figure}[t]
\centering
\includegraphics[width=.7\linewidth]{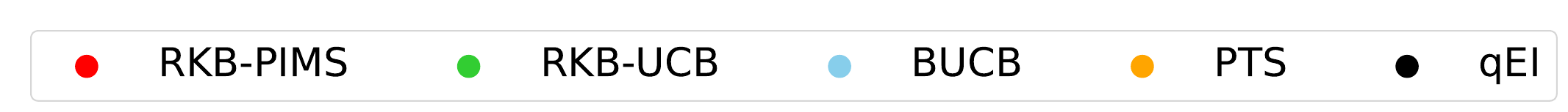}
\includegraphics[width=\linewidth]{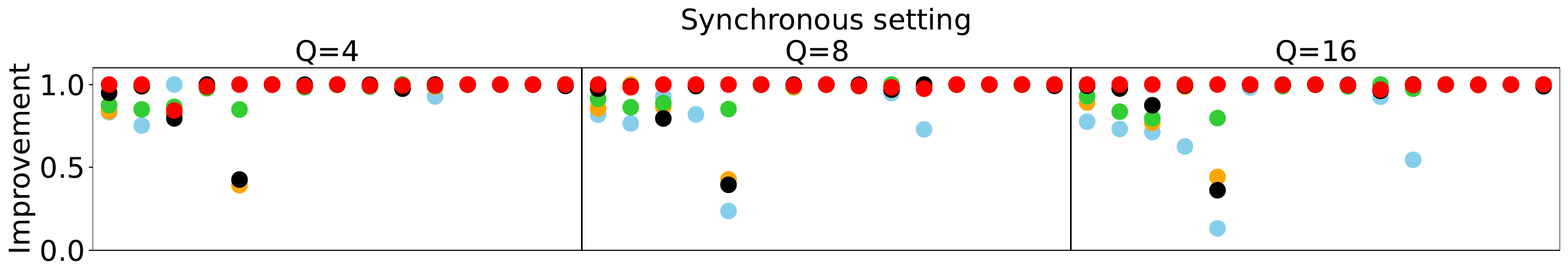}
\includegraphics[width=\linewidth]{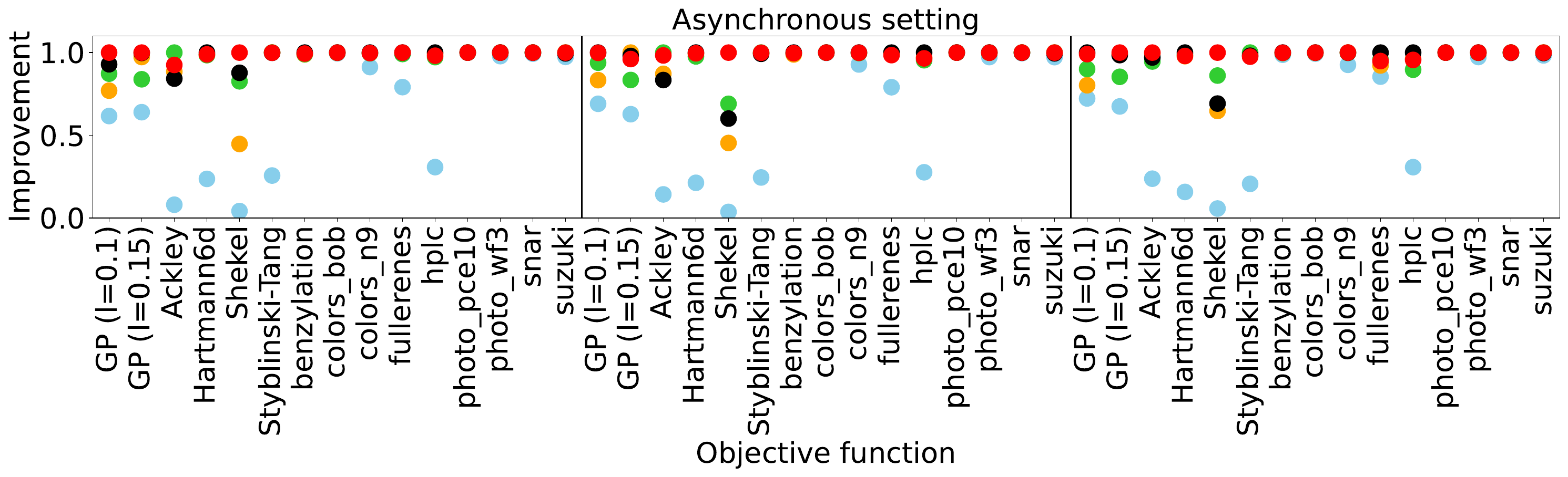}
\caption{
Mean improvement of the best objective value $\max_{i\in\left[t\right]}f{\left(\bm x_i\right)}$ across the 100 experiments on each condition.
The improvement amount has been normalized so that its maximum value among the five compared methods is one.
In asynchronous settings, the improvement is measured up to the earliest end time across the 100 experiments.
Methods qLEI and qLNEI are denoted as qEI in the legend.
}
\label{fig:result_comparison_apn}
\end{figure}